\begin{document}
\title{Corruption-robust exploration in episodic reinforcement learning}

 \author{
 Thodoris Lykouris\thanks{Massachusetts Institute of Technology, \texttt{lykouris@mit.edu}. Research conducted while the author was a postdoctoral researcher at Microsoft Research NYC.}
 \and Max Simchowitz\thanks{UC Berkeley, \texttt{msimchow@berkeley.edu}. Research conducted while author was intern at Microsoft Research NYC.} \and Aleksandrs Slivkins\thanks{Microsoft Research NYC, \texttt{slivkins@microsoft.com}} \and
 Wen Sun\thanks{Cornell University, \texttt{ws455@cornell.edu}. Research conducted while the author was a postdoctoral researcher at Microsoft Research NYC.}
 }
\date{First version: November 2019\\Current version: October 2023%
\footnote{The first version of this paper focused on tabular RL (Theorem~\ref{thm:main_tabular}). The extension to linear MDPs (Theorem~\ref{thm:main_linear}) was added in April 2020. A $1$-page abstract appeared in the 34th Annual Conference on Learning Theory (COLT 2021). The full version will appear in \emph{Mathematics of Operations Research}. The current version reflects the content that was accepted in this journal.}}
\maketitle
\begin{abstract}
We initiate the study of multi-stage episodic reinforcement learning under adversarial corruptions in both the rewards and the transition probabilities of the underlying system extending recent results for the special case of stochastic bandits \cite{LykourisMiPa18,gupta2019better}. We provide a framework which modifies the aggressive exploration enjoyed by existing reinforcement learning approaches based on ``optimism in the face of uncertainty'', by complementing them with principles from ``action elimination''. Importantly, our framework circumvents the major challenges posed by naively applying action elimination in the RL setting, as formalized by a lower bound we demonstrate. Our framework yields efficient algorithms which (a) attain near-optimal regret in the absence of corruptions and (b) adapt to unknown levels corruption, enjoying regret guarantees which degrade gracefully in the total corruption encountered. To showcase the generality of our approach, we derive results for both tabular settings (where states and actions are finite) as well as linear-function-approximation settings (where the dynamics and rewards admit a linear underlying representation). Notably, our work provides the first sublinear regret guarantee which accommodates any deviation from purely i.i.d. transitions in the bandit-feedback model for episodic reinforcement learning.
\end{abstract}

\addtocounter{page}{-1}
\thispagestyle{empty}
\newpage


\section{Introduction}
\label{sec:intro}


\newcommand{\tO}{\widetilde{O}}
\newcommand{\ie}{{\em i.e.,~\xspace}}
\newcommand{\eg}{{\em e.g.,~\xspace}}

\newcommand{\rbr}[1]{\left(\,#1\,\right)}
\newcommand{\sbr}[1]{\left[\,#1\,\right]}
\newcommand{\cbr}[1]{\left\{\,#1\,\right\}}



\newcommand{\gapcomplexity}{\mathtt{GapComplexity}}


\newcommand{\thetast}{\theta^{\star}}
\newcommand{\supervisedc}{\textsc{SuperVIsed.C}}
\newcommand{\supervised}{\textsc{SuperVIsed}}
\newcommand{\bellmanlu}{\overline{\underline{\bellman}}}
\newcommand{\model}{\mathfrak{m}}

\newcommand{\kglhpl}{_{k;\mathrm{gl};h+1}}
\newcommand{\supervisedcband}{\textsc{SuperVIsed.Band.C}}
\newcommand{\supervisedtab}{\textsc{SuperVIsed.Tab}}

\newcommand{\mustar}{\mu^{\star}}

\newcommand{\pibase}{\pi^{\textsc{base}}}

\newcommand{\psample}{p_{\mathrm{smp}}}
\newcommand{\Nsp}{N_{\mathrm{smp}}}

\newcommand{\Nksp}{N_{k,\mathrm{sp}}}
\newcommand{\Nkminsp}{N_{k-1,\mathrm{sp}}}
\newcommand{\lemref}[1]{Lemma~\ref{#1}}
\newcommand{\bellman}{\mathbf{Bell}}
\newcommand{\pimaster}{\boldpi^{\textsc{master}}}
\newcommand{\pimasterk}{\pimaster_{k}}

\newcommand{\calX}{\mathcal{X}}
\newcommand{\calA}{\mathcal{A}}
\newcommand{\belhatk}{\belhat_k}
\newcommand{\belhat}{\widehat{\bel}}
\newcommand{\bel}{\mathcal{T}}

\newcommand{\Vset}{\mathscr{V}}

\newcommand{\Thetabar}{\overline{\Theta}}

\newcommand{\eventcorcross}{\mathcal{E}^{\corRL,\mathrm{cross}}}
\newcommand{\Rtil}{\widetilde{R}}

\newcommand{\catelim}{\mathsf{CAT}\text{-}\mathsf{ELIM}}
\newcommand{\catvi}{\mathsf{CAT}}
\newcommand{\cat}{\mathsf{CAT}}

\newcommand{\eventReg}{\calE^{\mathrm{reg}}}

\newcommand{\boldm}{\bm{m}}
\newcommand{\roundevent}{\mathcal{E}}
\newcommand{\qgreedy}{\qup\text{-greedy}}
\newcommand{\qgreedyk}{\qupk\text{-greedy}}
\newcommand{\dcoin}{\mathcal{D}^{\mathrm{coin}}}
\newcommand{\dcoink}{\dcoin_k}
\newcommand{\boldpiki}{\boldpik^{(i)}}
\newcommand{\boldpikboldik}{\boldpik^{(\boldi_k)}}
\newcommand{\pigreedksupi}{\pi^{\mathrm{grd},(i)}}
\newcommand{\boundbar}{\overline{\bound}}
\newcommand{\eventbonus}{\mathcal{E}^{\mathrm{bonus}}}
\newcommand{\eventratio}{\mathcal{E}^{\mathrm{rat}}}
\newcommand{\regretnom}{\mathrm{NominalRegret}}
\newcommand{\eventactiveset}{\mathcal{E}^{\activeset}}
\newcommand{\eventvi}{\mathcal{E}^{\mathrm{vi}}}
\newcommand{\rtilk}{\rtil_k}
\newcommand{\ptilk}{\ptil_k}
\newcommand{\eventcomp}{\mathcal{E}^{\mathrm{comp}}}
\newcommand{\boldi}{\boldsymbol{i}}
\newcommand{\supi}{^{(i)}}

\newcommand{\BigOh}[1]{{\mathcal{O}}\left(#1\right)}
\newcommand{\epSet}{\mathcal{K}}
\newcommand{\type}{\mathsf{type}}
\newcommand{\calH}{\mathcal{H}}
\newcommand{\nSet}{N_{\epSet}}

\newcommand{\BigOhTil}[1]{\widetilde{\mathcal{O}}\left(#1\right)}
\newcommand{\LR}{\mathrm{LR}}
\newcommand{\calB}{\mathcal{B}}
\newcommand{\calZ}{\mathcal{Z}}
\newcommand{\pl}{(1)}
\newcommand{\kl}{_{k,\ell}}
\newcommand{\klh}{_{k,\ell;h}}
\newcommand{\klpr}{_{k,\ell'}}
\newcommand{\klprh}{_{k,\ell';h}}
\numberwithin{equation}{section}
\newcommand{\vboldpi}{V^{\boldpi}}
\newcommand{\gapclip}{\ddot{\gap}}
\newcommand{\ck}{c_k}
\newcommand{\vpi}{\valf^{\pi}}
\newcommand{\nklsb}{N_{k,\ell;sb}}
\newcommand{\eventlocal}{\calE^{\mathrm{loc}}}
\newcommand{\bolddel}{\boldsymbol{\delta}}
\newcommand{\clsb}{C_{\ell;sb}}
\newcommand{\kH}{_{k;H}}
\newcommand{\xtil}{\widetilde{x}}
\newcommand{\eventcross}{\mathcal{E}^{\mathrm{err\,bound}}}
\newcommand{\eventsub}{\mathcal{E}^{\mathrm{sb}}}
\newcommand{\err}{\mathrm{Err}}
\newcommand{\errk}{\err_k}

\newcommand{\lsb}{_{\ell;sb}}

\newcommand{\tlsb}{_{t,\ell;\mathrm{sb}}}
\newcommand{\klgl}{_{k,\ell;\mathrm{gl}}}
\newcommand{\klsb}{_{k,\ell;\mathrm{sb}}}
\newcommand{\wkl}{\weight\kl}
\newcommand{\wklh}{\weight\klh}
\newcommand{\nbar}{\overline{N}}
\newcommand{\nbark}{\nbar_k}
\newcommand{\chat}{\widehat{C}}
\newcommand{\chatl}{\chat_{\ell,sb}}
\newcommand{\eventproblelst}{\mathcal{E}^{\mathrm{lead},\le\lst}}
\newcommand{\eventprobl}{\mathcal{E}^{\mathrm{lead},\ell}}
\newcommand{\Torus}{\mathbb{T}}
\newcommand{\astar}{a_{\star}}
\newcommand{\calI}{\mathcal{I}}
\newcommand{\klsttau}{_{k,\lst;\tau}}
\newcommand{\eventcorrlbonus}{\mathcal{E}^{\mathrm{bonus,CRANE}}}
\newcommand{\kappalelst}{\kappa_{\le\lst}}
\newcommand{\kappal}{\kappa_{\ell}}
\newcommand{\lbar}{\overline{\ell}}
\newcommand{\cbarl}{\cbar_{\ell}}
\newcommand{\eventvalconc}{\mathcal{E}^{\mathrm{val,conc}}}
\newcommand{\eventprobconc}{\mathcal{E}^{\mathrm{prb,conc}}}
\newcommand{\xaxpr}{(x,a,x')}
\newcommand{\goodeps}{\mathcal{K}_{\mathrm{good}}}
\newcommand{\advrange}{\adv^{\mathrm{range}}}
\newcommand{\corrupt}{\mathbf{C}}
\newcommand{\Errlead}{\mathrm{Err}^{\mathrm{val}}}
\newcommand{\klglb}{_{k,\ell,\mathrm{gl}}}
\newcommand{\klsub}{_{k,\ell,\mathrm{sb}}}
\newcommand{\klelst}{_{k,\le \lst}}
		\newcommand{\klst}{_{k,\lst}}

	\newcommand{\Pikl}{\Pi_{k}^{\ell}}
	\newcommand{\Piklelst}{\Pi_{k}^{\le\lst}}
\newcommand{\Pik}{\Pi_k}
\newcommand{\kli}{_{k,l,i}}
\newcommand{\klpl}{_{k,\ell+1}}
\newcommand{\klminh}{_{k,\ell-1;h}}
\newcommand{\piucb}[1]{\pi^{\mathrm{ucb},#1}}
\newcommand{\boldlk}{\boldl_k}
\newcommand{\Finite}[1]{\left[#1\right]_{<\infty}}
\newcommand{\nk}{N_k}
\newcommand{\nkl}{N_{k,\ell}}
\newcommand{\robustLfactor}{\Lfactor^{\mathrm{rbst}}}
\newcommand{\klih}{_{k,\ell,i;h}}
\newcommand{\Index}{\mathcal{I}}
		\newcommand{\klhpl}{_{k,\ell,h+1}}
\newcommand{\lfin}{\boldl^{\mathrm{fin}}}
\newcommand{\pik}{\pi_k}
\newcommand{\envir}{\mathscr{E}}
\newcommand{\boldpielim}{\boldpi^{\mathrm{unif}}}
\newcommand{\boldpielimk}{\boldpielim_k}
\newcommand{\lucbelim}{\mathsf{LUCBVI}\text{-}\mathsf{Elim}}
\newcommand{\Bernoulli}{\mathrm{Bernoulli}}
\newcommand{\False}{\mathsf{False}}
\newcommand{\True}{\mathsf{True}}
\newcommand{\isgreedy}{\mathsf{greedyAction}}
\newcommand{\Uniform}{\mathrm{Uniform}}
\newcommand{\plauset}{\activeset^{\mathrm{plaus}}}
\newcommand{\kHpl}{_{k;H+1}}
\newcommand{\kplh}{_{k+1;h}}
\newcommand{\fulelim}{\mathsf{FullElim}}
\newcommand{\xapr}{(x,a')}
\newcommand{\bonusk}{\bonus_k}
\newcommand{\lmax}{\ell_{\mathrm{max}}}
\newcommand{\klsth}{_{k,\lst;h}}
	\newcommand{\lst}{\ell_{\star}}
		\newcommand{\lkh}{\boldl\kh}
	
\newcommand{\Regret}{\mathrm{Regret}}

\newcommand{\pigreedboldl}{\pi^{\mathrm{grd},\,(\boldl)}}
	\newcommand{\pigreedboldlk}{\pi^{\mathrm{ucb},\,(\boldl)}_k}
	\newcommand{\pigreedboldlkk}{\pi^{\mathrm{ucb},\,(\boldl_k)}_k}
\newcommand{\distlayer}{\bm{\mathsf{P}}}

\newcommand{\boldpik}{\boldsymbol{\pi}_k}
\newcommand{\boldpil}{\boldsymbol{\pi}^{\ell}}
\newcommand{\boldpilk}{\boldsymbol{\pi}^{\ell}_k}

\newcommand{\boldpilel}{\boldsymbol{\pi}^{\le\ell)}}
\newcommand{\boldpilelst}{\boldsymbol{\pi}^{\le\lst}}

\newcommand{\boldpilelk}{\boldsymbol{\pi}^{\le\ell}_k}
\newcommand{\boldpilelstk}{\boldsymbol{\pi}^{\le\lst}_k}
\newcommand{\ellset}{\mathscr{L}}
\newcommand{\ellsetl}{\ellset_{\ell}}
\newcommand{\xkhakh}{(x\kh,a\kh)}

\newcommand{\klplh}{_{k,\ell+1;h}}
\newcommand{\oplustau}{\oplus_{\tau}}
\newcommand{\oplush}{\oplus_{h}}
\newcommand{\oplushbar}{\oplus_{\hbar}}
\newcommand{\vsthpl}{\vst_{h+1}}
\newcommand{\xah}{(x,a,h)}
\newcommand{\gap}{\mathrm{gap}}
\newcommand{\gapmin}{\gap_{\min}}
\newcommand{\clip}[2]{\operatorname{clip}\left[#2|#1\right]}
 \newcommand{\advhalfclip}{\dot{\bellmanup}}
 \newcommand{\phatk}{\phat_k}
\newcommand{\rhatk}{\rhat_k}
\newcommand{\matE}{\mathbf{E}}

 \newcommand{\advlucbfullclip}{\ddot{\bellmanlu}}
 \newcommand{\advlucbhalfclip}{\dot{\bellmanlu}}
\newcommand{\advlucb}{\bellmanlu}
\newcommand{\advfullclip}{\ddot{\bellmanup}}

\newcommand{\gaph}{\mathrm{gap}_h}
\newcommand{\epsclip}{\epsilon_\mathrm{clip}}
\newcommand{\vsth}{\vst_h}
\newcommand{\mup}{\overline{\calM}}
\newcommand{\mlow}{\underline{\calM}}
\newcommand{\mlowk}{\mlow_k}
\newcommand{\mupk}{\mup_k}
\newcommand{\vupk}{\vup_k}
\newcommand{\vlowk}{\vlow_k}
\newcommand{\qlowk}{\qlow_k}
\newcommand{\qupk}{\qup_k}
\newcommand{\pigreedk}{\pigreed_k}
\newcommand{\ofx}{(x)}
\newcommand{\ofxh}{(x_h)}
\newcommand{\pisth}{\pist_{;h}}
\newcommand{\xhah}{(x_h,a_h)}
\newcommand{\optacts}{\mathcal{Z}_{opt}}
\newcommand{\subacts}{\mathcal{Z}_{sub}}
\newcommand{\nipsvspace}[1]{\iftoggle{nips}{\vspace{1}}{}}

\newcommand{\onespace}{\vspace{.1in}}

\renewcommand{\ast}{a^\star}
\newcommand{\xast}{(x,\ast)}
\newcommand{\qsth}{\qst_h}
\newcommand{\bolda}{\mathbf{a}}
\newcommand{\poly}{\mathrm{poly}}
\newcommand{\polylog}{\textrm{polylog}}
\newcommand{\activeset}{\mathscr{A}}
\newcommand{\activesetk}{\activeset_k}
\newcommand{\tuple}{\mathscr{T}}
\newcommand{\tuplek}{\tuple_k}
\newcommand{\Kpl}{K^{\pl}}
\newcommand{\matz}{\mathbf{z}}
\newcommand{\matxpl}{\matx^{\pl}}
\newcommand{\matypl}{\maty^{\pl}}
\newcommand{\calEtil}{\widetilde{\mathcal{E}}}
\newcommand{\kclass}{\mathscr{K}}
\newcommand{\lossid}[1]{\mathcal{L}_{\mathrm{id},#1}}

\newcommand{\advlowgreed}{\advlow^{\mathrm{lcb}}}

\newcommand{\boldl}{\boldsymbol{\ell}}

\newcommand{\qf}{Q}
\newcommand{\vup}{\bar{\valf}}
\newcommand{\qup}{\bar{\qf}}
\newcommand{\qlow}{\makelow{\qf}}
\newcommand{\vlow}{\makelow{\valf}}

\newcommand{\valf}{V}
\newcommand{\vst}{\valf^{\star}}
\newcommand{\qst}{\qf^{\star}}
\newcommand{\pist}{\pi^{\star}}

\newcommand{\qcirc}{\mathring{\qf}}
\newcommand{\vcirc}{\mathring{\valf}}
\newcommand{\pibar}{\overline{\pi}}
\newcommand{\adv}{\bellmanup}
\newcommand{\advcirc}{\mathring{\adv}}

\newcommand{\rst}{r^{\star}}
\newcommand{\pst}{p^{\star}}

\newcommand{\bellmanup}{\overline{\mathbf{Bell}}}
\newcommand{\bellmanlow}{\underline{\mathbf{Bell}}}

\newcommand{\occ}{\boldsymbol{\omega}}

\newcommand{\phat}{\widehat{p}}
\newcommand{\rhat}{\widehat{r}}

\newcommand{\rhatkhl}{\rhat_{k,h,\ell}}
\newcommand{\khl}{_{k;h;l}}
\newcommand{\khbar}{_{k;\hbar}}

\newcommand{\khboldl}{_{k;h,\boldl}}

\newcommand{\khlpr}{_{k;h,l'}}
\newcommand{\khpl}{_{k;h+1}}
\newcommand{\khpr}{_{k;h'}}

\newcommand{\khlh}{_{k;h,\boldl_h}}
\newcommand{\xa}{(x,a)}

\newcommand{\boundclipsah}{\tilde{\bound}}
\newcommand{\boundclip}{\check{\bound}}

\newcommand{\makelow}[1]{\mkern2mu\underline{\mkern-2mu#1\mkern-4mu}\mkern4mu }

\newcommand{\bonus}{b}
\newcommand{\bonuskhl}{\bonus_{k,h,\ell}}
\newcommand{\bonusbar}{\overline{\bonus}}
\newcommand{\bonuscirc}{\mathring{\bonus}}

\newcommand{\nkhl}{n_{k,h,\ell}}
\newcommand{\Lfactor}{\mathbf{L}}
\newcommand{\Lcorrupt}{\mathbf{L}_{\mathrm{crpt}}}
\newcommand{\Lf}{\Lfactor}

\newcommand{\cbon}{c_{\bonus}}
\renewcommand{\hbar}{\tau}
\newcommand{\hboldbar}{\overline{\boldsymbol{\tau}}}

\newcommand{\Otilde}{\widetilde{O}}

\renewcommand{\ast}{a^\star}

\newcommand{\kh}{_{k;h}}
\newcommand{\ktau}{_{k;\tau}}

\newcommand{\kone}{_{k;1}}

\newcommand{\boldpi}{\boldsymbol{\pi}}
\newcommand{\pigreed}{\pi^{\mathrm{ucb}}}
\newcommand{\piexp}{\pi^{\mathrm{exp}}}
\newcommand{\pimix}{\pi}
\newcommand{\pilcb}{\pi^{\mathrm{lcb}}}
\newcommand{\calMlow}{\makelow{\calM}}
\newcommand{\advlow}{\bellmanlow}
\newcommand{\bonuslow}{\makelow{\bonus}\,}

\newcommand{\weight}{\boldsymbol{\omega}}
\newcommand{\weighthbar}{\weight^{\hbar}}

\newcommand{\khhbar}{_{k,h;\hbar}}
\newcommand{\khhboldbar}{_{k,h;\hboldbar}}
\newcommand{\dens}{\rho}
\renewcommand{\colon}{{\,:\,}}
\newcommand{\calM}{\mathcal{M}}

\newcommand{\calMbar}{\overline{\calM}}
\newcommand{\pbar}{\overline{p}}
\newcommand{\rbar}{\overline{r}}

\newcommand{\states}{\mathcal{X}}
\newcommand{\actions}{\mathcal{A}}
\newcommand{\outputs}{\mathcal{Y}}
\newcommand{\matx}{\mathbf{x}}
\newcommand{\mata}{\mathbf{a}}

\newcommand{\Gclass}{\mathscr{G}}
\newcommand{\Ust}{U_{\star}}
\newcommand{\Vst}{V_{\star}}
\newcommand{\sfP}{\mathsf{P}}

\newcommand{\Ast}{A^{\star}}
\newcommand{\Bst}{B_{\star}}
\newcommand{\Kst}{K_{\star}}
\newcommand{\calN}{\mathcal{N}}
\newcommand{\Khat}{\widehat{K}}
\newcommand{\matU}{\mathbf{U}}
\newcommand{\matDel}{\boldsymbol{\Delta}}
\newcommand{\Rhat}{\widehat{R}}

\newcommand{\Proj}{\mathrm{Proj}}
\newcommand{\ProjK}{\mathrm{Proj}_{\Kst}}
\newcommand{\ProjKzero}{\mathrm{Proj}_{\mathbf{0} \oplus \Kst}}
\newcommand{\veczero}{\mathbf{0}}

\newcommand{\Stief}{\mathrm{Stief}}
\newcommand{\nablabar}{\overline{\nabla}}
\newcommand{\nablacheck}{\check{\nabla}}
\newcommand{\fcheck}{\check{f}}
\newcommand{\Gcheck}{\check{G}}

\newcommand{\calK}{\mathcal{K}}
\newcommand{\calE}{\mathcal{E}}

\newcommand{\Rcheck}{\check{R}}

\newcommand{\xbar}{\overline{x}}
\newcommand{\xcheck}{\check{x}}







\renewcommand{\Pr}{\mathbb{P}}
\newcommand{\Exp}{\mathbb{E}}
\newcommand{\Var}{\mathrm{Var}}
\newcommand{\Cov}{\mathrm{Cov}}

\newcommand{\Law}{\mathrm{Law}}
\newcommand{\Leb}{\mathrm{Lebesgue}}

\newcommand{\unifsim}{\overset{\mathrm{unif}}{\sim}}
\newcommand{\iidsim}{\overset{\mathrm{i.i.d.}}{\sim}}
\newcommand{\probto}{\overset{\mathrm{prob}}{\to}}
\newcommand{\ltwoto}{\overset{L_2}{\to}}

\newcommand{\info}{\mathbf{i}}
\newcommand{\KL}{\mathrm{KL}}
\newcommand{\Ent}{\mathrm{Ent}}
\newcommand{\TV}{\mathrm{TV}}
\newcommand{\rmd}{\mathrm{d}}

\newcommand{\median}{\mathrm{Median}}
\newcommand{\range}{\mathrm{range}}
\newcommand{\im}{\mathrm{im }}
\newcommand{\tr}{\mathrm{tr}}
\newcommand{\rank}{\mathrm{rank}}
\newcommand{\spec}{\mathrm{spec}}
\newcommand{\diag}{\mathrm{diag}}
\newcommand{\Diag}{\mathrm{Diag}}
\newcommand{\sign}{\mathrm{sign\ }}

\newcommand{\op}{\mathrm{op}}
\newcommand{\F}{\mathrm{F}}
\newcommand{\dist}{\mathrm{dist}}

\renewcommand{\implies}{\text{ implies }}
\renewcommand{\iff}{\text{ iff }}
\newcommand{\matX}{\mathbf{X}}
\newcommand{\matbeta}{\boldsymbol{\beta}}
\newcommand{\mattheta}{\boldsymbol{\theta}}

\newcommand{\betast}{\matbeta^*}
\newcommand{\betahat}{\widehat{\matbeta}}

\newcommand{\thetahat}{\widehat{\theta}}
\newcommand{\supp}{\mathrm{supp}}

\newcommand{\matw}{\mathbf{w}}
\newcommand{\maty}{\mathbf{y}}

\renewcommand{\Im}{\mathfrak{Im }}
\renewcommand{\Re}{\mathfrak{Re }}

\newcommand{\Z}{\mathbb{Z}}

\newcommand{\I}{\mathbf{1}}
\newcommand{\Q}{\mathbb{Q}}

\newcommand{\sphered}{\mathcal{S}^{d-1}}


\newcommand{\BigThetaTil}[1]{\widetilde{\BigTm}\left({#1}\right)}
\newcommand{\BigOmega}[1]{\BigWm\left({#1}\right)}

\newcommand{\Rsimple}{\mathcal{R}_{\mathrm{simp}}}
\newcommand{\calF}{\mathcal{F}}
\newcommand{\actst}{a_*}
\newcommand{\acthat}{\widehat{a}}

\newcommand{\lasso}{\mathsf{lasso}}
\newcommand{\four}{\mathscr{F}}
\newcommand{\LS}{\mathsf{LS}}
\newcommand{\Alg}{\mathsf{Alg}}

\newcommand{\vi}{\matv^{(i)}}

\newcommand{\SAT}{\mathit{SAT}}
\renewcommand{\P}{\mathbf{P}}
\newcommand{\NP}{\mathbf{NP}}
\newcommand{\coNP}{\co{NP}}
\newcommand{\co}[1]{\mathbf{co#1}}

\newcommand{\Drew}{\mathcal{D}_{\mathrm{rew}}}

\newcommand{\Drewtil}{\widetilde{\mathcal{D}}_{\mathrm{rew}}}
\newcommand{\ptil}{\widetilde{p}}
\newcommand{\rtil}{\widetilde{r}}
\newcommand{\calMtil}{\widetilde{\calM}}
\newcommand{\nomMDP}{{\calM}}
\newcommand{\didcorrupt}{c}
\newcommand{\matR}{\mathbf{R}}

\newcommand{\gapclipmin}{\ddot{\gap}_{\min}}

\newcommand{\ucb}{\mathrm{ucb}}
\newcommand{\lucb}{\mathrm{lucb}}
\newcommand{\lcb}{\mathrm{lcb}}
\newcommand{\aae}{\mathrm{aae}}

\newcommand{\boundlead}{\bound^{\mathrm{local}}}
\newcommand{\boundcross}{\bound^{\mathrm{cross}}}
\newcommand{\boundtotal}{\bound^{\mathrm{total}}}
\newcommand{\xau}{(x,a;u)}
\newcommand{\bound}{\mathbf{B}}

\newcommand{\cbar}{\overline{C}}
\newcommand{\scale}{scl}

\newcommand{\maxgap}{\check{\gap}}

\newcommand{\corRL}{\mathsf{CRANE}\text{-}\mathsf{RL}}

\newcommand{\pigreedboldllelst}{\boldpi^{\mathrm{grd},\,(\boldl_{\leq\lst})}}
  \newcommand{\pigreedboldlell}{\boldpi^{\mathrm{grd},\,(\boldl_{k,H} = \ell)}}
  \newcommand{\problesslst}{\mu(\mathcal{L}_{\leq \lst})}
  \newcommand{\probell}{\mu(\mathcal{L}_{\ell})}
  \newcommand{\klstgl}{_{k, \lst; \mathrm{gl}}}
  \newcommand{\klsthpr}{_{k, \lst; h'}}
  \newcommand{\klhpr}{_{k,\ell; h'}}

 \newcommand{\wkih}{\weight_{k,i;h}}
  \newcommand{\wki}{\weight_{k,i}}
  \newcommand{\wti}{\weight_{t,i}}
  \newcommand{\Nsample}{N_{\mathrm{sample}}}
  \newcommand{\nbarki}{\nbar_{k,i}}
  \newcommand{\ki}{_{k,i}}
  \newcommand{\kih}{_{k,i,h}}
  \newcommand{\boundbari}{\boundbar\supi}
  \newcommand{\boundbarik}{\boundbari_k}
    \newcommand{\constc}{\alpha}
    \newcommand{\constcbar}{\bar{\constc}}
    \newcommand{\Mbar}{\bar{\constM}}
    \newcommand{\constM}{\beta}

  \newcommand{\nbari}{\nbar_{K,i}}

\newcommand{\nki}{N_{k,i}}

\newcommand{\boundk}{\bound_k}
\newcommand{\xtauatau}{(x_{\tau},a_{\tau})}
\newcommand{\wkh}{\weight\kh}
\newcommand{\wk}{\weight_k}
\newcommand{\eventsample}{\mathcal{E}^{\mathrm{sample}}}
\newcommand{\nbarK}{\nbar_K}

\newcommand{\eventktype}{\calE_{k,i}}

\newcommand{\coin}{\mathrm{coin}}
\newcommand{\Hkalg}{\calH_{k,\coin}}
\newcommand{\Hcoinatk}{\calH_{[k],\coin}}

\newcommand{\Hk}{\calH_{k}}
\newcommand{\Dcoink}{\mathcal{D}_{\coin,k}}
\newcommand{\episodeset}{\mathcal{S}}
\newcommand{\eptype}{\mathrm{type}}

\newcommand{\pitil}{\widetilde{\pi}}

\newcommand{\eventadm}{\calE^{\mathrm{adm}}}

\newcommand{\boldPi}{\mathbf{\Pi}}

\newcommand{\layerVI}{\mathsf{LayerVI}}
\newcommand{\klmaxh}{_{k,\lmax;h}}
\newcommand{\layerShare}{\mathsf{LayerShare}}
\newcommand{\layerSched}{\mathsf{LayerSchedule}}
\newcommand{\veclam}{\vec{\lambda}}
 \newcommand{\evklelst}{\calE_{k,\le \lst}}
  \newcommand{\evkl}{\calE_{k,\ell}}
\newcommand{\tH}{_{t;H}}
\newcommand{\Lawk}{\mathrm{Law}_k}
\newcommand{\klelsthpl}{_{k,\le \lst;h+1}}
\newcommand{\klsthpl}{_{k,\lst;h+1}}
\newcommand{\klelsth}{_{k,\le \lst;h}}
\newcommand{\errcross}{\err^{\mathrm{cross}}}
\newcommand{\boundik}{\bound\supi_k}

\newcommand{\algcomment}[1]{\texttt{\color{blue} #1}}
\newcommand{\getModel}{\mathsf{getModel}}
\newcommand{\getBonus}{\mathsf{getBonus}}
\newcommand{\getModelBonus}{\mathsf{getModelBonus}}

\newcommand{\datacorrupt}{\mathcal{T}^{\,\mathrm{corrupt}}}
\newcommand{\Vsthpl}{V^{\star}_{h+1}}
\newcommand{\calS}{\mathcal{S}}

\newcommand{\datacorruptsbl}{\calD_{\ell}^{\,\mathrm{corrupt}}}
\newcommand{\kgl}{_{k;\mathrm{gl}}}
\newcommand{\xstoch}{x^{\mathrm{stoch}}}
\newcommand{\astoch}{a^{\mathrm{stoch}}}
\newcommand{\rstoch}{R^{\,\mathrm{stoch}}}
\newcommand{\datastoch}{\mathcal{T}^{\,\mathrm{stoch}}}
\newcommand{\phatstoch}{\phat^{\mathrm{\,stoch}}}
\newcommand{\rhatstoch}{\rhat^{\mathrm{\,stoch}}}
\newcommand{\modelstoch}{\mathfrak{m}^{\mathrm{stoch}}}

\newcommand{\eventsubest}{\mathcal{E}^{\mathrm{sub,est}}}
\newcommand{\eventsubsmp}{\mathcal{E}^{\mathrm{sub}}}
\newcommand{\eventglest}{\mathcal{E}^{\mathrm{gl,est}}}

\newcommand{\jh}{_{j;h}}
\newcommand{\jhpl}{_{j;h+1}}

\newcommand{\kglell}{_{k,\ell;\mathrm{gl}}}
\newcommand{\ksbell}{_{k,\ell;\mathrm{sb}}}
\newcommand{\stoch}{\text{stoch}}
\newcommand{\muhat}{\widehat{\mu}}

\newcommand{\deltaeff}{\delta_{\mathrm{eff}}}

\newcommand{\regret}{\emph{Reg}}
 \newcommand{\R}{\mathbb{R}}
\newcommand{\eventconc}{\mathcal{E}^{\mathrm{conc}}}

\newcommand{\xhdr}[1]{\vspace{1mm} \noindent{\bf #1}}

In reinforcement learning (RL), an agent encounters a particular state and decides which action to select; as a result, the agent transitions to a new state and collects some reward. Standard RL approaches assume that rewards and transition dynamics are drawn identically and independently from fixed (yet unknown) distributions that depend on the current state and the selected action. However, these techniques tend to be vulnerable to even a small amount of outliers from such i.i.d. patterns. Such outliers are prevalent in most RL applications, \eg click fraud in online advertising, patients not following prescriptions in clinical trials, attacks against RL agents in computer gaming.

We focus on \emph{episodic RL}, a basic paradigm in which time is partitioned into \emph{episodes} of fixed length $H$, and the agent's
state is reinitialized in each episode. 
The algorithm only observes the outcomes of their chosen actions, that is, past trajectories consisting of states visited and rewards received.. This is referred to as \emph{bandit feedback}. We consider one model that captures outliers, that of \emph{adversarial corruptions}. This model posits that while most episodes share stochastic, i.i.d. reward and state-transition distributions, some episodes are \emph{corrupted} such that rewards and transitions are selected by an adversary. The number of episodes that are corrupted, denoted by $C$, is not known to the agent. The adversary can choose an arbitrary sequence of episodes to be corrupted, \eg all corruptions may happen in the initial episodes, which causes irrevocable damage to standard algorithms that rely on these rounds for exploration. The goal is to design \emph{corruption-robust} algorithms, i.e., algorithms whose performance gracefully degrades as $C$ becomes larger while retaining the near-optimal bounds for the i.i.d. setting (when $C=0$).

This model is well-understood in the special case of multi-armed bandits, where there is only one state \cite{LykourisMiPa18,gupta2019better,ZimmertSeldin21}. However, the main challenge in RL, absent in multi-armed bandits, lies in effectively learning the transition dynamics. Moreover, the effective ``decision space'' is exponential in the episode duration.

The algorithms which have been successful either for efficient RL or corruption-robust multi-armed bandits can be divided into three categories, each of which appears problematic for corruption-robust RL. The most popular approach to RL (and a popular approach for numerous  stochastic decision-making problems) has been the ``optimism under uncertainty'' paradigm. Here, a learner selects the action which has the greatest potential for resulting in high reward. Unfortunately, as demonstrated in  \cite{LykourisMiPa18}, these algorithms are easily manipulated, even in simpler decision-making domains such as multi-armed bandits. A second class of algorithms, initially proposed for multi-armed bandits and which has proven particularly effective in corruption-robust settings \cite{LykourisMiPa18,gupta2019better}, follow action elimination: here, the learner identifies a set of ``active actions and samples uniformly from this set \cite{Even-DarManMan06}. Unfortunately, this strategy is far too passive for episodic RL, as we demonstrate in Section~\ref{sec:lower_bound_active_arm_elimination} that naive elimination suffers exponential regret in the episode length $H$. Finally, algorithms based on mirror descent, initially studied for adversarial bandit decision-making  (e.g. the celebrated EXP3 \cite{auer2002nonstochastic} algorithm), have recently proven successful for corruption-robust bandits \cite{ZimmertSeldin21}. Extensions to RL remain elusive, however, because it is unclear how to incorporate corrupted-transition feedback into these approaches.~\footnote{This is why the concurrent work of \cite{jin2019learning}, which tackles RL with adversarial rewards, requires transitions to be i.i.d.; and the follow-up work of \cite{JinLuo20} which simultaneously handles RL with i.i.d. and adversarial rewards makes an even stronger assumption of known transitions.}

\subsection{Our contribution}
We present a new approach to corruption-robust exploration. We instantiate it to the \emph{tabular} and \emph{linear MDP} variants of episodic RL (the main two variants that have been studied), achieving the aforementioned desiderata. Our results achieve some ``firsts'' for the broader problem of RL with non-i.i.d. transition dynamics: the first non-trivial statistical guarantees in the bandit feedback model, and the first computationally efficient guarantees in any feedback model.

In tabular episodic RL, the guarantees are expressed in terms of $S$ and $A$: the number of states and actions, resp., which are finite. When there are $K$ episodes of length $H$ (hence, $T=KH$ total steps), we obtain the following regret bound:
\begin{align}\label{eq:intro-regret-tabular}
\Regret =  (C+1)\cdot \tO
 (\; \min\{\;  \sqrt{SAT},\; \gapcomplexity \;\} + AS(C+S) \;)\cdot\text{poly}(H).
 \end{align}
Specializing to constant corruption level $C$,
we achieve dependence on $S,A,T$ which is worst-case optimal even for the i.i.d. case
\cite{AzarOsMu17,dann2017unifying,auer2002nonstochastic}. Following most prior work, we achieve $\poly(H)$ dependence on $H$, and do not optimize it further.~\footnote{That said, (only) one extra factor of $H$ appears inherent in our technique, compared to the state-of-art.}
The dependence on $C$ is multiplicative, whereas it is additive for bandits \cite{gupta2019better}.

$\gapcomplexity$ measures the complexity of a problem instance in terms of the ``gaps" between the actions' $Q$-function values; see \eqref{eq:gap_complexity}. It can be as low as $SAH^2$ for instances with sharp separation between optimal and ``bad'' actions,\footnote{This happens when rewards are only collected at end-states (\eg when some tasks are completed or the game ends), and each action either allows reaching the best end-state deterministically, or disallows it entirely. In particular, this arises in \emph{combination lock} instances.}
yielding regret bounds with only $\log(T)$ dependence on $T$. $\gapcomplexity$ and similar notions have been studied in the i.i.d. case \cite{SimchowitzJamieson19,du2019provably,JinLuo20, he2020logarithmic}.
For constant $C$, our dependence on $\gapcomplexity$ is optimal even for the i.i.d. case \cite{SimchowitzJamieson19}.

We then turn to the linear-MDP setting, where the state space may be infinite, but rewards and transitions admit a linear underlying representation with a $d$-dimensional feature embedding. We obtain the following regret bound:
\begin{align}\label{eq:intro-regret-linear}
\Regret =   (C+1)\cdot \sqrt{dT}\cdot
    \tO(\; C+ \sqrt{A}  + d \;) \cdot \poly(H).
\end{align}
If $A<\tO(d^2)$, our regret is $d^{1.5} \sqrt{T}$ which is state-of-the-art for the i.i.d. case \cite{jin2019provably}.~\footnote{Yang and Wang~\cite{YangWang2019kernel} achieve linear dependence on $d$ for the i.i.d. environment, under additional assumptions.}

\xhdr{Our techniques.} On the algorithmic front, we extend the multi-armed bandit algorithm of Lykouris, Mirrokni, and Paes Leme \cite{LykourisMiPa18} by introducing two key algorithmic ideas to overcome fundamental roadblocks in incorporating active sets. The multi-armed bandit algorithm runs in parallel multiple \emph{base} bandit algorithms each with increasing tolerance to corruption that successively eliminate actions. Some algorithms are faster in recovering the optimal actions ensuring the efficient regret when the input is (mostly) i.i.d. but are prone to corruptions; more robust ones correct these mistakes and adapt to the corruption level. The first challenge that arises in RL is similar to the exponential lower bound of Section~\ref{sec:lower_bound_active_arm_elimination} and arises due to a trajectory mismatch between base learning algorithms. To deal with it, we design a careful episode schedule across them. The second challenge is that classical RL algorithms do not provide certificates of suboptimality that can enable more robust base algorithms to supervise less robust ones. To deal with it, we increase the interaction across base algorithms; in particular each base algorithm operates \emph{not only} with its local estimates \emph{but also} with global estimates. We expand on the algorithmic contribution in Section~\ref{sec:algorithm}.

On the analysis front, we provide a flexible framework that goes beyond optimism under uncertainty, the dominant paradigm in prior work. Our framework,  termed \textsc{SuperVIsed},  enables us to design RL algorithms that deviate from ``optimism" and compare their performance to that of the ``optimistic'' RL algorithms. In particular, the framework allows us to meaningfully combine ideas from ``optimism'' and ``action elimination''. Since the latter has proven useful in many bandit problems beyond corruption-robustness, we hope that our framework will find applications beyond corruption-robustness. We describe this framework in Section~\ref{sec:our_framework_RL}. Notably, our framework captures tabular and linear MDPs in a unified, model-based way. The latter point is of independent interest  even in the i.i.d. case.~\footnote{Such a unification was also achieved concurrently to our work by \cite{NeuPikeBurke20}.}

\xhdr{Other results.}
We also consider simpler solutions that do not fully meet our desiderata. If one is only interested in the worst-case bound in \eqref{eq:intro-regret-tabular}, we propose a simple adaptation of algorithm UCBVI \cite{AzarOsMu17}. This adaptation achieves corruption-robustness against any $C<\sqrt{T/SA}$, with regret bound that is state-of-art for the i.i.d. case. However, this approach does not yield any results in terms of gap-complexity. A similar adaptation for the linear-MDP variant builds on the algorithm from \cite{jin2019provably} and infuses it with a technique from robust statistics. Given $\overline{c}$, it achieves corruption-robustness against any $C\leq \overline{c}$
with regret $\poly(H)\cdot \tO(\; \sqrt{dT}(d + \overline{c}) \;)$.
Note that \eqref{eq:intro-regret-linear} essentially replaces the upper bound $\overline{c}$ with the actual corruption level $C$. Both algorithms, as well as their analyses, are building blocks for the respective main results and are simple in that they do not require substantially new ideas. We defer further discussion to Appendix~\ref{app:discussion}. 

\subsection{Related work} We study an RL model that features bandit feedback, non-i.i.d. rewards and non-i.i.d. transitions. In the \emph{bandit feedback} setting of reinforcement learning, each step the learner only observes the outcomes (reward and new state) of the chosen actions. This is in contrast to the \emph{generative} setting, where the learner has access to a simulator that allows her to sample transitions and rewards from any state-action pairs she chooses (\eg \cite{Even-DarManMan06}). 
Prior work contains no non-trivial statistical guarantees for such models.

\xhdr{Reinforcement learning with adversarial intervention.} 
``Easier'' RL models with adversarial rewards and i.i.d. transitions have been studied previously.
These models assume either known transition probabilities \cite{neu2010online,zimin_neu_2013online} 
or full feedback on rewards~\footnote{That is, rewards for all states and all actions are observed at the end of each episode}
\cite{neu2012adversarial,rosenberg2019online};
the initial paper \cite{even2009online} assumes both. Abbasi-Yadkori, Bartlett, Kanade, Seldin, and Szepesvari \cite{NIPS2013_4975} posit adversarial rewards \emph{and} transitions, but assume full feedback on both. Their algorithm is statistically efficient against a given policy class $\Pi$. Its computational complexity scales $\poly(|\Pi|)$, whereas, in our setting, there are $A^S$ possible policies. 

Closer to our model is the bandit-feedback setting, where rewards are adversarial, and are only observed for state-action pairs visited. Neu, Antos, Gy\"orgy, and Szepesv\'ari \cite{neu2010online} give regret bounds for this setting, under known transition distributions. Concurrently to our work, Jin, Jin, Luo, Sra, and Yu \cite{jin2019learning} provide regret bounds under adversarial rewards \emph{and} unknown transition dynamics in the tabular setting. Interpreted in the lens of corruptions, they assume that the adversary can only corrupt rewards and not transition dynamics. In this setting, their result is better than ours when the corruption level is not too small. On the other hand, our technique allows corruption to also occur in the transition dynamics, extends to linear-function-approximation settings, enjoys optimal dependence on the state in the tabular setting, and also provides logarithmic regret when the gaps between suboptimal actions is large.

\xhdr{Optimism in reinforcement learning.} `Optimism under uncertainty'' has been the dominant paradigm for
(the i.i.d. bandit-feedback setting of) episodic reinforcement learning. According to this paradigm, the learner greedily selects a policy that maximizes an upper confidence bound on the Q-function (a standard notion which measures the largest possible reward associated with selecting a given action at a certain state). This paradigm has been applied broadly in numerous variants of multi-armed bandits, starting from \cite{auer2002finite}. Our algorithms and analyses build on ``optimistic value iteration'' technique, a version of optimism under uncertainty which has been applied extensively in the tabular setting \cite{jaksch2010near,dann2015sample,AzarOsMu17,dann2017unifying,zanette2019tighter,SimchowitzJamieson19}, as well as in the linear-function-approximation setting \cite{YangWang2019kernel,jin2019provably,wang2019optimism}.

For the tabular setting with $S$ states and $A$ actions, a careful instantiation of this technique yields the minimax-optimal regret rate of $\sqrt{HSAT}$, up to lower-order terms \cite{AzarOsMu17,dann2017unifying}. Improvements due to Zanette \& Brunskill \cite{zanette2019tighter} achieve adaptive, problem-dependent regret guarantees, and Simchowitz \& Jamieson \cite{SimchowitzJamieson19} demonstrate that a certain class of optimistic value iteration algorithms attains $\log(T)$ instance-dependent regret bounds. For constant corruption levels, our bound matches the minimax and gap dependent rates up to logarithmic and $H$ factors. Two recent algorithms depart from optimistic value iteration: the Q-learning approach from Jin, Allen-Zhou, Bubeck, and Jordan \cite{jin2018q}, and an algorithm augmenting the observations with random perturbations by Russo
\cite{russo2019worst}. The former algorithm also attains $\sqrt{SAT \poly(H)}$
regret for worst-case i.i.d. distributions, whereas the latter algorithm suffers suboptimal dependence on $S$ in its regret bound. In both papers, the learner's policy still maximizes an estimate of the Q-function.

Up to logarithmic factors, the state-of-the-art guarantee for the linear-function-approximation setting we consider here is $\poly(H)\sqrt{d^3 T}$, where $d$ is the feature dimension \cite{jin2019provably}. Although this result does not have statistical dependence on the number of actions $A$, its runtime is linear in $A$.  Similar guarantees have been recently established for generalized linear models \cite{wang2019optimism}, and sharper guarantees are attainable with modified assumptions \cite{YangWang2019kernel}.

\xhdr{Adversarial corruptions.} The model of adversarial corruptions studied in this paper was introduced by Lykouris, Mirrokni, and Paes Leme \cite{LykourisMiPa18} in the bandit setting. They provide a high-probability regret bound that scales similarly to our $\log(T)$ regret bound. This regret bound has been improved by Gupta, Koren, and Talwar \cite{gupta2019better}, moving the dependence on the corruption level $C$ to an additive term. Zimmert and Seldin \cite{zimmert2018tsallisinf}  further optimize the dependence on the number of actions via an elegant modification of mirror descent, for the weaker notion of pseudoregret and assuming that the best action is unique. All these results crucially depend on selecting actions randomly according to their empirical gaps. Such techniques are not known to work in reinforcement learning and, in some cases there are fundamental limitations in applying them (as our lower bound on uniform elimination suggests). The model of adversarial corruptions has subsequently been extended to several other stateless problems such as linear optimization \cite{LiLouShan2019}, assortment optimization \cite{ChenKrishWang2019}, Gaussian process bandit optimization \cite{bogunovic}, dueling bandits \cite{AgarwalAgarwalPatil21}, dynamic pricing \cite{KrishnamurthyLykPod2020corruptedbinarysearch, Chen2020RobustDP}, learning product rankings \cite{GolrezaeiManSchSek21} and prediction with expert advice \cite{AmirAttiasKorenLivniMansour2020predictioncorrupted}. 

A somewhat more general theme is \emph{stochastic adaptivity} in bandits: adapting to stochastic inputs in adversarial environments. Bubeck and Slivkins \cite{BestofBoth-colt12} obtained the first \emph{``best of both words"} result: an algorithm with simutaneous near-optimal guarantees in both adversarial and stochastic settings. Subsequent work in this theme \cite{BestofBoth-icml14, Auer-colt16,Seldin-colt17,WeiL18,Haipeng-BoB019} and the corruption-robustness papers mentioned above,
has investigated more natural algorithms, more refined regret bounds, and ``intermediate regimes" between stochastic and adversarial. However, unlike the bandit setting where typically there exist effective regret guarantees for the ``adversarial regime'', in reinforcement learning, there is no such guarantee when transitions are adversarial. Our work is therefore a first step towards effective RL  algorithms in the bandit feedback setting when the transitions are not i.i.d.

\subsection{Subsequent work on our model}
After our technical report appeared on \texttt{arxiv.org}, several papers studied the corruption-robust RL model that we introduced. Chen, Du, and Jamieson \cite{ChenDuJamieson21} improve the dependence on the number of corrupted episodes $C$ in the tabular MDP variant; in particular their worst-case bound is $\tilde{\bigO}(\sqrt{T}+C^2)$ improving upon our guarantee of $\tilde{\bigO}((1+C)\sqrt{T}+C^2)$. However, their approach is computationally inefficient, suboptimal in the number of states $S$, and does not seem to extend to linear MDPs. Subsequently, Wei, Dann, and Zimmert \cite{WeiDannZimmert22} offer a model selection approach that uses our algorithm as a building block to provide the optimal regret guarantee of $\tilde{\bigO}(\sqrt{T}+C)$. Their algorithm is computationally efficient for the tabular MDP variant but is computationally inefficient for the linear MDP variant. Both of these results also provide gap-dependent refined guarantees for a weaker notion of gap (implying a larger gap complexity); we refer the reader to \cite{WeiDannZimmert22} for a detailed comparison. On an orthogonal note, Zhang, Chen, Zhu, and Sun \cite{zhang2021robust} tackle a setting where $C=\Theta(T)$ but their guarantee is PAC-based and thus results to a suboptimal dependence on $T$ when transformed into regret when $C$ is relatively small.

\subsection{Concurrent and subsequent work on non-stationary RL}

We now discuss other concurrent and subsequent developments in episodic RL with bandit feedback in a non-stationary environment. We provide direct comparison with our results when appropriate.

\xhdr{Adversarial rewards and i.i.d. transitions.}
One line of work studies online RL with adversarial rewards but 
i.i.d. transitions. The concurrent work of Jin, Jin, Luo, Sra, and
Yu \cite{jin2019learning} achieve  near-optimal regret. Subsequent to our work, Jin and Luo \cite{JinLuo20} designed an algorithm that achieved similar guarantees while simultaneously obtaining optimal gap-dependent guarantees when the rewards are i.i.d.. This result assumes known transition probabilities, but was later  extended to unknown but i.i.d. transitions \cite{JinHuangLuo21}. All these algorithms rely on importance-weighted estimates for the Q-function, which does not easily extend to ``corrupted'' transition probabilities. Similar ``best-of-both-worlds'' results have been previously obtained for multi-armed bandits, as discussed above.
Moreover, Neu and Olkhovskaya~\cite{neu_olkhovskaya_2020} handle adversarial MDPs in the linear-MDP variant, given access to an exploratory policy.

\xhdr{Switching RL.}
Another line of work posits that the underlying reward and transition distributions switch at most $L$ times.
Cheung, Simchi-Levi, and Zhu~\cite{cheung2023nonstationary} initiated the study of this model in the tabular-MDP setting, achieving $L^{1/4}T^{3/4}$ rate for dynamic regret, i.e., comparing to the best sequence of policies. Subsequently,  Domingues, M\'enard, Pirotta,
Kaufmann, and Valko \cite{domingues2020kernel} improved this to $L^{1/3}T^{2/3}$ and Wei and Luo \cite{WeiLuoCOLT21} provided the optimal $\sqrt{LT}$ guarantee; see \cite{WeiLuoCOLT21} for detailed related work in that direction. The first two works only apply to the tabular-MDP variant, while the latter also extends to the linear-MDP variant.

Our model of adversarial corruptions can be interpreted as a special case of Switching RL with $L=2C$ switches: at each corrupted round, we have one switch from the nominal MDP to the corrupted one, and another switch back to the nominal MDP. Let us compare our guarantees with those in \cite{cheung2023nonstationary,domingues2020kernel,WeiLuoCOLT21}.\footnote{\cite{cheung2023nonstationary} is essentially concurrent with our work, and \cite{domingues2020kernel,WeiLuoCOLT21} are subsequent to our work.}
Letting $L=2C=T^a$ to enable a direct comparison, our gap-dependent bound is of the order $T^a$ and our worst-case bound is of the order $T^{1/2+a}$.  With respect to worst-case guarantees, our bound compares favorably to the one in \cite{cheung2023nonstationary} if and only if $a<\frac{1}{3}$, to \cite{domingues2020kernel} if and only if $a<\frac{1}{6}$, and is always worse than the one in \cite{WeiLuoCOLT21}. When the gap complexity is independent of $T$, our gap-dependent result strictly dominates all the aforementioned bounds.

We emphasize that gap-dependent guarantees are unattainable for Switching RL. This is because they consider with dynamic regret, and the optimal rate thereof is $\Theta(\sqrt{T})$ even if $L=1$ even with a single switch.\footnote{To see this, consider a simple example: a bandit problem with two arms where the first arm has mean $0.5$ and the second arm has mean $0$ for the first $T_0$ rounds and then $1$ (for an unknown $T_0$).  Note that the gap  complexity in both cases is $0.5$. Divide the time interval into blocks of length $\sqrt{T}$. Let $q_j$ be the probability of selecting the second arm at least once in block $j$. If $q_j\geq 0.2$ for all blocks then setting $T_0=T$ leads to a dynamic regret of $0.1\sqrt{T}$. Otherwise, setting $T_0$ as the start of the first block $j'$ where $q_{j'}<0.2$, we have $0.8$ probability of not detecting the switch until the block is over, which leads to a contribution to the dynamic regret of $0.4\sqrt{T}$.} 

While our approach of enlarging confidence intervals to account for corruption originates from \cite{LykourisMiPa18}, a similar idea arises in the work on Switching RL \cite{cheung2023nonstationary,WeiLuoCOLT21}. The initial incarnation of this idea in Switching RL, in \cite{cheung2023nonstationary}, employs an EXP3 algorithm on top of the base learners, and thus cannot attain gap-dependent guarantees when applied to our problem.

\section{Preliminaries and main results}
\label{sec:prelims}
In this section, we introduce  basic background from episodic RL as well as our model and our main results.

\xhdr{Markov Decision Processes (MDPs) and Policies.}
A finite-horizon \emph{MDP} consists of \emph{state space} $\states$, \emph{action space} $\actions$, and horizon $H$. In the tabular setting, the state and action spaces are finite $|\calX| = S < \infty$ and $|\calA| = A < \infty$. At each \emph{step} (or \emph{stage}) $h \in [H]$, the agent observes the current state $x_h\in\states$, selects an action $a_h\in\actions$, earns reward $R_h\in [0,1]$, and transitions to the new state $x_{h+1}\in\states$. $R_h$ and $x_{h+1}$ are drawn independently at random from \emph{reward distribution} $\Drew(x_h,a_h)$ and \emph{transition distribution} $p(x_h,a_h)$, respectively. Both distributions are parameterized by a state-action pair and fixed for all stages. We assume that $x_1$ is drawn from an initial distribution~$p_0$.

A policy $\boldpi$ for selecting actions is a (possibly randomized) mapping from the observable history
    $(x_1,a_1; \,\ldots\, ; x_{h-1},a_{h-1};x_h)$,
for every given step $h$, to an action~$a_h$.~\footnote{Importantly, we allow the randomness in $\boldpi$ to be shared across episodes. In other words, $\boldpi$ can equivalent to a selecting a policy $\pi$ according to a given distribution of deterministic policies $\{\pi\}$. }  Operators
    $\Pr^{\calM,\boldpi}$ and $\Exp^{\calM,\boldpi}$
denote probabilities and expectations induced by policy $\boldpi$ and MDP $\calM$. The \emph{value} of the policy under $\calM$ is defined as its total expected reward:
$\textstyle V^{\calM,\boldpi} := \Exp^{\calM,\boldpi}\left[\sum_{h\in [H]} R_h\right]$. 
It is known that, for a given MDP $\calM$, the policy value $V^{\calM,\boldpi}$ is maximized (possibly not uniquely) by a policy $\pi$ which is deterministic and \emph{Markovian}, in the sense that it depends only on the current step $h$ and the current state $x_h$.
A \emph{randomized} Markovian policy $\boldpi$ is as distribution over deterministic Markovian policies. Henceforth, all policies are assumed to be Markovian unless specified otherwise.

In our setting, there is a nominal MDP denoted $\calM = (\pst_0,\pst,\Drew)$. The mean reward for state $x$ and action $a$ is denoted
 $\rst(x,a) = \Exp_{R\sim \Drew(x,a)}[R]$.
We often omit the dependence on $\calM$ from notation: for a randomized policy $\boldpi$, we write expectations, probabilities and policy value under $\calM$ as
    $\Pr^{\boldpi}  = \Pr^{\calM,\boldpi}$,
    $\Exp^{\boldpi} = \Exp^{\calM,\boldpi}$,
and
    $V^{\boldpi} := V^{\calM,\boldpi}$. The optimal value and the optimal policy of the nominal MDP are denoted by
    $\vst := \max_{\boldpi\in\boldPi}\vboldpi$ and $\pist:=\arg\max_{\boldpi\in\boldPi}V^{\boldpi}$ respectively.

\xhdr{Model with adversarial corruptions.} We have $K$  \emph{episodes}, each operating with a finite-horizon MDP of horizon $H$, thus there are $T=HK$ steps in total. Throughout we assume the \emph{MDP environment} $(\states,\actions,H)$ is known to the learner. The learner faces the \emph{nominal MDP} $\calM = (\pst_0,\pst,\Drew)$
in all episodes that are not corrupted. Before episode $k$, the adversary decides whether to corrupt the episode, in which case the corresponding 
MDP  $\calM_k = (p_0^{(k)}, p^{(k)},\Drew^{(k)})$
can be arbitrary.\footnote{We allow the adversary to corrupt different steps in different ways, so transition and reward distributions are parameterized by step $h$. In contrast, these distributions are not step-dependent for the nominal MDP.} 
Crucially, the learner observes neither the nominal MDP nor the choices of the adversary. The adversary can be \emph{adaptive}, in the following sense. Before each episode $k$, the learner commits to a randomized Markovian policy $\boldpi_k$.\footnote{This is restrictive, in principle; however, all our policies are of this shape.} The adversary observes $\boldpi_k$ and the observable history of the previous episodes, and then decides whether and how to corrupt the episode. 

\xhdr{Conventions and notation.}
We consistently use $k$ for episodes, $h$ for stages, $x$ for states, and $a$ for actions. We use $\pi$ for deterministic policies and $\boldpi$ for randomized ones. We use subscript $(k;h)$ for various quantities in step $h$ of episode $k$: thus, we have
state $x\kh$, action $a\kh$, reward $R\kh$, and $\pi\kh$ for the deterministic policy $\pi_k$ that learner draws from $\boldpi_k$ at the beginning of episode~$k$. We write $\didcorrupt_k=1$ if episode $k$ is corrupted, and $\didcorrupt_k=0$ otherwise. We denote total corruption level $C := \sum_{k=1}^K \didcorrupt_k$;
by adversary's adaptivity, 
this is a random variable not determined until episode~$K$. We assume $C \ge 1$ without loss of generality.~\footnote{We recover the stochastic setting by taking $C = 1$, and assuming that the adversary selects i.i.d. transitions and rewards on the corrupted episode.}

For a given transition distribution $p$, we interpret $p(x,a)$, for a given state-action pair, as a vector in the simplex over states: $p(x,a)\in \Delta(S)
$. By a slight abuse of notation, its $x'$-th component, i.e., the probability of transition to state $x'$, is sometimes denoted by $p(x'\mid x,a)$.

\xhdr{Regret.} We evaluate the algorithm using the ``all-knowing benchmark'' which knows
the MDP $\calM_k$ before each episode $k$ (but not the realized rewards or state transitions). We define \emph{regret} as
\begin{align}\label{eq:prelims-regret}
    \Regret &:=\textstyle 
    \sum_{k\in [K]}\left( \max_{\boldpi\in\boldPi} V^{\calM_k,\boldpi}-V^{\calM_k,\boldpi_k}\right),
\end{align}
where $\boldPi$ is the set of all policies, possibly randomized and non-Markovian. Recall that $V^{\calM_k,\boldpi_k}$ is the expected reward at episode $k$ conditioned on the choice of $(\calM_k,\boldpi_k)$. Hence, $\Regret$ is a random variable, where the randomness comes both from the learner and from the adversary (and, implicitly, from the realization of rewards and state transitions in the history). Our algorithms bound this quantity with high probability $1-\delta$ for $\delta>0$. To ease presentation, we upper bound regret as
\begin{align}
 \Regret &\le \textstyle \sum_{k\in [K]} \vst - V^{\boldpik} + CH, \label{eq:corrupted_regret}
\end{align}
which penalizes the suboptimality of policies $\boldpik$ only on the nominal MDP. Note that even though the policies $\boldpik$ are evaluated on the nominal MDP, they are constructed using data collected from both corrupted and uncorrupted episodes. The dependence on $CH$ is unavoidable: $\Regret \geq CH$ in the worst case.
\footnote{Consider the setting with $H$ multi-armed bandit instances and transitions that are independent from the selected actions (essentially run $H$ disjoint bandit settings). An adversary that sets reward of $0$ to all actions other than the one selected with the lowest probability in the first $C$ episodes causes regret at least $CH$ while corrupting $C$ episodes.}
In the remainder of the paper, we directly compare to $\vst$.

\xhdr{Background: episodic RL without corruptions.} The optimal value function $\vsth(x)$ captures the reward of the optimal policy $\pist$ starting at stage $h$ and state $x$. This policy maximizes the optimal Q-function  $\qsth\xa := r^{\star}\xa + p^{\star}\xa^\top \vsthpl$,  where $p^{\star}(x,a)^{\top} \vsthpl=\Exp_{x'\sim p^{\star}(x,a)}\Big[\vst_{h+1}(x')\Big]$ is the payoff obtained by playing optimally after transitioning to a  state $x'$ drawn from $p^{\star}\xa$. In the tabular setting, $\pst\xa$ and $\rst\xa$ are arbitrary elements of the simplex $\Delta(\calX)$ and $[0,1]$ respectively. Together with the initial condition $\vst_{H+1} \equiv 0$, $\vsth$ is defined recursively via $\vsth(x) := \max_{a} \qsth\xa$. These recursions for $\vsth$ and $\qsth$ are known as \emph{Bellman updates}.

The performance guarantees in the tabular setting often scale with the difference between the optimal value function and the Q-functions of other actions. This is captured by the notion of \emph{gap}:
\begin{align*}
\gaph\xa := \vsth(x) - \qsth\xa 
    = \textstyle \max_{a'}\;\qsth(x,a') - \qsth\xa,
\end{align*}
which represents the suboptimality gap of playing action $a$ at state $x$ at stage $h$, and continuing to play optimally for for all $h' > h$.  For simplicity, we define the minimal gap over all stages as $\gap\xa := \min_{h} \gap_h\xa$ and the minimum non-zero gap across all state-action pairs as $\gapmin=\min_{(x,a)\in\states\times\actions}\gap\xa$. 
We also define $\optacts := \{(x,a): \exists h\in [H], a\in \pist_h(x)\}$, which denotes the set of state-action pairs whose actions belong to the set of the optimal actions of the state at some stage $h\in [H]$. We define $\subacts := \states\times\actions - \optacts$ as the complement of $\optacts$. Note that for any $(x,a)\in \subacts$, we must have $\gap_h\xa > 0$ for all $h\in [H]$. The \emph{gap complexity} is:
\begin{align}
\gapcomplexity := 
\textstyle 
\sum_{\xa \in \subacts} 
H/\gap\xa + 
H^2\,|\optacts|/\gapmin. 
\label{eq:gap_complexity}
\end{align}
This definition reflects the typical dependence on  state-action gaps given in standard gap-dependent guarantees for episodic RL (without corruptions); for example, see Corollary 2.1 in \cite{SimchowitzJamieson19} for a more detailed discussion. We note that the dependence on $|\optacts|/\gapmin$ is unavoidable for optimism-based algorithms (Theorem 2.3 in \cite{SimchowitzJamieson19}).  If we neglect this $|\optacts|/\gapmin$ term required for the RL setting, and consider S = H = 1, the gap complexity reduces to the standard gap complexity in the bandit setting \cite{audibert2009exploration}.

The above quantities are not known to the learner as the nominal MDP is latent. As a result, model-based RL approaches approximate the nominal MDP from data. For the tabular setting, we use unweighted sample averages over a given subset of episodes $\mathcal{K}\in[K]$. Given state-action pair $(x,a)$, consider the set of relevant samples, i.e., episode-step pairs where this state-action pair occurs:
\begin{align}\label{eq:subsampled_data}\Phi_{\epSet}(x,a)=\left\{(k,h)\in [K]\times [H]: x\kh=x \text{ and } a\kh=a\right\}.\end{align}
Denote its cardinality as $N_{\mathcal{K}}(x,a)$. We estimate expected rewards and transition probabilities:
\begin{align}\label{eq:subsampled_estimates}
    \widehat{r}_{\epSet}(x,a)=\frac{\sum_{(k,h)\in\Phi_{\epSet}(x,a)}R_{k;h}}{N_{\epSet}(x,a)} \quad and \quad \widehat{p}_{\epSet}(x'|x,a)=\frac{\sum_{(k,h)\in\Phi_{\epSet}(x,a)} \mathbf{1}\left\{x_{k;h+1}=x'\right\}}{N_{\epSet}(x,a)}
\end{align}
We define these quantities for general $\mathcal{K}$ because our algorithm combines different base algorithms and, in some occasions, these base algorithms use only the data from a subset of the episodes. Classical episodic RL approaches use all data globally, i.e., $\mathcal{K}=[k-1]$; then the above quantities average over all episodes $t<k$ and we use the shorthands $N\kgl(x,a)$, $\widehat{p}\kgl(x,a)$ and $\widehat{r}\kgl(x,a)$.

A canonical episodic RL algorithm is \emph{Upper Confidence Bound Value Iteration} or \emph{UCBVI} \cite{AzarOsMu17} which mimics Bellman updates using the empirical model adjusted by bonuses $\bonus_k$:
\begin{equation}
\begin{split}
 &\vup_{k,H+1}(x)=0, \quad \qup_{k,h}(x,a)=\widehat{r}\kgl(x,a)+\widehat{p}\kgl(x,a)^{\top} \vup_{h+1}+\bonus_k(x,a) \\ 
   & \pi\kh(x)\in\arg\max_a \bar{Q}_{k,h}(x,a),\quad \vup\kh\ofx = \qup\kh(x,\pi\kh\ofx)
\end{split}\label{eq:VI_def}
\end{equation}
where $\bar{V}_{k,h}$ and $\bar{Q}_{k,h}$ are bonus-enhanced upper bounds of the value and Q-functions respectively. $\bar{Q}_{k,h}(x,a)$ is therefore similar to $\textrm{UCB}_t(a)$ in bandits. The UCBVI policy $\pi_k^{\textrm{ucb}}$ selects the action that maximizes this upper bound for each state-action pair $(x,a)$, i.e. $\pi_{k,h}^{\textrm{ucb}}(x,a)\in\arg\max_a \bar{Q}_{k,h}(x,a).$. The bonus used by \cite{AzarOsMu17} and on which we are building is:
\begin{align}
\bonus_k\xa&\approx H \min\{\;1,\sqrt{\ln (SAHT/\delta)\;/\;N\kgl\xa}
\;\}.
\end{align}

\xhdr{Main results.} Before presenting the algorithm in Section~\ref{sec:algorithm}, we present the main results it achieves. For the tabular setting, the regret guarantee comes from the following theorem.~\footnote{Here and throughout, $\lesssim$ hides constant factors.} Recall that we assume without loss of generality that the total number of corruptions is $C \ge 1$.

\vspace{0.1in}
\begin{theorem}[Main Guarantee for Tabular Setting]\label{thm:main_tabular} \hspace{0.1em}  For $\delta \in (0,1/2)$, and define $\deltaeff :=  \frac{\delta}{HSAT}$, with probability $1 - \delta$, $\supervisedc$ obtains regret of:
\begin{align*}
 \Regret &\lesssim \min\left\{ CH^3 \ln^2(T) \cdot \sqrt{ SAT\ln \tfrac{1}{\deltaeff} }, \quad  C H^4 \cdot \gapcomplexity \cdot  \ln^3 \tfrac{1}{\deltaeff}   \right\} \\
 & \qquad +  H^6 A \left(S^2 C \ln^3 \tfrac{1}{\deltaeff} + SC^2\ln^2(T)\right).
 \end{align*}
\end{theorem}
Although to ease presentation, we initially explain the arguments for the tabular setting, our results also extend to the linear MDP setting  \cite{jin2019provably} where rewards and transitions admit a linear underlying representation with a $d$-dimensional feature embedding. The algorithm in the two settings only differs by the way that model estimates and bonuses are created; we explain the exact model and the algorithmic differences in Sections~\ref{ssec:linear_model} and \ref{ssec:linear_algorithm}. 

\vspace{0.1in}
\begin{theorem}[Main Guarantee for Linear Setting]\label{thm:main_linear} \hspace{0.1em} Fix confidence $\delta \in (0,1/2)$. Then, with probability $1 - \delta$, $\supervisedc$ obtains regret of:
\begin{align*}
\Regret_T \lesssim  \sqrt{(d^3 + dA) T} \cdot  CH^4 \cdot \ln^{3}\tfrac{TAd}{\delta} + \sqrt{dT} \cdot C^2H^5 \cdot \ln^{2}\tfrac{T}{\delta},
\end{align*}
\end{theorem}

\section{Our algorithm: \textsc{SuperVIsed.C}}
\label{sec:algorithm}

\newcommand{\pimasternonboldk}{\pimasternonbold_k}
\newcommand{\UCBVIbase}{\textsc{UCBVI-base}}
\newcommand{\pimasternonbold}{\pi^{\textsc{master}}}

At a high level, our algorithm (Algorithm~\ref{alg:corruption_robust_rl}) coordinates $\lmax = \lceil{\log T}\rceil$ RL base algorithms which we refer to as \emph{base learners} (or \emph{layers}). Each of the base learners looks at the data and recommends a policy. At the beginning of each episode, a random seed determines whose base learner's recommendation our algorithm will follow at any particular step in that episode. Note that our algorithm follows the recommendation of different base learners in different time steps of the same episode.

We formalize 
our algorithm below and, to ease exposition, we instantiate it to the tabular RL setting. That said, Algorithms~\ref{alg:corruption_robust_rl} and \ref{alg:UCBVI_base} apply beyond the tabular setting; what changes is only the definition 
of the confidence bonuses, and reward and transition estimates (for linear RL, these are defined in Section \ref{ssec:regret_guarantee_linear_mdp}).

\xhdr{Robustness via subsampling.} Similar to past algorithms for multi-armed bandits \cite{LykourisMiPa18}, each base learner $\ell$ keeps a set of \emph{local} (or \emph{subsampled}) data; these correspond to data collected in a subset of the episodes which we denote by $\epSet\kl$. We note that $\epSet\kl$ includes all the episodes where we follow the recommendation of base learner $\ell$ in the final step $h=H$ of the episode. That said, in previous steps of the same episode, the algorithm may follow recommendation of other base learners $\ell'<\ell$ (as described below).

Base learners differ by the probability with which a particular episode ends up in their episode set; this probability is approximately $2^{-\ell}$ for base learner $\ell$. Hence
base learners $\ell\geq\lceil C\rceil$ have in expectation $C\cdot 2^{-\ell}\leq 1$ corrupted episodes in their episode sets and with high probability at most $\ln(16\ell^2/\delta)$ since the corrupted episodes are subsampled. The \emph{critically tolerant} layer $\lst=\lceil \log C\rceil$ is responsible to correct the mistakes of the layers $\ell<\lst$ that may be misled by corruptions. A simple way to ensure that robust base learners $\ell\geq \lceil C\rceil$ keep valid confidence intervals despite the corrupted episodes is to augment the bonus by $\ln(16\ell^2/\delta)$; this gives the first set of bonuses:
\begin{align}
    \bonus\klsb\xa&=\min\left\{H,\left( 2H\sqrt{\frac{ 2\ln (64 SAHT^3/\delta)}{N\klsb\xa}}  + \frac{2H^2   \ln\frac{16\ell^2}{\delta}}{N\klsb\xa}\right)\right\}.
    \label{eq:subsample_bonus}
\end{align}
where $N\klsb\xa = N_{\epSet\kl}\xa$ is the number of visitations for state-action $(x,a)$ restricted to episode set $\epSet\kl$. The subsampled estimates for the reward and transition functions are the corresponding estimates restricted to episodes in $\epSet\kl$, via  \eqref{eq:subsampled_estimates}:
\begin{align}
\rhat\klsb := \rhat_{\epSet\kl}, \quad \phat\klsb := \phat_{\epSet\kl}, \quad \rhat_{\epSet},\phat_{\epSet} \text{ as in  \eqref{eq:subsampled_estimates}} \label{eq:sub_empirical_estimates}.
\end{align}
To ensure that robust base learners supervise less robust ones, we create nested active sets. Each base learner receives its active
set $\activeset_{k,\ell;h}$ from learner $\ell+1$ and computes the policy that maximizes an upper bound on the corresponding Q-function when restricted only on actions of the active set. The active $\activeset_{k,\ell}$ for learner $\ell$ is a
collection of active sets $\activeset_{k,\ell;h}\ofx \subseteq \actions$, indexed by $h \in [H]$ and $x \in \calX$, and UCBVI computes a deterministic policy $\pi^{\textsc{base}}\klh\ofx$ by maximizing the upper bound on the Q-function $\qup\klh\ofx$, restricted to the active set $\activeset_{k,\ell;h}\ofx$ provided by learner $\ell+1$.

\xhdr{Challenges unique to the RL setting and trajectory mismatch.} In multi-armed bandits, by selecting the critically tolerant base learner $\lst$ with probability $2^{-\lst}$, we ensure that the lower confidence bound of the optimal action eventually becomes larger than the upper confidence bound of any suboptimal action $a$. At this point, we can eliminate the suboptimal action $a$ and transfer this information to less robust base learners (by also removing $a$ from their active set). As we already discussed, in episodic RL, action-eliminating algorithms have regret that is exponential in the episode horizon $H$ (see Section~\ref{sec:lower_bound_active_arm_elimination}) so we cannot directly use an action-eliminating algorithm as a base learner. Moreover, algorithms such as \textsc{UCBVI} do not come with tight confidence intervals for the value functions of the deployed policies and hence it is unclear how they can lead to elimination of suboptimal actions in the less robust base learners.

To make things worse, the states visited by the different base learners may be significantly different, which does not allow us to easily use the data collected from the critically-tolerant base learner to help identify errors in the Q-functions of not selected actions in the states visited by the deployed policy in the less robust base learners. We refer to this phenomenon as \emph{trajectory mismatch}. Note that this state-wise trajectory mismatch does not happen in multi-armed bandits because all base learners share a common unique state.

\newcommand{\LDOTS}{\, ,\ \ldots\ ,}     
\newcommand{\algTAB}{\hspace{1.5em}}
\newcommand{\mD}{\mathcal{D}}

\xhdr{Episode schedule across base learners.} To ensure efficient regret, we need to allow tolerant base learners $\ell$ to supervise less tolerant $\ell'<\ell$ on the trajectories induced by policies $\pi_{k,\ell'}$ recommended by the latter. We accomplish this via traversing base learners in increasing order with a careful episode scheduling procedure based on the following resampling distribution   $\mD_\ell$ for every $\ell \in[\lmax]$; this distribution is supported
    $\{\ell \LDOTS \lmax\}$:
\begin{align}\label{eq:resamplingD}
\begin{cases}
\Pr[\; \mD_\ell = n\; ] = \tfrac{1}{2n H}\cdot 2^{-(n-\ell)}
     & \text{for each } n\in \{\ell+1 \LDOTS \lmax\}, \\
\mD_\ell = \ell
    &\text{with the remaining probability}.
\end{cases}
\end{align}
When episode $k$ ends, we add the episode's data to the episode set of the last base learner in this episode schedule $\epSet\kl$. Our scheduling distribution ensures that any particular episode is charged to base learner $\ell$ with probability less than $2^{-\ell}$, making base learners $\ell\geq \log C$ robust to corruption $C$. It addresses the problem of trajectory mismatch by ensuring that for any $\ell > \ell'$ and any stage $h$, there is roughly $2^{-(\ell - \ell')}/H$ probability of switching from base learner $\ell'$ to a trajectory which executes the more tolerant learner's policy $\pi\kl$ for all stages $h,h+1,\dots,H$. This allows the more tolerant learners to develop an estimate of the value function $\vsth(x)$ on states that are frequented by the less tolerant policy $\pi_{k,\ell'}$, thereby ensuring adequate supervision.\footnote{More formally, our decomposition measures the performance of the non-tolerant base learner on a fictitious alternative trajectory where, at some step $h$, the trajectory instead switched to following $\pi_{k,\lst}$, denoted by $\pi_k\oplus_h \pi_{k,\lst}$. Our layer schedule ensures that the densities of the actual and fictitious trajectories are close; formally, via a bounded visitation ratio (Definition~\ref{defn:visitation}) which is central to the analysis (Section~\ref{sec:our_framework_RL}).} This episode schedule is the random seed discussed in the beginning of the section.

\begin{algorithm}\caption{\textsc{SuperVIsed.C} (Supervised Value Iteration with Corruptions)}\label{alg:corruption_robust_rl}
{\bf Initialization:}
    $\lmax= \lceil{\log T}\rceil$ \algcomment{(max \#layers)} \;
    \algTAB
    episode sets $\epSet_\ell= \emptyset$, for each $\ell\in[\lmax]$ \;
	\algTAB
        $\activeset_{k,\lmax;h}(x)= \actions$
    for all $k\in[K]$, $x\in\states$, $h \in [H]$
    \algcomment{(active sets for layer $\lmax$)}\\
\For{episode $k=1 \LDOTS K$}{
    \algTAB
        \algcomment{(\emph{Notation for analysis:} $\epSet_{k,\ell} \leftarrow \epSet_\ell$ for each $\ell$.)}\\
  \algTAB \algcomment{(Compute policy $\&$ active set for each base learner)}\\
  \For{base learner $\ell=\lmax \LDOTS 1$}{
      $(\pi_{k,\ell}^{\textsc{base}},\activeset_{k,\ell-1})\leftarrow \UCBVIbase(k,\ell,\activeset_{k,\ell},\epSet_\ell)$
    \label{line:get_active_set}
    }
    $\ell\leftarrow 1$  \algcomment{(current layer)}\\
    \For{stage $h = 1 \LDOTS H$}{
        Update the current layer $\ell$: resample it from distribution $\mD_\ell$, as per \eqref{eq:resamplingD}.\\
        \algTAB
        \algcomment{(\emph{Notation for analysis:}
            $f(k,h) \leftarrow \ell$
            and
            $\pimasternonbold_{k;h}\leftarrow \pibase_{k,\ell;h}$)}
        \label{line:alg_layerSched}\\
	  Select action $a\kh$ using policy $\pibase_{k,\ell;h}$\\
	 }
	 $\epSet_\ell\leftarrow \epSet_\ell \cup\{k\}$
    \algcomment{(add $k$ to the episode set for last layer in the episode)}
  \label{line:alg_epSet}
}
\end{algorithm}

\xhdr{Supervision with global data.} We now turn towards \textsc{UCBVI-BASE} (Algorithm~\ref{alg:UCBVI_base}) that provides the recommendation of each base learner. Unlike elimination algorithms that quantify \emph{how fast} suboptimal actions are eliminated, algorithms such as UCBVI never deem an action suboptimal (they stop selecting it when the upper confidence bound is at most the mean of the optimal action). We therefore combine the local (subsampled) model estimates, already present in active arm elimination with global estimates that limit how many times suboptimal actions are selected. Since base learner $\ell$ is supposed to be robust to corruption up to $2^{\ell}$ and we cannot rely on subsampling when using all data, we augment its global bonus with $2^\ell$:
\begin{align}
\label{eq:global_bonus}
\bonus\klgl\xa&=\min\left\{H,\left( 2H\sqrt{\frac{ 2\ln (64 SAHT^2/\delta)}{N\kgl\xa}}  + \frac{2^{\ell}H^2}{N\kgl\xa}\right)\right\},
\end{align}
where $N\kgl\xa = N_{[k-1]}\xa$ is the total number of visitations for state-action $(x,a)$ up to episode $k-1$. Parallel to  the subsampled case, the global estimates for the reward and transition functions are the corresponding estimates restricted to episodes in $[k-1]$, via  \eqref{eq:subsampled_estimates}:
\begin{align}
\rhat\kgl := \rhat_{[k-1]}, \quad \phat\kgl := \phat_{[k-1]}, \quad \rhat_{\epSet},\phat_{\epSet} \text{ as in \eqref{eq:subsampled_estimates}} \label{eq:global_empirical_estimates}.
\end{align}

\xhdr{Double-supervised value iteration.}
In order to compute $\qup\klh\ofx$, we use both global and subsampled bonuses defined above as well as global and subsampled reward-transition estimates $(\rhat\kgl,\phat\kgl)$ and $(\rhat\klsb,\phat\klsb)$ defined based on \eqref{eq:subsampled_estimates} applied to $[k-1]$ and $\epSet\kl$ respectively: $\mathfrak{m}\klgl=(\rhat\kgl,\phat\kgl,\bonus\klgl)$ based on all data and one $\mathfrak{m}\klsb=(\rhat\klsb,\phat\klsb,\bonus\klsb)$ based only on data in episode set $\epSet\kl$. The upper confidence bound for the Q-function of each state-action pair is determined based on the tightest between the corresponding upper confidence bounds generated by global and subsampled models. Since instantaneous rewards are in $[0,1]$ we also upper bound it by $H$ which allows to more accurately estimate the effect of future steps. As a result, for any state-action pair $(x,a)\in\states\times \actions$, the upper bound is $\qup_{k,\ell;h}(x,a)=\min\{H,\rhat\kgl(x,a)+\phat\kgl(x,a)^{\top}\vup_{k,\ell;h+1}+\bonus\klgl,\rhat\klsb(x,a)+\phat\klsb(x,a)^{\top}\vup_{k,\ell;h+1}+\bonus\klsb\}$
where $\vup_{k,\ell;h}(x)=\max_{a\in\activeset\klh\ofx}\qup_{k,\ell;h}(x,a)$.
Base learner $\ell$ then provides the active sets to $\ell-1$ by removing (only for episode $k$) actions $a$ whose upper bound $\qup\klh\xa$ is less  than the maximum lower bound $\max_{a'}\qlow_{k,\ell;h}(x,a')$. The lower bounds on the Q-functions are also constructed via value iteration, with $\vlow\klh\ofx=\max_{a\in\activeset\klh\ofx}\qlow\klh(x,a)$.\footnote{By construction, all active sets are non-empty so the base learner's recommendation is always well-defined.}

\newcommand{\GL}{\mathtt{gl}}
\newcommand{\SB}{\mathtt{sb}}
\begin{algorithm}\caption{$\textsc{UCBVI-BASE}
(\text{ episode $k$; layer $\ell$; active sets $\activeset_{k,\ell}$; episode set $\epSet_\ell\subset [k]$ })$ \label{alg:UCBVI_base}}
{
    $(\rhat_{\GL},\phat_{\GL};\; \bonus_{\GL})$
        is the model/bonus for the \emph{global} episode set $[k]$ (\eqref{eq:global_bonus} and \eqref{eq:global_empirical_estimates}).\\
        $(\rhat_{\SB},\phat_{\SB};\; \bonus_{\SB})$
     is the model/bonus for the \emph{subsampled} episode set $\epSet_\ell$ (
\eqref{eq:subsample_bonus} and \eqref{eq:sub_empirical_estimates}).\\
$\vup_{H+1}\ofx = \vlow_{H+1}\ofx = 0 \qquad   \forall x \in \states$\\
\algTAB \algcomment{Shorthand: $\activeset_h = \activeset_{k,\ell;\,h}$ for each stage $h$.}\\
\algTAB
    \algcomment{Notation for analysis: $\qup_{k,\ell;\,h} \leftarrow \qup_h$, same for $\qlow_h,\; \vup_h,\; \vlow_h$.}\\
\For{each stage $h = H,\ldots,1$}{
         $\qup_h=\min\big{\{}H, \qquad \rhat_{\GL}+\phat_{\GL}^{\top}\cdot\vup_{h+1} +  \bonus_{\GL}, \qquad \rhat_{\SB}+\phat_{\SB}^{\top}\cdot\vup_{h+1} + \bonus_{\SB}\big{\}}$\\
        $\qlow_h=\max\big{\{}0, \qquad \rhat_{\GL}+\phat_{\GL}^{\top}\cdot\vlow_{h+1} - \bonus_{\GL}, \qquad \rhat_{\SB}+\phat_{\SB}^{\top}\cdot \vlow_{h+1} - \bonus_{\SB}\big{\}}$\\
        $\pibase_h\ofx \leftarrow \arg\max_{a \in \activeset_h\ofx} \qup_h(x,a) \qquad \qquad \forall x\in\states$\label{line:pibase}\\
       $\vup_h\ofx \leftarrow \qup_h(x,\pibase_h\ofx) \quad \text{and} \quad \vlow_h\ofx \leftarrow \max_{a\in\activeset_h\ofx}\qlow_h(x,a), \;\;\forall x\in\states$ \label{line:vup}\\
      $\activeset_{k,\ell-1;h}\ofx \leftarrow \{a \in \activeset_h\ofx: \qup_h(x,a) \ge \vlow_h\ofx \qquad \forall x\in\states\}$
}
}
\textbf{Return: }
$(\; \text{base policy $\pibase$, active sets $\activeset_{k,\ell-1}$} \;)$
\end{algorithm}

\section{Our analytical framework: \textsc{SuperVIsed}}
\label{sec:our_framework_RL}

\newcommand{\Dsched}{\mathcal{D}^{\textsc{sched}}}
\newcommand{\klel}{_{k,\le \ell}}
\newcommand{\ql}{q_{\ell}}
\newcommand{\qlelst}{q_{\le \lst}}

We now introduce our analytical framework that extends prior analyses based on ``optimism under uncertainty'' to incorporate active sets. Our $\supervised$ framework allows for more flexible algorithm design as it does not require the algorithm to select the action that has the highest upper confidence bound on the corresponding Q-function. This section is not tied to a particular episodic RL setting; in Sections~\ref{ssec:tabular_initial} and \ref{ssec:regret_guarantee_linear_mdp}, we show how it can be applied to tabular and linear MDPs.

\subsection{Key principles.}
\label{ssec:key_principles_supervised}
To present our framework, we first introduce its main principles.
 
\xhdr{Confidence-admissibility.} Our analytical framework centers around (a) confidence intervals for the optimal Q-function and (b) active sets that are restricted to actions which are plausibly optimal, given these confidence interval. We refer to the object which contains this data as a Q-supervisor $\tuple_k=\big(\qup_k,\qlow_k,\activeset_k,\pigreed_k\big)$, where
\begin{enumerate}
\item $\qup_{k;h}(x,a),\qlow_{k;h}(x,a)\in [0,H]$ are upper and lower estimates for the optimal Q-function.
\item For all $(x,a,h)\in\states\times\actions\times [H]$, $\activeset_k=\{\activeset_{k;h}(x)\}_{x\in\states, h\in[H]}$ consists of subsets of actions.
\item The UCB policy $\pigreed_k$ is a deterministic policy maximizing $\qup_k$: $\pigreed_{k,h}(x)=\argmax_{a}\qup_{k;h}(x,a)$.
\end{enumerate}
The main statistical condition of our framework is that a desired Q-supervisor is confidence-admissible:
\vspace{0.1in}
\begin{definition}\label{defn:conf_admissible}
A Q-supervisor $\tuple_k=\left(\qup_k,\qlow_k,\activeset_k,\pigreed_k\right)$ is \emph{confidence-admissible} if:
\begin{itemize}
    \item  $\activeset\kh\ofx$ contains all optimal actions for state $x$ and step $h$, i.e.,  $ \pist_h\ofx \subseteq \activeset\kh\ofx$.
    \item $\qup\kh\xa,\qlow\kh\xa$ are upper $\&$ lower bounds of $\qst_h$: $\qlow\kh\xa\leq \qst_h\xa\leq \qup\kh\xa$.
    \end{itemize}
\end{definition}
For corruption-robustness, we use confidence intervals and active set of the critically tolerant base learner $\lst=\lceil \log C \rceil$ to analyze regret from data charged to under-robust base learners $\ell<\lst$. Base learner $\lst$ has both global and subsampled confidence intervals valid, which ensures that her Q-supervisor $\tuple_{k,\lst}$ is confidence-admissible and can correctly supervise under-robust base learners. We analyze corruption-robust base learners $\ell>\lst$ using their own estimates as Q-supervisors.

\xhdr{Q-supervised policies.}
The main flexibility that our framework provides is that it does not impose to select the UCB policy $\pigreed_k$ of a confidence-admissible Q-supervisor. Instead, the Q-supervisor induces a \emph{plausible set} designating which actions from its active set are plausibly optimal:
\begin{align*}
    \plauset\kh\ofx := \{a \in \activeset\kh\ofx: \qup\kh\xa \ge \qlow\kh(x,a')
        \quad \forall a'\in \activeset\kh\ofx \},
        \qquad \forall h\in[H],\, x\in \states.
\end{align*}
A policy is called \emph{Q-supervised} by this Q-supervisor if it only selects actions from the plausible set:
\vspace{0.1in}
\begin{definition}[Q-supervised policies]\label{defn:compatible}\hspace{0.1em}
A randomized Markovian policy $\boldpi_k$ is called \emph{Q-supervised} by a given Q-supervisor $\tuple_k=(\qup_k,\qlow_k,\activeset_k, \pigreedk)$ if the following holds:
\[
\Pr\left[\boldpi\kh(x)=a\right]>0 \Rightarrow a \in \plauset\kh\ofx
\qquad\forall h\in[H],\,a\in\actions,\, x\in\states.
\]
As shorthand, we shall also say that $\boldpi_k$ is $\tuplek$-supervised.
\end{definition}
For corruption-robustness, nested active sets ensure that less robust base learners $\ell$ always select policies that are Q-supervised from the Q-supervisors $\tuple_{k,\ell'}$ of all more robust base learners $\ell'>\ell$. In particular, under-robust base learners $\ell<\lst$ select policies Q-supervised from  Q-supervisor~$\tuple_{k,\lst}$.

\xhdr{UCB visitation ratio.}
The above two principles are rather minimal: impose correct confidence intervals and disallow selecting provably suboptimal actions. Unfortunately, as shown in Appendix~\ref{app:lower_bound_active_arm_elimination}, and as is well understood in the RL folklore, selecting actions uniformly from confidence intervals does not suffice for bounded regret. In particular, this seems to rule out elimination-based approaches for corruption-robust learning as the one proposed in \cite{LykourisMiPa18}. 

The reason why this elimination is insufficient is because the confidence intervals developed for the Q-function require accurately estimating the reward of future actions. Without corruptions, this accuracy is typically ensured by selecting policies which execute the principle of optimism uncertainly, selecting policies $\pigreedk$ which maximized the upper estimate of the Q-function $\qupk$.

Generalizing this idea, we show that the regret of a supervised policy $\boldpik$ can be controlled by the extent to which that policy's trajectories deviate from the UCB policy $\pigreedk$. Formally,
\vspace{0.1in}
\begin{definition}\label{defn:concatenated_policy}\hspace{0.1em} We define the \emph{concatenated policy} $\boldpi_k\oplus_{h}\pi'$ as the policy which selects actions according to $\boldpi_k$ for steps $\tau = 1,\dots,h-1$, and then actions according to $\pi'$ for steps $\tau = h,\dots,H$.\footnote{Formally, this policy draws a determinisitic Markovian policy $\pi$ from the distribution induced by $\boldpi_k$, selects actions according to $\pi$ for steps $\tau = 1,\dots,h-1$, and then actions according to $\pi'$ for steps $\tau = h,\dots,H$.}
\end{definition}
We require that the measures on states and actions induced by concatenated policies have bounded density ratios with respect to the UCB policy. We term this property \emph{UCB visitation ratio} which is ensured in $\supervisedc$ via the carefully designed episode schedule.
\vspace{0.1in}
\begin{definition}\label{defn:visitation}\hspace{0.1em}
We say that a randomized policy $\boldpi_k$ has \emph{UCB visitation ratio}  $\rho \ge 1$ with respect to a  Q-supervisor $\tuple_k=(\qup_k,\qlow_k,\activeset_k, \pigreedk)$ if the following holds:~\footnote{In linear MDPs, these ratios may be understood as a ratio of densities. When we instantiate UCB visitation ratios for $\supervisedc$, the layer schedule ensures that the visitation measure of the concatenated policy is absolutely continuous with respect to the UCB policy. For simplicity, we avoid measure-theoretic formalities.}
\begin{align*}
\max_{1\leq h \leq \tau \leq H}\,\max_{x\in \states, \,a \in \actions}
\frac{\Pr^{\boldpi_k\oplus_{h}\pigreed_k}[(x_\tau,a_\tau)  = \xa]}{\Pr^{\boldpi_k}[(x_\tau,a_\tau)  = \xa]}\leq \rho.
\end{align*}
\end{definition}

\subsection{Per-episode Bellman-error regret decomposition}
The quality of a confidence-admissible Q-supervisor $\tuple_k=(\qup_k,\qlow_k,\activeset_k,\pigreed_k)$ is measured via the notion of \emph{Bellman error} which intuitively posits that $\qup_k$ should approximately satisfy Bellman updates with $\pi = \pigreed_k$, in the sense that
\begin{align}\label{eq:prelims-Bellman-errors-intuition}
\qup\kh\xa \approx r^{\star}\xa+
\textstyle \sum_{x'} \; p^{\star}(x'|x,a)  \;\qup\khpl(x',\pigreed\khpl(x')).
\end{align}
Hence, the \emph{upper} Bellman error measures the extent to which this equality is violated. In particular, the Bellman error localizes the mistakes across steps by considering the effect of miscalculating the true parameters $(r^{\star},p^{\star})$ at step $h$ assuming that the model is correct in subsequent steps.  
The \emph{lower} Bellman error captures a similar intuition with respect to $\qlow_k$.
\vspace{0.1in}
\begin{definition}[Bellman Errors]\label{defn:Bellman_error} \hspace{0.1em}
Given Q-supervisor
    $\tuple_k=(\qup_k,\qlow_k,\activeset_k, \pigreedk)$ and letting $\vup_{k;H+1}(\cdot) = \vlow_{k;H+1}(\cdot ) = 0$,
for all $h\in[H]$, $x\in\states$ and $a\in\actions$, upper and lower Bellman errors are defined as:
\begin{align*}
\bellmanup\kh\xa
    &=\qup\kh\xa-\left(r^{\star}\xa+p^{\star}\xa\cdot
        \vup_{k;h+1}\right),
    &\vup\kh(x)=\qup\kh(x,\pigreed\kh(x)),
    \\
\bellmanlow\kh\xa
    &=\qlow\kh\xa-\left(r^{\star}\xa+p^{\star}\xa\cdot
       \vlow_{k;h+1}\right),
&\vlow\kh(x)=\max_{a\in\activeset\kh\ofx}\qlow\kh(x,a).
\end{align*}
\end{definition}

\xhdr{Bellman error regret decomposition for Q-supervised policies.} The regret analysis of UCBVI \cite{AzarOsMu17} starts from a decomposition based on Bellman errors (Lemma~\ref{lm:LUCB-Bellman} in Appendix~\ref{app:bellman_error_decomposition_supervised}). We extend this decomposition to Q-supervised policies that may differ from the UCB policy. Proposition~\ref{prop:lucb_regret} differs from decomposition for UCB policies in the expectations $\Exp^{\boldpi_k\oplus_{h}\pigreed_k}$ over \emph{fictitious trajectories} arising from following $\boldpi_k$ until a stage $h$, and switching to $\pigreed_k$. 

\vspace{0.1in}
\begin{proposition}
\label{prop:lucb_regret}\hspace{0.1em} 
For episode	$k$, if a Q-supervisor
    $\tuplek := (\qup_k,\qlow_k,\activeset_k, \pigreed_k)$
is confidence-admissible then any randomized Markovian policy $\boldpi_k$ that is Q-supervised by $\tuplek$ satisfies:
	\begin{align*}
	\vst - \valf^{\boldpi_k} &\le \sum_{h=1}^H \Exp^{\boldpi_k}\left[\bellmanup\kh(x_h,a_h)\right] + \sum_{h=1}^H \Exp^{\boldpi_k\oplus_{h}\pigreed_k}
    \left[\sum_{\tau=h}^{H} \bellmanup_{k;\tau}(x_{\tau},a_{\tau})-\bellmanlow_{k;\tau}(x_{\tau},a_{\tau})\right],
	\end{align*}
where $\boldpi_k\oplus_{h}\pigreed_k$ is the concatenated policy defined in Definition~\ref{defn:concatenated_policy}.
\end{proposition}
	\begin{proof}[Proof sketch.]
If the policy $\boldpi_k$ coincides with $\pigreed_k$ (i.e. greedily optimizes w.r.t. $\qup$) then the proposition holds without the second term. To address the mismatch between $\boldpi_k$ and $\pigreed_k$, we relate the performance of policy $\boldpi_k$ to the one of a fictitious policy that follows $\boldpi_k$ until some pre-defined step and then switches to $\pigreed_k$. The second term in the proposition captures the expected confidence intervals of future cumulative rewards that such a policy would have had, balancing the aforementioned mismatch. The proof then  follows similar arguments as the ones of optimistic policies by working recursively across steps and is formalized in Appendix~\ref{app:bellman_error_decomposition_supervised}.
\end{proof}
	
\xhdr{Combining with UCB visitation ratio.} Proposition~\ref{prop:lucb_regret} bounds the regret at episode $k$ as an expectation of upper and lower Bellman errors. These quantities behave like confidence intervals for the Q-functions under the corresponding policies, which shrink when state-action pairs are selected.

While the first term reflects visitations under the actual policy $\boldpi_k$ selected, the Bellman errors in the second term regard state-action pairs visited by fictitious trajectories $\boldpi_k\oplus_h \pigreed_k$ (Definition~\ref{defn:concatenated_policy}). The UCB visitation ratio addresses this mismatch as it requires that the policy $\boldpi_k$ (effectively) switches to rolling out policy $\pigreed_k$ with probability at least $1/\rho$. This implies the next proposition.

\vspace{0.1in}
\begin{proposition}
\label{prop:lucb_regret_rho} \hspace{0.1em} 
For episode	$k$, if a Q-supervisor
    $\tuplek := (\qup_k,\qlow_k,\activeset_k, \pigreed_k)$
is confidence-admissible then any randomized Markovian policy $\boldpi_k$ that is Q-supervised by $\tuplek$ and has UCB visitation ratio with respect to $\tuple_k$ at most $\rho$ satisfies: 
	\begin{align*}
	\vst - \valf^{\boldpi_k} &\le \rho\cdot \Big(1+H\Big)\cdot \Exp^{\boldpi_k}\Big[\max\big(\sum_{h=1}^H\bellmanup\kh(x_h,a_h),\sum_{h=1}^H \bellmanup_{k;h}(x_{h},a_h)-\bellmanlow_{k;h}(x_h,a_h)\big) \Big],
	\end{align*}
\end{proposition}
Note that the second term is not always smaller than the first as Bellman errors can be negative.

\subsection{Generic regret guarantee for $\supervisedc$.}
To turn the above proposition to a final regret guarantee, we introduce the following notation for each base learner $\ell$:
\begin{enumerate}
    \item The Q-supervisors $\tuple\kl := (\qup\kl,\qlow\kl,\activeset\kl,\pibase\kl)$
    \item The associated Bellman errors
    \begin{align*}
\bellmanup\klh\xa
    &=\qup\klh\xa-\left(r^{\star}\xa+p^{\star}\xa^\top \vup\klhpl\right),
    &\vup\klh(x)=\qup\klh(x,\pigreed\klh(x)),
    \\
\bellmanlow\klh\xa
    &=\qlow\klh\xa-\left(r^{\star}\xa+p^{\star}\xa^\top
       \vlow\klhpl\right),
&\vlow\klh(x)=\max_{a\in\activeset\klh\ofx}\qlow\klh(x,a)
\end{align*}
\end{enumerate}
We now introduce the randomized policies to which we apply the above \supervised{} framework. Consider the episode schedule distribution $\Dsched$ across sequences $(\ell_1,\dots,\ell_H) \in [\lmax]^H$ in Algorithm~\ref{alg:corruption_robust_rl} based on Eq.~\eqref{eq:resamplingD} that determines which base learner's policy $\boldpi_{k,\ell}^{\textsc{base}}$, the master policy $\pimasterk$ follows at each step $h$. We decompose $\pimasterk$ into a distribution over randomized policies.
\vspace{0.1in}
\begin{definition}[Policy decomposition]\label{defn:policy_decomp}
\hspace{0.1em} Let $\Dsched_{\ell}$ be the distribution  $\Dsched$ conditioned on $\ell_{H}=\ell$ and $\Dsched_{\le l}$ be the distribution  $\Dsched$ conditioned on $\ell_{H}\leq\ell$. We define:
\begin{enumerate}
    \item $\pimaster\kl$ as the randomized Markovian policy which selects a deterministic policy according to $\pimasternonbold_{k;h}\leftarrow \pibase_{k,\ell_h;h}$, where $\ell_1,\dots,\ell_H \sim \Dsched_{\ell}$.
    \item $\pimaster\klel$ as the randomized Markovian policy which selects a deterministic policy according to $\pimasternonbold_{k;h}\leftarrow \pibase_{k,\ell_h;h}$, where $\ell_1,\dots,\ell_H \sim \Dsched_{\le \ell}$.
\end{enumerate}
Lastly, we define the mixture probabilities
\begin{align*}
\ql := \Pr[\ell_H = \ell \mid (\ell_1,\dots,\ell_H)\sim \Dsched]\quad \text{ and }\quad
q_{\le \ell} := \Pr[\ell_H \le \ell \mid (\ell_1,\dots,\ell_H)\sim \Dsched].
\end{align*}
\end{definition}
We also define \emph{model-estimates} $\mathfrak{m}_k=(\rtil_k,\ptil_k,\bonus_k)$ where $\rtil_k:\states\times \actions\rightarrow [0,1]$ corresponds to estimated rewards, $\ptil_k:\states\times\actions\rightarrow [0,1]^{\states}$
corresponds to estimated transitions, and $\bonus_k:\states\times \actions \rightarrow [0,H]$ corresponds to an associated bonus. An important notion regarding these model-estimates is that they are valid for the Q-supervisor in the sense that the bonuses (used in its Q-estimates) correctly upper and lower bound the corresponding Q-functions.

\vspace{0.1in}
\begin{definition}
\label{defn:valid}
\hspace{0.1em}  A model-estimate $\mathfrak{m}_k=(\rtil_k,\ptil_k,\bonus_k)$ is \emph{valid} with respect to a Q-supervisor $\tuple_k=(\qup_k,\qlow_k,\activeset_k,\pigreed_k)$ if for all $h\in[H]$, $x\in\states$, $a\in\actions$, it holds that
\begin{align*}
|\rtil_k\xa - r^{\star}\xa + (\ptil_k\xa - p^{\star}\xa)^\top V_{h+1}| \le \bonus_k\xa.
\end{align*}
either (a) only for $V_{h+1}=\vst_{h+1}$ if $\ptil_k(x,a)\geq 0$ or (b) for $V_{h+1}\in\{\vst_{h+1}, \vup_{k;h+1},\vlow_{k;h+1}\}$ where $\vup_{k;h+1}\ofx=\qup_{k;h+1}(x,\pigreed_k(x))$ and $\vlow_{k;h+1}\ofx=\max_{a\in\activeset_{k;h+1}(x)}\qlow_{k;h+1}(x,a)$.
\end{definition}
\vspace{0.1in}
\begin{remark}
\hspace{0.1em} Condition (a) is mostly applicable in settings where the transition estimates are computed point-wise such as in tabular settings and enables tighter bonus leading to enhanced guarantees. Condition (b) is targeted towards function-approximation settings where the transition estimate is the output of a linear regression and is therefore not guaranteed to be nonnegative.
\end{remark}
In \supervisedc, there are two important model-estimates for each base learner $\ell$; these are instantiated to the tabular and linear setting  in Section~\ref{sec:algorithm} and Appendix \ref{ssec:linear_algorithm} respectively. 

\vspace{0.1in}
\begin{definition}[Global and subsampled model-estimates]\label{defn:true_model} \hspace{0.1em} We define the following model-estimates: \emph{global} $\mathfrak{m}\klgl=(\rhat_{k,gl},\phat_{k,gl},\bonus_{k,gl,\ell})$ and \emph{subsampled} $\mathfrak{m}\klsb=(\rhat_{k,sb,\ell},\phat_{k,sb,\ell},\bonus_{k,sb,\ell})$.
\end{definition}
With the above definitions, we can now provide the final regret guarantee of our framework. The only assumption of the theorem is that, for the critically tolerant base learner $\lst=\lceil\log C\rceil$, the model-estimates $\mathfrak{m}_{k,gl,\ell}$ and $\mathfrak{m}_{k,sb,\ell}$ of all $\ell\geq \lst$ are valid for all episodes $k$. Under this assumption, the theorem upper bounds the total regret by sum of the global bonuses of the critically robust base learner, the sum of the subsampled bonuses for all robust base learners $\ell>\lst$, as well as term that can be bounded by uniform convergence arguments. Note that this theorem does not assume that we are in any particular episodic RL setting; in the next section, we apply it to tabular and linear MDPs.
\vspace{0.1in}
\begin{theorem}\label{thm:final_reg_guarantee}
\hspace{0.1em} Let $\lst=\min\{\ell: 2^{\ell} \ge C\}$ and $\Vset\klhpl = \{\vup\klhpl -
    \vsthpl,\vup\khpl -
    \vlow\klhpl\}$. Define the per-episode suboptimality terms for layers $\ell$ and all layers $\ell \le \lst$:
    \begin{align*}
        \Delta_{k,\ell} &:= \Exp^{\boldpi_{k,\ell}^{\textsc{master}}}\left[\sum_{h=1}^H\min\left(3H,2\bonus_{k,sb,\ell}\xhah+ \max_{V\in\Vset_{k,\ell;h+1}}\left(\phat_{k,sb}\xhah-p^{\star}\xhah\right)^{\top}V\right)\right],\\
         \Delta_{k,\le \lst} &:=  \Exp^{\boldpi_{k,\le\lst}^{\textsc{master}}}\left[\sum_{h=1}^H\min\left(3H,2\bonus_{k,gl,\lst}\xhah+ \max_{V\in\Vset_{k,\lst;h+1}}\left(\phat_{k,gl}\xhah-p^{\star}\xhah\right)^{\top}V\right)\right].
    \end{align*}
If both global and subsampled model-estimates $\mathfrak{m}_{k,gl,\ell}$ and $\mathfrak{m}_{k,sb,\ell}$ are valid for all $\ell\geq \lst ,k\in[K]$, then the regret may be bounded as follows
\begin{align*}
&\Regret\leq CH+4eH^2 \left(\sum_{\ell>\lst}\ell q_{\ell} cdot \sum_{k} \Delta_{k,\ell}\right) + \left(8 e CH^2\log (2C)\right) \cdot q_{\le \lst}\sum_k \Delta_{k,\le \lst} \,.
\end{align*}
\end{theorem}

\subsection{Proof of Theorem~\ref{thm:final_reg_guarantee}}
We now describe the main ideas that help us prove the above theorem by using Proposition~\ref{prop:lucb_regret_rho} for different base learners. Base learners $\ell\geq \lceil \log C \rceil$ have both their global and subsampled bonuses account for the corruption they encounter; this \emph{validity} of the bonuses implies that their respective Q-supervisor $\tuple\kl$ is confidence-admissible. Since the active sets are nested and the episode schedule is increasing, in the event that an episode ends in the episode set $\epSet\kl$, all the actions selected throughout the episode are Q-supervised by $\tuple\kl$ and have visitation ratio with respect to the policy $\pi_{k,\ell}^{\textsc{base}}$ at most $\rho\kl\approx H$. As a result, by Proposition~\ref{prop:lucb_regret_rho}, the expected regret contribution from these events is equal to $H$ times the one of the i.i.d. case. 
Under-robust base learners $\ell\leq \lst$ are Q-supervised by the Q-supervisor of the critically tolerant base learner $\lst$ (which behaves as in the i.i.d. case as it has valid bonuses). For events ending in base learners $\ell\leq \lst$, the visitation ratio with respect to $\pi_{k,\lst}^{\textsc{base}}$ is at most $\rho\klelst \approx CH$ (thanks to our episode schedule). This ensures that the expected regret contribution from those events is at most $CH$ times the one of the i.i.d. case. We formalize this sketch below.

From the law of total probability, we have the following regret decomposition:
\vspace{0.1in}
\begin{lemma}\label{lem:policy_decomp}
\hspace{0.1em} For any $\lst \in [\lmax]$, the per-episode regret can be separated in the cumulative regret of base learners $\ell\leq \lst$ and the regret of base learners $\ell>\lst$:
\begin{align*}
    \vst - \valf^{\pimasterk}=  \qlelst\left(\vst-\valf^{\pimaster\klelst}\right)+ \sum_{\ell>\lst}\ql\left(\vst-\valf^{\pimaster\kl}\right).
\end{align*}
\end{lemma}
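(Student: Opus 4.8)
The plan is to read this off from the law of total probability, exactly as the statement of the lemma advertises. The one structural fact I need is that the master policy $\pimasterk$ first samples an entire layer schedule $(\ell_1,\dots,\ell_H)\sim\Dsched$ (the distribution induced by the chain $f(k,\cdot)$ in Algorithm~\ref{alg:corruption_robust_rl}, line~\ref{line:alg_layerSched}) and only then interacts with the MDP, playing $\pibase_{k,\ell_h;h}$ at step $h$. Crucially, the transition rule for $f(k,h)$ depends only on $f(k,h-1)$ and not on the realized state $x_h$ or reward, so the schedule is drawn independently of the trajectory. Consequently, writing $\pi(\ell_1,\dots,\ell_H)$ for the deterministic‑in‑layer realization of the master policy under a fixed schedule, the value functional is linear over this mixture: $\valf^{\pimasterk}=\Exp_{(\ell_1,\dots,\ell_H)\sim\Dsched}\bigl[\valf^{\pi(\ell_1,\dots,\ell_H)}\bigr]$, and likewise $\valf^{\pimaster\klelst}=\Exp_{\Dsched_{\le\lst}}[\valf^{\pi(\cdot)}]$ and $\valf^{\pimaster\kl}=\Exp_{\Dsched_{\ell}}[\valf^{\pi(\cdot)}]$ directly from Definition~\ref{defn:policy_decomp}.

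Next I would partition the sample space of $\Dsched$ according to the final layer $\ell_H$. Since $f(k,\cdot)$ is nondecreasing, bounded above by $\lmax$, and starts at $1\le\lst$, the events $\{\ell_H\le\lst\}$ and $\{\ell_H=\ell\}$ for $\ell=\lst+1,\dots,\lmax$ are disjoint and exhaustive. The tower rule then gives
\[
\valf^{\pimasterk}=\Pr[\ell_H\le\lst]\cdot\Exp_{\Dsched}\bigl[\valf^{\pi(\cdot)}\mid \ell_H\le\lst\bigr]+\sum_{\ell>\lst}\Pr[\ell_H=\ell]\cdot\Exp_{\Dsched}\bigl[\valf^{\pi(\cdot)}\mid \ell_H=\ell\bigr].
\]
By construction $\Exp_{\Dsched}[\,\cdot\mid\ell_H\le\lst]$ is the expectation under $\Dsched_{\le\lst}$ and $\Exp_{\Dsched}[\,\cdot\mid\ell_H=\ell]$ is the expectation under $\Dsched_{\ell}$, so these conditional averages equal $\valf^{\pimaster\klelst}$ and $\valf^{\pimaster\kl}$ respectively, while the probabilities are $\qlelst$ and $\ql$ by their definitions in Definition~\ref{defn:policy_decomp}. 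This yields the value identity $\valf^{\pimasterk}=\qlelst\,\valf^{\pimaster\klelst}+\sum_{\ell>\lst}\ql\,\valf^{\pimaster\kl}$.

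Finally I would convert values into regret. Because $\{\ell_H\le\lst\}\cup\bigcup_{\ell>\lst}\{\ell_H=\ell\}$ exhausts the probability space, $\qlelst+\sum_{\ell>\lst}\ql=1$; multiplying $\vst$ through by this sum and subtracting the value identity gives
\[
\vst-\valf^{\pimasterk}=\qlelst\bigl(\vst-\valf^{\pimaster\klelst}\bigr)+\sum_{\ell>\lst}\ql\bigl(\vst-\valf^{\pimaster\kl}\bigr),
\]
which is the claim. I do not expect a genuine obstacle here, since the content is entirely bookkeeping; the only point worth stating carefully is the linearity of $\valf^{\boldpi}$ over mixtures of randomized Markovian policies, which holds precisely because the layer schedule $(\ell_1,\dots,\ell_H)$ is drawn at the start of the episode independently of the realized states and rewards. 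If the schedule were trajectory‑adaptive, the value would no longer split as a convex combination over $\ell_H$ and this decomposition would break; verifying the independence property of $\Dsched$ from the definition of $f(k,\cdot)$ is therefore the (mild) crux.
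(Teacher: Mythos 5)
Your proof is correct and takes essentially the same approach as the paper, which simply invokes the law of total probability; you have filled in the bookkeeping (linearity of value over mixtures, the partition by $\ell_H$, and the fact that $\qlelst+\sum_{\ell>\lst}\ql=1$) and correctly flagged that the argument hinges on the layer schedule being drawn independently of the realized trajectory.
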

Moreover, since the active sets are nested, the policies $\pibase_{k,\ell'}$ of any base learner $\ell'<\ell$ always select actions inside the active set of base-learner $\ell$. As a result, $\pimaster_{k,\leq\ell}$ always selects actions which are in the active set $\activeset\kl$, and similarly, $\pimaster\kl$ in the active set $\activeset\kl$. Thus,
\vspace{0.1in}
\begin{lemma}[Q-Supervision in \supervisedc]\label{lem:supervision_supervisedc} \hspace{0.1em}  $\pimaster\klelst$ is supervised by the Q-supervisor $\tuple\klst$, and $\pimaster\kl$ is supervised by the Q-supervisor $\tuple\kl$.
\end{lemma}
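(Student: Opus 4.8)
The plan is to reduce the statement to two structural facts about $\supervisedc$ and then match them against the definition of a Q-supervised policy (\Cref{defn:compatible}). The first fact is that the active-set update inside \Cref{alg:UCBVI_base} produces exactly the plausible set of the Q-supervisor one layer up: by inspection,
$\activeset_{k,\ell-1;h}(x) = \{a\in\activeset\klh(x):\qup\klh(x,a)\ge\max_{a'\in\activeset\klh(x)}\qlow\klh(x,a')\} = \plauset\klh(x)$
for every $\ell\in[\lmax]$, $h\in[H]$ and $x\in\states$. In particular $\activeset_{k,\ell-1;h}(x)\subseteq\activeset\klh(x)$, so by transitivity the active sets are nested across base learners: $\activeset_{k,\ell';h}(x)\subseteq\activeset\klh(x)$ whenever $\ell'\le\ell$. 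The second fact is that the episode schedule in \Cref{alg:corruption_robust_rl} only ever moves to a higher-indexed base learner, so every sequence $(\ell_1,\dots,\ell_H)$ in the support of $\Dsched$ — hence of the conditional distributions $\Dsched_\ell$ and $\Dsched_{\le\lst}$ from \Cref{defn:policy_decomp} — is nondecreasing.

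Next I would verify that each greedy base policy $\pibase\kl$ is itself $\tuple\kl$-supervised. Since $\pibase\klh(x)$ maximizes $\qup\klh$ over $\activeset\klh(x)$ (\Cref{line:pibase}), we have $\qup\klh(x,\pibase\klh(x)) = \vup\klh(x)$, so $\pibase\klh(x)\in\plauset\klh(x)$ as soon as $\vup\klh(x)\ge\vlow\klh(x) = \max_{a'\in\activeset\klh(x)}\qlow\klh(x,a')$. This sandwiching either follows from confidence-admissibility of $\tuple\kl$ (which gives $\qlow\klh\le\qst_h\le\qup\klh$ pointwise) or can be shown directly by a downward induction on $h$ through the value-iteration recursion: the bonuses are nonnegative, $\phat$ is a probability vector, and $\qup\klh$ is truncated from above by $H$ while $\qlow\klh$ is truncated from below by $0$, so $\vup\klhpl\ge\vlow\klhpl$ propagates to $\qup\klh\ge\qlow\klh$ and thus to $\vup\klh\ge\vlow\klh$; the same induction also shows the active sets are nonempty, which makes the greedy action well-defined.

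Given these pieces, the lemma is a short case analysis. For $\pimaster\kl$: by \Cref{defn:policy_decomp} it draws $(\ell_1,\dots,\ell_H)\sim\Dsched_\ell$, so monotonicity gives $\ell_h\le\ell_H=\ell$ for every $h$, and at step $h$ in state $x$ it plays $a=\pibase_{k,\ell_h;h}(x)$ for some $\ell_h\le\ell$. If $\ell_h<\ell$ then $a\in\activeset_{k,\ell_h;h}(x)\subseteq\activeset_{k,\ell-1;h}(x)=\plauset\klh(x)$ by the first fact; if $\ell_h=\ell$ then $a\in\plauset\klh(x)$ by the previous step. Hence $\Pr[\boldpi\kh(x)=a]>0$ implies $a\in\plauset\klh(x)$ for all $h,x,a$, which is precisely the assertion that $\pimaster\kl$ is $\tuple\kl$-supervised (\Cref{defn:compatible}). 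Repeating the argument verbatim with $\ell$ replaced by $\lst$ and the conditioning $\ell_H=\ell$ replaced by $\ell_H\le\lst$ (so now $\ell_h\le\lst$, split into $\ell_h<\lst$ versus $\ell_h=\lst$) shows $\pimaster\klelst$ is $\tuple\klst$-supervised.

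The one step that is not pure definition-chasing — and which I expect to be the only real obstacle — is the sandwiching $\vup\klh\ge\vlow\klh$ together with nonemptiness of the active sets; this needs the downward induction through the clipped value-iteration updates and is where the $\min\{H,\cdot\}$ / $\max\{0,\cdot\}$ truncations must be handled with some care. Everything else is bookkeeping over the definitions in \Cref{alg:UCBVI_base} and \Cref{alg:corruption_robust_rl}.
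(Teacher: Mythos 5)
Your proof is a substantial and correct elaboration of the paper's one-sentence justification, and it correctly pinpoints the one non-obvious step. The identity $\activeset_{k,\ell-1;h}(x) = \plauset\klh(x)$ (by inspection of \Cref{alg:UCBVI_base}), the nondecreasing layer schedule, and the case split $\ell_h < \ell$ versus $\ell_h = \ell$ together give the result; the only real content is your observation that for $\ell_h = \ell$, the greedy action $\pibase\klh(x)$ lies in $\plauset\klh(x)$ exactly when $\vup\klh(x) \ge \vlow\klh(x)$. You resolve this by invoking confidence-admissibility of $\tuple\kl$, and that is the right route: it matches how the lemma is used in the proof of \Cref{thm:final_reg_guarantee}, where admissibility of $\tuple\kl$ for $\ell \ge \lst$ is established via \Cref{lem:valid_implies_admissibility} \emph{before} this lemma is invoked. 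The paper's own remark preceding the lemma only argues that actions land in the active set $\activeset\kl$, which is a superset of $\plauset\kl$; your explicit treatment of the $\ell_h = \ell$ case closes that gap.

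The alternative you sketch---an unconditional downward induction using ``$\phat$ is a probability vector'' and the $\min\{H,\cdot\}/\max\{0,\cdot\}$ truncations---does not go through, and you should drop it. In the linear setting, $\phat\klsb\xa = \muhat\klsb\phi\xa$ is a ridge-regression output and need not be entrywise nonnegative; Definition~\ref{defn:valid} splits into conditions (a) and (b) precisely to handle this. Even in the tabular setting, the pointwise inequality $\qup\klh\xa \ge \qlow\klh\xa$ unpacks to $\min\{H,\ q^{\mathrm{gl,up}},\ q^{\mathrm{sb,up}}\} \ge \max\{0,\ q^{\mathrm{gl,low}},\ q^{\mathrm{sb,low}}\}$, which requires the \emph{cross} comparisons $q^{\mathrm{gl,up}} \ge q^{\mathrm{sb,low}}$ and $q^{\mathrm{sb,up}} \ge q^{\mathrm{gl,low}}$. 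Those compare the global estimator to the subsampled estimator; they are built from different and differently-corrupted data, and nothing in the algorithm forces them to be ordered deterministically. So the sandwiching $\vup\klh \ge \vlow\klh$ (and hence nonemptiness of the plausible sets and well-definedness of the greedy action) is a consequence of the concentration/admissibility event, not of the value-iteration truncations alone; the lemma should be read as holding on that event.
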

In what follows, we let $\lst := \min\{\ell:2^{\ell} \ge C\}$. We shall then show the following:
\begin{enumerate}
    \item The Q-supervisor $\tuple\klst$ which supervises $\pimaster\klelst$ is confidence- admissible. Moreover, the associated visitation ratio is (approximately) at most $\rho_{k,\leq\lst} \lessapprox  CH$, and its Bellman errors are controlled by the global estimates.
    \item For $\ell > \lst$, the Q-supervisor $\tuple\kl$ which supervises $\pimaster\kl$ is confidence-admissible.  Moreover, the associated visitation ratio is (approximately) at most $\rho_{k,\ell} \lessapprox  H$ and its Bellman errors are controlled by the subsampled estimates. 
\end{enumerate}

\xhdr{Confidence-admissibility via valid bonuses.}
First, we provide easy-to-verify conditions which ensure that the Q-supervisors that arise in our algorithm are indeed confidence-admissible. Since the Q-supervisor associated to each base learner $\tuple_{k,\ell}=(\qup_{\ell},\qlow_{\ell},\activeset_{k,\ell},\pigreed_{k,\ell})$ has Q-estimates determined by the tightest of global and subsampled model-estimates, confidence-admissibility of $\tuple_{k,\ell}$ holds if both global and subsampled model-estimates $\mathfrak{m}_{k,gl,\ell'}$ and $\mathfrak{m}_{k,sb,\ell'}$ are valid for all $\ell'\geq \ell$.

\vspace{0.1in}
\begin{lemma}\label{lem:valid_implies_admissibility} \hspace{0.1em} 
Fix $\lst\in[\log K]$. If global and subsampled model-estimates $\mathfrak{m}_{k,gl,\ell}$ and $\mathfrak{m}_{k,sb,\ell}$ are valid w.r.t. Q-supervisors $\tuple_{k,\ell}=(\qup_{k,\ell},\qlow_{k,\ell},\activeset_{k,\ell},\pigreed_{k,\ell})$ for all $\ell\geq\lst$ then $\tuple_{k,\ell}$ are confidence-admissible.
\end{lemma}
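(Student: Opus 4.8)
The plan is to prove the claim by a double backwards induction: an outer induction on the base-learner index $\ell$, running from $\lmax$ down to $\lst$, and for each fixed $\ell$ an inner backwards induction on the stage $h$, running from $H+1$ down to $1$. For a fixed $\ell$ I will establish both requirements of Definition~\ref{defn:conf_admissible} — that $\activeset_{k,\ell;h}(x)$ contains every optimal action, and that $\qlow_{k,\ell;h}(x,a)\le\qst_h(x,a)\le\qup_{k,\ell;h}(x,a)$ for all $x,a,h$ — the first from the outer hypothesis and the second by the inner induction. Throughout I may assume without loss of generality that policies are deterministic Markovian, since randomized ones reduce to their deterministic atoms, and I write $\pist_h(x)$ for an arbitrary optimal action at $(x,h)$.

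For the active sets: when $\ell=\lmax$ we have $\activeset_{k,\lmax;h}(x)=\actions$, so the inclusion is trivial. For $\ell<\lmax$, Algorithm~\ref{alg:UCBVI_base} sets $\activeset_{k,\ell;h}(x)=\{a\in\activeset_{k,\ell+1;h}(x): \qup_{k,\ell+1;h}(x,a)\ge \max_{a'\in\activeset_{k,\ell+1;h}(x)}\qlow_{k,\ell+1;h}(x,a')\}$. By the outer inductive hypothesis $\tuple_{k,\ell+1}$ is confidence-admissible, hence $\pist_h(x)\in\activeset_{k,\ell+1;h}(x)$, $\qup_{k,\ell+1;h}(x,\pist_h(x))\ge\qst_h(x,\pist_h(x))=\vst_h(x)$, and $\qlow_{k,\ell+1;h}(x,a')\le\qst_h(x,a')\le\vst_h(x)$ for every $a'\in\activeset_{k,\ell+1;h}(x)$. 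So the retention test is met with $a=\pist_h(x)$, and every optimal action survives; in the inner induction below I therefore use freely that $\pist_h(x)\in\activeset_{k,\ell;h}(x)$.

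For the sandwiching, fix $\ell$ and run backwards in $h$ with inductive hypothesis $\vlow_{k,\ell;h+1}\le\vst_{h+1}\le\vup_{k,\ell;h+1}$ componentwise, the base case $h=H+1$ being trivial. The two-estimate value iteration defines $\qup_{k,\ell;h}(x,a)$ as the minimum of $H$ and the two quantities $\rtil_k(x,a)+\ptil_k(x,a)^\top\vup_{k,\ell;h+1}+\bonus_k(x,a)$ coming from the global and the subsampled model-estimate $\mathfrak{m}_k=(\rtil_k,\ptil_k,\bonus_k)$, and $\qlow_{k,\ell;h}(x,a)$ symmetrically with $-\bonus_k$ and a $\max$ with $0$. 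For each of the two model-estimates, validity (Definition~\ref{defn:valid}) together with $\qst_h(x,a)=r^\star(x,a)+p^\star(x,a)^\top\vst_{h+1}$ gives: in clause (b), $\rtil_k(x,a)+\ptil_k(x,a)^\top\vup_{k,\ell;h+1}+\bonus_k(x,a)\ge r^\star(x,a)+p^\star(x,a)^\top\vup_{k,\ell;h+1}\ge\qst_h(x,a)$, using $\vup_{k,\ell;h+1}\ge\vst_{h+1}$ and that $p^\star(x,a)$ is a probability vector; in clause (a), since $\ptil_k(x,a)\ge0$, first monotonically replace $\vup_{k,\ell;h+1}$ by $\vst_{h+1}$ and then apply validity at $V_{h+1}=\vst_{h+1}$ to reach the same inequality. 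As $\qst_h(x,a)\le H$ the clip is harmless, so $\qup_{k,\ell;h}(x,a)\ge\qst_h(x,a)$; the mirror argument (using $\vlow_{k,\ell;h+1}\le\vst_{h+1}$, the reverse side of the absolute value in Definition~\ref{defn:valid}, and $\qst_h(x,a)\ge0$) yields $\qlow_{k,\ell;h}(x,a)\le\qst_h(x,a)$. Finally, since $\pist_h(x)\in\activeset_{k,\ell;h}(x)$, we get $\vup_{k,\ell;h}(x)=\max_{a\in\activeset_{k,\ell;h}(x)}\qup_{k,\ell;h}(x,a)\ge\qup_{k,\ell;h}(x,\pist_h(x))\ge\vst_h(x)$, and $\vlow_{k,\ell;h}(x)=\max_{a\in\activeset_{k,\ell;h}(x)}\qlow_{k,\ell;h}(x,a)\le\max_{a\in\actions}\qst_h(x,a)=\vst_h(x)$; this closes the inner induction at step $h$, hence shows $\tuple_{k,\ell}$ is confidence-admissible, and closes the outer induction.

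The only genuinely delicate point is the bookkeeping around the two clauses of Definition~\ref{defn:valid}: in clause (a) the bonus controls only the one-step error evaluated at $\vst_{h+1}$, so the monotone substitution (legitimate precisely because $\ptil_k\ge0$ there) must be carried out \emph{before} invoking validity, whereas clause (b) applies directly at $\vup_{k,\ell;h+1}$ and $\vlow_{k,\ell;h+1}$. One must also check that the nesting of the two inductions makes the right hypotheses available: the active-set inclusion at level $\ell$ needs the full confidence-admissibility of $\tuple_{k,\ell+1}$ — both its active-set and its sandwiching parts — not merely the former. Everything else (the clipping to $[0,H]$, and reducing randomized policies to deterministic atoms) is routine.
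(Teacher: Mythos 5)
Your proposal is correct and follows essentially the same double backwards induction (outer on $\ell$ from $\lmax$ down to $\lst$, inner on $h$ from $H+1$ down to $1$) as the paper's proof, including the same case split between clauses (a) and (b) of Definition~\ref{defn:valid} and the same argument that optimal actions survive the active-set retention test. The extra bookkeeping you flag — the monotone substitution of $\vst_{h+1}$ for $\vup_{k,\ell;h+1}$ must precede invoking clause (a) validity, and the outer step needs full (not just active-set) admissibility of $\tuple_{k,\ell+1}$ — is present, if tersely, in the paper's argument as well.
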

\begin{proof}
We apply double induction, an outer on base learners $\ell=\lmax,\ldots,\lst$ and an inner on steps $h=H,\ldots,1$. The induction hypothesis for the outer induction is that $\activeset_{k,\ell;h}(x)$ contains all optimal actions for every state $x$ and step $h$; this is the first condition of confidence-admissibility for $\tuple_{k,\ell}$. The induction hypothesis for the inner induction is that $\vup_{k,\ell;h+1}(x)\geq \vst_{h+1}(x)\geq \vlow_{k,\ell;h+1}(x)$ for all states~$x$. The remainder of the proof follows standard dynamic programming arguments and is provided in Appendix~\ref{app:valid_implies_admissibility}.
\end{proof}

\xhdr{Bounding Bellman errors.} We now show how to bound Bellman errors stemming from base learners with valid model-estimates as a function of the corresponding bonuses.
\vspace{0.1in}
\begin{lemma}\label{lem:belmman_bound}
For any base learner $\ell$ such that global and subsampled model-estimates $\mathfrak{m}\klgl$ and $\mathfrak{m}\klsb$ are valid, its Bellman errors with respect to Q-supervisor $\tuple_{k,\ell}$ are bounded by:
\begin{align*}
    \max\Big(\bellmanup\klh\xa,\bellmanlu\klh\xa\Big)
    &\leq \min\left(3H,2\bonus_{k,gl,\ell}\xa+ \max_{V\in\Vset\klhpl}\left(\phat_{k,gl}\xa-p^{\star}\xa\right)^{\top}V\right)\\   \max\Big(\bellmanup\klh\xa,\bellmanlu\klh\xa\Big)
    &\leq \min\left(3H,2\bonus_{k,sb,\ell}\xa+ \max_{V\in\Vset\klhpl}\left(\phat_{k,sb}\xa-p^{\star}\xa\right)^{\top}V\right)
\end{align*}
where $\bellmanlu\klh=\bellmanup\klh-\bellmanlow\klh$ and $\Vset\klhpl = \{\vup\klhpl -
    \vsthpl,\vup\klhpl -
    \vlow\klhpl\}$.
\end{lemma}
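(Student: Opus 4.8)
The plan is to prove each of the two displayed inequalities by the same argument; the subsampled bound is obtained from the global one simply by replacing the global model-estimate $(\rhat_{k,gl},\phat_{k,gl},\bonus_{k,gl,\ell})$ with the subsampled one $(\rhat_{k,sb,\ell},\phat_{k,sb,\ell},\bonus_{k,sb,\ell})$ and reading off the subsampled (rather than global) summand of the $\min$/$\max$ in the value iteration of Algorithm~\ref{alg:UCBVI_base}. I note upfront that confidence-admissibility is never used: only validity of the model-estimates, and that only for the $\bellmanup\klh$ part.

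The crude $3H$ bound is immediate. Since, by the definition of a Q-supervisor, $\qup\klh\xa,\qlow\klh\xa\in[0,H]$ for all $(x,a,h)$, the induced value estimates satisfy $\vup\klhpl,\vlow\klhpl\in[0,H]$ as well. Hence $\bellmanup\klh\xa=\qup\klh\xa-\rst\xa-\pst\xa^{\top}\vup\klhpl\le H$ (using $\rst\ge 0$ and $\vup\klhpl\ge 0$), and $\bellmanlu\klh\xa=(\qup\klh\xa-\qlow\klh\xa)-\pst\xa^{\top}(\vup\klhpl-\vlow\klhpl)\le H+H=2H\le 3H$ (using $\qlow\klh\xa\ge 0$ and $\vup\klhpl-\vlow\klhpl\ge -H$ pointwise).

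For the bonus-dependent bound I start with $\bellmanlu\klh\xa$, which needs no validity. The value iteration writes $\qup\klh\xa$ as a $\min$ whose arguments include the global summand and $\qlow\klh\xa$ as a $\max$ whose arguments include the global summand, so $\qup\klh\xa\le\rhat_{k,gl}\xa+\phat_{k,gl}\xa^{\top}\vup\klhpl+\bonus_{k,gl,\ell}\xa$ and $\qlow\klh\xa\ge\rhat_{k,gl}\xa+\phat_{k,gl}\xa^{\top}\vlow\klhpl-\bonus_{k,gl,\ell}\xa$; subtracting and plugging into $\bellmanlu\klh\xa=(\qup\klh\xa-\qlow\klh\xa)-\pst\xa^{\top}(\vup\klhpl-\vlow\klhpl)$ gives $\bellmanlu\klh\xa\le 2\bonus_{k,gl,\ell}\xa+(\phat_{k,gl}\xa-\pst\xa)^{\top}(\vup\klhpl-\vlow\klhpl)$, and since $\vup\klhpl-\vlow\klhpl\in\Vset\klhpl$ the right-hand side is at most the claimed bound. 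For $\bellmanup\klh\xa$ the same inequality $\qup\klh\xa\le\rhat_{k,gl}\xa+\phat_{k,gl}\xa^{\top}\vup\klhpl+\bonus_{k,gl,\ell}\xa$ yields $\bellmanup\klh\xa\le(\rhat_{k,gl}\xa-\rst\xa)+(\phat_{k,gl}\xa-\pst\xa)^{\top}\vup\klhpl+\bonus_{k,gl,\ell}\xa$, and I then subtract a reference value function against which $\mathfrak{m}\klgl$ is valid. Under Definition~\ref{defn:valid}(a) (the pointwise case, where $\phat_{k,gl}\ge 0$) I add and subtract $(\phat_{k,gl}\xa-\pst\xa)^{\top}\vsthpl$ and use $|(\rhat_{k,gl}\xa-\rst\xa)+(\phat_{k,gl}\xa-\pst\xa)^{\top}\vsthpl|\le\bonus_{k,gl,\ell}\xa$, giving $\bellmanup\klh\xa\le 2\bonus_{k,gl,\ell}\xa+(\phat_{k,gl}\xa-\pst\xa)^{\top}(\vup\klhpl-\vsthpl)$; under Definition~\ref{defn:valid}(b) I instead add and subtract $(\phat_{k,gl}\xa-\pst\xa)^{\top}\vlow\klhpl$ and use validity at $V_{h+1}=\vlow\klhpl$, giving $\bellmanup\klh\xa\le 2\bonus_{k,gl,\ell}\xa+(\phat_{k,gl}\xa-\pst\xa)^{\top}(\vup\klhpl-\vlow\klhpl)$. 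In both cases the correction term equals $(\phat_{k,gl}\xa-\pst\xa)^{\top}V$ for some $V\in\Vset\klhpl=\{\vup\klhpl-\vsthpl,\vup\klhpl-\vlow\klhpl\}$, hence is at most $\max_{V\in\Vset\klhpl}(\phat_{k,gl}\xa-\pst\xa)^{\top}V$. Taking the maximum of the $\bellmanup\klh$ and $\bellmanlu\klh$ bounds and then the minimum with $3H$ gives the first displayed inequality; running the identical argument with the subsampled summand and the validity of $\mathfrak{m}\klsb$ gives the second.

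I do not anticipate a genuine mathematical obstacle here; this is the standard ``Bellman error is at most twice the bonus plus a value-difference correction'' estimate, and the only real care is bookkeeping: pairing the correct element of $\Vset\klhpl$ with the correct validity branch ($\vup\klhpl-\vsthpl$ for the pointwise condition (a), $\vup\klhpl-\vlow\klhpl$ for condition (b) and for $\bellmanlu\klh$), and keeping straight that the one-sided value-iteration inequalities (the global/subsampled summand upper-bounds $\qup\klh$ through the $\min$ and lower-bounds $\qlow\klh$ through the $\max$) are exactly what licenses collapsing the full $\min$/$\max$ onto a single model-estimate, with the clip at $H$ used only for the crude bound.
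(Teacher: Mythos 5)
Correct, and essentially the same argument as the paper: bound $\qup\klh$ (resp.\ $\qlow\klh$) by a single branch of the $\min$ (resp.\ $\max$) in the value iteration, apply validity once to swap in $\vst_{h+1}$, and cap crudely at $3H$. One small simplification you missed: since Definition~\ref{defn:valid} guarantees validity at $V_{h+1}=\vst_{h+1}$ under \emph{both} branches (a) and (b), the paper always adds and subtracts $(\phat-\pst)^\top\vst_{h+1}$, so your case split on (a) vs.\ (b) for $\bellmanup$ is unnecessary (though not wrong), and you correctly have $\le$ where the paper's second display carries a typographical $=$.
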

\begin{proof}
We show the guarantee with respect to the global estimates; the proof for the subsampled estimates is identical. All estimates for the Q-functions and the value functions are in $[0,H]$ and the rewards are in $[0,1]$; this means that the LHS is at most $2H+1\leq 3H$ so always less than the first term of the RHS. The bound for the upper Bellman error comes by upper bounding the upper estimate $\qup\klh$ by the global estimates (Algorithm~\ref{alg:UCBVI_base}) and using the fact that $\mathfrak{m}_{k,gl,\ell}$ is valid.
\begin{align*}
    \bellmanup\klh\xa
    &=\qup\klh\xa-\left(r^{\star}\xa+p^{\star}\xa^{\top}\vup_{k,\ell;h+1}\right)\\
    &\leq \left(\rhat_{k,gl}\xa+\phat_{k,gl}\xa^{\top}\vup\klhpl +  \bonus_{k,gl,\ell}\xa\right)-\left(r^{\star}\xa+p^{\star}\xa^{\top}\vup\klhpl\right)\\
    &\leq 2\bonus_{k,gl,\ell}\xa+ \left(\phat_{k,gl}\xa-p^{\star}\xa\right)^{\top}\left(\vup\klhpl -\vst_{h+1}\right).
\end{align*}
The bound for the difference between Bellman errors comes by upper and lower bounding $\qup\klh$ and $\qlow\klh$ with respect to the global estimates (Algorithm~\ref{alg:UCBVI_base}) and rearranging terms.
\begin{align*}
    \bellmanup\klh\xa-\bellmanlow\klh\xa
    &=\left(\qup\klh\xa-p^{\star}\xa^{\top}\vup_{k,\ell;h+1}\right)-\left(\qlow\klh\xa-p^{\star}\xa^{\top}\vlow\klhpl\right)\\
    &= 2\bonus_{k,gl,\ell}\xa+ \left(\phat_{k,gl}-p^{\star}\xa\right)^{\top}\left(\vup\klhpl -\vlow\klhpl\right).
\end{align*}
\end{proof}

\xhdr{Bounding the UCB visitation ratios.} We now bound the UCB visitation ratios of $\pimaster\klelst$ with respect to Q-supervisor $\tuple\klelst$, and of $\pimaster\kl$ with respect to Q-supervisor  $\tuple\kl$. Let $(\ell_1,\dots,\ell_H) \sim \Dsched$ denote a random vector of learner indices from the scheduling distribution; all probabilities are with respect to $\Dsched$. Recall the episode schedule's probability of moving to learner $\ell \in [\lmax]$ from learner $f \in [\lmax]$ is $\frac{1}{2\ell H}\cdot 2^{-(\ell-f)}$ if $\ell > f$. We begin with the following claim:
\vspace{0.1in}
\begin{claim}\label{claim:sched_ub} \hspace{0.1em} Let $\ell_1,\dots,\ell_H$ be drawn from $\Dsched$. Then, for any $h$, $\Pr[\ell_h = f] \le 2^{1 - f}$ for all $f \in [\lmax]$ and $h \in [H]$. Moreover, for $f > 1$, $\Pr[\ell_h = f] \le 2^{-f}$.
\end{claim}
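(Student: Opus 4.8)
The plan is to prove both bounds simultaneously by induction on $h$, viewing $\ell_0,\ell_1,\dots,\ell_H$ (with the convention $\ell_0 = 1$, the initial learner in the scheduler) as a non-decreasing Markov chain on $\{1,\dots,\lmax\}$ whose step-$j$ transition rule is exactly the scheduling rule: from a state $f'$ it jumps to a state $f > f'$ with probability $\frac{2^{-(f-f')}}{2fH}$ and stays at $f'$ with the remaining probability (which is positive, since $\sum_{f > f'}\frac{2^{-(f-f')}}{2fH} \le \frac{1}{2(f'+1)H}\sum_{j\ge 1}2^{-j} < 1$). The structural observation driving everything is that every move \emph{strictly} increases the state, so on the event $\{\ell_h = f\}$ with $f \ge 2$ the chain — which starts at $1$, is monotone, and ends at $f$ — must be at level exactly $f$ the first time it reaches $\{f,f+1,\dots\}$. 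Writing $\tau$ for that first step, this gives the inclusion $\{\ell_h = f\} \subseteq \bigcup_{j=1}^h \{\ell_{j-1} < f,\ \ell_j = f\}$, and a union bound together with conditioning on $\ell_{j-1}$ yields
\[
\Pr[\ell_h = f] \;\le\; \sum_{j=1}^h\sum_{f'=1}^{f-1}\Pr[\ell_{j-1} = f']\cdot\frac{2^{-(f-f')}}{2fH} \;=\; \frac{2^{-f}}{2fH}\sum_{j=0}^{h-1}\sum_{f'=1}^{f-1}2^{f'}\,\Pr[\ell_j = f'].
\]

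For the induction I would take as hypothesis that $\Pr[\ell_j = f'] \le 2^{1-f'}$ for every $f'$ and every $j < h$; the base case $h = 0$ is immediate since $\ell_0 = 1$ deterministically. Substituting the hypothesis bounds each inner sum by $\sum_{f'=1}^{f-1}2^{f'}\cdot 2^{1-f'} = 2(f-1)$, so that, using $h \le H$,
\[
\Pr[\ell_h = f] \;\le\; \frac{2^{-f}}{2fH}\cdot h\cdot 2(f-1) \;=\; \frac{h\,(f-1)}{fH}\,2^{-f} \;\le\; \frac{f-1}{f}\,2^{-f} \;<\; 2^{-f}.
\]
This simultaneously delivers the second assertion ($\Pr[\ell_h = f] \le 2^{-f}$ for $f \ge 2$) and, a fortiori, the first ($\Pr[\ell_h = f] \le 2^{1-f}$); combined with the trivial bound $\Pr[\ell_h = 1] \le 1 = 2^{1-1}$, this re-establishes the inductive hypothesis at step $h$ and closes the induction.

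There is no serious obstacle — the whole argument is a short induction — so the only things requiring care are combinatorial rather than analytic. Specifically: (i) one must use that a scheduler move is strictly increasing, so that ``$\ell_h = f$'' really does force the chain to pass through level $f$ (it cannot reach $f$ only after having skipped it, since there are no decreases); and (ii) one must notice that the $\tfrac1H$ factor in the scheduler's transition weight is exactly what absorbs the sum over the (at most $H$) time steps $j = 0,\dots,h-1$, with the leftover factor $\tfrac{f-1}{f} < 1$ supplying the small amount of slack that lets the induction close at $2^{-f}$ rather than merely at $2^{1-f}$.
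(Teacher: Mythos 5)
Your proof is correct and rests on exactly the same mechanism as the paper's: because the scheduler only moves upward, $\{\ell_h = f\}$ forces a first arrival at level exactly $f$ from some $f' < f$ at some step $j \le h$, and a union bound over $j$ and $f'$ combined with the $\tfrac{1}{2fH}$ weight in the transition rule absorbs the sum over steps and closes the recursion at $\tfrac{f-1}{f}2^{-f}$. The only cosmetic difference is that you organize the argument as an induction on the step $h$, while the paper organizes the same union-bound computation as a strong induction on the learner index $f$; the computational core and the monotonicity observation are identical.
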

\begin{proof} 
The base case is $f=1$; the probability is upper bounded by $1$. As inductive hypothesis, assume that the probability of $\ell_h=f$ for some step h is at most $2^{-f+1}$. Then, the probability of moving to learner $\ell$ by any learner $f<\ell$ at any step $h$ is, by union bound, at most $\sum_{f=1}^{\ell}2^{-f+1}\sum_{h=1}^H \frac{1}{2\ell H} 2^{-(\ell-f)}\leq 2^{-\ell}$. 
\endproof
\end{proof}
Next, we check whenever we switch to a learner $f$, we  with constant proability we remain with $f$ for an entire rollout:
\vspace{0.1in}
\begin{claim}\label{claim:stay_with_layer} \hspace{0.1em} For any $h \in [H]$ and $f \in [\lmax]$, $\Pr[ \ell_H = f \mid \ell_h = f] \ge \frac{1}{e}$.
\end{claim}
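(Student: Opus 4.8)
The plan is to use that the scheduling process $(\ell_1,\dots,\ell_H)\sim\Dsched$ is \emph{nondecreasing}: at each step the schedule either stays at the current learner or jumps strictly upward, and it never returns to a lower learner (see Algorithm~\ref{alg:corruption_robust_rl}, line~\ref{line:alg_layerSched}). Hence, conditioned on $\ell_h = f$, the event $\{\ell_H = f\}$ coincides with the event that no upward jump occurs at any of the steps $h+1,\dots,H$. Since the jump probability at a given step depends only on the current learner index, these ``no jump'' events are conditionally independent given that the value stays equal to $f$, so the conditional probability factorizes as a product of per-step survival probabilities.

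The first step is to bound the one-step probability of jumping away from $f$. From learner $f$, the schedule moves to learner $\ell > f$ with probability $\tfrac{1}{2\ell H}2^{-(\ell-f)}$, so the total probability of any upward jump is
\[
p_f \;:=\; \sum_{\ell=f+1}^{\lmax}\frac{1}{2\ell H}\,2^{-(\ell-f)}
\;\le\; \frac{1}{2H}\sum_{j\ge 1} 2^{-j} \;=\; \frac{1}{2H},
\]
where I used $\ell\ge 1$ to drop $\ell$ from the denominator and extended the geometric sum to infinity. Consequently, at each of the at most $H-h\le H$ steps following step $h$ the process stays at $f$ with probability at least $1-\tfrac{1}{2H}$.

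The second step is to multiply these survival probabilities and use that $1-\tfrac1{2H}<1$, so raising to a larger exponent only decreases the value:
\[
\Pr[\ell_H = f \mid \ell_h = f] \;=\; (1-p_f)^{H-h} \;\ge\; \Bigl(1-\tfrac{1}{2H}\Bigr)^{H-h}\;\ge\;\Bigl(1-\tfrac{1}{2H}\Bigr)^{H}.
\]
Finally, I would verify the elementary inequality $(1-\tfrac{1}{2H})^{H}\ge e^{-1}$ for all integers $H\ge 1$ — e.g.\ via $\ln(1-y)\ge -\tfrac{y}{1-y}$ applied with $y=\tfrac{1}{2H}$, which gives $H\ln(1-\tfrac{1}{2H})\ge -\tfrac{1/2}{1-1/(2H)}\ge -1$; the boundary case $h=H$ is trivial since the conditional probability is then $1$. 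I do not anticipate a genuine obstacle: the only points requiring care are the bookkeeping in the geometric sum defining $p_f$ and the observation that the conditioning event has positive probability (which holds because a single step from $\ell_0=1$ reaches every $f\in[\lmax]$ with positive probability, so $\Pr[\ell_h=f]>0$).
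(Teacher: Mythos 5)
Your proof follows the same structure as the paper's: bound the one-step probability of jumping away from $f$ by a geometric sum, then raise the survival probability to the $(H-h)$-th power. In fact your version is slightly more careful than the paper's as written: by keeping the factor $\tfrac{1}{2\ell}\le\tfrac12$ you obtain the bound $p_f\le\tfrac{1}{2H}$, and $\bigl(1-\tfrac1{2H}\bigr)^H\ge e^{-1}$ holds for all $H\ge1$, as you verify. The paper bounds the jump probability more loosely by $\tfrac1H$ and then invokes $\bigl(1-\tfrac1H\bigr)^H\ge e^{-1}$, which is actually false (that quantity increases to $e^{-1}$ from \emph{below}); the paper's argument would need the observation $H-h\le H-1$ and the inequality $\bigl(1-\tfrac1H\bigr)^{H-1}\ge e^{-1}$ instead, so your tighter constant sidesteps a real, if minor, slip.
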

\begin{proof} 
For any $h$, the probability of switching from $\ell_h = f$ to $\ell_{h+1} > f$ is at most $\sum_{\ell > f}\frac{1}{2\ell H}\cdot 2^{-(\ell-f)} \le \frac{1}{H}$. Hence, $\Pr[ \ell_H = f \mid \ell_h = f] \ge (1 - \frac{1}{H})^{H - h} \ge (1 - \frac{1}{H})^H \ge \frac{1}{e}$.
\end{proof}
We can now establish the main visitation ratio bound:
\vspace{0.1in}
\begin{lemma}[UCB Visitation Ratios for \supervisedc] \label{lem:visitation_bound}  The  UCB visitation ratio of $\pimaster\kl$ w.r.t. Q-supervisor $\tuple\kl$ is at most $\rho\kl \le 4e H \ell$. Moreover, the UCB visitation ratio of $\pimaster\klel$ w.r.t. Q-supervisor $\tuple\kl$ is at most $\rho\klel \le 2 \cdot 2^\ell e H \ell$. In particular, we have $\rho\klelst \le 4 e C H \log (2C)$.
\end{lemma}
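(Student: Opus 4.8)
The plan is to reduce the entire statement to a calculation about the scheduling Markov chain $\Dsched$. Fix a base learner $\ell$, two steps $h \le \tau$, and a state--action pair $\xa$, and let $\boldpi$ denote either $\pimaster\kl$ or $\pimaster\klel$; in both cases the UCB policy of the relevant supervisor $\tuple\kl$ is $\pibase\kl$, and the defining conditioning of $\boldpi$ is, respectively, $\calE=\{\ell_H=\ell\}$ or $\calE=\{\ell_H\le\ell\}$. The first observation is that $\Dsched$ traverses learners in increasing order, so $\ell_1\le\dots\le\ell_H$; hence on $\calE$ the event $\{\ell_h=\ell\}$ is exactly $\{\ell_h=\ell_{h+1}=\dots=\ell_H=\ell\}$, i.e.\ the event that the scheduled policy coincides with $\pibase\kl$ at \emph{every} step $h,\dots,H$. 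Since $\boldpi$ and the concatenated policy $\boldpi\oplus_{h}\pibase\kl$ use the identical randomized rule on steps $1,\dots,h-1$, they induce the same law of $x_h$. Writing $f(x'):=\Pr^{\pibase\kl}[(x_\tau,a_\tau)=\xa\mid x_h=x']\ge 0$ (the probability that a $\pibase\kl$ roll-out started from $x'$ at step $h$ visits $\xa$ at step $\tau$), and using the Markov property of the MDP together with the fact that the scheduler's coins are independent of the MDP's transition noise, one gets
\[
\Pr^{\boldpi\oplus_{h}\pibase\kl}[(x_\tau,a_\tau)=\xa]=\sum_{x'}\Pr^{\boldpi}[x_h=x']\,f(x'),\qquad \Pr^{\boldpi}[(x_\tau,a_\tau)=\xa]\ \ge\ \sum_{x'}\Pr^{\boldpi}[x_h=x',\,\ell_h=\ell]\,f(x').
\]
Since $f\ge 0$, the visitation ratio at $(h,\tau,x,a)$ is therefore at most $\bigl(\min_{x'}\Pr^{\boldpi}[\ell_h=\ell\mid x_h=x']\bigr)^{-1}$, with the minimum over $x'$ reachable under $\boldpi$; as this bound is independent of the choice of $h\le\tau$ and $\xa$, it suffices to lower bound $\Pr^{\boldpi}[\ell_h=\ell\mid x_h=x']$ uniformly.

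To do that I would condition on $\ell_{h-1}=g$. Because $x_h$ is a function of $(\ell_1,\dots,\ell_{h-1})$ and the MDP randomness while $\ell_h$ depends only on $\ell_{h-1}$ and a fresh scheduler coin, $x_h$ and $(\ell_h,\dots,\ell_H)$ are conditionally independent given $\ell_{h-1}$. Expanding $\Pr^{\boldpi}[\ell_h=\ell\mid x_h=x']$ as a ratio, splitting the event $\{\ell_h=\ell\}\cap\calE$ into ``$\ell_h=\ell$'' and ``the chain then stays at $\ell$'', and pulling the latter factor $\Pr_{\Dsched}[\ell_H=\ell\mid\ell_h=\ell]\ge 1/e$ out via \Cref{claim:stay_with_layer}, the problem collapses (averaging over $g$ with the nonnegative weights $\Pr_{\Dsched}[\ell_{h-1}=g,\,x_h=x']$, all of which have $g\le\ell$) to the following per-$g$ comparisons, valid for every $g\le\ell$: for $\boldpi=\pimaster\klel$ one only needs $\Pr_{\Dsched}[\ell_h=\ell\mid\ell_{h-1}=g]\ge 2^{-\ell}/(\ell H)$ (the conditioning probability $\Pr_{\Dsched}[\ell_H\le\ell\mid\ell_{h-1}=g]$ being at most $1$); for $\boldpi=\pimaster\kl$ one needs the sharper $\Pr_{\Dsched}[\ell_H=\ell\mid\ell_{h-1}=g]\lesssim \ell H\cdot\Pr_{\Dsched}[\ell_h=\ell\mid\ell_{h-1}=g]$.

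The first comparison is immediate from the explicit one-step jump rule: for $g<\ell$ we have $\Pr_{\Dsched}[\ell_h=\ell\mid\ell_{h-1}=g]=\tfrac{1}{2\ell H}2^{-(\ell-g)}\ge\tfrac{1}{2\ell H}2^{-(\ell-1)}=2^{-\ell}/(\ell H)$, and for $g=\ell$ the probability of not moving up is at least $1-1/H$. For the second comparison, the idea is that a monotone chain started below $\ell$ either passes through level $\ell$ exactly or overshoots it, and only the former case is compatible with $\ell_H=\ell$. A union bound over the $\le H$ remaining steps and over the intermediate level $f\in[g,\ell)$ from which the jump into $\ell$ is made, combined with the ``started-at-$g$'' analogue of \Cref{claim:sched_ub} (namely $\Pr_{\Dsched}[\ell_j=f\mid\ell_{h-1}=g]\le 2^{-(f-g)}$ for $f\ge g$, proved by the same induction), gives $\Pr_{\Dsched}[\ell_H=\ell\mid\ell_{h-1}=g]\le \tfrac12\,2^{-(\ell-g)}$ after the telescoping $\sum_{f=g}^{\ell-1}2^{-(f-g)}2^{-(\ell-f)}=2^{-(\ell-g)}(\ell-g)$; this is $\le \ell H$ times the one-step jump probability $\tfrac{1}{2\ell H}2^{-(\ell-g)}$, and the case $g=\ell$ is again handled by $1-1/H\ge \tfrac{1}{2\ell H}$. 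Combining, $\Pr^{\pimaster\kl}[\ell_h=\ell\mid x_h=x']\ge \tfrac{1}{2e\ell H}$ and $\Pr^{\pimaster\klel}[\ell_h=\ell\mid x_h=x']\ge \tfrac{2^{-\ell}}{e\ell H}$, which after absorbing constants yields $\rho\kl\le 4eH\ell$ and $\rho\klel\le 2\cdot 2^{\ell}eH\ell$. The final assertion follows since $\lst=\lceil\log_2 C\rceil$ obeys $2^{\lst}<2C$ and $\lst\le\log_2(2C)$, so $\rho\klelst\le 2\cdot(2C)\cdot eH\cdot\log_2(2C)=4eCH\log(2C)$.

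I expect the main obstacle to be the second per-$g$ comparison for $\pimaster\kl$: bounding $\Pr_{\Dsched}[\ell_H=\ell\mid\ell_{h-1}=g]$ when $g$ is much smaller than $\ell$. The naive estimate that routes the chain from $g$ directly to $\ell$ in a single jump at step $h$ is exponentially too weak (it would give $\rho\sim 2^{\ell}\ell H$), so one genuinely has to sum over the step and the intermediate level at which the chain first hits $\ell$, which is where the refined ``started-at-$g$'' occupation bound and the exact telescoping are needed. The only other point that requires care is making the conditional-independence bookkeeping precise — that conditioning on $x_h$ does not distort the scheduler's future law beyond what $\ell_{h-1}$ already determines — which rests entirely on the scheduler's coins being sampled independently of the MDP transitions; the rest is routine.
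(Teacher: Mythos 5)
Your proof is correct, and it actually takes a more careful route than the paper's. The paper reduces the visitation ratio directly to $\Pr[\ell_h=\dots=\ell_H=\ell\mid\ell_H=\ell]$ and then computes this unconditionally via Bayes and \Cref{claim:sched_ub,claim:stay_with_layer}. That reduction tacitly assumes that restricting to the event $\{\ell_h=\ell\}$ does not reweight the distribution of $x_h$ -- but $x_h$ and $\ell_h$ are both correlated with $\ell_{h-1}$, so the correct denominator bound is $\min_{x'}\Pr^{\boldpi}[\ell_h=\ell\mid x_h=x']$, not the unconditional $\Pr^{\boldpi}[\ell_h=\ell]$. Your argument repairs this by conditioning on $\ell_{h-1}=g$, exploiting that (even under the conditional law $\Dsched_\ell$ or $\Dsched_{\le\ell}$) the schedule remains Markov and the scheduler coins are independent of the MDP noise, and then showing the relevant lower bound holds uniformly over $g\le\ell$.

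The price of the extra rigor is that for $\pimaster\kl$ you can no longer use \Cref{claim:sched_ub} directly on $\Pr[\ell_H=\ell]$: you need its conditional analogue $\Pr_{\Dsched}[\ell_j=f\mid\ell_{h-1}=g]\le 2^{-(f-g)}$ and the telescoping sum $\sum_{f=g}^{\ell-1}2^{-(f-g)}2^{-(\ell-f)}=(\ell-g)2^{-(\ell-g)}$ to get $\Pr_{\Dsched}[\ell_H=\ell\mid\ell_{h-1}=g]\le\tfrac12 2^{-(\ell-g)}$. This is exactly the right extra step, and the ``naive direct-jump estimate'' objection you raise is the correct reason a cruder bound would lose an exponential factor. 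Your resulting constants ($\rho\kl\le e\ell H$ and $\rho\klel\le e\ell H 2^\ell$ before slack) are in fact slightly sharper than the stated $4eH\ell$ and $2\cdot 2^\ell eH\ell$, and the specialization $2^{\lst}\le 2C$, $\lst\le\log_2(2C)$ recovers $4eCH\log(2C)$. In short: same ingredients (the jump rule, \Cref{claim:stay_with_layer}, monotonicity of the schedule), but you carry them through one conditioning level deeper, which plugs a real hole in the paper's argument while matching the stated bounds.
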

\begin{proof}
We first bound the UCB visitation ratio of $\pimaster\kl$ with respect to the Q-supervisor $\tuple\kl$. Since the layer schedule is nondecreasing, $\rho\kl^{-1}$ is at least the minimum over $h \in [H]$ of:
\begin{align*}
\Pr[ \ell_h = \dots  = \ell_H = \ell \mid \ell_H = \ell] 
&=\Pr[ \ell_h = \ell \mid \ell_H = \ell] = \frac{\Pr[\ell_h = \ell, \ell_H = \ell]}{\Pr[\ell_H = \ell]} \\ &= \Pr[\ell_H = \ell \mid \ell_h = \ell] \cdot \frac{\Pr[\ell_h = \ell]}{\Pr[\ell_H = \ell]}.
\end{align*}
By Claim~\ref{claim:sched_ub} and Claim~\ref{claim:stay_with_layer}, the above is at least $\frac{\Pr[\ell_h = \ell]}{e2^{1 -\ell}}$. Moreover, we have that
\begin{align*}
\Pr[\ell_H = \ell] &\ge \Pr[\ell_{h-1} \le \ell] \min_{\ell' \le \ell}\Pr[\ell_h = \ell \mid \ell_{h-1} = \ell'].
\end{align*}
Furthermore, $\Pr[\ell_h = \ell \mid \ell_{h-1} = \ell'] =  \frac{2^{\ell'- \ell}}{2\ell H}$ for $\ell' < \ell$, and $1 - 1/H$ for $\ell' = \ell$ (see the proof of Claim~\ref{claim:stay_with_layer}). In either case, $\Pr[\ell_h = \ell \mid \ell_{h-1} = \ell'] \ge \frac{2^{-\ell}}{\ell H}$. 
On the other hand, $ \Pr[\ell_{h-1} \le \ell] \ge 1 - \sum_{\ell' > } \Pr[\ell'_{h-1} \ge \ell] \ge 1 - \sum_{\ell' > \ell} 2^{-\ell} \ge \frac{1}{2}$.
Thus, $\Pr[\ell_H = \ell] \ge \frac{2^{-\ell}}{2H \ell}$ and we conclude
\begin{align*}
\Pr[ \ell_h = \dots  = \ell_H = \ell \mid \ell_H = \ell] \ge \frac{1}{4e H \ell},
\end{align*}
which implies $\rho\kl \le 4e H \ell$. Next, we bound $\rho\klel$. Analogously, we have
\begin{align*}
\Pr[ \ell_h = \dots  = \ell_H = \ell \mid \ell_H \le \ell] &= \frac{\Pr[\ell_H = \ell \text{ and } \ell_h = \ell]}{\Pr[\ell_H \le \ell]} \ge \frac{\Pr[\ell_h = \ell]}{e\Pr[\ell_H \le \ell]} \ge \frac{1}{e}\Pr[\ell_h = \ell] \ge \frac{2^{-\ell}}{2e H\ell},
\end{align*}
where again we use Claim~\ref{claim:stay_with_layer} and the above computation for lower bounding $\Pr[\ell_h = \ell] \ge \frac{2^{-\ell}}{2 H \ell},$. This yields $\rho \klel \le 2\cdot 2^{\ell} e H \ell $. The specialization to $\ell = \lst$ uses $2^{\lst} \le 2C$.
\end{proof}


\begin{proof}[Proof of Theorem~\ref{thm:final_reg_guarantee}]
Since the rewards lie in $[0,1]$ and episodes are of length $H$, which can bound our total regret in terms of regret relative to the optimal policy:
\begin{align*}
 \Regret &\le \sum_{k\in [K]} \vst - V^{\boldpik} + CH, 
\end{align*}
Next, recall the value decomposition from Lemma~\ref{lem:policy_decomp}:
\begin{align*}
    \vst - \valf^{\pimasterk}=  \qlelst\left(\vst-\valf^{\pimaster\klelst}\right)+ \sum_{\ell>\lst}\ql\left(\vst-\valf^{\pimaster\kl}\right).
\end{align*}
Moreover, since model-estimates $\model_{k,gl,\ell}$ and $\model_{k,sb,\ell}$ are valid for all $\ell\geq \lst$, Lemma~\ref{lem:valid_implies_admissibility} implies that Q-supervisors $\tuple_{k,\ell}$ is confidence-admissible for all $\ell\geq \lst$. Moreover, from Lemma~\ref{lem:supervision_supervisedc}, the $\tuple\klst$ supervises $\pimaster\klelst$, and for $\ell > \lst$, $\tuple\kl$ supservises $\tuple\kl$. Hence, letting $\rho\klelst$ be an upper bound on the UCB visitation ratio of $\pimaster\klelst$ with respect to the Q-supervisor $\tuple\klelst$, combining Proposition~\ref{prop:lucb_regret_rho} and Lemma~\ref{lem:belmman_bound}, and using $1 + H \le 2H$, yields
\begin{align*}
    &\vst-\valf^{\boldpi_{k,\leq \lst}^{\textsc{master}}} \\
    &\leq 2H\rho_{k,\leq \lst}\sum_k \underbrace{\Exp^{\boldpi_{k,\leq\lst}^{\textsc{master}}}\left[\sum_{h=1}^H \min\left(3H,2\bonus_{k,gl,\lst}\xhah+ \max_{V\in\Vset_{k,\lst;h+1}}\left(\phat_{k,gl}\xhah-p^{\star}\xhah\right)^{\top}V\right)\right]}_{:= \Delta_{k,\leq \lst}}.
\end{align*}

Similarly, letting $\rho_{k,\ell}$ be the UCB visitation ratio of $\boldpi_{k,\ell}^{\textsc{master}}$ with respect to Q-supervisor $\tuple_{k,\ell}$, 
\begin{align*}
    &\vst-\valf^{\boldpi_{k,\ell}^{\textsc{master}}}\leq\\
    &\quad 2H\rho_{k,\ell} \sum_k \underbrace{\Exp^{\boldpi_{k,\ell}^{\textsc{master}}}\left[\sum_{h=1}^H\min\left(3H,2\bonus_{k,sb,\ell}\xhah+ \max_{V\in\Vset_{k,\ell;h+1}}\left(\phat_{k,gl}\xhah-p^{\star}\xhah\right)^{\top}V\right)\right]}_{:= \Delta_{k,\le \ell}}.
\end{align*}
Finally, bounding $\rho \kl, \rho\klelst$ as in Lemma~\ref{lem:visitation_bound} concludes the proof.
\end{proof}

\subsection{Recipe to turn Theorem~\ref{thm:final_reg_guarantee} into regret guarantees} 
Before specializing to the tabular and linear settings, we describe a common recipe to address estimation errors induced by corruptions. To orient ourselves, let us introduce a filtration which makes the notion of ``conditioning on past history'' precise:
\vspace{0.1in}
\begin{definition}\hspace{0.1em}  We let $(\calF_{k;h})$ denote the filtration generated by all random coins of the algorithm from episodes $1,\dots,k$, actions $(x_{j,h'},a_{j,h'})$ for $j < k$ and for $j = k$ and $h \le h'$, and observed rewards $R_{j,h'}$ for $j < k$ and for $j = k$ and $h < h'$. 
\end{definition}

\xhdr{Idealized statistics.} For this filtration, we define the idealized sequence $\xstoch\kh,\rstoch\kh$ as follows: 
\begin{align*}
\xstoch\khpl  &:= \begin{cases} x\khpl & \text{ episode } k \text{ is not corrupted} \\
\sim \pst(x\kh,a\kh) \mid \calF_{k;h} & \text{ episode } k \text{ is corrupted}
\end{cases}\,.\\
\rstoch\kh &:= \begin{cases} r\kh & \text{ episode } k \text{ is not corrupted} \\
 \rst(x\kh,a\kh) & \text{ episode } k \text{ is corrupted}
\end{cases}\,.
\end{align*}
We also set $\xstoch\kone = x\kone$. Note that $\xstoch\kh,\rstoch\kh$ are always distributed according to transitions and rewards from the nominal MDP, i.e., regardless of episode $k$: a) $\xstoch\khpl \mid \calF\kh \sim p(x\kh,a\kh)$ is a stochastic draw from the nomimal transition distribution and b) $\Exp[\rstoch\kh \mid \calF_{k,h}] = \rst\xa$.

For the analysis, we consider estimates defined using the idealized stochastic data above, which are amenable to analysis from the purely stochastic RL literature.
\vspace{0.1in}
\begin{definition}[Idealized stochastic model estimates]\label{defn:stochastic_model}\hspace{0.1em} The global and subsampled stochastic model estimates are denoted by $\modelstoch\kgl = (\rhatstoch\kgl,\phatstoch\kgl, \bonus\kgl)$ and $\modelstoch\klsb = (\rhatstoch\klsb,\phatstoch\klsb, \bonus\klsb)$ respectively, where $\rhatstoch\kgl,\phatstoch\kgl, \rhatstoch\klsb,\phatstoch\klsb$ are constructed analogously to the estimates in Eq.~\eqref{eq:subsampled_estimates} with data coming from the $\rstoch$ and $\xstoch$ sequences.
\end{definition}

\xhdr{Deviations from idealized statistics due to corruptions.} The model estimates $\model\kgl$ and $\model\klsb$ constructed from observed data (Definition~\ref{defn:true_model}) only differ from the idealized stochastic ones (Definition~\ref{defn:stochastic_model}) due to data from corrupted stages $\datacorrupt\kgl$ and $\datacorrupt\klsb$. 

For the global model estimates, since there are at most $C$ corrupted episodes each of which has horizon $H$, the size of $\datacorrupt$ can be immediately bounded as follows:
\vspace{0.1in}
\begin{lemma}[Global model deviations] \hspace{0.1em}\label{lem:global_corruption}
Let the set of global corrupted stages be
$\datacorrupt\kgl
= \{(j,h): j <k, ~(x\jhpl,r\jh) \ne (\xstoch\jhpl,\rstoch\jh) \}$.
Then, $|\datacorrupt\kgl| \le CH$.
\end{lemma}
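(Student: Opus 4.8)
The plan is to prove this by directly unwinding the definition of the idealized sequences $\xstoch$ and $\rstoch$ given just above the statement, and observing that they coincide with the observed data on every uncorrupted episode. First I would fix an index $j < k$ with $\didcorrupt_j = 0$, i.e.\ an uncorrupted episode. By the case split in the definition of the idealized statistics, the ``not corrupted'' branch gives $\xstoch\jhpl = x\jhpl$ and $\rstoch\jh = r\jh$ for every stage $h \in [H]$ (and $\xstoch\jone = x\jone$ by fiat). Hence no pair $(j,h)$ with $\didcorrupt_j = 0$ can lie in $\datacorrupt\kgl$, so $\datacorrupt\kgl$ is contained in the set of stages belonging to corrupted episodes preceding $k$.

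Next I would count: $\datacorrupt\kgl \subseteq \{(j,h) : j < k,\ \didcorrupt_j = 1,\ h \in [H]\}$. The number of such $j$ is $\sum_{j<k}\didcorrupt_j \le \sum_{j=1}^{K}\didcorrupt_j = C$ by the definition of the total corruption level, and each contributes at most $H$ stages. Multiplying gives $|\datacorrupt\kgl| \le CH$, which is exactly the claimed bound.

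There is essentially no obstacle here — the lemma is an immediate consequence of the definitions. The only point that warrants a word of care is that for a \emph{corrupted} episode $j$ the idealized next state $\xstoch\jhpl$ is re-sampled from the nominal transition $\pst(x\jh,a\jh)$ and need not equal the observed $x\jhpl$, so we make (and need) no claim there; the bound is simply the worst case in which every stage of every corrupted episode deviates. An identical argument, restricted to the subsampled episode set $\epSet\kl$, will later give the analogous subsampled bound, but that is not needed for this statement.
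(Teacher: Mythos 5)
Your proof is correct and matches the paper's (implicit, one-line) argument exactly: uncorrupted episodes coincide stage-by-stage with the idealized sequences, so the difference set is contained in the at most $C$ corrupted episodes, each contributing at most $H$ stages. Nothing to add.
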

We now focus on the subsampled model estimates for robust base learners ($\ell: 2^{\ell}>C$). By Claim~\ref{claim:sched_ub}, the probability of charging an episode's data to $\ell$ is at most $2^{-\ell}$. As a result, robust base learners experience an expected amount of corruption of less than $1$ and with high probability less than a logarithm. Modifying this argument from \cite{LykourisMiPa18}, we formalize the above claim in the following lemma whose proof is provided in Appendix~\ref{app:proof_subsampling}.
\vspace{0.1in}
\begin{lemma}[Robust base learners' subsampled model deviations]\label{lem:local_subsampled_corruptions}\hspace{0.1em}
Define the subsampled corrupted stages 
$\datacorrupt\klsb= \{(j,h): j < k, ~ f(j,H) = \ell, (x\jhpl,r\jh) \ne (\xstoch\jhpl,\rstoch\jhpl)\}$.
Then, with probability $1-\delta/4$, it holds simultaneously for all $\ell: 2^\ell \ge C$ and $k\in[K]$ that:
$|\datacorrupt\klsb| \le 2H\ln(16\ell^2/\delta)$.
We denote by $\eventsubsmp$ the event that the above inequalities hold. Moreover, from Lemma~\ref{lem:global_corruption}, we also have $|\datacorrupt\klsb| \le 2^\ell H$.
\end{lemma}
Having established the idealized stochastic model estimates and the model deviations due to corruptions, we now explain a simple recipe on how to combine them with the Theorem~\ref{thm:final_reg_guarantee} in order to derive the final regret guarantees. In the next sections, we instantiate this recipe for tabular and linear MDPs.
\begin{enumerate}
	\item We first establish \emph{\textbf{concentration bounds for the idealized stochastic model estimates}} $|\rhatstoch\kh - \rst\xa|$ and $|(\phatstoch\xa - \pst\xa)^\top V|$ with respect to value functions $V$. We always need uniform bounds with respect to any value function but in tabular settings it also helps to obtain enhanced bounds with respect a fixed value function, in particular $\vst$. These bounds follow standard concentration arguments essential for the uncorrupted analyses.
	\item We then reason about the \emph{\textbf{sensitivity of model estimates to corruptions}} and obtain bounds on  $|\rhat\kh - \rst\xa|$ and $|(\phat\xa - \pst\xa)^\top V|$ by combining the idealized stochastic concentration bounds with model deviations calculated in Lemmas~\ref{lem:global_corruption}-\ref{lem:local_subsampled_corruptions}.
	\item The next step is to prove that both \emph{\textbf{global and subsampled bonuses are valid}} for the Q-supervisors of all robust base learners. This is established directly by the above sensitivity bounds and is the only assumption needed for Theorem~\ref{thm:final_reg_guarantee}.
	\item The last step is to provide a \emph{\textbf{confidence sum argument to bound the sums of bonuses}} that appear in the RHS of Theorem~\ref{thm:final_reg_guarantee}. This is again similar to the uncorrupted analyses. Combined with the uniform corruption-sensitive bounds on $\max_{V \in \dots } |(\phat\xa - \pst\xa)^\top V|$ (established in step 2), this automatically provides the final regret guarantee.  
\end{enumerate}

\section{Instantiation to tabular MDPs}\label{ssec:tabular_initial}
In this section, we instantiate the aforementioned recipe to the tabular setting and derive an initial result. For clarity, we apply the recipe in a modular way, without any refined techniques. Our result
is loose compared to the final regret guarantee of Theorem~\ref{thm:main_tabular} (and the optimal results for $C=0$ from \cite{AzarOsMu17,SimchowitzJamieson19}). 
Indeed, we only provide worst-case guarantees, and the dependence on $S$ is linear rather than $\sqrt{S}$. The result can also be improved upon by simpler algorithms (see Appendix~\ref{app:discussion}). 
The refinement to Theorem~\ref{thm:main_tabular} invokes several techniques from prior work on tabular RL without adversarial corruptions, but no new ideas; we defer it to Appendix~\ref{app:log_regret}.

\vspace{0.1in}
\begin{theorem}\label{thm:tabular_initial}
\hspace{0.1em} In the tabular setting, with probability at least $1-\delta$, the regret is at most:
\begin{align*}\Regret\lesssim CH^{3}S\sqrt{AT}\polylog(S,A,T,1/\delta)+C^2H^4 \polylog(T,1/\delta),\end{align*}
where $\lesssim$ hides constant terms and $\polylog$ denotes polylogarithmic dependence on arguments.
\end{theorem}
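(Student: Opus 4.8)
The plan is to run the four-step recipe of \Cref{ssec:generic_simulation_subsampling} and feed its output into the generic bound \Cref{thm:final_reg_guarantee}, taking $\lst := \inf\{\ell : 2^\ell \ge C\}$, so that $C \le 2^{\lst} \le 2C$.

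\emph{Step 1 (concentration of the idealized estimates).} On an event of probability $1-\delta/4$, the idealized stochastic estimates of \Cref{defn:stochastic_model} concentrate at the standard rates: Hoeffding applied to the i.i.d.\ reward samples and to the i.i.d.\ samples $\vst_{h+1}(x')$ (valued in $[0,1]$ and $[0,H]$) gives $|\rhatstoch\kgl\xa - \rst\xa| + |(\phatstoch\kgl\xa - \pst\xa)^\top \vst_{h+1}| \lesssim H\sqrt{\ln(SAHT/\delta)/N\kgl\xa}$, and likewise with $N\klsb$ in place of $N\kgl$. The value functions $\vup\klhpl,\vlow\klhpl$ inside $\Vset\klhpl$ are algorithm-dependent, so for these I instead use the dimension-dependent $\ell_1$ deviation of an empirical distribution over $S$ states, $\|\phatstoch\kgl\xa - \pst\xa\|_1 \lesssim \sqrt{S\ln(SAHT/\delta)/N\kgl\xa}$, which yields $\max_{V\in\Vset\klhpl}|(\phatstoch\kgl\xa-\pst\xa)^\top V| \le H\|\phatstoch\kgl\xa - \pst\xa\|_1 \lesssim H\sqrt{S\ln(\cdot)/N\kgl\xa}$. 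This extra $\sqrt S$ is exactly what makes the bound suboptimal in $S$.

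\emph{Steps 2--3 (corruption sensitivity, valid bonuses).} A single corrupted stage perturbs a count-$N$ empirical reward by at most $1/N$ and an empirical $V$-weighted transition with $\|V\|_\infty\le H$ by at most $H/N$ (and an empirical distribution by at most $2/N$ in $\ell_1$). Hence \Cref{lem:global_corruption} ($|\datacorrupt\kgl|\le CH$) gives $|\rhat\kgl - \rhatstoch\kgl| \lesssim CH/N\kgl$ and $\max_V|(\phat\kgl - \phatstoch\kgl)^\top V| \lesssim CH^2/N\kgl$; on the event $\eventsubsmp$ of \Cref{lem:local_subsampled_corruptions} the analogous bounds hold for every $\ell \ge \lst$ with $CH$ replaced by $2H\ln(16\ell^2/\delta)$ and $N\kgl$ by $N\klsb$. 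Combining with Step 1 and using $2^{\lst}\ge C$, the tabular bonuses of Section~\ref{ssec:tabular_algorithm} — whose leading term absorbs the Hoeffding fluctuation against $\vst_{h+1}$ and whose corruption term $2^\ell H^2/N$ dominates the $O(CH^2/N)$ corruption-induced deviation — satisfy condition (a) of \Cref{defn:valid}, for both $\model\kgl$ and $\model\klsb$ and all $\ell \ge \lst$. This is precisely the hypothesis of \Cref{thm:final_reg_guarantee} (and, via \Cref{lem:valid_implies_admissibility}, yields confidence-admissibility).

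\emph{Step 4 (confidence sums and simulation), and the main obstacle.} Invoking \Cref{thm:final_reg_guarantee} with the visitation ratios of \Cref{lem:visitation_bound} ($\rho\klelst \le 4eCH\log 2C$, $\rho\kl \le 4eH\ell$), the regret is at most $CH$ plus $O(CH^2\log C)$ times $\sum_k \Exp^{\boldpi_{k,\le\lst}^{\textsc{master}}}\big[\sum_h \min(3H,\, 2\bonus\klstgl\xhah + \max_{V\in\Vset\klhpl}(\phat\kgl\xhah - \pst\xhah)^\top V)\big]$ plus $O(H^2)$ times $\sum_{\ell>\lst}\ell q_\ell \sum_k \Exp^{\boldpi_{k,\ell}^{\textsc{master}}}[\cdots]$. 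Each per-episode summand is a fixed $\calF_k$-measurable function, and conditioned on the schedule the realized episode-$k$ trajectory is drawn from exactly the relevant sub-policy; a Freedman/Azuma bound over the $K$ episodes (increments bounded by $3H^2$) then replaces $q_{\le\lst}\sum_k\Exp^{\boldpi_{k,\le\lst}^{\textsc{master}}}[\sum_h g\kh]$ by $\sum_{k:\,f(k,H)\le\lst}\sum_h g\kh(x\kh,a\kh)$ up to a lower-order additive error, and similarly for the $\ell>\lst$ terms (using $q_\ell \lesssim 2^{-\ell}$ and $\sum_\ell \ell q_\ell = O(1)$, so those terms are dominated by the $\ell=\lst$ term). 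The realized sums are telescoped by the usual pigeonhole estimates in terms of the same counts $N\kgl,N\klsb$ produced by that data, $\sum_{\xa}\sum_{j\le N}j^{-1/2}\lesssim\sqrt{SAT}$ and $\sum_{\xa}\sum_{j\le N}j^{-1}\lesssim SA\ln T$, giving $\sum_{k,h}(2\bonus\klstgl + \max_V(\phat\kgl - \pst)^\top V) \lesssim HS\sqrt{AT}\,\polylog + CH^2\,\polylog$; multiplying by $O(CH^2\log C)$ yields the leading $CH^3 S\sqrt{AT}\,\polylog(S,A,T,1/\delta)$ and the lower-order $C^2H^4\,\polylog$ term, and collecting the $\delta/4$-failure probabilities from Steps 1--2 and the martingale bounds gives probability $1-\delta$. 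The routine parts are the concentration and pigeonhole estimates; the delicate part is the simulation step — exhibiting the realized episode-$k$ trajectory (on $\{f(k,H)\le\lst\}$ and $\{f(k,H)=\ell\}$) as a genuine sample of $\boldpi_{k,\le\lst}^{\textsc{master}}$, resp.\ $\boldpi_{k,\ell}^{\textsc{master}}$, so that the martingale increments are correctly controlled and, crucially, the bonuses in the bound are evaluated at the very counts that those realized trajectories generate, which is what legitimizes the telescoping. A secondary point is that $\Vset\klhpl$ consists of algorithm-dependent value functions, forcing the crude $\ell_1$ bound of Step 1 and hence the $\sqrt S$ loss; removing it (and obtaining the $\log T$ gap-dependent rate) is deferred to the $\supervisedtab$ refinement of \Cref{sec:improved_regret}.
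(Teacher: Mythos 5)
Your Steps 1--3 track the paper's proof almost verbatim: the Azuma--Hoeffding and uniform-covering concentration for the idealized estimates are \Cref{lem:global_idealized_stochastic_tabular,lem:local_concentration}, your perturbation bounds are \Cref{lem:global_concentration,lem:local_concentration_tabular}, and the conclusion that the tabular bonuses satisfy Definition~\ref{defn:valid}(a) is \Cref{lem:valid_bonus_tabular}. (Your $\ell_1$ empirical-distribution bound in Step~1 is a cosmetic variant of the paper's $H/2$-net cover over $[0,H]^\states$; both produce the same $\sqrt S$ inflation.)

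Step 4 is where you diverge from the paper, and the divergence is substantive. The paper never passes to realized trajectory sums: it keeps the regret bound from \Cref{thm:final_reg_guarantee} at the level of expected visitations $\wk\xa = \Exp^{\boldpik}[\sum_h\I((x_h,a_h)=\xa)]$ and cumulative expected visits $\nbark\xa$, and replaces your Azuma-plus-pigeonhole by two purpose-built lemmas: \Cref{lem:good_subsampling}, which is a high-probability lower bound on the realized subsampled counts $N\klsb, N\kgl$ in terms of $\nbark$ past a burn-in threshold $\Nsp$, and \Cref{lem:simple-integration}, which converts $\psample\sum_k\sum_{x,a}\wk\xa\,f(\Nksp\xa)$ into a one-dimensional integral $\int f(\psample u/4)\rmd u$ of the confidence term. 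This is not just cosmetics: the abstraction into $(\psample,\Nsp)$-subsampled counts (\Cref{defn:subsasmple_count}) is what lets \Cref{sec:improved_regret} re-use the identical machinery for the gap-dependent refinement.

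The gap in your Step 4 is precisely the one you flag as delicate, and it is not just a matter of ``exhibiting the realized trajectory as a sample of $\boldpi_{k,\le\lst}^{\textsc{master}}$.'' Conditional on $\calF_{k-1}$ and on $\{f(k,H)\le\lst\}$, the realized episode-$k$ trajectory is drawn from $\boldpi_{k,\le\lst}^{\textsc{master}}$ under the \emph{adversary's} MDP $\calM_k$, whereas the expectation $\Exp^{\boldpi_{k,\le\lst}^{\textsc{master}}}[\cdot]$ appearing in \Cref{thm:final_reg_guarantee} is under the nominal $\calM$. On corrupted episodes your increments are therefore not zero-mean, and the Freedman/Azuma step as stated does not close. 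The discrepancy is bounded (at most $3H^2 q_{\le\lst}$ per corrupted episode, hence $O(C H^2 q_{\le\lst})$ total, which after multiplying by the outer $CH^2\log 2C$ factor is absorbed by the $C^2H^4\,\polylog$ term), but you must say so. The paper does this inside the proof of \Cref{lem:good_subsampling}, where it introduces the corrupted weights $\widetilde{\omega}\kl, \widetilde{N}\kl$ alongside the nominal $\omega\kl, \nbar\kl$ and shows $|\widetilde{N}\kl\xa - \nbark\xa| \le CH$. Also note that your remark that ``the bonuses in the bound are evaluated at the very counts that those realized trajectories generate'' is not literally true for the global term: $N\kgl$ counts visits from \emph{all} episodes, while your realized sum ranges only over $\{k:f(k,H)\le\lst\}$. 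The pigeonhole $\sum_{k,h}N\kgl(x\kh,a\kh)^{-1/2}\lesssim\sqrt{SAT}$ over all episodes still dominates the restricted sum, so the bound goes through, but the justification is by monotonicity rather than the usual count-matching.
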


\xhdr{Concentration bounds for idealized stochastic model estimates.} The following two lemmas bound the global and subsampled idealized stochastic model estimates. Their proof follows standard concentration and uniform convergence arguments via a covering technique and is provided in Appendix~\ref{app:idealized_stociastic_tabular} for completeness.
\vspace{0.1in}
\begin{lemma}\label{lem:global_idealized_stochastic_tabular} \hspace{0.1em} With probability $1 - \frac{\delta}{8}$, simultaneously for all $\xa\in\states\times \actions$, $k\in [K]$, it holds:
\begin{align*}
|\rhatstoch\kgl\xa - \rst \xa + (\phatstoch\kgl - \pst(x,a))^\top \Vsthpl| & \le 2H\sqrt{\frac{ 2\ln (64 SAHT^2/\delta)}{N\kgl\xa}}.\\
\max_{V \in [0,H]^{\states}}|(\phatstoch\kgl - \pst(x,a))^\top V| & \le 8H\sqrt{\frac{S \ln (64 SAT^2/\delta)}{N\kgl\xa}}.
\end{align*}
We denote by $\eventglest$ the event that the above inequalities occur.
\end{lemma}

\vspace{0.1in}
\begin{lemma}\label{lem:local_concentration} \hspace{0.1em} With probability $1 - \frac{\delta}{8}$, simultaneously for all $\xa$, $k \in [K]$,$\ell \in [ \lmax]$, it holds : 
\begin{align*}
|\rhatstoch\klsb\xa - \rst \xa + (\phatstoch\klsb - \pst(x,a))^\top \Vsthpl| & \le 2H\sqrt{\frac{ 2\ln (64 SAHT^3/\delta)}{N\klsb\xa}}.\\
\max_{V \in [0,H]^{\states}}|(\phatstoch\klsb - \pst(x,a))^\top V| & \le 8H\sqrt{\frac{S \ln (64 SAT^3/\delta)}{N\klsb\xa}}.
\end{align*}
We denote by $\eventsubest$ the event that the above inequalities occur.
\end{lemma}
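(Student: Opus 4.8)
The plan is to follow the proof of \Cref{lem:global_idealized_stochastic_tabular} almost verbatim, replacing the full history $[k]$ by the subsampled episode set $\epSet\kl$ and paying one additional union bound over the $\lmax=\ceil{\log T}$ base learners; this extra union bound is the sole reason the logarithmic argument grows from $T^2$ to $T^3$. Fix a base learner $\ell$ and a pair $\xa\in\states\times\actions$. By construction of the idealized sequence in \Cref{ssec:generic_simulation_subsampling}, for every episode $\tau$ and stage $h$ we have $\Exp[\rstoch_{\tau;h}\mid\calF_{\tau;h}]=\rst(x_{\tau;h},a_{\tau;h})$ and $\xstoch_{\tau;h+1}\mid\calF_{\tau;h}\sim\pst(x_{\tau;h},a_{\tau;h})$, whether or not episode $\tau$ is corrupted. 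Enumerating the visits of learner $\ell$'s subsampled trajectory to $\xa$ in chronological order as $(\tau_1,h_1),(\tau_2,h_2),\dots$, the $m$-th visit contributes the error $e_m := (\rstoch_{\tau_m;h_m}-\rst\xa) + V(\xstoch_{\tau_m;h_m+1}) - (\pst\xa)^\top V$ for a fixed vector $V\in[0,H]^{\states}$, and the partial sums $\sum_{m\le M} e_m$ form a bounded martingale (increments bounded by $1$ for the reward part and $H$ for the transition part).

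For the first inequality I would take $V=\Vsthpl$, which is data-independent so that the increments are bounded by $2H$, and apply the Azuma--Hoeffding inequality; dividing by the realized count $N\klsb\xa$ produces the claimed $2H\sqrt{2\ln(\cdot)/N\klsb\xa}$ form once a peeling union bound is taken over the $\le T$ possible values of $N\klsb\xa$. For the second inequality, which concerns only the transition term, I would additionally cover the cube $[0,H]^{\states}$ by a grid of resolution $1/T$, of cardinality at most $(HT)^{S}$, apply Azuma--Hoeffding at each grid point with increment bound $H$, and control the rounding error using that $\pst\xa$ is a probability vector; the logarithm of the grid size is what contributes the factor $S$ under the square root, the remaining numerical constants being absorbed into the factor $8$. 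Finally I would union bound over the $SA$ state--action pairs, the $K\le T$ episodes, and the $\lmax\le T$ base learners, set $\eventsubest$ to be the intersection of the resulting events, and tally constants to obtain the failure probability $\delta/8$.

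The delicate point, and where I would be most careful, is that the set of subsampled visits is \emph{data dependent}: which episodes are charged to learner $\ell$ is decided by the randomized scheduler of \Cref{alg:corruption_robust_rl} using coins drawn during those very episodes, so $N\klsb\xa$ is not a stopping time and the subsampled visits do not form a prefix of the full visit sequence of $\xa$. The resolution, analogous to the uncorrupted tabular analyses \cite{AzarOsMu17,dann2017unifying} and to the subsampling argument of \cite{LykourisMiPa18}, is to build the filtration along the subsampled visits finely enough that each $e_m$ is conditionally mean zero --- which is legitimate precisely because the scheduler's coins are independent of the fresh reward and transition draws, so conditioning on the entire realized schedule does not bias $e_m$ --- and then to deal with the random horizon $N\klsb\xa$ by the peeling union bound already mentioned. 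Once this is set up, the remainder is routine concentration bookkeeping identical in form to the global case of \Cref{lem:global_idealized_stochastic_tabular}.
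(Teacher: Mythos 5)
Your proof is correct and follows the same strategy as the paper, which simply reduces to \Cref{lem:global_idealized_stochastic_tabular} and adds a union bound over $\ell \in [\lmax]$, inflating the log argument from $T^2$ to $T^3$. The one cosmetic difference is the covering argument for the second inequality: the paper's proof of \Cref{lem:global_idealized_stochastic_tabular} uses a coarse $H/2$-net of $[-H,H]^{\states}$ of cardinality $4^S$ and eliminates the discretization error via the self-similarity observation $V - V' \in \frac{1}{2}[-H,H]^{\states}$, whereas you use a fine $1/T$-grid of cardinality $(HT)^S$ and absorb the $O(1/T)$ rounding error directly into the leading term; both routes produce the same $\sqrt{S}$ factor up to constants. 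Your concern about the data-dependence of the subsample is a legitimate subtlety that the paper's two-sentence proof glosses over, and the resolution you sketch is exactly what the paper's filtration $\calF_{k;h}$ from \Cref{ssec:generic_simulation_subsampling} is designed to achieve: that filtration includes \emph{all} of episode $k$'s scheduler coins (hence the full schedule $f(k,\cdot)$ including $f(k,H)$) before any of episode $k$'s reward or transition draws, so the subsampling indicator $\I\{(k,h)\in\epSet_{k',\ell}\}$ is $\calF_{k;h}$-measurable and the increments along subsampled visits remain a bona fide martingale difference sequence.
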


\xhdr{Sensitivity of model estimates to corruptions.}
Next we reason about the sensitivity of global and subsampled model estimates to corruptions via the next two lemmas.
\vspace{0.1in}
\begin{lemma}\label{lem:global_concentration} \hspace{0.1em} On the event $\eventglest$, simultaneously for all $k \in [K]$ and $\xa \in \states \times \actions$, it holds: 
\begin{align*}
|\rhat\kgl\xa - \rst \xa + (\phat\kgl - \pst(x,a))^\top \Vsthpl| & \le H \wedge \left( 2H\sqrt{\frac{ 2\ln (64 SAHT^2/\delta)}{N\kgl\xa}}  + \frac{CH^2}{N\kgl\xa}\right)  \\
\max_{V \in [0,H]^{\states}}|(\phat\kgl - \pst(x,a))^\top V| & \le H \wedge \left( 8H\sqrt{\frac{S \ln (64 SAT^2/\delta)}{N\kgl\xa}} + \frac{CH^2}{N\kgl\xa}\right)
\end{align*}
where we use the shorthand $y \wedge z=\min(y,z)$.
\end{lemma}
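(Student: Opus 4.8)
The plan is to argue deterministically on the event $\eventglest$: write each corrupted estimate as the corresponding \emph{idealized} stochastic estimate plus a correction term, bound the idealized part via \Cref{lem:global_idealized_stochastic_tabular}, and bound the correction via the a priori bound $|\datacorrupt\kgl|\le CH$ from \Cref{lem:global_corruption}. Fix $\xa$ and $k$; if $N\kgl\xa=0$ the nontrivial part of each inequality is vacuous and the $H\wedge$ cap (justified below) suffices, so assume $N\kgl\xa\ge1$. For any value function $V$, decompose $\rhat\kgl\xa - \rst\xa + (\phat\kgl\xa - \pst\xa)^\top V = (\mathrm{I}) + (\mathrm{II})$, where $(\mathrm{I}) := \rhatstoch\kgl\xa - \rst\xa + (\phatstoch\kgl\xa - \pst\xa)^\top V$ is the idealized-model error and $(\mathrm{II}) := (\rhat\kgl\xa - \rhatstoch\kgl\xa) + (\phat\kgl\xa - \phatstoch\kgl\xa)^\top V$ is the correction due to corruption (the $\rst,\pst$ terms cancel in the latter). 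On $\eventglest$, $|(\mathrm{I})|$ is at most $2H\sqrt{2\ln(64SAHT^2/\delta)/N\kgl\xa}$ when $V=\Vsthpl$ and at most $8H\sqrt{S\ln(64SAT^2/\delta)/N\kgl\xa}$ uniformly over $V\in[0,H]^{\states}$, by the two inequalities of \Cref{lem:global_idealized_stochastic_tabular}. So it remains only to show $|(\mathrm{II})|\le CH^2/N\kgl\xa$.

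For $(\mathrm{II})$, observe that $\rhat\kgl\xa,\rhatstoch\kgl\xa$ and $\phat\kgl\xa,\phatstoch\kgl\xa$ all carry the same normalization $N\kgl\xa$, since visit counts are read off the realized trajectories and are unaffected by idealization. Thus $(\mathrm{II})$ equals $\tfrac{1}{N\kgl\xa}$ times a sum, over those stages $(\tau,h)$ with $(x_{\tau;h},a_{\tau;h})=\xa$ belonging to $\datacorrupt\kgl = \{(j,h'): j<k,\ (x\jhpl,r\jh)\ne(\xstoch\jhpl,\rstoch\jh)\}$, of the per-stage discrepancies $\big(R_{\tau;h}+V(x_{\tau;h+1})\big)-\big(\rstoch_{\tau;h}+V(\xstoch_{\tau;h+1})\big)$. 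By \Cref{lem:global_corruption} there are at most $|\datacorrupt\kgl|\le CH$ such stages. Each discrepancy has absolute value at most $H$: for $V=\Vsthpl=\vst_{h+1}$ we have $R_{\tau;h}+\vst_{h+1}(\cdot)\in[0,1+H-h]\subseteq[0,H]$ using $h\ge1$, whereas for general $V\in[0,H]^{\states}$ there is no reward term and $|V(x_{\tau;h+1})-V(\xstoch_{\tau;h+1})|\le H$. Summing at most $CH$ terms each of magnitude at most $H$ and dividing by $N\kgl\xa$ gives $|(\mathrm{II})|\le CH^2/N\kgl\xa$; combining with the bound on $(\mathrm{I})$ yields both displayed inequalities.

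The $H\wedge$ truncation is immediate from boundedness: $\rhat\kgl\xa\in[0,1]$ while $\phat\kgl\xa^\top\Vsthpl,\pst\xa^\top\Vsthpl\in[0,H-1]$ (as $\vst_{h+1}$ has range $[0,H-1]$), so the left side of the first inequality lies in $[-H,H]$; and $\phat\kgl\xa^\top V,\pst\xa^\top V\in[0,H]$ for $V\in[0,H]^{\states}$, so the left side of the second lies in $[-H,H]$. There is no genuine obstacle here — the lemma is a routine sensitivity bound — and the only point needing a little care is matching the constant in $(\mathrm{II})$, which uses that $\vst_{h+1}$ has range $[0,H-h]$ so that the combined reward-plus-value perturbation is at most $H$ (rather than $H+1$) per corrupted stage. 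The companion subsampled statement for robust base learners will follow identically, replacing \Cref{lem:global_corruption} by the bound $|\datacorrupt\klsb|\le 2^\ell H\wedge 2H\ln(16\ell^2/\delta)$ from \Cref{lem:local_subsampled_corruptions}.
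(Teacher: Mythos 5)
Your proof is correct and follows essentially the same approach as the paper: decompose into the idealized-stochastic error (bounded by Lemma 4.4) plus a correction term supported on at most $CH$ corrupted stages each of magnitude $\le H$ (via Lemma 4.2), and separately observe the LHS is bounded by $H$ for the $H\wedge$ cap. The only cosmetic difference is that you carry both inequalities through the argument explicitly, whereas the paper proves the first and declares the second analogous.
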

\begin{proof} 
We prove the first inequality; proving the second is analogous. The LHS is at most $H$ as $\rhat\kgl\xa \in [0,1],\rst\xa \in [0,1]$, $\phat\kl,\pst$ lie on the simplex, and $\Vsthpl \in [0,H-1]^{\calX}$ for $h \ge 1$.
Consider the episodes $\calK_{k,gl} := \{(j,h): j < k\}$ $\{j,h\}$. We have that
\begin{align*}
&\rhat\kgl\xa - \rst \xa + (\phat\kgl - \pst(x,a))^\top \Vsthpl\\ 
&\qquad =\rhatstoch\kgl\xa - \rst \xa + (\phatstoch\kgl - \pst(x,a))^\top \Vsthpl\\
&\qquad\quad+ \frac{1}{N\kgl\xa} \sum_{(j,h) \in \calS_{k,gl}}(\rstoch_{j,h} - r_{j,h}) + (\bolddel_{\xstoch\khpl} - \bolddel_{x\khpl})^\top \Vsthpl
\end{align*}
where $\bolddel_{x} $ denotes the indicator vector on $x \in \states$. The first term is bounded by Lemma~\ref{lem:global_idealized_stochastic_tabular}. To bound the term on the second line, we observe that each term in the sum is at most $H$, since $\rstoch_{j,h} - r_{j,h} \leq 1$, and since for all $h \ge 1$, $\Vsthpl \in [0,H-1]^{\calX}$. 

Moreover, the summands are only nonzero for terms in the difference set $\datacorrupt\kgl(x,a)$ defined in Lemma~\ref{lem:global_corruption}, which has size $|\datacorrupt\kgl| \le CH$. Hence, the term on the second line of the above dislay is at most $\frac{CH^2}{N\kgl\xa}$. Combining with Lemma~\ref{lem:global_idealized_stochastic_tabular} concludes the proof.
\end{proof}

\begin{lemma}\label{lem:local_concentration_tabular} \hspace{0.1em} On event  $\eventsubest \cap \eventsubsmp$, it holds simultaneously for all $\xa$, $k \in [K]$, and $\ell:2^{\ell}\geq C$: 
\begin{align*}
|\rhat\klsb\xa - \rst \xa + (\phat\klsb - \pst(x,a))^\top \Vsthpl| & \le  H \wedge \left( 2H\sqrt{\frac{ 2\ln (64 SAHT^3/\delta)}{N\klsb\xa}}  + \frac{2H^2   \ln\frac{16\ell^2}{\delta}}{N\klsb\xa}\right)\\
\max_{V \in [0,H]^{\states}}|(\phat\klsb - \pst(x,a))^\top V| & \le  H \wedge \left( 8H\sqrt{\frac{ S\ln (64 SAT^3/\delta)}{N\klsb\xa}}  + \frac{2H^2 \ln\frac{16\ell^2}{\delta} }{N\klsb\xa}\right)
\end{align*}
where we use the shorthand $y \wedge z=\min(y,z)$.
\end{lemma}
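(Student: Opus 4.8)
The plan is to run the proof of \Cref{lem:global_concentration} essentially verbatim, but with every global object replaced by its subsampled counterpart and with the subsampled corruption bound of \Cref{lem:local_subsampled_corruptions} used in place of the crude global bound $|\datacorrupt\kgl|\le CH$ of \Cref{lem:global_corruption}. First I would dispose of the outer truncation: since $\rhat\klsb\xa,\rst\xa\in[0,1]$, since $\phat\klsb\xa$ and $\pst\xa$ lie in $\Delta(\states)$, and since $\Vsthpl\in[0,H-1]^{\states}$ for $h\ge 1$ (resp.\ $V\in[0,H]^{\states}$ in the second inequality, and $\phat^{\top}V,\pst^{\top}V\in[0,H]$ as convex combinations), both left-hand sides are at most $H$ deterministically; this accounts for the $H\wedge(\cdots)$.

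For the nontrivial branch I would write the additive decomposition into an idealized stochastic term plus a corruption correction, as in \Cref{lem:global_concentration}:
\begin{align*}
&\rhat\klsb\xa - \rst\xa + (\phat\klsb\xa - \pst\xa)^{\top}\Vsthpl\\
&\quad= \underbrace{\rhatstoch\klsb\xa - \rst\xa + (\phatstoch\klsb\xa - \pst\xa)^{\top}\Vsthpl}_{(\mathrm{i})}\\
&\qquad+ \underbrace{\frac{1}{N\klsb\xa}\sum_{(j,h)}\big[(\rstoch\jh - R\jh) + (\bolddel_{\xstoch\jhpl} - \bolddel_{x\jhpl})^{\top}\Vsthpl\big]}_{(\mathrm{ii})},
\end{align*}
where the sum in $(\mathrm{ii})$ runs over stages $(j,h)$ with $j\in\epSet\kl$ and $(x\jh,a\jh)=\xa$, the idealized estimates $\rhatstoch\klsb,\phatstoch\klsb$ are those from \Cref{defn:stochastic_model}, and $\bolddel_{x}$ is the indicator vector on $x\in\states$. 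Term $(\mathrm{i})$ is bounded on the event $\eventsubest$ directly by \Cref{lem:local_concentration}. For term $(\mathrm{ii})$, each summand vanishes unless $(j,h)$ is a stage where observed and idealized data disagree, i.e.\ $(j,h)\in\datacorrupt\klsb$; each such summand has absolute value at most $1+(H-1)=H$ (using $\rstoch\jh-R\jh\in[-1,1]$ and $\Vsthpl\in[0,H-1]^{\states}$); and on the event $\eventsubsmp$ we have $|\datacorrupt\klsb|\le 2H\ln(16\ell^2/\delta)$ simultaneously for every $\ell$ with $2^{\ell}\ge C$. Hence $|(\mathrm{ii})|\le \tfrac{2H^2\ln(16\ell^2/\delta)}{N\klsb\xa}$, and adding the two bounds gives the first displayed inequality.

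The second inequality is proved identically, with three cosmetic changes: there is no reward term; term $(\mathrm{i})$ is controlled uniformly over $V\in[0,H]^{\states}$ by the covering-based estimate of \Cref{lem:local_concentration}; and each summand of $(\mathrm{ii})$ is $(\bolddel_{\xstoch\jhpl}-\bolddel_{x\jhpl})^{\top}V$, still bounded by $H$ uniformly in $V$, so the correction is again at most $\tfrac{2H^2\ln(16\ell^2/\delta)}{N\klsb\xa}$. Simultaneity over all $\xa$, $k\in[K]$, and all robust $\ell$ with $2^{\ell}\ge C$ is inherited wholesale from the events $\eventsubest$ and $\eventsubsmp$, so no fresh union bound is required. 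I do not expect a genuine obstacle here: the only real content beyond bookkeeping is the subsampling estimate of \Cref{lem:local_subsampled_corruptions} (that a robust learner sees only logarithmically many corrupted episodes w.h.p.), and that is already granted by conditioning on $\eventsubsmp$. The sole point that needs care is keeping the normalization $N\klsb\xa$ common to both sides of the decomposition and attributing the correction to exactly the set $\datacorrupt\klsb$, so that no spurious dependence on the random subsample size leaks into the bound.
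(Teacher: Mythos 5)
Your proof is correct and follows the same route as the paper's, which simply says the argument mirrors that of \Cref{lem:global_concentration} with the corruption count $|\datacorrupt\klsb|\le 2H\ln\frac{16\ell^2}{\delta}$ from \Cref{lem:local_subsampled_corruptions} substituted for the crude bound $CH$. You have merely spelled out the decomposition and per-summand bounds that the paper leaves implicit.
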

\begin{proof}
The difference from the proof of  Lemma~\ref{lem:global_concentration} is that the number of corrupted data is controlled by the set $\datacorrupt\klsb$ from Lemma~\ref{lem:local_subsampled_corruptions} for all robust base learners $\ell$ with $2^{\ell}\geq C$. On the event of $\eventsubsmp$ of that lemma, its cardinality is thus at most $2H \ln\frac{16\ell^2}{\delta}$. \end{proof}

\xhdr{Valid global and subsampled bonuses.} Recall from Eq.~ \eqref{eq:subsample_bonus} and~\eqref{eq:global_bonus} that the global and subsampled bonuses are defined as:
\begin{align*}
\bonus\klgl\xa&=\min\left\{H,\left( 2H\sqrt{\frac{ 2\ln (64 SAHT^2/\delta)}{N\kgl\xa}}  + \frac{2^{\ell}H^2}{N\kgl\xa}\right)\right\} \\    
    \bonus\klsb\xa&=\min\left\{H,\left( 2H\sqrt{\frac{ 2\ln (64 SAHT^3/\delta)}{N\klsb\xa}}  + \frac{2H^2   \ln\frac{16\ell^2}{\delta}}{N\klsb\xa}\right)\right\}
\end{align*}
We show the model estimates with these bonuses are valid (Def.~\ref{defn:valid}) for robust base learners.
\vspace{0.1in}
\begin{lemma}\label{lem:valid_bonus_tabular}
\hspace{0.1em}
On event $\eventglest\cap \eventsubsmp\cap \eventsubest$, it holds simultaneously for all $(x,a)$, $k\in [K]$, and $\ell: 2^{\ell}\geq C$, that model-estimates $(\rhat\kgl,\phat\kgl,\bonus\klgl)$ and $(\rhat\klsb,\phat\klsb,\bonus\klsb)$ are both valid. 
\end{lemma}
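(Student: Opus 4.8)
The plan is to obtain validity (\Cref{defn:valid}) as an immediate consequence of the two corruption-sensitive concentration bounds already established, combined with the robustness condition $2^\ell \ge C$. The one structural observation required is that in the tabular setting both $\phat\kgl\xa$ and $\phat\klsb\xa$ are convex combinations of canonical basis vectors $\bolddel_{x'}$ (or the zero vector when the relevant visitation count vanishes), hence componentwise nonnegative. Consequently it suffices to verify \emph{option (a)} of \Cref{defn:valid}, i.e.\ the single inequality $|\rtil_k\xa - r^{\star}\xa + (\ptil_k\xa - p^{\star}\xa)^\top \vst_{h+1}| \le \bonus_k\xa$ for each of the two model-estimates; in particular we never need to reference the Q-supervisor $\tuple\kl$ or its value functions $\vup,\vlow$.

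For the global model-estimate, we condition on $\eventglest$ and invoke \Cref{lem:global_concentration}, which bounds the left-hand side by $H \wedge \big( 2H\sqrt{2\ln(64 SAHT^2/\delta)/N\kgl\xa} + CH^2/N\kgl\xa\big)$. Since $\ell$ is robust, $2^\ell \ge C$, so $CH^2/N\kgl\xa \le 2^\ell H^2/N\kgl\xa$; because $u \mapsto \min\{H,u\}$ is nondecreasing, the displayed quantity is at most $\min\{H,\, 2H\sqrt{2\ln(64 SAHT^2/\delta)/N\kgl\xa} + 2^\ell H^2/N\kgl\xa\} = \bonus\klgl\xa$. For the subsampled model-estimate, we condition on $\eventsubest \cap \eventsubsmp$ and invoke \Cref{lem:local_concentration_tabular}, whose right-hand side $H \wedge \big(2H\sqrt{2\ln(64SAHT^3/\delta)/N\klsb\xa} + 2H^2\ln(16\ell^2/\delta)/N\klsb\xa\big)$ coincides term-for-term with $\bonus\klsb\xa$, so no further argument is needed. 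Intersecting the three events $\eventglest \cap \eventsubsmp \cap \eventsubest$ gives the conclusion simultaneously over all $(x,a)$, $k\in[K]$ and robust $\ell$.

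This lemma is essentially bookkeeping, so there is no substantive obstacle; the only point meriting care is the appeal to option (a) of \Cref{defn:valid}, which allows a single fixed value function $\vst_{h+1}$ to certify validity and rests precisely on the simplex structure of the tabular transition counts. The uniform-in-$V$ control of $\max_{V\in[0,H]^{\states}}|(\phat_{k,\cdot}-p^{\star})^\top V|$ needed later for the confidence-sum step of the recipe is a separate matter, supplied by the second displays of \Cref{lem:global_concentration,lem:local_concentration_tabular}, and is not part of the validity claim proved here.
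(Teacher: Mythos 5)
Your proof is correct and follows essentially the same route as the paper's: invoke the corruption-sensitive concentration bounds (Lemmas~\ref{lem:global_concentration} and~\ref{lem:local_concentration_tabular}), use $C \le 2^\ell$ to show the bonuses dominate them, and appeal to option (a) of \Cref{defn:valid} via nonnegativity of the empirical transition estimates. You supply a bit more explicit bookkeeping (monotonicity of $u\mapsto\min\{H,u\}$, term-by-term matching of the subsampled bonus), but the argument is the same.
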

\begin{proof}
Since $C\leq 2^{\ell}$, the global and subsampled bonuses upper bound the RHS of the first inequalities in Lemmas~\ref{lem:global_concentration} and \ref{lem:local_concentration_tabular} respectively. Moreover, the global and subsampled transition estimates are computed as averages of empirical transitions and are therefore nonnegative. As a result, we satisfy condition (a) of Def.~\ref{defn:valid} and the bonuses are valid. \end{proof}

\xhdr{Bounding the confidence sum.} Finally, we bound the terms that appear in the RHS of Theorem~\ref{thm:final_reg_guarantee}. The following lemma bounds the per-episode contribution of the RHS as a \emph{confidence term} by combining the definition of the bonuses and Lemmas~\ref{lem:global_concentration} and \ref{lem:local_concentration_tabular} and using that $2^{\lst}\leq 2C$. Combined with integration arguments, this leads to the final guarantee.
\vspace{0.1in}
\begin{lemma}\label{lem:confidence_term_tabular}\hspace{0.1em}
Let $\alpha_{1,\le \lst} := 196H^2S\ln (64 SAHT^3/\delta)$ and $\alpha_{2,\le \lst}:=5CH^2$. Then:
\begin{align*}
2\bonus_{k,gl,\lst}\xa+ \max_{V\in\Vset_{k,\lst;h+1}}\left(\phat_{k,gl}\xa-p^{\star}\xa\right)^{\top}V \le \min\left\{3H,\, \sqrt{\frac{\alpha_{1,\le \lst}}{N\kgl\xa}} + \frac{\alpha_{2,\le \lst}}{N\kgl\xa}  \right\}.
\end{align*}
Moreover, for $\alpha_{1,\ell} := 196H^2S\ln (64 SAHT^3/\delta)$ and $\alpha_{2,\ell}:=6H^2   \ln\frac{16\ell^2}{\delta}$, it holds that
\begin{align*}
2\bonus_{k,sb,\ell}\xa+ \max_{V\in\Vset_{k,\ell;h+1}}\left(\phat_{k,gl}\xa-p^{\star}\xa\right)^{\top}V \leq \min\left\{3H,\, \sqrt{\frac{\alpha_{1,\ell }}{N\klsb\xa}} + \frac{\alpha_{2,\ell}}{N\klsb\xa}  \right\}.
\end{align*}
\end{lemma}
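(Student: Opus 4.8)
The plan is to reduce both inequalities to a routine substitution: plug in the explicit form of the relevant bonus from Section~\ref{ssec:tabular_algorithm}, bound the transition‑estimation term via the already‑established corruption‑sensitive concentration lemma, and collect constants. The argument is carried out on the good event $\eventglest\cap\eventsubsmp\cap\eventsubest$, on which Lemma~\ref{lem:valid_bonus_tabular} applies, so by Lemma~\ref{lem:valid_implies_admissibility} the Q‑supervisors $\tuple_{k,\ell}$ for $\ell\ge\lst$ are confidence‑admissible; in particular $0\le\vlow_{k,\ell;h+1}\le\vst_{h+1}\le\vup_{k,\ell;h+1}\le H$ pointwise, so every $V$ in $\Vset_{k,\lst;h+1}$ or $\Vset_{k,\ell;h+1}$ lies in $[0,H]^{\states}$. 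Since $\phat_{k,gl}\xa,\phat\klsb\xa,\pst\xa\in\Delta(\states)$ and $\bonus_{k,gl,\lst}\xa,\bonus_{k,sb,\ell}\xa\le H$ by definition, each left‑hand side is at most $2H+H=3H$, which handles the first branch of the minimum.

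For the $\le\lst$ inequality, I would expand $\bonus_{k,gl,\lst}\xa$ (i.e.\ $\bonus\klgl\xa$ at $\ell=\lst$) and bound $\max_{V\in\Vset_{k,\lst;h+1}}(\phat_{k,gl}\xa-\pst\xa)^{\top}V$ by the uniform estimate of Lemma~\ref{lem:global_concentration}. This produces the square‑root terms $4H\sqrt{2\ln(64SAHT^2/\delta)/N\kgl\xa}$ and $8H\sqrt{S\ln(64SAT^2/\delta)/N\kgl\xa}$, plus the reciprocal‑count term $(2\cdot 2^{\lst}+C)H^2/N\kgl\xa$. Using $2^{\lst}\le 2C$ (from $\lst=\min\{\ell:2^\ell\ge C\}$), the reciprocal‑count term is at most $5CH^2/N\kgl\xa=\alpha_{2,\le\lst}/N\kgl\xa$; and using $4\sqrt2+8\le 14$, $S\ge 1$, and $\ln(64SAHT^2/\delta),\ln(64SAT^2/\delta)\le\ln(64SAHT^3/\delta)$, the two square‑root terms combine to at most $14H\sqrt{S\ln(64SAHT^3/\delta)/N\kgl\xa}=\sqrt{\alpha_{1,\le\lst}/N\kgl\xa}$.

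The subsampled inequality for general $\ell$ follows the identical template, with Lemma~\ref{lem:local_concentration_tabular} (valid on $\eventsubest\cap\eventsubsmp$ for $2^\ell\ge C$) replacing Lemma~\ref{lem:global_concentration} and $\bonus\klsb\xa$ replacing $\bonus\klgl\xa$: the square‑root terms again merge into $14H\sqrt{S\ln(64SAHT^3/\delta)/N\klsb\xa}=\sqrt{\alpha_{1,\ell}/N\klsb\xa}$, while the reciprocal‑count contributions sum to $(2\cdot 2+2)H^2\ln\tfrac{16\ell^2}{\delta}/N\klsb\xa=6H^2\ln\tfrac{16\ell^2}{\delta}/N\klsb\xa=\alpha_{2,\ell}/N\klsb\xa$. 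I expect no real obstacle here — the entire lemma is constant bookkeeping — the only two points requiring a moment of thought are verifying that the vectors in $\Vset$ land in $[0,H]^{\states}$ so that the uniform bounds of Lemmas~\ref{lem:global_concentration} and~\ref{lem:local_concentration_tabular} apply, and the substitution $2^{\lst}\le 2C$ that turns the corruption‑dependent part of the $\lst$‑bonus into the explicit constant $5CH^2$.
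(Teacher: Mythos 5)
Your proof is correct and follows exactly the argument the paper sketches (combine the explicit bonus definitions with Lemmas~\ref{lem:global_concentration} and~\ref{lem:local_concentration_tabular}, use $2^{\lst}\le 2C$, and check the constants $4\sqrt2+8\le 14$, $14^2=196$, $4+2=6$). You also correctly read the second display as involving $\phat\klsb$ rather than the $\phat_{k,gl}$ that appears in the lemma statement — the latter is a typo, as the form of \Cref{thm:final_reg_guarantee} confirms.
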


\begin{proof}[Proof of Theorem~\ref{thm:tabular_initial}]
We focus on event $\eventglest\cap\eventsubest\cap \eventsubsmp\cap\eventsample$; the failure probability is at most $\delta$.
By Lemma~\ref{lem:valid_bonus_tabular}, under these events, both global and subsampled model-estimates are valid for all $\ell\geq \lst$ and episodes $k$, and thus satisfy the assumption of Theorem~\ref{thm:final_reg_guarantee}. Combining with Lemma~\ref{lem:confidence_term_tabular}, it holds that:
\begin{align*}
&\Regret\leq CH+ 4eH^2 \sum_{\ell>\lst}\ell q_{\ell}
    \cdot\sum_k\Exp^{\boldpi_{k,\ell}^{\textsc{master}}}\left[\sum_{h=1}^H\min\left\{3H,\, \sqrt{\frac{\alpha_{1,\ell }}{N\klsb\xhah}} + \frac{\alpha_{2,\ell}}{N\klsb\xhah}  \right\}\right]\\
&\quad+ q_{\le \lst} \cdot 8 e CH^2\log (2C) \sum_k \Exp^{\boldpi_{k,\lst}^{\textsc{master}}}\left[\sum_{h=1}^H\min\left\{3H,\, \frac{\alpha_{1,\le \lst}}{\sqrt{N\kgl\xa}} + \frac{\alpha_{2,\le \lst}}{N\kgl\xa}  \right\}\right]\,.
\end{align*} 
where  $\alpha_{1,\le \lst} := 196H^2S\ln (64 SAHT^3/\delta)$, $\alpha_{2,\le \lst}:=5CH^2$, $\alpha_{1,\lst} := 196H^2S\ln (64 SAHT^3/\delta)$ and $\alpha_{2,\lst}:=6H^2   \ln\frac{16\lst^2}{\delta}$. By integration arguments detailed in Appendix~\ref{app:confidence_integration}, this is at most:
\begin{align*}
    \lesssim CH &+ C H^2 \log T \left(H\ln \frac{HSA\lmax}{\delta}+CH\right) \\&+CH^2\log T\left(\sqrt{SAT\cdot H^2S\ln(SAHT/\delta)}+H^2\log(\lst/\delta)\ln T+CH^2\ln T\right)
\end{align*}
which is at most order of $CH^{3}S\sqrt{AT}\polylog(S,A,T,1/\delta)+C^2H^4\polylog(T,1/\delta)$.
\end{proof}

\section{Instantiation to linear MDPs}
\label{ssec:regret_guarantee_linear_mdp}
In this section, we introduce the linear MDP model as well as the linear-specific adaptation to our algorithm. We then show how our framework proves Theorem~\ref{thm:main_linear} for linear MDPs via the recipe described in the end of Section~\ref{sec:our_framework_RL}.

\subsection{Model for linear MDP}\label{ssec:linear_model}
In linear MDP,  for each state-action pair $\xa$, we have oracle access to a \emph{feature embedding} $\phi\xa \in \R^d$. It further posits that there exist two operators $\mu^{\star} \in \R^{\calX \times d}$ and $\theta^{\star} \in \R^{1 \times d}$ such that, for all $\xa \in \states \times \actions$, $\pst\xa = \mu^\star\phi\xa$, and $\rst\xa = \theta^\star\cdot \phi\xa$. To ensure efficiency of our algorithm, we assume that $A = |\actions|$ is finite. To avoid measure-theoretic complications, we often assume that $|\calX| < \infty$, but $|\calX|$ appears in neither the regret guarantee nor algorithm runtime.  We further assume $\sup_{x,a}\|\phi\xa\|_2 \leq 1$, $\| v^{\top}\mu^\star\|_2 \leq \sqrt{d}$ for $\|v\|_{\infty}\leq 1$, and $\|\theta^\star\|_2 \leq \sqrt{d}$. The linear setting encompasses the tabular setting by letting $d = |\calX|\cdot|\calA|  = SA$, and assigning $\phi\xa$ to a distinct canonical basis vector of each $\xa$ pair; however, this reduction yields suboptimal dependents on $S,A$. 

\subsection{Algorithm for linear MDPs}\label{ssec:linear_algorithm}
On a high level, our algorithm is based on UCBVI which is a model-based algorithm. Our algorithm conceptually is different from the model-free Least-square Value Iteration (LSVI) algorithm from \cite{jin2019provably}. However, for linear MDP, least square value iteration on history data is equivalent to model-based value iteration (i.e., first fitting a non-parametric model on the history data and then perform value iteration on the fitted model) in the sense that the two procedures generate the same Q functions. 

Below we first show how to obtain the non-parametric model estimate  $\hat{p}$ and the parametric reward estimate $\hat{r}$ from both global and subsampled history data. We then discuss how to set reward bonus for different layers. With the instantiation of $\hat{p}, \hat{r}, b_k$, we can run algorithm~\ref{alg:corruption_robust_rl} for linear MDP. 

\xhdr{Model estimates.} Given the dataset $\{x_{\tau,h}, a_{\tau,h}, r_{\tau_h}, x_{\tau,h+1}\}$ with $\tau \in [1,k-1]$ and $h\in [H]$, we construct global estimates of  $\mu^\star$ and $\theta^\star$ via ridge linear regression:
\begin{align}
    &\muhat\kgl = \arg\min_{\mu} \sum_{\tau=1}^{k-1} \sum_{h=1}^H \| \mu\phi(x_{\tau,h},a_{\tau,h}) - \delta_{x_{\tau,h+1}} \|_2^2 + \lambda \|\mu\|_F^2, \label{eq:muhat_global_def}\\
    & \thetahat\kgl = \arg\min_{\theta} \sum_{\tau=1}^{k-1}\sum_{h=1}^H \left(\theta\cdot \phi(x_{\tau,h},a_{\tau,h}) - R_{\tau,h}\right)^2 + \lambda \|\theta\|_2^2,\label{eq:thetahat_global_def}
\end{align} 
where $\lambda\in\mathbb{R}^+$ is a regularization parameter, and $\delta_{x}$ represents the one-hot encoding vector with zeros in all entries except that the entry corresponding to $x$ is one. Similarly, the local estimates are computed as
\begin{align}
&\muhat\klsb = \arg\min_{\mu} \sum_{\tau \in \epSet_{k,\ell}} \sum_{h=1}^H \| \mu\phi(x_{\tau,h},a_{\tau,h}) - \delta_{x_{\tau,h+1}} \|_2^2 + \lambda \|\mu\|_F^2, \label{eq:muhat_local_def}\\
    & \thetahat\klsb = \arg\min_{\theta} \sum_{\tau\in \epSet_{k,\ell}}^{k-1}\sum_{h=1}^H \left(\theta\cdot \phi(x_{\tau,h},a_{\tau,h}) - R_{\tau,h}\right)^2 + \lambda \|\theta\|_2^2,\label{eq:thetahat_local_def}
\end{align}
Note that we can obtain closed-form solutions for the above via the solution to ridge linear regression; below, we explain  how to represent $\muhat_k$ in a non-parametric fashion without ever explicitly storing the (poentially infinite) matrix $\muhat\kgl$ or $\muhat\klsb$.

With $\muhat\kgl$ and $\thetahat\kgl$, our model estimtaes are defined via
\begin{align*}
\phat\kgl\xa &= \muhat\kgl \phi\xa, \quad \text{and } \quad \rhat\kgl\xa = \thetahat\kgl^{\top}\phi\xa
\end{align*}
Similarly, the local estimates are
\begin{align*}
\phat\klsb\xa &= \muhat\klsb \phi\xa, \quad \text{and } \quad \rhat\klsb\xa = \thetahat\klsb^{\top}\phi\xa
\end{align*}

\xhdr{Confidence bonuses:} Denote the global covariance matrix
\begin{align*}
\Lambda_{k;gl} := \sum_{\tau=1}^{k-1} \sum_{h=1}^H \phi(x_{\tau,h},a_{\tau,h})\phi(x_{\tau,h},a_{\tau,h})^{\top} + \lambda I,
\end{align*} with $\lambda\in\mathbb{R}^+$ and $I$ being the identity matrix, 
and the covariance matrix of base learner $\ell$ as 
\begin{align*}
\Lambda\klsb &:= \sum_{\tau\in \epSet\kl} \sum_{h=1}^H \phi(x_{\tau,h},a_{\tau,h})\phi(x_{\tau,h},a_{\tau,h})^{\top} + \lambda I,
\end{align*}
The bonuses for the linear instantiation of the algorithm are:
\begin{align}
    &\bonus\klglb\xa = \beta (d + \sqrt{A}) H \|\phi\xa\|_{\Lambda_{k,gl}^{-1}} + 4H^2\overline{C}_{\ell;gl} \|\Lambda_{k;gl}^{-1}\phi\xa\|_{2},,\label{eq:linear_global_bonus}\\
     & \bonus\klsb\xa = \beta (d + \sqrt{A}) H \|\phi\xa\|_{\Lambda^{-1}_{k,sb,\ell}} + 4H^2\overline{C}_{\ell;sb} \| \Lambda_{k,sb,\ell}^{-1}\phi\xa \|_2, \quad \label{eq:linear_local_bonus},
\end{align} 
where we denote $\|x\|_{A} := \sqrt{x^{\top}A x}$ for some positive definite matrix $A$, and we define
\begin{align*}
\overline{C}_{\ell;gl} = 2^{\ell} \quad \text{and} \quad \overline{C}_{\ell;sb} = \min\{2^{\ell},2\ln\tfrac{16\ell^2}{\delta}\},
\end{align*}
and the algorithm parameter $\beta$
\begin{align}
 \beta := 14\sqrt{ 30 \ln \tfrac{  A dT^2 H }{\delta}}.
\label{eq:linear_beta_definition}
\end{align}

\xhdr{Efficient implementation.} Note that $\widehat\mu\kgl \in \mathbb{R}^{|\states|\times d}$. For state space with extremely large or infinitely many states, we cannot store $\widehat\mu\kgl$ explicitly. Instead, we can represent $\widehat\mu\kgl$ in a non-parametric way by storing all previous data that is used to estimate $\widehat\mu_k$. Specifically, we can compute the closed-form solution  $\widehat\mu\kgl$ of Equation~\eqref{eq:muhat_global_def}:
\begin{align*}
\widehat{\mu}\kgl := \sum_{i=1}^{k}\sum_{h=1}^H \delta_{x_{i,h+1}} \phi_{i,h}^{\top} \Lambda\kgl^{-1}.
\end{align*} In value iteration, given some value function $V':\states\to [0,H]$ and a state-action pair $\xa$, computing $(\widehat\mu\kgl\phi\xa) \cdot V'$ can be performed as
\begin{align*}
(\widehat\mu\kgl\phi\xa) \cdot V' = \sum_{i=1}^k \sum_{h=1}^H V'(x_{i,h+1}) \phi_{i,h}^{\top} \Lambda\kgl^{-1} \phi\xa, 
\end{align*}  of which the computation complexity is $O( d^3 + kHd^2 )$ with $d^3$ computing from computing the matrix inverse. To support such operation, we store all data which require space in $O(kH)$. This matches to the computational complexity and space complexity from the state-of-art corruption-free algorithm for linear MDPs from \cite{jin2019provably}.

\subsection{Regret guarantee for linear MDPs}
\newcommand{\lelst}{\leq \lst}
\newcommand{\muhatstoch}{\muhat^{\,\mathrm{stoch}}}
\newcommand{\thetahatstoch}{\thetahat^{\,\mathrm{stoch}}}
\newcommand{\eventsubestlinear}{\mathcal{E}^{\text{sub,est,lin}}}
\newcommand{\eventglestlinear}{\mathcal{E}^{\text{gl,est,lin}}}
\newcommand{\calV}{\mathcal{V}}
\newcommand{\klsbhpl}{_{k,l;sb;h+1}}
\newcommand{\klgbhpl}{_{k,l;gl;h+1}}
\newcommand{\ih}{_{i;h}}
\newcommand{\Rih}{R_{i;h}}
\newcommand{\signsig}{\boldsymbol{\sigma}}
\newcommand{\Nbeta}{\calN(\beta)}
For the purpose of our analysis, we define the corresponding idealized stochastic model estimates for the global setting are given by
\begin{align*}
\phatstoch\kgl\xa = \muhatstoch\kgl\phi\xa \quad \text{and} \quad \rhatstoch\kgl\xa = (\thetahatstoch\kgl)^{\top} \phi\xa,
\end{align*}
where $\muhatstoch$ and $\thetahatstoch$ are obtained by applying ridge regression on the stochastic samples, namely
\begin{align}
    &\muhatstoch_{k,gl} = \arg\min_{\mu} \sum_{\tau=1}^{k-1} \sum_{h=1}^H \| \mu\phi(x_{\tau,h},a_{\tau,h}) - \delta_{\xstoch_{\tau,h+1}} \|_2^2 + \lambda \|\mu\|_F^2, \label{eq:muhat_global_def_stoch}\\
    & \thetahatstoch_{k,gl} = \arg\min_{\theta} \sum_{\tau=1}^{k-1}\sum_{h=1}^H \left(\theta\cdot \phi(x_{\tau,h},a_{\tau,h}) - \rstoch_{\tau,h}\right)^2 + \lambda \|\theta\|_2^2,\label{eq:thetahat_global_def_stoch},
\end{align} 
where $\delta_{\xstoch_{\tau,h+1}}$ is the indicator vector at state $\xstoch_{\tau,h+1}$.
The stochastic local estimates $\phatstoch\klsb$ and $\rhatstoch\klsb$ are defined similarly.

\xhdr{Concentration bounds for idealized stochastic model estimates.} Before bounding the global and subsampled estimates using the idealized stochastic data, we bound how rich the class of value functions are that arise in the algorithm. Formally:
\newcommand{\calVst}{\calV_{\star}}
\vspace{0.1in}
\begin{lemma}[Cardinality of the value function set]
\label{lem:value_function_set} \hspace{0.1em} There exists a set of functions $\calVst =  \{f: \calX \to [0,H]\}$ such that the following holds:
\begin{enumerate}
	\item Deterministically, for all episodes $k$, stages $h$, and learners $\ell$, it holds that all value functions
	 $\vsthpl,\vup\klgbhpl,\vup\klsbhpl,\vlow\klgbhpl\vlow\klsbhpl \in \calVst$.
	\item For $\epsilon \le 1$, $\lambda \le 1$ and $\beta \ge 1$, there exists an $\epsilon$-net in the $\ell_{\infty}$ metric $\dist(V,V') = \max_{x \in \states} |V(x) - V(x')|$ which has cardinality bounded as:
	\begin{align*}
	\ln\calN_{\epsilon,\infty}(\calVst) \le A + 12 d^2 \ln\left( 1+ \frac{16 d^2 T H A  \beta}{\lambda \epsilon}  \right).
	\end{align*}
\end{enumerate}
Moreover, setting $\epsilon = 1/T$ and $\lambda = 1$, the above is at most 
\begin{align}
\ln\calN_{\epsilon,\infty}(\calVst) \le  \ln \Nbeta := A + 12 d^2 \ln\left( 1+16 d^2 T^2 H A  \beta   \right) \label{eq:Nbeta}.
\end{align}
\end{lemma}
\begin{proof}[Proof sketch]
The above lemma is proven in Appendix~\ref{ssec:proof:lem:value_function_set}. We sketch the main steps here. The sorts of Q-functions that arise from value iteration have the following form: some $\signsig \in \{-1,1\}$, $a \in \actions$,  bounded vector $w \in \R^d$, bounded parameters $\alpha_1,\alpha_2$, and $\Lambda \succeq \lambda I$, $V(x)$ has the form $x\mapsto w^\top\phi\xa  + \signsig(\alpha_1 \|\Lambda^{-1/2}\phi\xa\| + \alpha_2(\|\Lambda^{-1/2}\phi\xa\|)$. Here, the bounds on $\alpha_1,\alpha_2$ and $w$ depend only on the magnitude of the bonuses, which can be bounded uniformly across rounds. 

The arising Q-functions are \emph{composite} Q-functions $\max_{a \in \activeset} \min\{H,Q_1,Q_2\}$, or $\max\{0,Q_1,Q_2\}$, where $Q_1,Q_2$ have the parameteric form described above.  Modifying \cite{jin2019provably}, we show that an appropriate parametric covering (Def.~\ref{defn:par_cover}) ensures that these composite Q functions which give rise to $\qup\klh,\qlow\klh$ are covered as well. This gives rise to the $d^2$-term in the log covering number. 

Finally, the value functions that arise are of the form $\max_{a \in \activeset} \qup\kl\xa$ or $\max_{a \in \activeset} \qlow\kl\xa$. Thus, we take a separate cover of the Q-functions for each possible $\activeset \subseteq \actions$. This gives rise to $A$ term in the log covering number.
\end{proof}
We next present two lemmas that bound the global and subsampled idealized model estimates defined above. \vspace{0.1in}
\begin{lemma}
[Global concentration for linear RL]
\label{lemma:linear_global_stoch}\hspace{0.1em} Take $\lambda = 1$.  The following bounds hold simultaneously for all $k\in [K]$, $V \in \calVst$, and $\xa\in \states\times\actions$ with probability at least $1-\delta/8$:
\begin{align*}
|\rhatstoch\kgl\xa - \rst \xa + (\phatstoch\kgl - \pst(x,a))^\top V| & \le 7H\|\phi\xa\|_{\Lambda\kgl^{-1}} \cdot  \sqrt{  \ln\left(\Nbeta\cdot \tfrac{16}{\delta} \right)} .\\
|(\phatstoch\kgl - \pst(x,a))^\top V| & \le 4H\|\phi\xa\|_{\Lambda\kgl^{-1}} \cdot  \sqrt{  \ln\left(\Nbeta\cdot \tfrac{16}{\delta} \right)},
\end{align*}
where we recall $\Nbeta$ from Eq.~\ref{eq:Nbeta}. Denote this event as $\eventglestlinear$.
\end{lemma}
\vspace{0.1in}
\begin{lemma}[Subsampled concentration for linear RL]
\label{lem:local_concentration_linear} \hspace{0.1em} Take $\lambda = 1$. The following bounds hold simultaneously for all $k\in [K]$, $V \in \calVst$, and $\xa\in \states\times\actions$ with probability at least $1-\delta/8$:
\begin{align*}
|\rhatstoch\klsb\xa - \rst \xa + (\phatstoch\klsb - \pst(x,a))^\top V| & \le 7H\|\phi\xa\|_{\Lambda\klsb^{-1}} \cdot  \sqrt{\ln\left(\Nbeta\cdot \tfrac{16\lmax}{\delta} \right)} \\
|(\phatstoch\klsb - \pst(x,a))^\top V| & \le 4H\|\phi\xa\|_{\Lambda\klsb^{-1}} \cdot  \sqrt{ \ln\left(\Nbeta\cdot \tfrac{16\lmax}{\delta} \right)},
\end{align*}
where we recall $\Nbeta$ from Eq.~\ref{eq:Nbeta}. Denote this event $\eventsubestlinear$.
\label{lemma:linear_local_stoch}
\end{lemma}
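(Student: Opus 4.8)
The plan is to essentially mirror the proof of \Cref{lemma:linear_global_stoch}, substituting the subsampled index set $\epSet\kl$ for the full set $\{\tau: \tau < k\}$ and inserting one extra union bound over the $\lmax$ base learners; by construction the idealized sequences $\xstoch,\rstoch$ feed $\muhatstoch\klsb,\thetahatstoch\klsb$ with data distributed according to the nominal MDP, so the purely stochastic linear-MDP concentration machinery applies unchanged. Concretely, from the closed-form ridge solutions one has $\thetahatstoch\klsb = \Lambda\klsb^{-1}\sum_{\tau\in\epSet\kl,h}\phi_{\tau,h}\rstoch_{\tau,h}$ and, for any $V$, $(\muhatstoch\klsb)^{\top}V = \Lambda\klsb^{-1}\sum_{\tau\in\epSet\kl,h}\phi_{\tau,h}V(\xstoch_{\tau,h+1})$. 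Writing $\rstoch_{\tau,h} = (\theta^{\star})^{\top}\phi_{\tau,h} + \eta_{\tau,h}$ and $V(\xstoch_{\tau,h+1}) = (\mu^{\star}\phi_{\tau,h})^{\top}V + \xi^{V}_{\tau,h}$ and applying Cauchy--Schwarz in the $\Lambda\klsb^{-1}$-norm gives, for each fixed $V$,
\begin{align*}
|\rhatstoch\klsb\xa - \rst\xa| &\le \|\phi\xa\|_{\Lambda\klsb^{-1}}\Bigl( \bigl\|\textstyle\sum_{\tau\in\epSet\kl,\,h}\phi_{\tau,h}\eta_{\tau,h}\bigr\|_{\Lambda\klsb^{-1}} + \sqrt{\lambda}\,\|\theta^{\star}\|_2 \Bigr), \\
|(\phatstoch\klsb\xa - \pst\xa)^{\top}V| &\le \|\phi\xa\|_{\Lambda\klsb^{-1}}\Bigl( \bigl\|\textstyle\sum_{\tau\in\epSet\kl,\,h}\phi_{\tau,h}\xi^{V}_{\tau,h}\bigr\|_{\Lambda\klsb^{-1}} + \sqrt{\lambda}\,\|V^{\top}\mu^{\star}\|_2 \Bigr),
\end{align*}
with the regularization terms at most $\sqrt{d}$ and $H\sqrt{d}$ respectively once $\lambda = 1$.

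The conceptual crux --- and the step I expect to be the main (though mild) obstacle --- is to verify that the self-normalized concentration inequality legitimately applies to these two sums despite $\epSet\kl$ being a data-dependent index set, and that the idealized substitution keeps the noise centered even on corrupted episodes. Both points follow from the way the filtration $\calF\kh$ is defined: since it is generated by \emph{all} scheduling coins of episodes $1,\dots,k$, the indicator $\ind\{f(\tau,H) = \ell\}$ is $\calF_{\tau;h}$-measurable for every $h$, so $\ind\{f(\tau,H) = \ell\}\phi_{\tau,h}$ is predictable and the sum over $\epSet\kl$ becomes an ordinary sum over all $\tau < k$ of predictable terms; and by the construction of $\xstoch,\rstoch$ --- regardless of whether episode $\tau$ was corrupted --- $\Exp[\eta_{\tau,h}\mid \calF_{\tau;h}] = 0$ with $|\eta_{\tau,h}| \le 1$, and $\Exp[\xi^{V}_{\tau,h}\mid\calF_{\tau;h}] = 0$ with $|\xi^{V}_{\tau,h}| \le H$ for $V \in [0,H]^{\states}$. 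The standard self-normalized vector concentration inequality (as used in \cite{jin2019provably}) then gives, for each fixed $V$, a bound holding uniformly over all $k$ with failure probability $\delta'$, of order $\sqrt{d\ln(1+T) + \ln(1/\delta')}$ for the $\eta$-sum and $H$ times that for the $\xi^{V}$-sum.

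Finally I would promote this to a bound uniform over $V \in \calVst$ and over $\ell \in [\lmax]$. Invoking \Cref{lem:value_function_set} with $\epsilon = 1/T$ and $\lambda = 1$, take the $(1/T)$-net of $\calVst$ of log-cardinality $\ln\Nbeta$, union-bound the fixed-$V$ inequality over the net and over $\ell \in [\lmax]$ with per-event failure probability $\delta/(8\lmax\Nbeta)$, and absorb the discretization error, which after another Cauchy--Schwarz is of order $\epsilon\sqrt{dT}\,\|\phi\xa\|_{\Lambda\klsb^{-1}}$ and hence negligible for $\epsilon = 1/T$. Collecting the self-normalized term, absorbing the $\sqrt{d}$ and $H\sqrt{d}$ regularization terms and the $d\ln(1+T)$ determinant term into $\ln\Nbeta$ (possible since $\ln\Nbeta \gtrsim d^2$), and folding the union-bound cost into $\ln(8\lmax\Nbeta/\delta) \le \ln(\Nbeta\cdot 16\lmax/\delta)$, yields the advertised constants $4H$ for the pure transition bound and $7H$ for the first inequality (after adding the smaller, $H$-free reward term); this good event is declared $\eventsubestlinear$. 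Apart from the filtration bookkeeping in the second step, every computation here is a direct repetition of the global argument already carried out for \Cref{lemma:linear_global_stoch}.
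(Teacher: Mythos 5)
Your proposal is correct and follows the same route as the paper: reduce to well-specified ridge regression (the paper's Lemma~\ref{lemma:linear_regression}), invoke the Abbasi-Yadkori self-normalized martingale bound via the closed-form ridge solution, union-bound over the $\epsilon$-net of $\calVst$ from \Cref{lem:value_function_set}, and then over $\ell \in [\lmax]$. The one thing you make explicit that the paper leaves implicit -- that $\epSet\kl$ is a data-dependent index set but the indicator $\ind\{f(\tau,H)=\ell\}$ is $\calF_{\tau;h}$-measurable because the algorithm's coins for episode $\tau$ are revealed at stage $\tau;1$, so the subsampled sum is still a predictable martingale sum -- is the right justification for why the self-normalized inequality still applies; the paper simply asserts the local proof is ``identical to the global case with a union bound over $\ell$.''
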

The rigorous proofs of the above lemmas are detailed in Appendix~\ref{app:proof_linear_rl_concentration_stochastic}. Because the above consider \emph{stochastic} transition and reward data, the arguments are near-identical to those of Jin, Yang, Wang, and Jordan \cite{jin2019provably}. We present a sketch below:
\begin{proof}[Proof Sketch of Lemma~\ref{lemma:linear_global_stoch} and Lemma~\ref{lemma:linear_local_stoch}]
Let us focus on the global estimates; the local ones are analogous, but involve a union bound over layers $\ell \in[\lmax]$. Since the parameter estimates $\muhatstoch\kgl$ and $\thetahatstoch\kgl$ are the solution to a ridge regression problem, they admit a closed form solution. Moreover, since they are determined by purely stochastic transitions, this enables a transparent bound on the betwee error $\muhatstoch\kgl$ and $\mustar$ and $\thetahatstoch\kgl$ and $\thetast$. Using this close form for $(\muhatstoch\kl -\mustar)$, we can bound the next-stage value difference for any state-action pair, i.e., $(\muhatstoch\phi\xa - \mu^\star\phi\xa)\cdot V$ for any fixed $V:\states\in [0,H]$, by applying the self-normalized bounds for vector-valued martingales (Lemma~\ref{lemma:self_normalized}), initially introduced for the purpose of improving regret bounds for linear bandits \cite{abbasi2011improved}. We then union bound over the value functions $V \in \calVst$, using a bound on an $\epsilon$-net constructed in Lemma~\ref{lem:value_function_set}.
\end{proof}

\xhdr{Sensitivity of model estimates to corruptions.} 
Next, we bound the sensitivity to corruptions, yielding our next two lemmas:
\vspace{0.1in}
\begin{lemma}[Global concentration for linear RL w/ corruption]\label{lemma:linear_global_corrupt} \hspace{0.1em} Take $\lambda = 1$. On $\eventglestlinear$, the following bounds hold simultaneously for all $k\in [K]$, $\xa\in \states\times\actions$, and $V \in \calVst$: 
\begin{align*}
|\rhat\kgl\xa - \rst \xa + (\phat\kgl - \pst(x,a))^\top V| & \le 7H\|\phi\xa\|_{\Lambda\kgl^{-1}}  \sqrt{  \ln \tfrac{\Nbeta16}{\delta} } + 4 CH^2 \|\phi\xa \|_{\Lambda\kgl^{-2}}\\
\text{and } |(\phat\kgl - \pst(x,a))^\top V| & \le 4H\|\phi\xa\|_{\Lambda\kgl^{-1}}  \sqrt{  \ln \tfrac{\Nbeta16}{\delta} } + 2 CH^2 \|\phi\xa \|_{\Lambda\kgl^{-2}}.
\end{align*} 
\end{lemma}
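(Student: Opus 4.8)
The plan is to reduce everything to the already-established stochastic bound \Cref{lemma:linear_global_stoch} plus a controllable perturbation coming from corruption. The key structural observation is that the covariates $\phi_{\tau,h} := \phi(x_{\tau,h},a_{\tau,h})$ appearing in the ridge regressions \eqref{eq:muhat_global_def}--\eqref{eq:thetahat_global_def} are \emph{never} corrupted (only the rewards $R_{\tau,h}$ and the next states $x_{\tau,h+1}$ are), so the regularized design matrix $\Lambda\kgl$ is \emph{identical} for the observed data and for the idealized stochastic data of \Cref{defn:stochastic_model}; the two regressions differ only in their targets, and those differ only on the corrupted stages $\datacorrupt\kgl$, a set of size at most $CH$ by \Cref{lem:global_corruption}. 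Writing the closed-form ridge solutions (as in the efficient-implementation formula of \Cref{ssec:linear_algorithm}), this gives the exact low-rank identities
\[
\thetahat\kgl - \thetahatstoch\kgl = \Lambda\kgl^{-1}\sum_{(\tau,h)\in\datacorrupt\kgl}(R_{\tau,h} - \rstoch_{\tau,h})\,\phi_{\tau,h}
\]
and
\[
(\muhat\kgl - \muhatstoch\kgl)^{\top} = \Lambda\kgl^{-1}\sum_{(\tau,h)\in\datacorrupt\kgl}\phi_{\tau,h}\,(\delta_{x_{\tau,h+1}} - \delta_{\xstoch_{\tau,h+1}})^{\top}.
\]

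Next I would bound the corruption perturbation pointwise. Fix $\xa$ and $V \in \calVst \subseteq [0,H]^{\states}$. For a single corrupted stage, Cauchy--Schwarz with $\|\phi_{\tau,h}\|_2 \le 1$ gives $|\phi\xa^{\top}\Lambda\kgl^{-1}\phi_{\tau,h}| \le \|\Lambda\kgl^{-1}\phi\xa\|_2 = \|\phi\xa\|_{\Lambda\kgl^{-2}}$, while $|R_{\tau,h} - \rstoch_{\tau,h}| \le 1$ and $|V(x_{\tau,h+1}) - V(\xstoch_{\tau,h+1})| \le H$. Summing over the at most $CH$ corrupted stages yields $|\rhat\kgl\xa - \rhatstoch\kgl\xa| \le CH\|\phi\xa\|_{\Lambda\kgl^{-2}}$ and $|(\phat\kgl\xa - \phatstoch\kgl\xa)^{\top}V| \le CH^{2}\|\phi\xa\|_{\Lambda\kgl^{-2}}$. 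Then, on the event $\eventglestlinear$, I would add and subtract $\rhatstoch\kgl\xa$ and $\phatstoch\kgl\xa$, invoke \Cref{lemma:linear_global_stoch} for the idealized terms and the two bounds above for the perturbation, and conclude by the triangle inequality: the additive corruption term for the reward-plus-dynamics quantity is $CH\|\phi\xa\|_{\Lambda\kgl^{-2}} + CH^{2}\|\phi\xa\|_{\Lambda\kgl^{-2}} \le 2CH^{2}\|\phi\xa\|_{\Lambda\kgl^{-2}} \le 4CH^{2}\|\phi\xa\|_{\Lambda\kgl^{-2}}$ (since $H\ge1$), and for the dynamics-only quantity it is $CH^{2}\|\phi\xa\|_{\Lambda\kgl^{-2}} \le 2CH^{2}\|\phi\xa\|_{\Lambda\kgl^{-2}}$, matching the stated constants with room to spare.

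There is essentially no hard part here: the entire probabilistic content (the covering of $\calVst$ from \Cref{lem:value_function_set}, the self-normalized martingale bound) has already been packaged into $\eventglestlinear$ by \Cref{lemma:linear_global_stoch}. The only thing requiring care is the linear-algebra bookkeeping --- $\muhat\kgl$ lives in the possibly infinite-dimensional space $\R^{|\states|\times d}$ and should never be materialized, which is fine because $\muhat\kgl - \muhatstoch\kgl$ is a sum of at most $CH$ rank-one terms and only its action on the fixed $V$ is ever needed --- together with the routine identity $\|\Lambda\kgl^{-1}\phi\xa\|_2 = \|\phi\xa\|_{\Lambda\kgl^{-2}}$ and the observation that the normal equations of the observed and idealized ridge problems share the matrix $\Lambda\kgl$ exactly.
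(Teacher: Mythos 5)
Your proof is correct and follows essentially the same route as the paper: the paper packages your low-rank perturbation computation into the auxiliary Lemma~\ref{lemma:linear_regression_corrupt} (applied with the corruption count $CH$ from \Cref{lem:global_corruption}) and then concludes by the same triangle inequality against $\eventglestlinear$, whereas you unroll that lemma's algebra inline. Your constants are in fact slightly tighter (you bound $|R_{\tau,h}-\rstoch_{\tau,h}|\le 1$ and $|V(x)-V(x')|\le H$ directly, where the paper's general lemma loses a factor of $2$ via $\|y_i-\ystoch_i\|_1 \le 2\alpha$), but both land comfortably within the stated $4CH^2$ and $2CH^2$ coefficients.
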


\vspace{0.1in}
\begin{lemma}[Subsampled concentration for linear RL w/ corruption] Take $\lambda = 1$. On $\eventsubestlinear \cap \eventsubsmp$ holds, the following bounds hold simultaneously for all $k\in [K]$, $\ell\geq \lst$ and $\xa\in \states\times\actions$, and $V \in \calVst$: 
\begin{align*}
&|\rhat\klsb\xa - \rst \xa + (\phat\klsb - \pst(x,a))^\top V| \\
& \qquad\le 7H\|\phi\xa\|_{\Lambda\klsb^{-1}}  \sqrt{  \ln \tfrac{\Nbeta \lmax 16}{\delta} } + 4 H\overline{C}_{\ell;sb} \|\phi\xa \|_{\Lambda\klsb^{-2}}.
\end{align*}
and
\begin{align*}
|(\phat\klsb - \pst(x,a))^\top V| & \le 4H\|\phi\xa\|_{\Lambda\klsb^{-1}}  \sqrt{  \ln \tfrac{\Nbeta \lmax 16}{\delta} } + 2 H \overline{C}_{\ell;sb} \|\phi\xa \|_{\Lambda\klsb^{-2}}.
\end{align*} 
\label{lemma:linear_sub_corrupt}
\end{lemma}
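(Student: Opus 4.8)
The plan is to follow the argument already used for the global counterpart \Cref{lemma:linear_global_corrupt}, splitting each estimate into an idealized stochastic part, controlled on $\eventsubestlinear$ by \Cref{lemma:linear_local_stoch}, and a corruption-correction part, controlled deterministically on $\eventsubsmp$ via the cardinality bound on $\datacorrupt\klsb$. Fix $k$, a learner $\ell\ge\lst$, a pair $\xa$, and $V\in\calVst$, and write
\begin{align*}
\rhat\klsb\xa - \rst\xa + (\phat\klsb\xa - \pst\xa)^{\top}V
&= \Big(\rhatstoch\klsb\xa - \rst\xa + (\phatstoch\klsb\xa - \pst\xa)^{\top}V\Big)\\
&\quad + \Big(\rhat\klsb\xa - \rhatstoch\klsb\xa + (\phat\klsb\xa - \phatstoch\klsb\xa)^{\top}V\Big).
\end{align*}
On $\eventsubestlinear$, \Cref{lemma:linear_local_stoch} bounds the first bracket by $7H\|\phi\xa\|_{\Lambda\klsb^{-1}}\sqrt{\ln\tfrac{16\lmax\Nbeta}{\delta}}$, which is exactly the first term of the claim, so it remains to bound the second (corruption) bracket. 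Since that bracket separately dominates $|(\phat\klsb\xa - \phatstoch\klsb\xa)^{\top}V|$ after discarding the reward difference, the same estimate will also yield the transition-only inequality, now with the reward-free constant $4H$ in place of $7H$.

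For the corruption bracket I would use the closed-form ridge solutions $\muhat\klsb = \big(\sum_{\tau\in\epSet\kl}\sum_{h}\delta_{x_{\tau,h+1}}\phi_{\tau,h}^{\top}\big)\Lambda\klsb^{-1}$ and $\thetahat\klsb = \Lambda\klsb^{-1}\sum_{\tau\in\epSet\kl}\sum_h R_{\tau,h}\phi_{\tau,h}$, together with the stochastic estimates $\muhatstoch\klsb,\thetahatstoch\klsb$ obtained by substituting $x_{\tau,h+1}\mapsto\xstoch_{\tau,h+1}$ and $R_{\tau,h}\mapsto\rstoch_{\tau,h}$. The key structural point is that the feature vectors $\phi_{\tau,h}=\phi(x_{\tau,h},a_{\tau,h})$, the covariance $\Lambda\klsb$, and the episode set $\epSet\kl$ are all determined by the algorithm's scheduling coins and the realized trajectory, not by the adversary's choice of which episodes to corrupt; hence the observed and idealized estimates coincide except through their regression targets, giving
\begin{align*}
\rhat\klsb\xa - \rhatstoch\klsb\xa + (\phat\klsb\xa - \phatstoch\klsb\xa)^{\top}V
= \sum_{(\tau,h)\in\datacorrupt\klsb}\Big[(R_{\tau,h}-\rstoch_{\tau,h}) + \big(V(x_{\tau,h+1}) - V(\xstoch_{\tau,h+1})\big)\Big]\,\phi_{\tau,h}^{\top}\Lambda\klsb^{-1}\phi\xa,
\end{align*}
the summand vanishing off $\datacorrupt\klsb$ by the definition of the idealized sequence. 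Each surviving summand is at most $(1+H)\|\phi\xa\|_{\Lambda\klsb^{-2}}$, using $|R_{\tau,h}-\rstoch_{\tau,h}|\le1$, $|V(x_{\tau,h+1})-V(\xstoch_{\tau,h+1})|\le H$ (every $V\in\calVst$ takes values in $[0,H]$), and $|\phi_{\tau,h}^{\top}\Lambda\klsb^{-1}\phi\xa|\le\|\phi_{\tau,h}\|_2\,\|\Lambda\klsb^{-1}\phi\xa\|_2\le\|\phi\xa\|_{\Lambda\klsb^{-2}}$ (from $\|\phi_{\tau,h}\|_2\le1$ and $\Lambda\klsb\succeq\lambda I$).

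Finally I would count the corrupted stages: on $\eventsubsmp$, \Cref{lem:local_subsampled_corruptions} gives, simultaneously for every $\ell\ge\lst$ and $k$, $|\datacorrupt\klsb|\le\min\{2H\ln(16\ell^2/\delta),\,2^{\ell}H\}\le H\,\overline{C}_{\ell;sb}$, which is exactly why $\overline{C}_{\ell;sb}$ is defined as $\min\{2^{\ell},2\ln(16\ell^2/\delta)\}$. Thus the corruption bracket is at most $(1+H)H\,\overline{C}_{\ell;sb}\|\phi\xa\|_{\Lambda\klsb^{-2}}$, which is within a constant factor of the claimed corruption term; adding the two brackets and using that both \Cref{lemma:linear_local_stoch} and \Cref{lem:local_subsampled_corruptions} already hold uniformly over $k$, $\ell\ge\lst$, $\xa$ and $V\in\calVst$ on $\eventsubestlinear\cap\eventsubsmp$ finishes both inequalities. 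This lemma is essentially bookkeeping once the surrounding machinery is available; the only points needing care are verifying that corruption leaves $\Lambda\klsb$ and $\epSet\kl$ untouched (so that the estimates differ only in their targets, supported on $\datacorrupt\klsb$) and that the deterministic bound $2^{\ell}H$ and the high-probability bound $2H\ln(16\ell^2/\delta)$ on $|\datacorrupt\klsb|$ combine to exactly $H\,\overline{C}_{\ell;sb}$. The stochastic concentration estimate \Cref{lemma:linear_local_stoch} — which internally carries the union bound over the $\lmax$ layers and the $\ell_\infty$-net of $\calVst$ from \Cref{lem:value_function_set} — is invoked as a black box.
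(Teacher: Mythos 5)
Your approach coincides with the paper's: decompose the corrupted estimate into the idealized stochastic part (controlled on $\eventsubestlinear$ via \Cref{lemma:linear_local_stoch}) plus a corruption correction (controlled on $\eventsubsmp$ via the cardinality bound on $\datacorrupt\klsb$ from \Cref{lem:local_subsampled_corruptions}). The only presentational difference is that you inline the ridge-regression sensitivity calculation, whereas the paper packages it as the standalone \Cref{lemma:linear_regression_corrupt}; the computation is the same. Your observation that the covariance $\Lambda\klsb$, the features $\phi_{\tau,h}$, and the subsample $\epSet_{k,\ell}$ agree between the observed and idealized regressions — so that only the targets differ, supported on $\datacorrupt\klsb$ — is exactly the structural fact that makes the argument work.

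There is, however, one genuine slip at the end. You correctly compute the corruption bracket to be at most $(1+H)H\,\overline{C}_{\ell;sb}\|\phi\xa\|_{\Lambda\klsb^{-2}}$ (and $H^2\overline{C}_{\ell;sb}\|\phi\xa\|_{\Lambda\klsb^{-2}}$ for the transition-only bound), and then assert this is ``within a constant factor'' of the claimed $4H\overline{C}_{\ell;sb}$ (resp.\ $2H\overline{C}_{\ell;sb}$). That is not a constant factor — the ratio is $\Theta(H)$. The discrepancy almost certainly traces to a typo in the lemma statement: the global analogue \Cref{lemma:linear_global_corrupt} has $4CH^2$ and $2CH^2$, the bonus $\bonus\klsb$ is defined with $4H^2\overline{C}_{\ell;sb}$, and applying \Cref{lemma:linear_regression_corrupt} with $C \leftarrow H\overline{C}_{\ell;sb}$ corrupted datapoints, $\alpha = 1$, and $\|f\|_\infty \le H$ gives $2H^2\overline{C}_{\ell;sb}\|\Lambda\klsb^{-1}\phi\xa\|_2$ for the transition term alone. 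So the stated $4H\overline{C}_{\ell;sb}$ and $2H\overline{C}_{\ell;sb}$ should read $4H^2\overline{C}_{\ell;sb}$ and $2H^2\overline{C}_{\ell;sb}$, which your computation matches cleanly. You should have flagged the mismatch rather than characterizing it as a benign constant; with the corrected exponent your proof is complete.
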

\begin{proof}[Proof sketch of Lemma~\ref{lemma:linear_global_corrupt} and Lemma~\ref{lemma:linear_sub_corrupt}]
We provide a simple, generic sensitivity bound to worst-case perturbations of data in a general ridge-regression setting (see Lemma~\ref{lemma:linear_regression_corrupt} for details). This yields the following general guarantee: if there are at most $CH $ many corruptions, then we can bound the next-stage value prediction difference $| (\muhat_k^{\stoch}\phi\xa - \muhat_k\phi\xa) \cdot V | \leq CH^2 \|\Lambda_{k;gl}^{-1} \phi\xa\|_2$ with arbitrary value function $V:\states\to[0,H]$. For the local estimates, we have only logarithmically many corruptions to account for, in view of Lemma~\ref{lem:local_subsampled_corruptions}. The full proof of the two lemmas is provided in Appendix~\ref{app:proof_linear_rl_concentration_corrupt}. 
\end{proof}

\xhdr{Valid global and subsampled bonuses.} 
Next, we need to verify that global and subsampled bonus dominate the concentration inequalities defined above. This requires some subtley, because the concentration inequalities themselves depend on the confidence parameter $\beta$ through the covering parameter $\Nbeta$. We begin with the following technical lemma, proved in Appendix~\ref{app:validbonuses}
\vspace{0.1in}
\begin{lemma}[Correctness of $\beta$]
\label{lem:correct_beta} \hspace{0.1em} Recall from Eq.~\eqref{eq:linear_beta_definition} the definition of term $\beta$ and recall from Lemma~\ref{lem:value_function_set} the quantitity $\ln \Nbeta := A + 12 d^2 \ln\left( 1+16 d^2 T^2 H A  \beta   \right) $. The following holds:
\begin{align*} 
7H\sqrt{  \ln \tfrac{\Nbeta \lmax 16}{\delta} } \le \beta \cdot H(d+\sqrt{A})
\end{align*}
\end{lemma}

As a direct consequence of the above lemma, we verify that validity of the bonuses, which we first recall from Eq.~ \eqref{eq:linear_global_bonus} and \eqref{eq:linear_local_bonus}:
\begin{align*}
    &\bonus\klglb\xa = (d+\sqrt{A})H\beta \|\phi\xa\|_{\Lambda_{k,gl}^{-1}} + 4H^2\overline{C}_{\ell;gl} \|\Lambda\kgl^{-1}\phi\xa\|_{2}, \quad \overline{C}_{\ell;gl} = 2^{\ell}\\
     & \bonus\ksbell\xa = (d+\sqrt{A})H\beta  \|\phi\xa\|_{\Lambda^{-1}\ksbell} + 4H^2\overline{C}_{\ell;sb}  \| \Lambda\ksbell^{-1}\phi\xa \|_2, \quad \overline{C}_{\ell;sb} = \min\{2^{\ell},2\ln\tfrac{16\ell^2}{\delta}\} 
\end{align*} 

\begin{lemma}\label{lem:valid_bonus_linear} \hspace{0.1em}
For linear MDPs, on event $\eventglestlinear\cap \eventsubsmp\cap \eventsubestlinear$, it holds simultaneously for all $(x,a)$, $k\in [K]$, and $\ell \geq \lst$, that model-estimates $(\rhat_{k;gl}, \phat_{k;gl}, \bonus\klgl)$ and $( \rhat\ksbell,\phat\ksbell, \bonus\klsb)$ are both valid.
\end{lemma}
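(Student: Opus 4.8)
The plan is to deduce Lemma~\ref{lem:valid_bonus_linear} directly from the corruption-sensitive concentration bounds (Lemmas~\ref{lemma:linear_global_corrupt} and \ref{lemma:linear_sub_corrupt}), the value-function covering lemma \ref{lem:value_function_set}, and the choice of $\beta$ certified by Lemma~\ref{lem:correct_beta}; essentially no new computation is required, only a term-by-term comparison. Because the transition estimates $\phat\kgl\xa = \muhat\kgl\phi\xa$ and $\phat\klsb\xa = \muhat\klsb\phi\xa$ are ridge-regression outputs and need not be nonnegative, I would verify condition~(b) of Definition~\ref{defn:valid}: for the global model-estimate, that
\[
\bigl|\rhat\kgl\xa - \rst\xa + (\phat\kgl\xa - \pst\xa)^\top V_{h+1}\bigr| \le \bonus\klgl\xa
\]
for every $V_{h+1} \in \{\vsthpl,\ \vup_{k,\ell;h+1},\ \vlow_{k,\ell;h+1}\}$ of the Q-supervisor $\tuple\kl$, and analogously for the subsampled estimate with $\bonus\klsb$.

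First I would invoke Lemma~\ref{lem:value_function_set}, which guarantees \emph{deterministically} --- over all episodes $k$, stages $h$, and learners $\ell$ --- that every value function generated by the two-estimate value iteration of \textsc{UCBVI-BASE} (Algorithm~\ref{alg:UCBVI_base}), in particular $\vsthpl$, $\vup_{k,\ell;h+1}$, and $\vlow_{k,\ell;h+1}$, lies in the fixed class $\calVst$. Hence it suffices to prove the displayed inequality uniformly over $V \in \calVst$, which is precisely the quantifier in Lemmas~\ref{lemma:linear_global_corrupt} and \ref{lemma:linear_sub_corrupt}. On $\eventglestlinear$, the former gives, for all $k$, $\xa$, and $V \in \calVst$,
\[
\bigl|\rhat\kgl\xa - \rst\xa + (\phat\kgl\xa - \pst\xa)^\top V\bigr| \le 7H\|\phi\xa\|_{\Lambda\kgl^{-1}}\sqrt{\ln(16\Nbeta/\delta)} + 4CH^2\|\phi\xa\|_{\Lambda\kgl^{-2}},
\]
and on $\eventsubestlinear \cap \eventsubsmp$, for all $\ell \ge \lst$, the subsampled error is at most $7H\|\phi\xa\|_{\Lambda\klsb^{-1}}\sqrt{\ln(16\lmax\Nbeta/\delta)} + 4H\overline{C}_{\ell;sb}\|\phi\xa\|_{\Lambda\klsb^{-2}}$.

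It then remains to dominate these right-hand sides by the bonuses, e.g.\ $\bonus\klgl\xa = (d+\sqrt A)H\beta\|\phi\xa\|_{\Lambda\kgl^{-1}} + 4H^2\overline{C}_{\ell;gl}\|\Lambda\kgl^{-1}\phi\xa\|_2$ with $\overline{C}_{\ell;gl} = 2^\ell$, and its subsampled analogue. For the leading statistical term, Lemma~\ref{lem:correct_beta} yields $7H\sqrt{\ln(16\lmax\Nbeta/\delta)} \le \beta H(d+\sqrt A)$, and since $\lmax \ge 1$ this also bounds $7H\sqrt{\ln(16\Nbeta/\delta)}$; multiplying by the relevant $\|\phi\xa\|_{\Lambda^{-1}}$ recovers the first summand of each bonus. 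For the corruption term, I would rewrite $\|\Lambda^{-1}\phi\xa\|_2 = \|\phi\xa\|_{\Lambda^{-2}}$, use $H \ge 1$ so that $4H^2\overline{C}_{\ell;sb} \ge 4H\overline{C}_{\ell;sb}$ in the subsampled case, and for the global case use $\overline{C}_{\ell;gl} = 2^\ell \ge 2^{\lst} \ge C$ (by definition of $\lst$) so that $4H^2\overline{C}_{\ell;gl} \ge 4CH^2$. Summing the two matched halves gives validity in the sense of condition~(b), simultaneously over all $\xa$, $k$, and $\ell \ge \lst$, on the stated event.

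The main difficulty here is organizational rather than conceptual: the statistical term of the concentration bounds depends on $\beta$ through the covering cardinality $\Nbeta = \calN(\beta)$, so the validity claim is \emph{a priori} circular and is resolved only because $\beta$ was set in \eqref{eq:linear_beta_definition} large enough for Lemma~\ref{lem:correct_beta} to hold. The care points are (i) invoking that lemma with the extra $\lmax$ factor demanded by the union bound over base learners in the subsampled concentration, and (ii) relying on Lemma~\ref{lem:value_function_set} to ensure the enumeration defining $\calVst$ truly captures every composite $\max_{a\in\activeset\klh}\min\{H,\cdot,\cdot\}$ and $\max\{0,\cdot,\cdot\}$ value function the value iteration can produce, so that restricting the quantifier to $V \in \calVst$ loses nothing.
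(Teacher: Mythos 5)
Your proposal is correct and takes essentially the same route as the paper's proof: verify condition~(b) of Definition~\ref{defn:valid} (since $\phat$ need not be componentwise nonnegative), restrict the quantifier to $V \in \calVst$ via Lemma~\ref{lem:value_function_set}, then dominate the corruption-sensitive concentration bounds of Lemmas~\ref{lemma:linear_global_corrupt}--\ref{lemma:linear_sub_corrupt} by the bonuses using Lemma~\ref{lem:correct_beta} and $2^\ell \ge C$. You even repair a small citation slip in the paper's proof (which cites the subsampled lemma when treating the global estimate) and handle the $\lmax \ge 1$ monotonicity step explicitly, which the paper leaves implicit.
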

\begin{proof}
Let us prove validity for the global bonuses. The validity of the subsampled estimates follows analogously.  Recall Definition~\ref{defn:valid}. Since $\phat\xa$ is not necessarily positive in function approximation setting, we aim to proof the second setting in Definition~\ref{defn:valid}. From first  inequality in Lemma~\ref{lemma:linear_sub_corrupt}, followed by Lemma~\ref{lem:correct_beta} and the bound $2^\ell \ge C$, we have that for any $V \in \calVst$,
\begin{align*}
&\left\lvert \rhat\kgl\xa - \rst \xa  + (\phat\kgl - \pst(x,a))^\top V \right\rvert \\
&\qquad \le7H\|\phi\xa\|_{\Lambda\kgl^{-1}}  \sqrt{  \ln \tfrac{\Nbeta16}{\delta} } + 4 CH^2 \|\phi\xa \|_{\Lambda\kgl^{-2}}\\
&\qquad \le (d+\sqrt{A})H\beta  \|\phi\xa\|_{\Lambda\kgl^{-1}} + 4 H 2^{\ell} \|\phi\xa \|_{\Lambda\kgl^{-2}} := \bonus_{k,\ell;gl}\xa.
\end{align*}
Moreover, for any $h \in [H]$, $\{V^\star_{h+1}, \vlow\klhpl, \vup\klhpl\} \subset \calVst$ by Lemma~\ref{lem:value_function_set}. This implies that  $(\phat\kgl, \rhat\kgl, \bonus\kgl)$ satisfies the second notion of valid in Definition~\ref{defn:valid}.
\end{proof}

\xhdr{Bounding the confidence sum.} We now bound the terms that appear in the RHS of Theorem~\ref{thm:final_reg_guarantee}. The following lemma bounds the per-episode contribution of the RHS as a \emph{confidence term} by combining the definition of the bonuses and Lemma~\ref{lemma:linear_global_corrupt} and Lemma~\ref{lemma:linear_sub_corrupt} and using that $2^{\lst}\leq 2C$.
\vspace{0.1in}
\begin{lemma}\label{lem:confidence_term_linear}\hspace{0.1em}
Recall $\Vset_{k,\ell;h+1}:= \{\vup\klhpl -
    \vsthpl,\vup\khpl -
    \vlow\klhpl\}$ as defined in Theorem~\ref{thm:final_reg_guarantee}. On the event $\eventglestlinear \cap  \eventsubsmp\cap \eventsubestlinear$, 
for base learner $\ell^\star$, we have:
\begin{align*}
2\bonus_{k,\lst;gl}\xa+ \max_{V\in\Vset_{k,\lst;h+1}}\left({\phat_{k,gl}\xa-p^{\star}\xa}\right)^{\top}V \le 6 \bonus_{k,\lst;gl}\xa
\end{align*}
Moreover, for $\ell > \lst$, it holds that
\begin{align*}
2\bonus\ksbell\xa+ \max_{V\in\Vset_{k,\ell;h+1}}\left({\phat\ksbell\xa-p^{\star}\xa}\right)^{\top}V \leq 6 \bonus\ksbell\xa
\end{align*}
\end{lemma}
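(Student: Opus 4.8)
The plan is to establish the claimed $6\bonus$ bound by showing that in each case the next-stage error term $\max_{V\in\Vset_{k,\ell;h+1}}\prn*{\phat\xa-p^{\star}\xa}^{\top}V$ is itself at most $2\bonus\xa$, so the full left-hand side is at most $4\bonus\xa\le 6\bonus\xa$. Throughout, $\phat$ and $\bonus$ stand for $\phat\kgl$ and $\bonus_{k,\lst;gl}$ in the case $\ell=\lst$, and for $\phat\klsb$ and $\bonus\ksbell$ in the case $\ell>\lst$.

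First I would reduce to value functions lying in the class $\calVst$ of \Cref{lem:value_function_set}: that lemma guarantees $\vsthpl,\vup\klhpl,\vlow\klhpl\in\calVst$, so each element of $\Vset_{k,\ell;h+1}=\{\vup\klhpl-\vsthpl,\ \vup\klhpl-\vlow\klhpl\}$ is a difference of two members of $\calVst$, and the triangle inequality gives $\max_{V\in\Vset_{k,\ell;h+1}}\prn*{\phat\xa-p^{\star}\xa}^{\top}V\le 2\max_{V'\in\calVst}|(\phat\xa-p^{\star}\xa)^{\top}V'|$. Next I would invoke the corruption-sensitive concentration bounds already proved: on $\eventglestlinear$ the second inequality of \Cref{lemma:linear_global_corrupt} controls $|(\phat\kgl\xa-p^{\star}\xa)^{\top}V'|$, for $V'\in\calVst$, by $4H\|\phi\xa\|_{\Lambda\kgl^{-1}}\sqrt{\ln(16\Nbeta/\delta)}+2CH^2\|\Lambda\kgl^{-1}\phi\xa\|_2$; for $\ell>\lst$ one uses \Cref{lemma:linear_sub_corrupt} on $\eventsubestlinear\cap\eventsubsmp$, with $\Lambda\klsb$ and $\overline{C}_{\ell;sb}$ in place of $\Lambda\kgl$ and $C$.

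It then remains to match these two pieces to the two summands of the bonus. For the ``statistical'' term I would appeal to \Cref{lem:correct_beta}, namely $7H\sqrt{\ln(16\lmax\Nbeta/\delta)}\le\beta H(d+\sqrt{A})$; since $\lmax\ge 1$ this yields $8H\|\phi\xa\|_{\Lambda\kgl^{-1}}\sqrt{\ln(16\Nbeta/\delta)}\le 2\beta H(d+\sqrt{A})\|\phi\xa\|_{\Lambda\kgl^{-1}}$, i.e.\ twice the first summand of the bonus. For the ``corruption'' term, the definition $\lst=\inf\{\ell:2^{\ell}\ge C\}$ gives $C\le 2^{\lst}=\overline{C}_{\lst;gl}$, so $2\cdot 2CH^2\|\Lambda\kgl^{-1}\phi\xa\|_2\le 4H^2\overline{C}_{\lst;gl}\|\Lambda\kgl^{-1}\phi\xa\|_2$, which is exactly the second summand of $\bonus_{k,\lst;gl}\xa$; for $\ell>\lst$ the same step works verbatim with $\overline{C}_{\ell;sb}$, using $H\ge1$ to absorb the slightly smaller coefficient in \Cref{lemma:linear_sub_corrupt}. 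Summing the two pieces gives $\max_{V\in\Vset_{k,\ell;h+1}}\prn*{\phat\xa-p^{\star}\xa}^{\top}V\le 2\bonus\xa$, and the lemma follows.

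I do not expect a real obstacle: this is a bookkeeping lemma that glues together results already in hand. The one subtle point is the circular dependence between $\beta$ and the covering size $\Nbeta$ (the concentration bounds scale like $\sqrt{\ln\Nbeta}$ while $\ln\Nbeta$ depends on $\beta$), which is exactly what \Cref{lem:correct_beta} is designed to break; otherwise it is only a matter of tracking constants and the identity $\|\Lambda^{-1}\phi\xa\|_2=\|\phi\xa\|_{\Lambda^{-2}}$.
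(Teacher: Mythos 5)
Your proposal is correct and follows essentially the same route as the paper's own proof: decompose each $V\in\Vset_{k,\ell;h+1}$ as a difference of two elements of $\calVst$ via \Cref{lem:value_function_set}, apply the corruption-sensitive concentration bounds \Cref{lemma:linear_global_corrupt,lemma:linear_sub_corrupt}, and match the two resulting pieces to the two summands of the bonus using \Cref{lem:correct_beta} and $C\le 2^{\lst}=\overline{C}_{\lst;gl}$ (the paper's proof literally says ``the same steps as \Cref{lem:valid_bonus_linear}'', which is exactly what you have spelled out). Your constants even give the slightly sharper bound $4\bonus\xa$, comfortably inside the stated $6\bonus\xa$.
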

\begin{proof}
For any $V \in \Vset_{k,\ell;h+1}$, Lemma~\ref{lem:value_function_set} implies that we can express $V = V_1 - V_2$ where $V_1, V_2 \in \calVst$. The now follows from the same steps as Lemma~\ref{lem:valid_bonus_linear}, using concentration inequalities Lemma~\ref{lemma:linear_global_corrupt} and Lemma~\ref{lemma:linear_sub_corrupt} and Lemma~\ref{lem:correct_beta}. 
\end{proof}

\begin{proof}[Proof of Theorem~~\ref{thm:main_linear}]
We focus on policies $\pimaster\klelst$ and $\pimaster\kl$ for $\ell \geq \lst$. For policy $\pimaster\klelst$, every episode $k$ we have probability $\qlelst$ to sample it to generate the corresponding trajectory and for $\pimaster\kl$, every episode $k$ we have probability $\ql$ to sample it to generate the corresponding trajectory.  
Using an integration lemma for the linear setting (detailed in Appendix~\ref{app:proof_linear_rl_confidence_sum}), we can show that with probability at least $1-\delta/ 8$,
\begin{align*}
&q_{\le \lst} \sum_k \Exp^{\pimaster_{k,\leq \lst}}\left[{\sum_{h=1}^H \min\left\{ H, 6\bonus_{k,gl,\lst}\xhah \right\}}\right] \\
&\quad\lesssim  H( (d+\sqrt{A})H\beta + CH^2)
\sqrt{K \left(\ln\frac{1}{\delta} + d\ln(1+T)\right)}
\end{align*}
Similarly, for a fixed $\ell \in [\lmax]$, and using $\overline{C}_{\ell;sb} \lesssim \ln(\lmax/\delta)$ for $\delta \in (0,1/2)$,  the following holds with probability $1 - \delta/8\lmax$:
\begin{align*}
&\ql \sum_k\Exp^{\boldpi_{k,\ell}^{\textsc{master}}}\left[{\sum_{h=1}^H  \min\left\{H, 6 \bonus_{k,sb,\ell}\xhah \right\}}\right] \\
& \lesssim CH \ql + H( (d+\sqrt{A})H\beta + H^2 \log \frac{\lmax}{\delta} )
\sqrt{K \left(\ln\frac{\lmax}{\delta} + d\ln(1+T)\right)}.
\end{align*} 
By combining the above two results and simplifying terms, we prove Theorem~\ref{thm:main_linear}.
\end{proof}

\section{Lower bound on action-eliminating algorithms}
\label{sec:lower_bound_active_arm_elimination}
In the special case of multi-armed bandits, near-optimal regret can be achieved by sampling actions uniformly from the plausible sets.\footnote{Even for multi-armed bandits, it does not suffice to choose actions \emph{arbitrarily} from the plausible sets.} This approach is at the heart of the first corruption-robust bandit result \cite{LykourisMiPa18} but does not extend to corruption-robust episodic RL.

Consider the stochastic case (i.e., $\calM_k = \calM$ for all episodes $k$). We focus on algorithms that produce a Q-supervisor
    $\tuplek = (\qupk,\qlowk,\activesetk,\pigreedk)$
before each episode $k$, and at each stage $h$ of this episode select action $a\kh$ independently and uniformly at random from the corresponding plausible set
    $\plauset\kh\ofx$; we refer to these algorithms as \emph{uniformly action-eliminating}. Algorithms' Q-supervisors need to be related to the MDP; in fact, we need this for any problem instance $\calM = (p_0,p,\Drew)$ on a given MDP environment $\envir = (\states,\actions,H)$. Formally, we say that an algorithm is \emph{uniformly admissible} for $\envir$ if, for any problem instance with environment $\envir$,
the Q-supervisor $\tuplek$ it produces is confidence-admissible with probability at least $\tfrac{1}{2}$. We show that any 
algorithm that is uniformly action-eliminating and uniformly admissible suffers linear regret in the first exponentially many episodes.

We emphasize that the below construction applies to \emph{any} uniformly admissible construction of the Q-supervisor; this even applies for constructions that would yield efficient regret if, rather than selecting uniform actions for elimination, we selected actions according to $\pigreedk$ (as in our algorithm).  The lower bound construction follows the so called \emph{combination lock instance}~\footnote{A similar construction was suggested in the context of meritocratic fairness for RL in \cite{JabbariJosKeaMorRot2017}.}, which makes the algorithm follow the optimal trajectory with exponentially small probability for exponentially many episodes.\footnote{In multi-armed bandits, however, uniform action-elimination visits each action with probability at least~$\frac{1}{A}$.} On the other hand, the UCB-policy $\pigreedk$ selects the optimal trajectory in most episodes, under a proper construction of $\qupk$, yields efficient regret \cite{AzarOsMu17}.

\vspace{0.1in}
\begin{theorem}
\label{thm:uniform_elim_lower_bound}
\hspace{0.1em} For every $H \ge 1$ and $A \ge 2$, there exists an MDP environment
    $\envir = (\calX,\calA,H)$
with
    state space $|\calX| = H+1$ and action space $|\calA| = A$,
and a stochastic problem instance with MDP $\calM$ on this environment such that the following holds.
Consider any uniformly action-eliminating algorithm which is uniformly admissible for $\envir$. If the number of episodes is $K \le A^H/4$, then the algorithm suffers expected regret at least $K/8$.
\end{theorem}

\begin{proof}[Proof sketch.] The idea behind the proof is to consider two combination lock instances. A combination lock instance is a chain-MDP with $H$ states $x^{(0)}, \dots x^{(H)}$,  where at every state $x^{(h)}$, there is an optimal action that takes the agent deterministically to $x^{(h+1)}$, while other non-optimal actions take agent back to $x^{(0)}$. A combination lock often puts reward only at $x^{(H)}$, thus making sure that in order to hit any reward signal, the agent needs to select the correct sequence of actions in every step --- thus the name combination lock. Our proof consists of constructing two combination lock instances both of which share the exact same environment and transition, but differ at the reward structure: one has reward $0$ everywhere and the other one has reward $1$ at $x^{(H)}$. The intuition is that these two instances are indistinguishable to the agent until the agent hits $x^{(H)}$ at the second instance. However until then, no actions will be eliminated since all actions are optimal actions in the first instance, and thus any uniformly action-eliminating algorithm will have exponentially small probability (i.e., $1/A^H$) to hit $x^{(H)}$. We formalize this idea in Appendix~\ref{app:aae_lower_bound}.
\end{proof}

\section{Conclusions}
\label{sec:conclusions}
In this paper, we introduced the study of episodic reinforcement learning when the resulting MDP can be adversarially corrupted in some episodes. We provided a modular way to provide corruption-robust guarantees via combining ideas from the classical ``optimism under uncertainty'' paradigm with ``active sets'' and we instantiated these guarantees to the tabular and linear MDP variants. We hope that our framework can be useful in other RL settings where active sets are useful even beyond corruption-robustness.

Our work opens up a number of interesting questions. First, in the tabular case, all works that achieve gap-dependent guarantees in corruption-robustness can only handle a corruption of at most $C<\sqrt{T}$. This is because of the presence of the additive term of $C^2$ in the bounds which appears in both our work and in the follow-up work by Chen, Du, and Jamieson~\cite{ChenDuJamieson21}. The very recent work of Wei and Luo can provide sublinear guarantees for any $C=o(T)$ but does not provide gap-dependent guarantees. It would be interesting to be have an algorithm that has gap-dependent guarantees while also having sublinear worst-case guarantees for any $C=o(T)$; this is indeed the case in multi-armed bandits \cite{gupta2019better,ZimmertSeldin21}.\footnote{Recent work by Wei, Dann, and Zimmert \cite{WeiDannZimmert22} has resolved the above two open questions.} Another interesting direction is to extend our guarantees beyond the episodic RL setting, \eg to the infinite-horizon setting or the stochastic shortest path problem. Finally, there has recently been a prolific line of work with respect to best-of-both worlds guarantees that interpolate between i.i.d. and adversarial rewards while assuming that the transitions are uncorrupted \cite{JinLuo20,JinHuangLuo21}. It would be nice to extend these results beyond the tabular setting and also to settings where transitions can also be corrupted. Finally, establishing lower bounds on how regret scales as a function of the corruption level $C$ is another nice direction that can help characterize the statistical complexity of corruption-robust reinforcement learning.

\subsection*{Acknowledgements}
The authors would like to thank Christina Lee Yu, Sean Sinclair, and \'Eva Tardos for useful discussions that helped improve the presentation of this paper as well as the anonymous review teams at \emph{COLT 2021} and at \emph{Mathematics of Operations Research} for their valuable feedback
\bibliographystyle{alpha}
\bibliography{bibliog,extras}

\newpage
\appendix

\section{Lower bound for action eliminating algorithms (Theorem~\ref{thm:uniform_elim_lower_bound})}
\label{app:aae_lower_bound}

\newcommand{\supervisedunif}{\textsc{SuperVIsed.Unif}}
\begin{proof}[Proof of Theorem~\ref{thm:uniform_elim_lower_bound}]
Let us define two MDP instances $\calM$ and $\calM_0$, defined on the environment $\envir = (\states,\actions,H)$ with  $\states = \{x^{(0)},\dots,x^{(H)}\}$ and action space $[A]$. The transition probabilities under $\calM$ and $\calM_0$ are identical and defined as follows: for $h \in [H-1]$, selecting action $a = 1$ at state $x^{(h)}$ deterministically transitions to $x^{(h+1)}$. Otherwise, selecting $a > 1$ transitions to the \emph{sink state} $x^{(0)}$. All actions in the sink state transition to the sink state with probability 1. 

Under $\calM_0$, all the rewards are defined to be $0$. Therefore, all actions are optimal. On the other hand, under $\calM$, the action $a = 1$ at state $x^{(H)}$ yields reward $1$, while all other actions yield reward zero. This is known as \emph{combination-lock} instance, because to obtain reward, one needs to play the right ``combination'' $a = 1$ for all time steps $h$.

Let $\calF_k$ denote the filtration induced by all observed trajectories and algorithm randomness until the end of episode $k$. We can now make the following observations:
\newcommand{\eventcomb}{\mathcal{E}^{\mathrm{combi}}}
\newcommand{\eventnoelim}{\mathcal{E}^{\mathrm{no\,elim}}}

\begin{enumerate}
	
	\item Let $\eventcomb_k$ denote the event that the learner plays the right ``combination'' of actions, namely $(a_{k;1},\dots,a_{k;H}) = (1,\dots,1)$ at episode $k$. Then, for any event $\calB \in \calF_{k}$, it holds that
	\begin{align}\label{eq:indistinguishable}
	\Pr^{\calM}\left[{\calB \cap \bigcap_{k=1}^{K}(\eventcomb_k)^c}\right] = 	\Pr^{\calM_0}\left[{\calB \cap \bigcap_{k=1}^{K}(\eventcomb_k)^c}\right].
	\end{align}

	In other words, until the learner has seen the sequence $(a_{k;1},\dots,a_{k;H}) = (1,\dots,1)$ once, $\calM$ and $\calM_0$ are indistinguishable.
	\item Let $\eventnoelim_k$ refer to the event that no action is eliminated at episode $k$, i.e.,
	\begin{align*}
	\eventnoelim_k:=  & \{\activeset\kh\ofx = \actions,\, \forall x \in \states, \forall h \in [H]\} \\
	& \quad \cap \{ \forall x\in\states, h\in [H],\, a\in \actions: \qup\kh(x,a) \ge \max_{a'\in \actions} \qlow\kh(x,a')\}
	\end{align*}
	In words, when $\eventnoelim_k$ holds, at episode $k$, the active set $\activeset\kh\ofx$ contains all actions for any $x,h$.  Then, on $\eventnoelim_k$, the uniform-at-random policy $\boldpik$ selects each action $a_h$ independently and uniformly at random.  Hence,  the correct combination is selected with probability $A^{-H}$. That is
	\begin{align}
	\Pr^{\calM}[\eventcomb_k \mid \eventnoelim_k] = A^{-H},\quad \Pr^{\calM_0}[\eventcomb_k \mid \eventnoelim_k] = A^{-H}.
	\end{align}
	Moreover, when $\eventnoelim_k$ holds, $\vst - V^{\calM,\boldpik} = 1 - A^{-H}$, since the probability of selecting the correct combination is $A^{-H}$.
	\item Lastly, let $\eventadm_k$ denote the event that the Q-supervisor $(\qupk,\qlowk,\activesetk)$ is confidence-admissible with respect to the optimal actions and Q-functions of $\calM_0$. Since all actions are optimal under $\calM_0$, it follows that if $\eventadm_k$ holds under $\calM_0$, no actions are eliminated, so that $\eventnoelim_k$ holds as well. 
\end{enumerate}
We are now ready to conclude the proof. For a given $K$, the regret is 
\begin{align*}
\Regret_K^{\calM} &= \sum_{k=1}^K \vst - V^{\calM,\boldpik} \ge (1-A^{-H}) \sum_{k=1}^K \I\{\eventnoelim_k\}
\end{align*}
by point (2) above. Thus 
\begin{align*}
\Exp^{\calM}\left[\Regret^{\calM}_K\right] &\ge (1-A^{-H}) \sum_{k=1}^K \Pr[\eventnoelim_k]\\
&\ge (1-A^{-H})K\Pr^{\calM}\left[\bigcap_{k=1}^K \eventnoelim_k\right] \\
&\overset{(i)}{\ge} (1-A^{-H})K\Pr^{\calM}\left[\bigcap_{k=1}^{K}(\eventcomb_k)^c \cap \bigcap_{k=1}^K \eventnoelim_k\right]\\
&\overset{(ii)}{=} (1-A^{-H})K\Pr^{\calM_0}\left[\bigcap_{k=1}^{K}(\eventcomb_k)^c \cap \bigcap_{k=1}^K \eventnoelim_k\right]\\
&\overset{(iii)}{\ge} (1-A^{-H})K\Pr^{\calM_0}\left[\bigcap_{k=1}^{K}(\eventcomb_k)^c \cap \bigcap_{k=1}^K \eventadm_k\right],
\end{align*}
where in $(i)$ we intersect with the event that the combination $(1,\dots,1)$ is never selected up to episode $K$, in $(ii)$ we use the indistinguishability of $\calM$ and $\calM_0$ from~\eqref{eq:indistinguishable}, and in $(iii)$ we use that admissibility under $\calM_0$ implies no elimination via $\eventnoelim_k \supseteq \eventadm_k$. Continuing, we have Boole's law and a union bound, 
\begin{align*}
\Pr^{\calM_0}\left[\bigcap_{k=1}^{K}(\eventcomb_k)^c \cap \bigcap_{k=1}^K \eventadm_k\right] &= \Pr[\bigcap_{k=1}^K \eventadm_k] - \Pr^{\calM_0}\left[\bigcup_{k=1}^{K}\eventcomb_k \cap \bigcap_{k=1}^K \eventadm_k\right]\\
&\ge \Pr[\bigcap_{k=1}^K \eventadm_k] - \sum_{k=1}^K\Pr^{\calM_0}\left[\eventcomb_k \cap \bigcap_{k'=1}^K \eventadm_{k'}\right]\\
&\ge \Pr[\bigcap_{k=1}^K \eventadm_k] - \sum_{k=1}^K\Pr^{\calM_0}\left[\eventcomb_k \cap \eventadm_k\right]\\
&\ge \Pr[\bigcap_{k=1}^K \eventadm_k] - \sum_{k=1}^K\Pr^{\calM_0}\left[\eventcomb_k \mid \eventadm_k\right].
\end{align*}
By assumption,we have $\Pr[\bigcap_{k=1}^K \eventadm_k] \ge \frac{1}{2}$ for  $K \le A^H/4$. Moreover, as shown above, the probability of selecting the combination if no actions are eliminated is $A^{-H}$. Then, we can lower bound the above sequence of inequalities by $\frac{1}{2} - KA^{-H}$, which is at least $1/4$ for $K \le A^H/4$. By inequality $(iii)$ above, this implies
\begin{align*}
\Exp^{\calM}[\Regret_K] \ge (1-A^{-H})K \cdot \frac{1}{4} \ge K/8,
\end{align*}
where we use $A \ge 2$. 
\end{proof}

\newpage

\section{Simpler algorithms for known corruption levels}
\label{app:discussion}

As discussed in the main body, when the total number of corruption $C$ is known or we have an accurate estimate of the upper bound of $C$, i.e., $\overline{c} \geq C$, we can run the standard UCBVI algorithm with an enlarged bonus. Specifically, we can run UCBVI with the following bonuses for tabular and linear MDP respectively:
\begin{align*}
&\bonus_{k}(x,a)  = \min\left\{H,\left( 2H\sqrt{\frac{ 2\ln (64 SAHT^2/\delta)}{N_k\xa}}  + \frac{\overline{c} H^2}{N_k\xa}\right)\right\}; \\
&\bonus_k\xa =\min\left\{H, \beta d H \|\phi\xa\|_{\Lambda_{k}^{-1}} + 4H^2\overline{c} \|\Lambda_k^{-1}\phi\xa\|_{2}\right\}.
\end{align*}
With the additional $\overline{c}H^2 / N_k(x,a)$ (for tabular MDP), we can ensure that value iteration under $\widehat{p}_k, \widehat{r}_k$ and bonus $b_k$, computes an optimistic policy (the proof is almost identical to that of Lemma~\ref{lem:valid_bonus_tabular}). Similarly, for linear MDP, we can show that $(\widehat{p}_k, \widehat{r}_k, b_k)$ is valid (see proof of Lemma~\ref{lem:valid_bonus_linear}).  

For tabular MDP, we can show that this algorithm achieves the following regret if the true amount of corruption $C \leq \overline{c}$ (otherwise the regret bound does not hold):
\begin{align}
\label{eq:tabular_known_c}
\Regret \lesssim \poly(H) \left[  \min\left\{\sqrt{SA T}, \gapcomplexity \right\} +  S^2 A \ln(T) + \overline{c} SA \ln(T)\right].
\end{align}
Similarly, for linear MDP, using the UCBVI analysis and Lemma~\ref{lemma:potential_function}: 
\begin{align*}
\sum_{k=1}^K \sum_{h=1}^H \| \Lambda_k^{-1} \phi(x_{k;h},a_{k;h})\|_{2}  \lesssim \poly(H)\sqrt{  T d\log(1+ T/\lambda)},
\end{align*} we can show the regret of UCBVI with enlarged reward bonus is:
\begin{align}
\Regret \lesssim \poly(H)\left[ d^{1.5} \sqrt{T} +\overline{c} \sqrt{dT} \ln(T)\right].
\label{eq:linear_known_c}
\end{align}
We now compare the above regret bounds to our main results that are agnostic to $C$.

\subsection{Comparison to our results on tabular MDPs}
The above simple algorithm requires to know the upper bound $\overline{c}$ of $C$, thus the regret bound of the simple algorithm does not improve when the true $C$ is indeed small, i.e., it does not achieve the $\min\{\sqrt{SAT}, \gapcomplexity\}$ when $C$ is a constant. \footnote{To achieve $\min\{\sqrt{SAT}, \gapcomplexity\}$, this simple algorithm has to set $\overline{c} \leq \ln(T)$, which makes it impossible to tolerate any $C = T^{\alpha}$ for $\alpha >0$.} In contrast, our algorithm adapts to the true value of $C$ without relying to guess it correctly. In particular, when $C$ is a constant, our algorithm achieves $\min\{\sqrt{SAT},\gapcomplexity\}$.

That said, if we knew that $\gapcomplexity$ is extremely large (e.g., one cares about the $\sqrt{T}$ regime), the above simple algorithm indeed has an advantage. We can set $\overline{c} = \sqrt{T / (SA)}$, which  gives the following regret bound for the simple algorithm:
\begin{align*}
\Regret_{\overline{c} \leftarrow \sqrt{T/(SA)}} \lesssim \poly(H)\sqrt{SAT}.
\end{align*}
Note that the above regret holds when $C \leq \sqrt{T/(SA)}$. When $C > \sqrt{T / (SA)}$, the above regret is violated, but our algorithm's regret becomes vacuous as it scales linearly with respect to $T$. Thus, in the worst-case regime, our algorithm does not have advantage whenever $C = T^{\alpha}$ for some $0\leq \alpha \leq 1/2$, since the leading-order term in our regret bound is $T^{0.5 + \alpha}$, while the simple algorithm's regret is always in the order of $\sqrt{T}$.

\subsection{Comparison to our results on linear MDPs}
For linear MDPs, in general, the lower term of the regret bound of the UCBVI algorithm with enlarged bonus scales in the order of $\overline{c} \sqrt{dT}$.  To use the simple algorithm in the unknown $C$ setting and compare to our regret bound, one can simply set $\overline{c} = T^{1/4}$, as our algorithm can only tolerate $C = T^{1/4}$.
With $\overline{c} = T^{1/4}$, the simple algorithm admits the following regret:
\begin{align*}
\Regret_{\overline{c}\leftarrow T^{1/4}} \lesssim \poly(H) d^{1.5} \sqrt{T} + \poly(H) \sqrt{d} T^{3/4} \ln(T),
\end{align*} which holds whenever $C \leq T^{1/4}$ and is vacuous whenever $C > T^{1/4}$ (recall our regret bound is vacuous when $C > T^{1/4}$ as well).

For simplicity, let us focus on the setting where $T$ is much larger than $d$ and $H$. In this case, as we can see that when $ 0 \leq C < T^{1/4 - \alpha}$ with $0<\alpha < 1/4$, our regret scales in the order of $T^{3/4-\alpha}$, while the simple algorithm's regret scales in the order of $T^{3/4}$ regardless how small $C$ is. Thus our algorithm has a strict advantage whenever $C < T^{1/4 - \alpha} $ with $\alpha$ being a positive real number in $(0,1/4)$. In the extreme case where $C$ is an absolute constant, our regret scales as $\sqrt{T}$, while the simple algorithm suffers sub-optimal regret $T^{3/4}$. 

We note that one could set $\overline{c}$ to be smaller, e.g., $\overline{c}=T^\beta$ for some $\beta<1/4$; in this case, the regret will be linear for $C>T^{\beta}$ and the same argument about non-adaptivity of the bound of the simple algorithm to the actual $C$ still holds for smaller values of $C$.

\subsection{Building block towards our main algorithm}
Having discussed the difference in the regret from our main algorithm, we now briefly explain why this simpler algorithm serves as a building block for our main algorithm. Note that the robust base learners in our approach need to be able to handle a logarithmic amount of corruption based on their own (subsampled) data. As a result, the choice of the subsampled bonus of base learner $\ell$ (Eq.~\ref{eq:subsample_bonus}) is essentially the simpler algorithm with $\overline{c}\approx H^2 \ln\left(\ell^2 / \delta\right)$.
To handle the under-robust base learners, we also use bonuses based on global data; for that, the choice of the global bonus of base learner $\ell$ (Eq.~\ref{eq:global_bonus}) is essentially the simpler algorithm with $\overline{c}= H^2 2^{\ell}$. 

Applying our framework, the analysis of our algorithm decomposes to bounding the regret of the over-robust base learners (which mimicks the analysis of the simpler algorithm with their subsampled bonus) and the regret of under-robust base learners (which mimicks the analysis of the simpler algorithm with the global bonus of the critically tolerant base learner $\lst=\lceil C\rceil$.

\newpage

\section{Improved guarantee for tabular MDPs (Theorem~\ref{thm:main_tabular})}
\label{app:log_regret}
\newcommand{\eventbern}{\calE^{\mathrm{bern}}}

\newcommand{\gapcliph}{\gapclip_h}
\newcommand{\errlogsqclip}{\widetilde{\errlog}}
\newcommand{\errlogsqclipk}{\errlogsqclip_k}

\newcommand{\errlogsqclipi}{\tilde{\errlog}^{(i)}}
\newcommand{\errlogclipi}{\errlogclip^{(i)}}
\newcommand{\gapcliplow}{\check{\gap}_{\min,\mathrm{low}}}
\newcommand{\Rclip}{R^{\mathrm{clip}}}
\newcommand{\eventcrossclip}{\mathcal{E}^{\mathrm{cross,clip}}}
\newcommand{\errlog}{\mathbf{Err}}
\newcommand{\errlogk}{\errlog_k}
\newcommand{\errlogbar}{\overline{\errlog}}
\newcommand{\errlogbark}{\errlogbar_k}
\newcommand{\errlogclip}{\check{\errlog}}

\newcommand{\gapcheck}{\check{\gap}}
\newcommand{\gapcheckh}{\gapcheck_h}
\newcommand{\condref}[1]{Condition~\ref{#1}}

In this section, we introduce \supervisedtab{}, a refinement of the \supervised{} framework of  Section~\ref{sec:our_framework_RL}, tailored to the tabular setting. This enables the following two improvements:
\begin{enumerate}
	\item  A worst-case regret bound of $\BigOhTil{\poly(H) \left(C \sqrt{SAT} +C^2 S^2 A\right)}$,	improving the leading term from Theorem~\ref{thm:tabular_initial} by a factor of $\sqrt{S}$.
	\item Gap-dependent logarithmic regret with an optimal $S$ dependence on the leading term. 
\end{enumerate}
To avoid redundancy, we prove a decomposition from which both the $\sqrt{SAT}$-regret and logarithmic gap-dependent regret are derived. We stress that the same unmodified\supervisedc{} algorithm achieves these bounds and that the analysis still relies on the three major principles of \supervised{}. 

The remainder of the section is organized as follows. In Appendix~\ref{ssec:background_log}, we initially provide the technical background that is essential for the refined regret. Appendix~\ref{ssec:supervised_tab_frame} formally introduces the \supervisedtab{} framework, and provides two meta-theorems: a generic regret decomposition for policies which satisfy the frameworks conditions (Theorem~\ref{thm:clipped_cat_framework}), and an ``automated'' regret bound which includes the relevant confidence-sum arguments to achieve concrete regret guarantees (Theorem~\ref{thm:logT_catvi}). The latter theorem is instantiated in Section~\ref{ssec:proof_refined_supervisedc_tabular} in order to prove Theorem~\ref{thm:main_tabular}. Finally, Section~\ref{ssec:proof_of_clip_meta_theorem} proves the the regret decomposition (Theorem~\ref{thm:clipped_cat_framework}) referred to above.

\subsection{Background for refined regret}
\label{ssec:background_log}
Here, we review two analytic techniques from the tabular literature which emnable refined regret.  

\xhdr{Optimal $S$-dependence via value decomposition.}
When bounding the Bellman errors, we considered terms  terms of the form
\begin{align*}
\rhat\kl\xa - \rst\xa + (\phat\kl\xa - \pst\xa)^\top \vup\klhpl,
\end{align*}
where $\rhat\kl$ and $\phat\kl$ were either the global or subsampled estimates of $\rst$ and $\pst$, depending on the layer $\ell$ under consideration (analogous terms were also considered with respect to $\vlow\klhpl$). Since $\phat\kl\xa$ and $\vup\klhpl$ are statistically dependent, Appendix~\ref{app:analysis_framework} applies a naive uniform bound
\begin{align*}
|(\phat\kl\xa - \pst\xa)^\top \vup\klhpl| \le \max_{V \in \mathscr{V}}|(\phat\kl\xa - \pst\xa)^\top V|
\end{align*}
where $\mathscr{V}$ is an appropriate set of possible value functions we may encounter. In the tabular setting, $\mathscr{V} = [0,H]^S$, this yields an additional factor of $\sqrt{S}$ in the above error term due to a union bound. 

A more careful decomposition due to Azar, Osband, and Munos \cite{AzarOsMu17} instead writes
\begin{align*}
|(\phat\kl\xa - \pst\xa)^\top \vup\klhpl| &\le |(\phat\kl\xa - \pst\xa)^\top \vst\khpl| \\
&\quad+  |(\phat\kl\xa - \pst\xa)^\top (\vup\klhpl - \vst\khpl)|.
\end{align*}
For the first term, $\vst\khpl$ is now deterministic, so $(\phat\kl\xa - \pst\xa)^\top \vst\khpl$ can be analyzed without uniform concentration. On the other hand, $(\phat\kl\xa - \pst\xa)^\top (\vup\klhpl - \vst\khpl)$ is roughly second order, since $\phat\kl\xa$ is converging to $\pst\xa$ and, ideally $\vup\khpl$ converses to the true value function $\vst$. Thus, the leading term corresponds to $|(\phat\kl\xa - \pst\xa)^\top \vst\khpl|$, which no longer suffers an extraneous $\sqrt{S}$ factor. Suppressing lower order terms, one can use Bernstein's inequality and AM-GM inequality to roughly bound:
\begin{align*}
|(\phat\kl\xa - \pst\xa)^\top (\vup\klhpl - \vst\khpl)| \lessapprox \BigOhTil{\frac{S \cdot \poly(H)}{N\kl\xa}} + \frac{1}{H}\, \pst\xa^\top (\vup\klhpl - \vst\khpl).
\end{align*}
The first term ultimately contributes $\BigOhTil{ S^2 A \poly(H)}$ to the regret, while the lower order term is recursively folded into future Bellman errors, incurring a multiplicative factor of $(1 + \frac{1}{H})^H \le e$ in the final regret bound.

\xhdr{Gap-dependent guarantees via clipping.}
We start by defining the clipping operator:
\begin{align*}
\clip{\epsilon}{x} := x\I(x \ge \epsilon). 
\end{align*} 
The intuition behind the clipping operator is that, for any $\gap > 0$, we have $\sum_{s=1}^k \frac{1}{\sqrt{s}} \lesssim \sqrt{k}$, but 
\begin{align*}
\sum_{s=1}^k \clip{\gap}{\frac{1}{\sqrt{s}}}\lesssim \frac{1}{\gap}.
\end{align*}
More precisely, we shall use the following lemma:
\begin{lemma}[Clipped Integral]\label{lem:clipped_integral} 
\vspace{0.1in}
Let $f(u) = \clip{\gap}{\sqrt{c/u}}$. Then,
\begin{align*}
\int_0^{N} f(u)\rmd u  \le 2\min\left\{\sqrt{c N},\,\frac{c}{\gap}\right\}.
\end{align*}
\end{lemma}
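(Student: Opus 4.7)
The plan is to unfold the clipping operator to reduce the integral to an elementary computation. By the definition $\clip{\gap}{x} = x \cdot \I(x \ge \gap)$, the integrand $f(u) = \sqrt{c/u}\cdot\I(\sqrt{c/u} \ge \gap)$ vanishes precisely when $u > c/\gap^2$ and equals $\sqrt{c/u}$ otherwise. Hence the effective domain of integration is $[0,M]$ with $M := \min\{N, c/\gap^2\}$, and
\[
\int_0^{N} f(u)\,\rmd u \;=\; \int_0^{M} \sqrt{c/u}\,\rmd u.
\]

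Next, I will evaluate this integral directly using the antiderivative $\int \sqrt{c/u}\,\rmd u = 2\sqrt{c u}$ (the integrand is integrable near $0$), which yields $\int_0^M \sqrt{c/u}\,\rmd u = 2\sqrt{cM}$. Substituting $M = \min\{N, c/\gap^2\}$ and pulling the minimum out of the square root gives
\[
2\sqrt{c M} \;=\; 2\min\!\left\{\sqrt{c N},\ \sqrt{c\cdot c/\gap^2}\right\} \;=\; 2\min\!\left\{\sqrt{cN},\ \frac{c}{\gap}\right\},
\]
which is the claimed bound.

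There is no real obstacle here: the proof is essentially two lines, consisting only of (i) observing that clipping truncates the domain of integration at $c/\gap^2$, and (ii) plugging into the standard antiderivative of $u^{-1/2}$. The only mild subtlety is to verify integrability at $u=0$, which is immediate since $\int_0^\epsilon u^{-1/2}\,\rmd u = 2\sqrt{\epsilon}$ is finite.
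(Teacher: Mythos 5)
Your proof is correct and follows essentially the same approach as the paper: identify that clipping truncates the integration domain at $c/\gap^2$, then integrate $\sqrt{c/u}$. Your version is slightly cleaner in that it unifies the two bounds by working directly with $M = \min\{N, c/\gap^2\}$ rather than deriving the $\sqrt{cN}$ and $c/\gap$ bounds separately.
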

\begin{proof} First, $\int_0^{N} f(u)\rmd u  \le \int_0^N \sqrt{c/u} \rmd u = 2\sqrt{cN}$. To use the gap dependent bound, set $N_0 = c/\gap^2$. Then, for $u > N_0$, we have $\sqrt{c/u} > \gap$, so $\clip{\gap}{\sqrt{c/u}} = 0$. Thus, $\int_0^{N} f(u)\rmd u \le \int_0^{N_0} f(u)\rmd u = 2\sqrt{cN_0} = \frac{c}{\gap}$.
\end{proof}
In Theorem~\ref{thm:clipped_cat_framework}, we therefore establish a regret guarantee which replaces the Bellman errors with clipped analogous. This means, that, when we integrate these errors in a confidence-sum argument (Theorem~\ref{thm:logT_catvi}), we convert  $\sqrt{T}$-bounds into analogues depending on $\frac{1}{\gap}$.

\subsection{The \supervisedtab{} framework}\label{ssec:supervised_tab_frame}
To avoid the complication associated with handling the differences between multiple base learners, we state two meta-theorems of an abstract setup in the same spirit of  Propositions~\ref{prop:lucb_regret}and \ref{prop:lucb_regret_rho}. These \supervisedtab{} theorems have more confidences than their \supervised{} analogous, but also automate much of the regret analysis. We adopt this slightly different organization because the quantities which arise in the refined analysis are somewhat more intricate, and aim to avoid the complication of specifying them for various base learners until the very end.

We begin with the three major components of the \supervised{} framework:
\begin{enumerate}
	\item At each episode $k$, we have a confidence-admissible Q-supervisor $\tuple_k=\left(\qup_k,\qlow_k,\activeset_k,\pigreed_k\right)$, where $\qup_k$ denotes the upper Q-estimate, $\qlow_k$ the lower estimate, $\activeset_k$ a collection of sets of candidate actions $\activeset\kh\ofx$ each containing the optimal $\argmax_{a} \qsth(x,a)$, and $\pigreed\kh\ofx \in \argmax \qup\kh\xa$; see Definition~\ref{defn:conf_admissible} for details.
	\item At each episode $k$, we produce a randomized Markovian policy $\boldpik$ which is Q-supervised by $\tuplek$; that is $\boldpik$ only selects actions $a$  at pairs $(x,h)$ with $a \in \activeset\kh\ofx$; see Definition~\ref{defn:compatible}.
	\item Finally, $\boldpik$ has $\rho$-bounded visitation ratios  with respect to $\pigreedk$; see Definition~\ref{defn:visitation}.
\end{enumerate} 
Let us summarize the above three points into a single condition:
\vspace{0.1in}
\begin{condition}\label{cond:first_three_conditions} \hspace{0.1em} The tuple $\tuplek$ and policies $\boldpik$ satisfy the above three numberd points. 
\end{condition}
Unlike Section~\ref{sec:our_framework_RL}, we also place additional conditions which mimic the optimistic value iteration performed by base learners in \supervisedc. Specifically,
\vspace{0.1in}
\begin{condition}[VI supervisors]\label{cond:VI_triple_log} \hspace{0.1em} Consider Q-supervisor $\tuple_k=(\qup_k,\qlow_k,\activeset_k,\pigreed_k)$. At every round $k$, there exists a model-estimate $(\rtil,\ptil,\bonusk)$,
for which the upper and lower Q-estimates $\qup\kh\xa$ and $\qlow\kh$ satisfy the following, value-iteration-like inequalities:
\begin{align}
    \qup\kh\xa &\le \min\{H,\rhatk\xa + \phatk\xa^\top \vup\khpl + \bonusk\xa\}
    \label{eq:CAT-triples-up}
    \\
    \qlow\kh\xa &\ge \max\{0,\rhatk\xa + \phatk\xa^\top \vlow\khpl -\bonusk\xa\},
    \label{eq:CAT-triples-low}
\end{align}
where
    $\vup\kHpl\ofx = \vlow\kHpl\ofx = 0$,
    $\vup\kh\ofx = \qup\kh(x,\pigreed\kh(x))$,
and
    $\vlow\kh\ofx =\max_{a\in\activeset_{k;h}\ofx}\qlow_k\xa
    $.
\end{condition} 
Next, we state a condition which captures the validity of the bonuses used in the value iteration.
\vspace{0.1in}
\begin{condition}[VI-bonuses]\label{cond:log_vi_bonus}
\hspace{0.1em}
For all $k\in [K]$, $h\in[H]$, $x\in\states$, $a\in\actions$,  $(\rhat_k,\phat_k,\bonus_k)$ satisfy:
\begin{align}
|\rhat_k\xa - \rst\xa + (\phat_k\xa - \pst\xa)^\top \vsthpl| \le \bonusk\xa.
\end{align}
\end{condition}
For our present discussion, \condref{cond:log_vi_bonus} will be used to establish our schematic regret bound, and \emph{not}, as in Theorem~\ref{thm:final_reg_guarantee}, to verify admissibility.

\xhdr{Error Bounds for Refined Regret.}
In order to state our refined regret meta-theorem, we introduce several definitions which facilitate upper bounds on the Bellman errors that arrive. First, we define
\begin{align}
\errlogk\xa := 2\bonusk\xa + \max_{V \in \R^{\states}: \|V\|_{\infty} \le H}|(\phatk\xa - \pst\xa)^\top V|   - \frac{\|V\|_{2,\pst\xa}^2}{H^2}, \label{eq:def_clipped_error_terms}
\end{align}
for all episodes $k \in [K]$ and triples 
$(x,a,h)\in \states\times \actions\times [H]$. 
Here
\begin{align*}
\|V\|_{2,\pst\xa}^2 &:= \sum_{x'}\pst(x'\mid x,a)\; V(x')^2.
\end{align*}
Further, we define several more ``nuanced" error terms (all determined by $\errlogk$).
We set 
\[ \errlogbark\xa := {\min\{H,\errlogk\xa\}}.\] 
Define \emph{effective-gap terms}:
\begin{align}
\gapcheck_h\xa := \max\left\{\frac{\gapmin}{8H^2},\,\frac{\gaph\xa}{8H}\right\}, \quad \gapcliplow := \frac{\gapmin}{64SAH^3} \label{eq:final_clipped_gaps}
\end{align}
Now, we introduce two new error terms by clipping  $\errlogbar$ and $\errlogbar^2$ by the terms from \eqref{eq:final_clipped_gaps}:
\begin{align}
\label{eq:clipped_bounds}
& \errlogclip\kh\xa := \clip{\gapcheckh\xa/4}{\errlogbark\xa},\\
 &\errlogsqclipk\xa:=  \clip{\gapcliplow}{\errlogbark\xa^2}. 
\label{eq:sq_clipped_bounds}
\end{align}
Typically, $\errlogbark\xa = \BigOhTil{1/\sqrt{\nk\xa}}$, so that $\errlogbark\xa^2 = \BigOhTil{1/\nk\xa}$. Hence, even though $\gapcliplow$ scales like $1/SAH$, and might be quite small, this only factors \emph{logarithmically} into the regret contribution of the terms $\errlogsqclip\supi\kh\xa$  bound due to the computation
\begin{align*}
\sum_{s=1}^k \clip{\gap} {\frac{1}{s}}\lesssim \log(\frac{1}{\gap}).
\end{align*}  

\xhdr{Our meta-theorems.} We now state our first meta-theorem, which gives a generic regret bound for sequences of policies $\boldpik$ satisfying the above conditions:
\onespace
\begin{theorem}[Refined Regret Decomposition for Tabular \supervisedtab]\label{thm:clipped_cat_framework} \hspace{0.1em} Let $\Alg$ be an algorithm which at each round $k$, produces randomized policies $\boldpik$ and Q-tuples satisfying Conditions \ref{cond:first_three_conditions},\ref{cond:VI_triple_log},\ref{cond:log_vi_bonus}. Then, for  $\errlogclip$ and $\errlogsqclipk(x_h,a_h)$ defined in \eqref{eq:clipped_bounds} and \eqref{eq:sq_clipped_bounds} above,
\begin{align*}
 \sum_{k=1}^K \vst - V^{\boldpik} &\lesssim H \rho \sum_{k=1}^K\Exp^{\boldpik}\left[\sum_{h=1}^H  \errlogclip\kh(x_h,a_h)  + \errlogsqclipk(x_h,a_h)\right],
\end{align*}
\end{theorem}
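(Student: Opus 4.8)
The plan is to follow the same skeleton as the proof of \Cref{prop:lucb_regret}, but carry the richer error bookkeeping of the \supervisedtab{} conditions through the recursion and then peel off the clipping at the end. First I would start from \Cref{prop:lucb_regret}, which under \Cref{cond:first_three_conditions} (confidence-admissibility, $\tuplek$-supervision, and $\rho$-bounded visitation ratios) already gives
\begin{align*}
\vst - V^{\boldpik} \le \sum_{h=1}^H \Exp^{\boldpik}[\bellmanup\kh(x_h,a_h)] + \sum_{h=1}^H \Exp^{\boldpik\oplus_h\pigreed_k}\Big[\sum_{\tau=h}^H \bellmanup_{k;\tau}(x_\tau,a_\tau) - \bellmanlow_{k;\tau}(x_\tau,a_\tau)\Big],
\end{align*}
and then invoke the visitation-ratio collapse of \Cref{prop:lucb_regret_rho} to replace the concatenated-policy expectations by $\rho(1+H)$ times an expectation under $\boldpik$ alone, of $\max$ over the two inner sums. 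The point of the refined argument is that we must \emph{not} immediately bound $\bellmanup, \bellmanup-\bellmanlow$ by the crude $\min(3H, 2\bonus + \max_V|(\phat-\pst)^\top V|)$ of \Cref{lem:belmman_bound}; instead we use \Cref{cond:VI_triple_log,cond:log_vi_bonus} to extract the Azar--Osband--Munos-style second-order decomposition sketched in \Cref{ssec:background_log}. Concretely, write $\qup\kh\xa - (\rst\xa + \pst\xa^\top\vup\khpl) \le \bonus_k\xa + (\phatk\xa - \pst\xa)^\top\vup\khpl$; split $\vup\khpl = \vst\khpl + (\vup\khpl - \vst\khpl)$; on the $\vst$-part use \Cref{cond:log_vi_bonus} to absorb it into $\bonusk$, and on the difference part apply Bernstein plus AM--GM to get a term bounded by $\errlogk\xa$ plus $\tfrac{1}{H^2}\|\vup\khpl - \vst\khpl\|_{2,\pst\xa}^2$ — but that last ``variance'' piece is designed to be re-absorbed recursively, which is exactly why $\errlogk$ is defined with the $-\|V\|_{2,\pst\xa}^2/H^2$ correction already built in. The recursion then produces the multiplicative $(1+1/H)^H\le e$ constant.

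Next I would introduce the clipping. Here I would lean on the two-sided structure: on the set $\optacts$ of (state,step,optimal-action) triples, the gap is zero and the contribution is controlled directly by $\gapmin/H^2$-type bounds, whereas on $\subacts$ we always have $\gap_h\xa>0$, so we can subtract a $\gaph\xa$-sized slack from each per-step Bellman error without changing the total regret by more than lower-order terms. This is the standard ``surplus clipping'' argument (cf. \cite{SimchowitzJamieson19}): because $\vst - V^{\boldpik}$ is a nonnegative quantity whose per-step decomposition telescopes, one may replace each $\errlogbark(x_h,a_h)$ by its clip $\clip{\gapcheck_h\xa/4}{\errlogbark\xa}$ at the cost of a term of order $\sum_h \gapcheck_h$, which is harmless; and one may likewise replace the second-order ``variance'' term $\errlogbark^2$ by $\clip{\gapcliplow}{\errlogbark^2}$ since below the threshold $\gapcliplow$ the accumulated variance is negligible (this is where the $1/SAH$-scale threshold in \eqref{eq:final_clipped_gaps} and the $\sum_s \clip{\gap}{1/s}\lesssim\log(1/\gap)$ computation enter, giving only logarithmic loss). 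Careful tracking shows the un-clipped residue is dominated by the clipped sums plus the $CH$ term already in \eqref{eq:corrupted_regret}, up to constants, which yields exactly
\begin{align*}
\sum_{k=1}^K \vst - V^{\boldpik} \lesssim H\rho \sum_{k=1}^K \Exp^{\boldpik}\Big[\sum_{h=1}^H \errlogclip\kh(x_h,a_h) + \errlogsqclipk(x_h,a_h)\Big].
\end{align*}

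The main obstacle I anticipate is the bookkeeping of the recursive re-absorption of the $\tfrac{1}{H^2}\|\vup\khpl-\vst\khpl\|_{2,\pst}^2$ terms through $H$ layers \emph{simultaneously} with the clipping: one must check that clipping each layer's Bellman surrogate does not destroy the telescoping identity that lets the variance term at step $h{+}1$ cancel against the bound at step $h$, and that the clip thresholds $\gapcheck_h\xa/4$ and $\gapcliplow$ are chosen small enough (relative to the $1/H$ and $1/H^2$ factors, and to $\gap_h$) that the ``dropped mass'' when an error falls below threshold is genuinely lower order rather than comparable to the regret. A secondary subtlety is that the $\max$ of the two inner sums coming out of \Cref{prop:lucb_regret_rho} must be handled termwise — using that $\bellmanup-\bellmanlow \ge \bellmanup$ is \emph{not} always true (Bellman errors can be negative, as the paper warns) — so I would bound the $\max$ by the sum and treat $\bellmanup$ and $\bellmanup-\bellmanlow$ in parallel, noting both admit the same $\errlogk$-type upper bound from \Cref{lem:belmman_bound}-style manipulations under \Cref{cond:VI_triple_log}. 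Everything else — pushing expectations through concatenated policies, the geometric constants — is routine given the earlier results.
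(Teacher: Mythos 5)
Your proposal identifies the right ingredients — the Azar--Osband--Munos variance decomposition hidden in the definition of $\errlogk$, the Simchowitz--Jamieson-style surplus clipping, the visitation-ratio collapse — but the order in which you apply them is not just different from the paper's, it is an order in which the argument does not go through. The critical point is that the clipping is \emph{not} a post-processing step that can be applied to the bound coming out of \Cref{prop:lucb_regret_rho}. You write that replacing $\errlogbark$ by $\clip{\gapcheck_h/4}{\errlogbark}$ ``costs a term of order $\sum_h\gapcheck_h$, which is harmless.'' That is false as stated: the dropped mass is $\Theta(\gapcheck_h)\asymp\gap_h/H$ per step, per episode, which accumulates to $\Omega(K\cdot\gapmin)$ over $K$ episodes --- comparable to or larger than the target $\log T$ gap-dependent regret, not lower order. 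More fundamentally, once you have applied \Cref{prop:lucb_regret}, the bound $\sum_h\Exp^{\boldpik}[\bellmanup\kh]$ contains contributions from steps at which the policy played the \emph{optimal} action, where the Bellman error is merely a confidence width of order $1/\sqrt{N}$ with no relation to any gap. There is no license to drop those contributions. The inequality ``regret $\lesssim$ sum of clipped Bellman errors'' is a genuinely stronger statement than \Cref{prop:lucb_regret}, and the paper proves it as a separate lemma (\Cref{lem:clipped_regret}, proved in \Cref{ssec:proof_of_reg_decompose_clip}) whose argument is structurally different.

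Concretely, the paper's proof of the clipped decomposition does not start from \Cref{prop:lucb_regret} at all. It starts from the identity $\vst - V^{\pi} = \Exp^{\pi}\bigl[\sum_h\I_{\calA_h}\bigl(\gap_h(x_h,a_h) + \qst_h - Q^\pi_h\bigr)\bigr]$, where $\calA_h$ is the event that $\pi$ plays optimally at steps $1,\dots,h-1$ but suboptimally at step $h$. On $\calA_h$ one has $\gap_h\ge\gapmin>0$, so one may replace $\gap_h$ by $2(\gap_h-\gapmin/2)$ for free, and then spread the $\gapmin/2$ slack across the $H$ subsequent Bellman errors — this is what justifies clipping each one at level $\gapmin/(8H)$ without losing anything. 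The half-clipping of \Cref{lemma:half_clipping} and the subsequent sharpening to full clipping via \Cref{claim:fundamental_gap_inequality} (which shows that whenever a Q-supervised policy picks a suboptimal action, either the current Bellman error or the future $\bellmanup-\bellmanlow$ terms under $\pigreedk$ exceed a constant fraction of $\gap_h$) are essential here, and they are entirely absent from your outline. Once the clipped decomposition \Cref{lem:clipped_regret} is in hand, the remaining chain of lemmas (\Cref{lem:refined_lucb_bellman_bound,lem:refined_vup_vlow_gap_bellman,lem:clipping_friendly_bellman_bound,lem:Bellman_error_bound_clip_final}) bounds the \emph{clipped} Bellman errors by $\errlogclip$ and $\errlogsqclip$, and only at the very end does the visitation ratio replace the concatenated-policy expectations; your placement of that last step is consistent with the paper. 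So: the decomposition-then-clip route you propose has a genuine gap, and patching it would in effect require reproving the clipped decomposition from the $\calA_h$-structure, which is what the paper does.
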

The above theorem is proven in Section~\ref{ssec:proof_of_clip_meta_theorem}. For our purposes, we can use the functional form of the error terms $\errlogk\xa$ (in terms of which $\errlogsqclip$ and $\errlogclip$ are defined) to automate the process of establishing a regret bound:  for certain, problem dependence constants $\constc_{1},\constc_{2} \ge 0$, the error terms satisfy the following inequality:
\begin{align}
\errlogk\xa &:= 2\bonusk\xa + \max_{V \in \R^{\states}: \|V\|_{\infty} \le H}|(\phatk\xa - \pst\xa)^\top V|   - \frac{\|V\|_{2,\pst\xa}^2}{H^2}\nonumber\\
&\le \sqrt{\frac{\constc_{1}}{\Nksp\xa}} + \frac{\constc_{2}}{\Nksp\xa}\label{eq:log_term_bound}
\end{align}
Finally, we we recalll the defined the gap-dependent term from Eq. \eqref{eq:gap_complexity}: 
\begin{align*}
\gapcomplexity := \sum_{\xa \in \subacts} \frac{H}{\gap\xa} + \frac{H^2|\optacts|}{\gapmin}
\end{align*} 
We can now state the main theorem, proved in Section~\ref{ssec:prop:logT_catvi}:
\vspace{0.1in}
\begin{theorem}[Automated Regret Bound for Tabular \supervisedtab ]
\label{thm:logT_catvi} \hspace{0.1em} Let $\psample \in (0,1]$,  $\Nsp \ge 1$, $\constc_1,\constc_2 \ge 1$. Let $\calE$ denote the event on which Conditions~\ref{cond:first_three_conditions}, \ref{cond:VI_triple_log},  and  \ref{cond:log_vi_bonus} hold,  that $\Nksp\xa$ are $(\psample,\Nsp)$-subsampled counts (Def.~\ref{defn:subsasmple_count}), and that the upper bound \eqref{eq:log_term_bound} holds for constants $\constc_1,\constc_2$. Then, following regret bound holds:
\begin{align*}
\psample\left(\sum_{k=1}^K \vst - V^{\boldpik}\right) &\lesssim SA \rho  \cdot (H^3\Nsp + H\constc_1 \ln (T) + \constc_2  (H^3 + H\ln T) ) \\
&\quad + \rho H \min\left\{ \sqrt{\constc_1 \psample\, SA T   }, \,\constc_1 \,\gapcomplexity\right\},
\end{align*}
\end{theorem}

\subsection{Proof of refined tabular regret (Theorem \ref{thm:main_tabular}) \label{ssec:proof_refined_supervisedc_tabular}}
In light of Theorem~\ref{thm:logT_catvi}, our regret bound follows by appropriately upper bounding the terms:
\begin{align*}
\errlog_{k,\leq\lst;gl}\xa &:= 2\bonus\klstgl\xa + \max_{V \in \R^{\states}: \|V\|_{\infty} \le H}|(\phat\kgl\xa - \pst\xa)^\top V|   - \frac{\|V\|_{2,\pst\xa}^2}{H^2}\\
\errlog\klsb\xa &:= 2\bonus\klsb\xa + \max_{V \in \R^{\states}: \|V\|_{\infty} \le H}|(\phat\klsb\xa - \pst\xa)^\top V|   - \frac{\|V\|_{2,\pst\xa}^2}{H^2}
\end{align*}
We begin with the following bound using the ``stochastic'' model estimates:
\vspace{0.1in}
\begin{lemma}\label{lem:stoch_uniform_conc_refined}\hspace{0.1em}
The following bound holds with probability $1 - \delta/8$ simultaneously, for all $k \in [K]$, $\xa \in \states \times \actions$, learners $\ell \in [\lmax]$, and value functions $V \in \R^{\states}$ with $\|V\|_{\infty} \le H$:
\begin{align*}
\left|\left(\phatstoch\kgl\xa - \pst\xa\right)^\top V\right| 
    &\le \frac{\|V\|_{2,p}^2}{H^2} + \frac{2SH^2 \ln(64S^2 A T^2/\delta)}{N\kgl\xa}.\\
\left|\left(\phatstoch\klsb\xa - \pst\xa\right)^\top V\right| &\le \frac{\|V\|_{2,p}^2}{H^2} + \frac{2SH^2 \ln(64S^2 A T^3/\delta)}{N\klsb\xa},
\end{align*}
Denote this event $\eventbern$.
\end{lemma}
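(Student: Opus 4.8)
\textbf{Overall approach.} The plan is to prove a Bernstein-type concentration inequality for the idealized stochastic transition estimates, uniformly over value functions $V$ with $\|V\|_\infty \le H$, and then to absorb the variance term into the $\frac{\|V\|_{2,p}^2}{H^2}$ term via an AM--GM step. The key point is that the idealized sequences $\xstoch\kh$ are, by construction (see \Cref{ssec:generic_simulation_subsampling}), genuine stochastic draws from the nominal transition kernel $\pst(x\kh,a\kh)$ conditioned on $\calF\kh$, so standard martingale concentration applies with no corruption to worry about. I would first establish the bound for a single \emph{fixed} $V$, then union-bound over a net of value functions, and finally pass from the net to all $V$ by a Lipschitz argument; since the error term already carries an $S/N$ factor, the loss from the covering is only logarithmic and is absorbed into the stated constant.

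\textbf{Key steps, in order.} (1) Fix $\xa$ and a value function $V$ with $\|V\|_\infty \le H$. For the global estimate, write $\phatstoch\kgl\xa - \pst\xa$ as $\frac{1}{N\kgl\xa}\sum_{(j,h)\in\calK} (\bolddel_{\xstoch\jhpl} - \pst(x\jh,a\jh))$ restricted to visits of $\xa$, which is an average of $N\kgl\xa$ martingale-difference terms, each with conditional variance $\le \|V\|_{2,\pst\xa}^2 \le H^2$ and bounded by $H$ in magnitude. (2) Apply Freedman's / Bernstein's inequality for martingales to get, with probability $1-\delta'$, a bound of the form $\sqrt{\frac{2\|V\|_{2,\pst\xa}^2 \ln(1/\delta')}{N\kgl\xa}} + \frac{2H\ln(1/\delta')}{N\kgl\xa}$. (3) Apply AM--GM to the square-root term: $\sqrt{\frac{2\|V\|_{2,p}^2 \ln(1/\delta')}{N}} \le \frac{\|V\|_{2,p}^2}{H^2} + \frac{H^2 \ln(1/\delta')}{2N}$, so the whole bound collapses to $\frac{\|V\|_{2,p}^2}{H^2} + \frac{c H^2 \ln(1/\delta')}{N\kgl\xa}$ for an absolute constant $c$. (4) Union-bound: take an $\epsilon$-net $\mathcal{N}_\epsilon$ of $\{V : \|V\|_\infty \le H\}$ in the $\ell_\infty$ metric, of size $\le (3H/\epsilon)^S$, with $\epsilon = 1/T$; for any $V$ there is $V'\in\mathcal{N}_\epsilon$ with $\|V-V'\|_\infty \le \epsilon$, and $|(\phatstoch\kgl\xa - \pst\xa)^\top (V-V')| \le 2\epsilon$ and $\big|\|V\|_{2,p}^2 - \|V'\|_{2,p}^2\big|$ is controlled since $\|V\|,\|V'\|\le H$. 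The net contributes $\ln|\mathcal{N}_\epsilon| \le S\ln(3HT)$ to the logarithm, which is what produces the $S\ln(64 S^2 A T^2/\delta)$ factor after also union-bounding over $\xa$ ($SA$ choices) and $k$ ($K\le T$ choices). (5) For the subsampled estimates $\phatstoch\klsb$, repeat verbatim but additionally union-bound over the $\lmax = \lceil\log T\rceil \le T$ base learners, inflating the logarithm by one more factor of $T$, which explains the $T^3$ versus $T^2$ in the two displayed bounds. Finally, split the total failure probability $\delta/8$ appropriately between the global and subsampled parts (e.g., $\delta/16$ each).

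\textbf{Main obstacle.} The one genuinely delicate point is the covering step: one must verify that the discretization error in the \emph{variance} term $\|V\|_{2,p}^2$ does not spoil the inequality, since it appears on the right-hand side divided by $H^2$ rather than being the dominant term. This is handled by choosing $\epsilon$ small enough ($\epsilon = 1/T$, or even $\epsilon = 1/(HT)$) that $\big|\|V\|_{2,p}^2 - \|V'\|_{2,p}^2\big| \le 2H\epsilon \le 1/T$, which is absorbed into the $\frac{SH^2\ln(\cdot)}{N}$ term (recalling $N\kgl\xa, N\klsb\xa \le T$). A secondary bookkeeping subtlety is ensuring the Bernstein bound is stated in a form valid for \emph{all} $N\kgl\xa \ge 1$ simultaneously within a single episode; this follows from the standard trick of applying the maximal (anytime) version of Freedman's inequality to the martingale indexed by visit count, or equivalently by a union bound over the at most $T$ possible values of the count. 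Everything else is routine and mirrors the proof of \Cref{lem:global_idealized_stochastic_tabular} with Bernstein in place of Azuma--Hoeffding.
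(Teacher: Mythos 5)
Your proof is correct, but it takes a genuinely different route from the paper's. The paper proves this via \Cref{lem:uncorrupted_multinomial_concentration}: instead of fixing $V$ and covering the value-function ball, it writes
$n\lvert(\phat_n-p)^\top V\rvert \le \sum_{x'\in\states} \lvert V(x')\rvert\,\bigl\lvert\sum_{i=1}^n \I_{x_i=x'}-p_{x'}\bigr\rvert$,
applies Azuma--Bernstein \emph{per next-state coordinate} $x'$ with $(b,\sigma^2)=(1,p_{x'})$ and a union bound over the $S$ coordinates, and then Cauchy--Schwarz turns $\sum_{x'} V(x')\sqrt{p_{x'}}$ into $\sqrt{S}\,\|V\|_{2,p}$. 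Once the $S$ per-coordinate bounds hold, the inequality is \emph{deterministically} uniform over every $V$ with $\|V\|_\infty\le H$ -- there is nothing to discretize, so the delicacy you flag in your ``main obstacle'' paragraph simply does not arise. Both approaches cost the same factor of $S$: yours pays $S\ln(3HT)$ from the size of the $\ell_\infty$-net, while the paper pays $\sqrt{S}$ from Cauchy--Schwarz and $\ln S$ from the per-coordinate union bound, which combine to the same $O\bigl(SH^2\ln(SAT/\delta)/N\bigr)$ after AM--GM. Your argument is a perfectly serviceable alternative -- indeed it mirrors how the paper handles the coarser uniform bound in \Cref{lem:global_idealized_stochastic_tabular} -- but the per-coordinate decomposition is cleaner precisely because the Bernstein inequality is applied only to scalar indicator martingales, after which the passage to arbitrary $V$ is linear algebra rather than a net-and-Lipschitz step; one small bookkeeping note is that a single increment $(\bolddel_{x'}-p)^\top V$ is bounded by $2H$ rather than $H$ when $V$ is allowed to be signed, though this only shifts a constant.
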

\begin{proof}
The proof uses a careful application of the Azuma-Bernstein inequality (Lemma~\ref{lem:azuma_bernstein}), specialized to multinomial-martingale sequences (Lemma~\ref{lem:uncorrupted_multinomial_concentration}). A full proof is in Section~\ref{app:stoch_uniform_conc_refined_proof}.\end{proof}
Recall the events $\eventsubsmp$ (from Lemma~\ref{lem:local_subsampled_corruptions}), $\eventglest$ (from Lemma~\ref{lem:global_idealized_stochastic_tabular}), $\eventsubest$ (from Lemma~\ref{lem:local_concentration}), and  $\eventsample$ (from Lemma~\ref{lem:good_subsampling}). By relating the stochastic estimates to the subsampled ones, we have
\vspace{0.1in}
\begin{lemma}
\label{lem:uniform_conc_refined} \hspace{0.1em}
On $\eventbern \cap \eventsubsmp$, the following bound holds for all $k \in [K]$, $\xa \in \states \times \actions$, base learners $\ell \ge \lst$, and value-shaped functions $V \in \R^{\states}$ with $\|V\|_{\infty} \le H$:
\begin{align*}
\left|\left(\phat\kgl\xa - \pst\xa\right)^\top V\right| 
    &\le \frac{\|V\|_{2,p}^2}{H^2} + \frac{2SH^2 \ln(64S^2 A T^2/\delta)}{N\kgl\xa}+  \frac{CH^2  }{N\kgl\xa}\\
\left|\left(\phat\klsb\xa - \pst\xa\right)^\top V\right| &\le \frac{\|V\|_{2,p}^2}{H^2} + \frac{2SH^2 \ln(64S^2 A T^3/\delta)}{N\klsb\xa} + \frac{2H^2 \ln\frac{16\ell^2}{\delta} }{N\klsb\xa}.
\end{align*}\end{lemma}
\begin{proof}[Proof sketch] 
One controls the difference between the partially-corrupted estimates and the purely stochastic estimates analogously to the proofs of Lemma~\ref{lem:local_concentration} and Lemma~\ref{lem:global_concentration}.\end{proof}

\begin{corollary} On $\eventbern \cap \eventsubsmp$, for all $\ell > \lst$, we have:
\begin{align*}
\errlog_{k,\leq\lst;gl} &\le  4 H\sqrt{\frac{2\ln (64 SAHT^2/\delta)}{N\kgl\xa}}+ \frac{ 3CH^2 +  2SH^2 \ln(64S^2 A T^2/\delta)}{N\kgl\xa},\\
\errlog\klsb\xa &\le 4 H\sqrt{\frac{ 2\ln (64 SAHT^3/\delta)}{N\klsb\xa}}+ \frac{6H^2 \ln\frac{16\ell^2}{\delta} + 2SH^2 \ln(64S^2 A T^3/\delta)}{N\klsb\xa},
\end{align*}
\end{corollary}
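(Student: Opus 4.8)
The plan is to derive both inequalities as direct consequences of \Cref{lem:uniform_conc_refined}, exploiting the fact that the variance-correction term $-\|V\|_{2,\pst\xa}^2/H^2$ built into the definition of $\errlog$ (see \eqref{eq:def_clipped_error_terms}) was chosen precisely so as to cancel the Bernstein-type term on the right-hand sides of \Cref{lem:uniform_conc_refined}. Throughout I would work on the event $\eventbern \cap \eventsubsmp$, on which the conclusions of \Cref{lem:uniform_conc_refined} hold.

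For the global term, start from the definition $\errlog_{k,\leq\lst;gl}\xa = 2\bonus\klstgl\xa + \max_{\|V\|_{\infty} \le H}\bigl(|(\phat\kgl\xa - \pst\xa)^\top V| - \|V\|_{2,\pst\xa}^2/H^2\bigr)$. The first bound of \Cref{lem:uniform_conc_refined} holds for \emph{every} $V\in\R^{\states}$ with $\|V\|_{\infty}\le H$; subtracting $\|V\|_{2,\pst\xa}^2/H^2$ from both of its sides and then maximizing over $V$ makes the variance terms cancel identically, leaving $\max_{\|V\|_{\infty}\le H}(\cdots)\le \bigl(2SH^2\ln(64S^2AT^2/\delta)+CH^2\bigr)/N\kgl\xa$. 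For the bonus, I would drop the outer $\min\{H,\cdot\}$ in the definition of $\bonus\klgl$ from \Cref{ssec:tabular_algorithm} and use $2^{\lst}\le 2C$ (immediate from $\lst=\inf\{\ell:2^{\ell}\ge C\}$), obtaining $2\bonus\klstgl\xa \le 4H\sqrt{2\ln(64SAHT^2/\delta)/N\kgl\xa}+4CH^2/N\kgl\xa$. Adding the two estimates and collecting the $O(CH^2)/N\kgl\xa$ and $O(SH^2\ln(64S^2AT^2/\delta))/N\kgl\xa$ contributions yields the claimed bound on $\errlog_{k,\leq\lst;gl}$.

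The subsampled term is treated verbatim, with the second bound of \Cref{lem:uniform_conc_refined} (valid for all $\ell\ge\lst$ on $\eventbern\cap\eventsubsmp$) replacing the first: maximizing over $V$ gives $\max_{\|V\|_{\infty}\le H}(\cdots)\le \bigl(2SH^2\ln(64S^2AT^3/\delta)+2H^2\ln(16\ell^2/\delta)\bigr)/N\klsb\xa$, while dropping the $\min\{H,\cdot\}$ in $\bonus\klsb$ gives $2\bonus\klsb\xa \le 4H\sqrt{2\ln(64SAHT^3/\delta)/N\klsb\xa}+4H^2\ln(16\ell^2/\delta)/N\klsb\xa$. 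Summing the two inverse-count remainders, $4H^2\ln(16\ell^2/\delta)+2H^2\ln(16\ell^2/\delta)=6H^2\ln(16\ell^2/\delta)$, and adding $2SH^2\ln(64S^2AT^3/\delta)$, recovers the stated inequality.

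I do not expect any genuine obstacle: this corollary is pure bookkeeping, and all the analytic content lives upstream --- in \Cref{lem:uniform_conc_refined} (and, behind it, the Azuma--Bernstein concentration of \Cref{lem:stoch_uniform_conc_refined}) together with the deliberate inclusion of the $-\|V\|_{2,\pst\xa}^2/H^2$ term in \eqref{eq:def_clipped_error_terms}. The only points requiring mild care are (i) taking the maximum over $V$ \emph{after} invoking the uniform bound, so that the cancellation of the variance term is legitimate; (ii) keeping the logarithmic arguments straight between the global and subsampled estimates (the extra factor of $T$ in the log, and the additional $\ln(16\ell^2/\delta)$ term in the subsampled case coming from the union bound over layers); and (iii) the elementary estimate $2^{\lst}\le 2C$ used for the global bonus.
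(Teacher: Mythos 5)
Your argument follows the paper's own proof essentially line for line: invoke \Cref{lem:uniform_conc_refined} (note the paper cites \Cref{lem:stoch_uniform_conc_refined} in the body of the proof, but the $CH^2/N\kgl\xa$ remainder makes clear that the corruption-sensitive \Cref{lem:uniform_conc_refined} is what is actually used, as you correctly observed), cancel the $\|V\|_{2,\pst\xa}^2/H^2$ term against the Bernstein term before maximizing over $V$, drop the outer $\min\{H,\cdot\}$ from the bonus, and add. Your remark (i) about taking the supremum over $V$ only after subtracting the variance term is the right way to read \eqref{eq:def_clipped_error_terms}, and the subsampled computation $4+2=6$ is exact.

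One quantitative point you gloss over: for the global term, you estimate $2^{\lst}\le 2C$, which gives $2\bonus_{k,\lst;gl}\xa \le 4H\sqrt{2\ln(64SAHT^2/\delta)/N\kgl\xa} + 4CH^2/N\kgl\xa$, and after adding the $CH^2/N\kgl\xa$ from the concentration lemma you obtain $5CH^2/N\kgl\xa$, not the $3CH^2/N\kgl\xa$ stated in the corollary. The paper's own proof gets $3$ only because it substitutes $2^\lst = C$, which is not generally true ($2^\lst$ can be as large as $2C-1$); your $2^{\lst}\le 2C$ is the rigorous inequality, so the constant $5$ is actually the correct one. Hiding this behind the phrase ``collecting the $O(CH^2)/N\kgl\xa$ contributions yields the claimed bound'' papers over the fact that the bound you derive is strictly weaker than the one stated. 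You should either flag that the constant should be $5$ rather than $3$ (this is harmless downstream, since the proof of \Cref{thm:main_tabular} only uses these terms up to $\lesssim$), or match the paper by asserting $2^{\lst}=C$ — but then you should say you are making that assumption, since it contradicts your own (correct) use of $2^{\lst}\le 2C$.
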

\begin{proof}
Recall that for $\errlog_{k,\leq\lst;gl}\xa$, we have:
\begin{align*}
\errlog_{k,\leq\lst;gl}\xa &:= 2\bonus_{k,\lst;gl}\xa + \max_{V \in \R^{\states}: \|V\|_{\infty} \le H}|(\phat\kgl\xa - \pst\xa)^\top V|   - \frac{\|V\|_{2,\pst\xa}^2}{H^2}\\
& \leq 2\bonus_{k,\lst;gl}\xa + \frac{2SH^2 \ln(64S^2 A T^2/\delta)}{N\kgl\xa}+  \frac{CH^2  }{N\kgl\xa},
\end{align*} where in the first inequality we use Lemma~\ref{lem:stoch_uniform_conc_refined}. 
Substitute the form of  the global bonus:
\begin{align*}
\bonus\klgl\xa&=\min\left\{H,\left( 2H\sqrt{\frac{ 2\ln (64 SAHT^2/\delta)}{N\kgl\xa}}  + \frac{2^{\ell}H^2}{N\kgl\xa}\right)\right\} 
\end{align*} with $2^\lst = C$ concludes the first part. 
Using the second inequality in Lemma~\ref{lem:stoch_uniform_conc_refined} and the subsampled bonus definition concludes second part of the corollary. 
\end{proof}

\begin{proof}[Proof of Theorem~\ref{thm:main_tabular}]
We condition on the event $\eventsample \cap \eventbern \cap \eventsubsmp \cap \eventglest \cap \eventsubest$ below. The failure probability of the event is bounded by $\delta$.
By Lemma~\ref{lem:policy_decomp}, we have 
\begin{align*}
 \sum_{k} \left(\vst - \valf^{\pimasterk}\right)=  \qlelst \sum_{k }\left(\vst-\valf^{\pimaster\klelst}\right)+ \sum_{\ell>\lst}\ql \sum_k \left(\vst-\valf^{\pimaster\kl}\right),
\end{align*}
We first bound the regret corresponding to $\pimaster\klelst$. For for $\pimaster\klelst$, its associated error term is
\begin{align*}
\errlog_{k,\leq\lst;gl} \le  4 H\sqrt{\frac{2\ln (64 SAHT^2/\delta)}{N\kgl\xa}}+ \frac{ 3CH^2 +  2SH^2 \ln(64S^2 A T^2/\delta)}{N\kgl\xa}
\end{align*} Visitation ratios $(\rho\klelst, \rho\kl)$ are upper bounded by Lemma~\ref{lem:visitation_bound}. Since, Conditions \ref{cond:first_three_conditions},\ref{cond:VI_triple_log},\ref{cond:log_vi_bonus} in the event $ \eventglest \cap \eventsubest$, by Theorem~\ref{thm:logT_catvi} with 
$\alpha_1 = 32 H^2 \ln(64SAHT^2/\delta)$, $\alpha_2 = 3CH^2 +  2SH^2 \ln(64S^2 A T^2/\delta)$,  $(\psample,\Nsp)= \left(q_{\le \lst}, 4H\ln\frac{32 eHSA \lmax}{\delta}+ 2CH\right)$, $\rho \klelst \le 4 e C H \log (2C)$:
\begin{align*}
&\sum_{k=1}^K  \qlelst \sum_{k }\left(\vst-\valf^{\pimaster\klelst}\right) \\
& \lesssim SAH C \ln(K) \left( H^4 \ln(SAHT/\delta) +  H^3\ln(T)(CH^2+SH^2\ln(SAT/\delta))  \right) \\
& \qquad + \left(  CH^2\ln(K) \min\left\{ \sqrt{H^2  \ln (SAHT/\delta) SAT}, H^2 \ln (SAHT/\delta)\gapcomplexity   \right\} \right)  \\
& \lesssim \min\left\{ CH^3  \sqrt{ SAT\ln (SAHT/\delta) }\ln(K), \quad CH^4 \cdot \gapcomplexity \cdot  \ln^2(SAHT/\delta)\ln(K)   \right\}  \\
& \qquad + H^5SA C \ln^2(HSAT/\delta) + H^6SA C^2 \ln^2 (T) + H^6S^2A C\ln^3(SAT/\delta)
\end{align*}
Similarly, for $\pimaster\kl$, we use its associated error term $\errlog\klsb\xa$ instead. Applying again Theorem~\ref{thm:logT_catvi} with:
$\alpha_1 = 32H^2 \ln(64SAHT^3/\delta)$, $\quad \alpha_2 = 6H^2 \ln\frac{16\ell^2}{\delta} + 2SH^2 \ln(64S^2 A T^3/\delta)$, $(\psample,\Nsp) = (\ql, 4H \ln\frac{32 eHSA \lmax}{\delta}+ 2CH q_{\ell})$, $\rho\kl = 4eH\ell$,  we have:
\begin{align*} 
&\ql \sum_k \left(\vst-\valf^{\pimaster\kl}\right) \\ 
& \lesssim SAH \ln(K) \left( H^4 \ln\frac{ HSA }{\delta}+ CH^4 + H^3\ln^2(SAHT/\delta) + H^3\log(T) (H^2\ln(\ln(K)^2/\delta) + SH^2\ln(SAT/\delta)) \right) \\
& \qquad + H^2\ln(K)\min\left\{ \sqrt{ H^2\ln(SAHT/\delta) SAT },  H^2\ln(SAHT/\delta)  \gapcomplexity \right\} \\
& \lesssim \min\left\{  H^3\sqrt{SAT \ln(SAHT/\delta)}\ln(K),\quad H^4 \cdot \gapcomplexity \cdot  \ln^2(SAHT/\delta)  \right\} \\
& \qquad + H^5 SA \ln^2(SAHT/\delta) + H^5SA C\ln(T) + H^4SA \ln^3(SAHT/\delta) + H^6S^2A\ln^3(SAT/\delta)
\end{align*}
Adding the regret bound for different policies together with the fact that there are at most $\ln(T)$ many base learners, we get:
\begin{align*}
& \sum_{k} \left(\vst - \valf^{\pimasterk}\right) \\
 & \leq \min\left\{ CH^3  \sqrt{ SAT\ln (SAHT/\delta) }\ln(T)^2, \quad CH^4 \cdot \gapcomplexity \cdot  \ln^3(SAHT/\delta)   \right\} \\
 & \qquad  + H^5SA C \ln^2(HSAT/\delta) + H^6SA C^2 \ln^2 (T) + H^6S^2A C\ln^3(SAT/\delta)\\
 & \qquad + H^5 SA \ln^2(SAHT/\delta) + H^5SA C\ln(T) + H^4SA \ln^3(SAHT/\delta) + H^6S^2A\ln^3(SAT/\delta)\\
 & \lesssim \min\left\{ CH^3  \sqrt{ SAT\ln (SAHT/\delta) }\ln^2(T), \quad CH^4 \cdot \gapcomplexity \cdot  \ln^3(SAHT/\delta)   \right\} \\
 & \qquad +  H^6S^2 A C \ln^3(HSAT/\delta)   +  H^6 SAC^2 \ln^2(T), 
\end{align*} 
Substituing in $\deltaeff :=  \delta/HSAT$ concludes gives the following bound, which concludes the proof
\begin{align*}
 &\lesssim \min\left\{ CH^3  \sqrt{ SAT\ln \tfrac{1}{\deltaeff} }\ln^2(T), \quad CH^4 \cdot \gapcomplexity \cdot  \ln^3 \tfrac{1}{\deltaeff}   \right\} \\
 & \qquad +  H^6 A \left(S^2 C \ln^3 \tfrac{1}{\deltaeff} + SC^2\ln^2(T)\right).
 \end{align*}
\end{proof}

\subsection{Proof of the meta-theorem for logarithmic regret (Theorem~\ref{thm:clipped_cat_framework})}
\label{ssec:proof_of_clip_meta_theorem}

\xhdr{Clipping the Bellman Errors.}
	Recall the gap definitions from Eq.~\ref{eq:final_clipped_gaps}
	\begin{align*}
	\gapcheck_h\xa = \max\left\{\frac{\gapmin}{8H^2},\,\frac{\gaph\xa}{8H}\right\}, \quad \gapcliplow = \frac{\gapmin}{64SAH^3}.
	\end{align*}
	Moreover, recall the definition of the Bellman Error
	\begin{align*}
	\bellmanup\kh\xa
	    &=\qup\kh\xa-\left(r^{\star}\xa+p^{\star}\xa\cdot
	        \vup_{k;h+1}\right),
	    &\vup\kh(x)=\qup\kh(x,\pigreed\kh(x)),
	    \\
	\bellmanlow\kh\xa
	    &=\qlow\kh\xa-\left(r^{\star}\xa+p^{\star}\xa\cdot
	       \vlow_{k;h+1}\right).
	&\vlow\kh(x)=\max_{a\in\activeset\kh\ofx}\qlow\kh(x,a)
\end{align*}
For simplicity, we also introduce 
\begin{align*}
\bellmanlu\kh\xa:= \bellmanup\kh\xa - \bellmanlow\kh\xa.
\end{align*}
 In this section, we develop bounds in terms of their clipped analogues. 	We define the associated \emph{clipped} Bellman errors as follows:
	\begin{align}
	&\advfullclip\kh\xa = \clip{ \gapcheck_h\xa }{\adv\kh\xa},~ \advlucbfullclip\kh\xa = \clip{ \gapcheck_h\xa }{ \advlucb\kh\xa }. \label{eq:def_clipped_bellman}
	\end{align}
	We now have the following lemma that shows that the per-episode regret can be upper bounded by the clipped Bellman errors.

\begin{lemma}\label{lem:clipped_regret} \hspace{0.1em} Suppose that at a round $k$, $\tuplek := (\qup_k,\qlow_k,\activeset_k,\pigreed_k)$ is a confidence-admissible Q-supervisor, and that $\boldpi_k$ is Q-supervised by $\tuplek$. Then, 
\begin{align*}
\vst - \valf^{\boldpi_k} &\le 2\sum_{h=1}^H \Exp^{\boldpi_k}\left[\advfullclip\kh(x_{h},a_{h})\right] + 20 e{\sum_{\hbar=1}^H \Exp^{\boldpi_k\oplus_{\hbar}\pigreed_k}\left[\sum_{h = \hbar}^{H}\advlucbfullclip\kh(x_h,a_h) \right]}.
\end{align*}
\label{lem:surplus_log}
\end{lemma}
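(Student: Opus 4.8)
\textbf{Proof plan for Lemma~\ref{lem:clipped_regret}.}

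The plan is to start from the unclipped decomposition of Proposition~\ref{prop:lucb_regret}, which already gives
\[
\vst - \valf^{\boldpi_k} \le \sum_{h=1}^H \Exp^{\boldpi_k}\left[\bellmanup\kh(x_h,a_h)\right] + \sum_{\hbar=1}^H \Exp^{\boldpi_k\oplus_{\hbar}\pigreed_k}\brk*{\sum_{\tau=\hbar}^H \bellmanlu_{k;\tau}(x_\tau,a_\tau)},
\]
and then argue term-by-term that the unclipped Bellman errors can be replaced by their clipped analogues at the cost of a small multiplicative constant and, crucially, an additional ``recursive'' contribution that can be folded back into the surplus. The key observation is the standard clipping inequality: for any $x \ge 0$ and any threshold $\epsilon$, one has $x \le \clip{\epsilon}{x} + \epsilon$. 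So replacing $\bellmanup\kh$ by $\advfullclip\kh$ incurs an additive error of $\gapcheck_h\xa = \max\{\gapmin/8H^2,\,\gaph\xa/8H\}$ at each $(x,a,h)$. The heart of the argument is then to show that the accumulated ``slack'' $\sum_h \gapcheck_h(x_h,a_h)$ along a trajectory is itself controlled — either it is absorbed because the policy is playing near-optimal actions (so the gaps are genuinely small and the regret is small), or the surplus terms dominate. This is the ``clipping trick'' as used by Simchowitz--Jamieson \cite{SimchowitzJamieson19}: the threshold $\gapcheckh\xa$ is chosen precisely so that whenever the raw Bellman error exceeds it, the action is provably on its way to being eliminated, and whenever it does not, the clipped error is zero and the slack is a genuine lower-order term.

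Concretely, I would proceed as follows. First, using the confidence-admissibility of $\tuplek$ and the fact that $\boldpi_k$ is Q-supervised, establish a ``surplus clipping'' lemma in the spirit of \cite[Lemma B.3 or similar]{SimchowitzJamieson19}: along any realized trajectory, $\sum_{h}\bellmanup\kh(x_h,a_h)$ is bounded by a constant multiple of $\sum_h \advfullclip\kh(x_h,a_h)$ plus a term proportional to the true value suboptimality $\vst - \valf^{\boldpi_k}$ restricted to that trajectory — the idea being that the leftover $\gapcheckh$ slack, when summed, is at most $O(1/H)$ of the trajectory's own regret (because $\gapcheckh \le \gaph/8H$ when gaps are large) plus a $\gapmin/8H$-type term that sums to at most $\gapmin/8 \le$ regret$/8$. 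Second, do the analogous thing for the lower-error terms $\bellmanlu_{k;\tau}$ appearing in the $\boldpi_k\oplus_\hbar\pigreed_k$ expectations; here one uses that along the concatenated trajectory, after switching to $\pigreed_k$, the actions are greedy so their surplus is governed by $\advlucbfullclip$, and the standard recursive unrolling (as in the proof of Proposition~\ref{prop:lucb_regret}) pushes the clipped lower errors all the way down. Third, collect the leftover regret-proportional terms on the left-hand side: since each contributes at most a small fraction (say $1/8$ or $1/(20e)$) of $\vst - \valf^{\boldpi_k}$, moving them across and dividing yields the stated bound with constants $2$ and $20e$.

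The main obstacle I expect is the bookkeeping in the second step — handling the clipped \emph{lower} Bellman errors inside the fictitious concatenated-trajectory expectations $\Exp^{\boldpi_k\oplus_\hbar\pigreed_k}$. In the unclipped proof (Proposition~\ref{prop:lucb_regret}) the recursion telescopes cleanly because $\bellmanup - \bellmanlow$ plus the transition term reproduces $\vup\khpl - \vlow\khpl$ exactly; once we clip, this exact telescoping breaks and one must control the discrepancy between $\clip{\gapcheckh}{\bellmanup - \bellmanlow}$ and the quantity that would make the recursion close. The resolution is that the clipping threshold $\gapcheckh\xa$ is uniform enough (it only depends on $\gaph\xa$ and $\gapmin$, not on $k$) that the slack incurred at each step is a fixed $\gapcheckh$, and summing these over $\tau = \hbar,\dots,H$ and then over $\hbar$ contributes at most $H^2 \cdot \max_h \gapcheckh \lesssim H\cdot(\gaph/8 + \gapmin/8H)$, which is again dominated by the trajectory regret and by $\gapmin/8 \le$ regret. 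Getting the constant $20e$ correct requires carefully tracking the $(1+1/H)^H \le e$ factor from the recursion together with the $H$ factor from the $\hbar$-sum, but this is mechanical once the structural claim — that clipping-slack is always charge-able against either the genuine regret or a lower-order $\gapmin$ term — is in place.
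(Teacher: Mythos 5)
Your high-level intuition (this is the Simchowitz--Jamieson clipping technique) is on target, but the proof strategy you propose has a genuine gap that the paper's argument is specifically structured to avoid.

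The problem is in your proposed starting point. You begin from Proposition~\ref{prop:lucb_regret} and replace each unclipped Bellman error by $\clip{\gapcheck_h\xa}{\cdot} + \gapcheck_h\xa$, then plan to absorb the accumulated slack $\sum_h \gapcheck_h(x_h,a_h)$ into the left-hand side by observing it is a small fraction of the per-episode regret. This fails because the clipping threshold $\gapcheck_h\xa = \max\{\gapmin/8H^2,\,\gaph\xa/8H\}$ is strictly positive even when $a$ is \emph{optimal}, in which case $\gaph\xa = 0$ and $\gapcheck_h\xa = \gapmin/8H^2 > 0$. Along a trajectory the slack at the optimal-action steps alone sums to roughly $\gapmin/8H$, which is a fixed positive quantity. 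Your claim that this is ``$\le$ regret$/8$'' is incorrect: the per-episode regret $\vst - \valf^{\boldpi_k}$ is \emph{not} lower-bounded by $\gapmin/8H$ or any fixed positive constant — e.g., a policy that plays an action with gap $\gapmin$ only with probability $\epsilon$ has regret $\epsilon\gapmin$, which is arbitrarily small. Hence the ``move to the LHS and divide'' step leaves a $\gapmin$-dependent additive residual that does not appear in the lemma's conclusion; the bound you would end up proving is strictly weaker. (The same issue recurs, amplified, in the surplus sum: the concatenated-trajectory slack $\sum_\hbar\sum_\tau \gapcheck_\tau$ has $\Theta(H^2)$ terms and contributes an additional $\gapmin/8$, again not absorbable.)

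The paper's proof starts from a different identity than Proposition~\ref{prop:lucb_regret}. It decomposes the regret via the mutually-exclusive events $\Ah$ (``$h$ is the first stage at which a suboptimal action is played''), yielding $\vst - \valf^{\pi} = \Exp^{\pi}\bigl[\sum_h \I_{\calA_h}\bigl(\gaph(x_h,a_h) + \qst_h(x_h,a_h) - Q^\pi_h(x_h,a_h)\bigr)\bigr]$. Because $\I_{\calA_h}=1$ forces $\gaph(x_h,a_h)\ge\gapmin$, one can then absorb a $\gapmin/2$ slack \emph{multiplicatively} via $\gaph \le 2(\gaph - \gapmin/2)$, split it across $H$ future stages as $\gapmin/2H$ apiece, and convert to the half-clipped form with thresholds $\gapmin/8H,\,\gapmin/8H^2$ — this is Lemma~\ref{lemma:half_clipping}. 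There is never any slack attributed to steps where the gap is zero. The remaining work (Lemma~\ref{lemma:clip_under_lucb_compatible} and Lemma~\ref{lemma:clip_under_pigreed}) then converts these half-clipped errors to the fully-clipped ones via a case split on whether the Bellman error exceeds a constant fraction of $\gaph\xa$ (Claim~\ref{claim:fundamental_gap_inequality}); when it does, it survives the full clipping unchanged, and when it does not, the surplus rollouts from later stages dominate. Your sketch gestures at this (``whenever the raw Bellman error exceeds it\ldots'') but never states the inequality, and without the $\I_{\calA_h}$ decomposition up front the slack accounting does not close.
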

\begin{proof}[Proof Sketch]
The proof of the above lemma is based on the same fundamental ideas as in the proof of Proposition~\ref{prop:lucb_regret}, augmented by the techniques from \cite{SimchowitzJamieson19}. The key insight is that a policy $\boldpik$ can only select a suboptimal action $a \notin\pisth(x)$ when $\qup\kh\xa - \qlow\kh\xa$ is considerably larger than the gap $\gaph\xa$. As in \cite{SimchowitzJamieson19}, we demonstrate that this occurs \emph{either} if the current Bellman error is larger than $\Omega(\gaph\xa)$, or if future Bellman errors \emph{under rollouts selecting the bonus-greedy policy $\pigreedk$} are large. 

When the Bellman error is large, it can be clipped, and otherwise its contribution to the regret can be subsumed by later terms. Again, the mismatch between $\boldpik$ and the bonus-greedy policy introduces the need for rollouts of fictitious sequences under $\boldpik\oplus_h \pigreedk$. Clipping the Bellman errors associated with optimal actions $a \notin \pisth\ofx$ selected by  Q-supervised policies requires considerably more technical effort. A full proof of the lemma is given in Appendix~\ref{ssec:proof_of_reg_decompose_clip}. \end{proof}

\xhdr{Bounding the clipped Bellman Errors.}
 Recall $\errlogbark\xa := \min\{H,\errlogk\xa\}$ and $\errlogk\xa$ from \eqref{eq:def_clipped_error_terms}. It holds that:\vspace{0.1in}
	\begin{lemma}\label{lem:refined_lucb_bellman_bound} \hspace{0.1em} When the bonuses are valid, we have the inequality 
	\begin{align*}
0 \vee \bellmanup\kh\xa \vee \bellmanlu\kh\xa \le  \errlogbark\xa  + \frac{\|\vup\khpl - \vlow\khpl\|_{2,\pst\xa}^2}{H^2}.
\end{align*}
where $(y\vee z)=\max(y,z)$.
	\end{lemma}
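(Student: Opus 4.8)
The plan is to bound each of the three quantities $0$, $\bellmanup\kh\xa$, and $\bellmanlu\kh\xa$ separately and then take the maximum. The quantity $0$ is trivially dominated by the right-hand side since $\errlogbark\xa \ge 0$ (indeed $\errlogk\xa \ge 2\bonusk\xa \ge 0$ because the $\max$ in its definition includes $V\equiv 0$, for which both the absolute-value term and the $\|V\|_{2,\pst\xa}^2/H^2$ term vanish) and the squared-norm term is nonnegative. So the work is in bounding $\bellmanup$ and $\bellmanlu$.

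For the upper Bellman error, I would start from \condref{cond:VI_triple_log}, which gives $\qup\kh\xa \le \rhatk\xa + \phatk\xa^\top\vup\khpl + \bonusk\xa$. Subtracting $r^\star\xa + \pst\xa^\top\vup\khpl$ and using \condref{cond:log_vi_bonus} with the value function $\vsthpl$, one gets
\[
\bellmanup\kh\xa \le 2\bonusk\xa + (\phatk\xa - \pst\xa)^\top(\vup\khpl - \vsthpl).
\]
The term $(\phatk\xa - \pst\xa)^\top(\vup\khpl - \vsthpl)$ is then bounded by the uniform bound $\max_{\|V\|_\infty\le H}|(\phatk\xa-\pst\xa)^\top V| - \|V\|_{2,\pst\xa}^2/H^2$ evaluated at $V = \vup\khpl - \vsthpl$, \emph{provided} $\|\vup\khpl - \vsthpl\|_\infty \le H$ (which holds since confidence-admissibility forces $\vsthpl \le \vup\khpl$ and both lie in $[0,H]$, using \condref{cond:log_vi_bonus} and \condref{cond:VI_triple_log} to re-derive admissibility as in \Cref{lem:valid_implies_admissibility}). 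This gives $\bellmanup\kh\xa \le \errlogk\xa + \|\vup\khpl - \vsthpl\|_{2,\pst\xa}^2/H^2$; since $0 \le \vsthpl - \vlow\khpl$ and $0 \le \vup\khpl - \vsthpl$ imply $\|\vup\khpl - \vsthpl\|_{2,\pst\xa}^2 \le \|\vup\khpl - \vlow\khpl\|_{2,\pst\xa}^2$ (the pointwise difference $\vup - \vst$ is dominated in absolute value by $\vup - \vlow$), we conclude $\bellmanup\kh\xa \le \errlogk\xa + \|\vup\khpl - \vlow\khpl\|_{2,\pst\xa}^2/H^2$. Finally the truncation: the left side is at most $2H+1 \le 3H$ anyway, but more carefully $\bellmanup\kh\xa \le \qup\kh\xa \le H$, so we may replace $\errlogk\xa$ by $\errlogbark\xa = \min\{H,\errlogk\xa\}$ at the cost of nothing (if $\errlogk\xa \ge H$ then $\bellmanup\kh\xa \le H = \errlogbark\xa$; if $\errlogk\xa < H$ the earlier bound already uses $\errlogbark = \errlogk$).

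For $\bellmanlu\kh\xa = \bellmanup\kh\xa - \bellmanlow\kh\xa$, I would use both inequalities of \condref{cond:VI_triple_log}: $\qup\kh\xa \le \rhatk\xa + \phatk\xa^\top\vup\khpl + \bonusk\xa$ and $\qlow\kh\xa \ge \rhatk\xa + \phatk\xa^\top\vlow\khpl - \bonusk\xa$, so that
\[
\bellmanlu\kh\xa \le 2\bonusk\xa + (\phatk\xa - \pst\xa)^\top(\vup\khpl - \vlow\khpl),
\]
where the $\rhatk$ and $r^\star$ terms cancel between the upper and lower Bellman-error definitions (both subtract $r^\star\xa$). Applying the uniform bound at $V = \vup\khpl - \vlow\khpl$ (which satisfies $\|V\|_\infty \le H$ by admissibility) gives $\bellmanlu\kh\xa \le \errlogk\xa + \|\vup\khpl - \vlow\khpl\|_{2,\pst\xa}^2/H^2$, and the same truncation argument replaces $\errlogk$ by $\errlogbark$. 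Taking the maximum over the two bounds (and the trivial $0$) yields the claim. The only mild subtlety — and the step I would be most careful about — is verifying that the value functions plugged into the uniform concentration bound genuinely have sup-norm at most $H$ and that $\errlogk\xa$ as written really does upper-bound $(\phatk\xa-\pst\xa)^\top V$ minus the variance-proxy term for \emph{those particular} $V$'s; this is where one must invoke confidence-admissibility (re-derived from \condref{cond:log_vi_bonus} and \condref{cond:VI_triple_log}) to pin down the ordering $\vlow\khpl \le \vsthpl \le \vup\khpl$ with all three in $[0,H]$.
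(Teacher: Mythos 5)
Your proposal is correct and follows essentially the same route as the paper's proof: peel off the bonus via \condref{cond:VI_triple_log}, replace $\rhatk - \rst + (\phatk-\pst)^\top V$ by $2\bonusk$ via \condref{cond:log_vi_bonus} (for $\bellmanup$, after re-centering around $\vsthpl$; for $\bellmanlu$, directly since the reward terms cancel), then absorb the remaining $(\phatk-\pst)^\top(\cdot)$ term into $\errlogk$ and the variance proxy, bounding the proxy by $\|\vup\khpl-\vlow\khpl\|_{2,\pst\xa}^2$ via admissibility, and finally truncate to $\errlogbark$ using $\bellmanup\kh, \bellmanlu\kh \le H$. The paper handles $\bellmanlu$ by an unelaborated ``similarly''; your explicit observation that the reward terms cancel (so no detour through $\vsthpl$ is needed) is a slightly cleaner rendering of the same step.
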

	\begin{proof}
	From the VI construction of the Q-supervisions:
	\begin{align*}
	\bellmanup\kh\xa
	    &=
	    \qup\kh\xa-\left(r^{\star}\xa+p^{\star}\xa\cdot
	        \vup_{k;h+1}\right),
	    \\
	    &\le \left( \rhatk\xa + \bonusk\xa + \phatk\xa^\top \vup\khpl\right) +\bonusk\xa -\left(r^{\star}\xa+p^{\star}\xa\cdot
	        \vup_{k;h+1}\right),\\
	    &=  \rhatk \xa - \rst\xa + (\phatk\xa - \pst \xa)^\top  \vup_{k;h+1} + \bonus \kh\xa\\
	    &=  \rhatk \xa - \rst\xa + (\phatk\xa - \pst \xa)^\top  \vst_{k;h+1} + \bonus \kh\xa \\
	    &\qquad+ (\phatk\xa - \pst \xa)^\top  (\vup\khpl - \vst_{k;h+1}).
\end{align*}
From the validity of the bonuses, $\rhatk \xa - \rst\xa + (\phatk\xa - \pst \xa)^\top  \vst_{k;h+1}  \le 2\bonusk\xa$. Moreover, since $0 \le \vst\khpl \le \vup\khpl \le H$, we have that $\| \vup \khpl- \vst \khpl\|_{\infty} \le H$, so that  we obtain the following from the definition of $ \errlogk\xa$ from \eqref{eq:def_clipped_error_terms}:
\begin{align*}
	\bellmanup\kh\xa &\le  2\bonus \kh\xa + (\phatk\xa - \pst \xa)^\top  (\vup\khpl - \vst_{k;h+1})\\
	&\le  \errlogk\xa +\frac{\|\vup\khpl - \vst_{k;h+1}|_{2,\pst\xa}^2}{H^2}\\
	&\le  \errlogk\xa +\frac{\|\vup\khpl - \vlow\khpl\|_{2,\pst\xa}^2}{H^2}
\end{align*}
 and in the last step, we use that under confidence-admissibility, $ \vlow\khpl \le \vst\khpl \le \vup\khpl $. Similarly, we may verify that
\begin{align*}
\bellmanlu\kh\xa  \le \errlogk\xa +\frac{\|\vup\khpl - \vlow\khpl\|_{2,\pst\xa}^2}{H^2}.
\end{align*}
Finally, one can check that from the VI condition $\bellman\kh\xa$ and $\bellmanlu\kh\xa  \le H$, so that recalling $\errlogbark\xa := \min\{H,\errlogk\xa\}$, we obtain that
\begin{align*}
\bellmanup\kh\xa \vee \bellmanlu\kh\xa \le  \errlogbark\xa  + \frac{\|\vup\khpl - \vlow\khpl\|_{2,\pst\xa}^2}{H^2}.
\end{align*}
\end{proof}
Next, using the upper bound derive above, we argue that that we can bound $\vup\khpl(x) - \vlow\khpl(x)$ in terms of an expectation of the error terms $\errlogbar$ over a function fictitious trajectory. This will serve us in developing on the Bellman errors from Lemma~\ref{lem:refined_lucb_bellman_bound}\vspace{0.1in}
\begin{lemma}[Gap between upper and lower value estimates]\label{lem:refined_vup_vlow_gap_bellman} \hspace{0.1em} When  Q-supervisors $\tuplek$ are admissible and bonuses are valid, the gap between upper and lower value functions can be bounded as follows:
	\begin{align*}
	\vup\kh(x) - \vlow\kh(x) &\le \errlogbark\xa + e\Exp^{\pigreedk}\left[\sum_{h'=h+1}^H \errlogbark(x_{h'},a_{h'}) \mid x_h = x,a_h = a\right].
	\end{align*} 
\end{lemma}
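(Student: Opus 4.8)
The plan is to prove Lemma~\ref{lem:refined_vup_vlow_gap_bellman} by unrolling the recursion on $\vup\khpl - \vlow\khpl$ down the trajectory traced out by the bonus-greedy policy $\pigreedk$. The starting point is Lemma~\ref{lem:refined_lucb_bellman_bound}: when the bonuses are valid, for $a = \pigreed\kh(x)$ we have
\begin{align*}
\vup\kh(x) - \vlow\kh(x) &\le \qup\kh(x,a) - \qlow\kh(x,a) = \bellmanlu\kh(x,a) + \pst(x,a)^\top(\vup\khpl - \vlow\khpl)\\
&\le \errlogbark(x,a) + \frac{\|\vup\khpl - \vlow\khpl\|_{2,\pst(x,a)}^2}{H^2} + \pst(x,a)^\top(\vup\khpl - \vlow\khpl),
\end{align*}
where the first inequality uses $\vlow\kh(x) = \max_{a'\in\activeset\kh(x)}\qlow\kh(x,a') \ge \qlow\kh(x,\pigreed\kh(x))$ together with $\vup\kh(x) = \qup\kh(x,\pigreed\kh(x))$, and the second uses the VI condition and the bound from Lemma~\ref{lem:refined_lucb_bellman_bound}. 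The key observation is that the ``second-order'' term $\|\vup\khpl - \vlow\khpl\|_{2,\pst(x,a)}^2 / H^2$ can be absorbed: since $0 \le \vlow\khpl \le \vup\khpl \le H$ under confidence-admissibility, we have $\|V\|_{2,p}^2 \le H \cdot p^\top V$ for $V = \vup\khpl - \vlow\khpl$, so this term is at most $\frac{1}{H}\,\pst(x,a)^\top(\vup\khpl - \vlow\khpl)$.

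Substituting this in, one obtains the recursive inequality
\begin{align*}
\vup\kh(x) - \vlow\kh(x) \le \errlogbark(x,\pigreed\kh(x)) + \left(1 + \tfrac{1}{H}\right)\pst(x,\pigreed\kh(x))^\top(\vup\khpl - \vlow\khpl).
\end{align*}
Iterating this from step $h$ through step $H$ (with the terminal condition $\vup\kHpl = \vlow\kHpl = 0$) and taking expectations over the trajectory induced by $\pigreedk$ started at $(x_h,a_h) = (x,a)$, the multiplicative factors telescope into $\prod_{j=h+1}^{h'}(1 + \tfrac1H) \le (1+\tfrac1H)^{H} \le e$ on the $h'$-th error term. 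This yields exactly
\begin{align*}
\vup\kh(x) - \vlow\kh(x) \le \errlogbark(x,a) + e\,\Exp^{\pigreedk}\!\left[\sum_{h'=h+1}^H \errlogbark(x_{h'},a_{h'}) \,\middle|\, x_h = x, a_h = a\right],
\end{align*}
as claimed, where the leading $h' = h$ term keeps its coefficient $1$ because the recursion is only applied $H - h$ times after the first step. One small point to handle carefully: the error terms $\errlogbar$ involve a $\max$ over value-shaped functions $V$ with $\|V\|_\infty \le H$, so the bound from Lemma~\ref{lem:refined_lucb_bellman_bound} genuinely dominates $\bellmanlu\kh$ pointwise along the trajectory, and there is no measurability subtlety in taking the conditional expectation.

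I expect the main obstacle to be purely bookkeeping rather than conceptual: keeping track of the interplay between the $(1 + \tfrac1H)$ factor accumulating along the recursion and the fact that the \emph{first} error term (at step $h$) should \emph{not} pick up an extra factor, so that the stated inequality has coefficient $1$ on $\errlogbark(x,a)$ and coefficient $e$ (not $e \cdot (1+\tfrac1H)$ or similar) on the future sum. This is handled by noting that after peeling off the step-$h$ term, the residual $(1+\tfrac1H)\,\pst(x,a)^\top(\vup\khpl - \vlow\khpl)$ is itself bounded by applying the recursion at step $h+1$, and the product of at most $H$ factors of $(1+\tfrac1H)$ is at most $e$; one should be slightly careful that the factor multiplying the step-$(h+1)$ error is $(1+\tfrac1H) \le e$ and not something larger. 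The absorption step $\|V\|_{2,p}^2 \le H\, p^\top V$ for $0 \le V \le H$ is the only genuine inequality beyond telescoping, and it is immediate from $V(x')^2 \le H\,V(x')$ pointwise.
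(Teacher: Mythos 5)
Your proof is correct and mirrors the paper's own argument: both start from the bound $\bellmanlu\kh \le \errlogbark + \tfrac{1}{H^2}\|\vup\khpl-\vlow\khpl\|_{2,p}^2$, absorb the quadratic term via $\|V\|_{2,p}^2 \le H\,p^\top V$ for $0 \le V \le H$ (the paper's Claim 6.5), and unroll the resulting recursion $\vup\kh - \vlow\kh \le \errlogbark + (1+\tfrac1H)\,p^\top(\vup\khpl-\vlow\khpl)$ along the $\pigreedk$-trajectory to pick up a $(1+\tfrac1H)^{H}\le e$ factor. The only cosmetic difference is that the paper phrases the recursion through the $\bellmanlu$ regret-decomposition identity and inserts $\max\{0,\cdot\}$ around the future Bellman errors, whereas you recurse directly on $\vup\khpl-\vlow\khpl$ (which is nonnegative under admissibility), which is slightly cleaner but substantively identical.
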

\begin{proof}
We begin with a simple claim:
	\vspace{.1in}
	\begin{claim}[Bound on $\|\vup\khpl - \vlow\khpl\|_{2,\pst\xa}^2$ for admissible values]\label{claim:admissable_value_2p_bound} \hspace{0.1em} If $\tuplek$ is admissible, it holds that $\frac{1}{H^2}\|\vup\khpl - \vlow\khpl\|_{2,\pst\xa}^2 \le \frac{1}{H}p\xa^\top (\vup\khpl - \vlow\khpl)$.
	\end{claim}
	\begin{proof}
	Due to the value iteration condition, $\vup\khpl \le H$, $\vlow\khpl \ge 0$. Moreover, if admissibility holds, we have $\vup\khpl\ofx \geq \vst\ofx \geq \vlow\khpl\ofx$ for all $h$ and $x$. 
	Then $(\vup\khpl(x') - \vlow\khpl(x'))^2 \le H|\vup\khpl(x') - \vlow\khpl(x')| = H(\vup\khpl(x') - \vlow\khpl(x'))$. 
Therefore, 
	\begin{align*}
	\|\vup\khpl - \vlow\khpl\|_{2,\pst\xa}^2  &= \sum_{x'}p(x'\mid x,a)(\vup\khpl(x') - \vlow\khpl(x'))^2  \\
	&\le H \sum_{x'}p(x'\mid x,a) (\vup\khpl(x') - \vlow\khpl(x')) \\
	&=  Hp\xa^\top (\vup\khpl - \vlow\khpl).
	\end{align*}
\end{proof}
	Hence, we find that 
	\begin{align*}
	\max\{0,\bellmanlu\kh\xa\} &\le  \errlogbark\xa  + \frac{1}{H} \pst\xa^\top(\vup\khpl - \vlow\khpl).\\
	&=  \errlogbark\xa  + \frac{1}{H} \Exp^{\pigreedk}\left[\vup\khpl(x_{h+1}) - \vlow\khpl(x_{h+1})\right | x_h = x,a_h = a].
	\end{align*}
	Hence, from a variant of the regret decomposition  in Lemma~\ref{eq:lm:LUCB-Bellman},
	\begin{align*}
	\vup\kh\xa - \vlow\kh\xa &= \bellmanlu\kh\xa + \Exp^{\pigreedk}\left[\sum_{h'=h+1}^H\bellmanlu\khpr(x_{h'},a_{h'}) \mid x_h = x,a_h = a\right]\\
	&\le \errlogbark\xa  + (1+\frac{1}{H})\Exp^{\pigreedk}\left[\sum_{h'=h+1}^H \max\{0,\bellmanlu\khpr(x_{h'},a_{h'}) \}\mid x_h = x,a_h = a\right].
	\end{align*}
	Repeating the argument recursively, we find
	\begin{align*}
	\vup\kh\xa - \vlow\kh\xa &\le \errlogbark\xa + \Exp^{\pigreedk}\left[\sum_{h'=h+1}^H (1+\frac{1}{H})^{h'-h}\errlogbark\xa \mid x_h = x,a_h = a\right]\\
	&\le \errlogbark\xa + e\Exp^{\pigreedk}\left[\sum_{h'=h+1}^H \errlogbark(x_{h'},a_{h'}) \mid x_h = x,a_h = a\right].
	\end{align*}
\end{proof}

Finally, let us substitute the bound from Lemma~\ref{lem:refined_vup_vlow_gap_bellman} into ~\ref{lem:refined_lucb_bellman_bound} to obtain a useful bound on the Bellman errors, which is ammenable to the gap-dependent, clipping analysis:

\begin{lemma}[Clipping-friendly bound on Bellman errors]\label{lem:clipping_friendly_bellman_bound} \hspace{0.1em} Suppose $\tuplek$ is confidence-admissible and the bonuses are valid. Then, 
	\begin{align*}
	\advlucb\kh\xa \vee \adv\kh\xa  \le \errlogbark\xa + \frac{e^2}{H} \Exp^{\pigreedk}\left[{\sum_{\tau=h+1}^H \errlogbar\ktau(x_\tau,a_\tau)^2 \mid (x_h,a_h)=(x,a)}\right].
	\end{align*}
	\end{lemma}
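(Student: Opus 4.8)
The plan is to chain the two preceding lemmas. Lemma~\ref{lem:refined_lucb_bellman_bound} already gives $0 \vee \bellmanup\kh\xa \vee \bellmanlu\kh\xa \le \errlogbark\xa + \tfrac{1}{H^2}\|\vup\khpl - \vlow\khpl\|_{2,\pst\xa}^2$, and $\advlucb\kh\xa \vee \adv\kh\xa$ coincides with $\bellmanlu\kh\xa \vee \bellmanup\kh\xa$, so it suffices to control the second-order term $\tfrac{1}{H^2}\|\vup\khpl - \vlow\khpl\|_{2,\pst\xa}^2$ by $\tfrac{e^2}{H}\Exp^{\pigreedk}\!\bigl[\sum_{\tau=h+1}^H \errlogbar\ktau(x_\tau,a_\tau)^2 \mid (x_h,a_h)=(x,a)\bigr]$ and add the two estimates.

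First I would invoke Lemma~\ref{lem:refined_vup_vlow_gap_bellman} at stage $h+1$, state $x'$, with action $a' := \pigreed\khpl(x')$. Since $\errlogbark \ge 0$, $e \ge 1$, and the term $\errlogbark(x_{h+1},a_{h+1})$ is deterministic once we condition on $x_{h+1}=x'$ (which already forces $a_{h+1}=a'$ under $\pigreedk$), the leading error can be folded into the rollout sum, giving
\[
\vup\khpl(x') - \vlow\khpl(x') \;\le\; e\,\Exp^{\pigreedk}\Bigl[\,\textstyle\sum_{h'=h+1}^{H} \errlogbark(x_{h'},a_{h'}) \;\Big|\; x_{h+1}=x'\Bigr].
\]
Squaring, applying Jensen's inequality $(\Exp[X])^2\le\Exp[X^2]$, and then the power-mean bound $\bigl(\sum_{h'=h+1}^{H}c_{h'}\bigr)^2 \le H\sum_{h'=h+1}^{H}c_{h'}^2$ over the at most $H$ summands yields
\[
\bigl(\vup\khpl(x') - \vlow\khpl(x')\bigr)^2 \;\le\; e^2 H\,\Exp^{\pigreedk}\Bigl[\,\textstyle\sum_{h'=h+1}^{H} \errlogbark(x_{h'},a_{h'})^2 \;\Big|\; x_{h+1}=x'\Bigr].
\]

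Next I would take the expectation over $x'\sim\pst\xa$. The key observation is that, under $\pigreedk$ and conditioned on $(x_h,a_h)=(x,a)$, the next state $x_{h+1}$ has exactly law $\pst\xa$ (transitions follow the true kernel under any policy, including $\pigreedk$), so by the tower property the nested conditional expectations collapse into a single $\pigreedk$-rollout started from $(x_h,a_h)=(x,a)$:
\[
\|\vup\khpl - \vlow\khpl\|_{2,\pst\xa}^2 = \Exp_{x'\sim\pst\xa}\!\bigl[(\vup\khpl(x')-\vlow\khpl(x'))^2\bigr] \le e^2 H\,\Exp^{\pigreedk}\Bigl[\,\textstyle\sum_{\tau=h+1}^{H}\errlogbar\ktau(x_\tau,a_\tau)^2 \;\Big|\; (x_h,a_h)=(x,a)\Bigr].
\]
Dividing by $H^2$ and substituting into Lemma~\ref{lem:refined_lucb_bellman_bound} gives the stated inequality. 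There is no serious obstacle: every step is either an invocation of an already-established lemma (Lemmas~\ref{lem:refined_lucb_bellman_bound} and \ref{lem:refined_vup_vlow_gap_bellman}, whose hypotheses of confidence-admissibility and bonus-validity are inherited verbatim) or a textbook convexity/power-mean estimate. The one point demanding care is the change of measure in the last display, matching a draw $x'\sim\pst\xa$ (the \emph{true} transition that appears in the norm $\|\cdot\|_{2,\pst\xa}$, not the empirical $\phatk\xa$) with the first transition of the $\pigreedk$-rollout; this is legitimate precisely because value iteration is run against $p^\star$ and $\pigreedk$ is a fixed Markovian policy.
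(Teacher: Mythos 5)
Your proposal is correct and follows the same route as the paper: chain Lemma~\ref{lem:refined_lucb_bellman_bound} with Lemma~\ref{lem:refined_vup_vlow_gap_bellman} at stage $h+1$, fold the leading $\errlogbark$ term into the rollout sum using $e\ge 1$, square and apply Jensen plus the Cauchy--Schwarz/power-mean bound $(\sum_{i}c_i)^2 \le H\sum_i c_i^2$, then collapse the nested conditional expectations via the tower property since $x_{h+1}\sim\pst\xa$ under $\pigreedk$ given $(x_h,a_h)=(x,a)$. This matches the paper's proof step for step, with the only cosmetic difference being that you square the intermediate bound before taking the outer expectation rather than after substitution.
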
 
\begin{proof} 
	Substituting Lemma~\ref{lem:refined_vup_vlow_gap_bellman} into ~\ref{lem:refined_lucb_bellman_bound}, $\max\{\advlucb\kh\xa,\adv\kh\xa\}$ is at most:
	\begin{align*}
	&\errlogbark\xa + \frac{1}{H^2} \Exp_{x' \sim \pst\xa}\left[\left( e\Exp^{\pigreedk}\left[\sum_{\tau=h+1}^H \errlogbar\ktau(x_\tau,a_\tau) \mid x_{h+1}=x'\right] \right)^2\right]\\
	&\quad  \overset{(i)}{\le} \errlogbark\xa + \frac{e^2}{H^2} \Exp_{x' \sim \pst\xa}\left[\Exp^{\pigreedk}\left[\left( \sum_{\tau=h+1}^H \errlogbar\ktau(x_\tau,a_\tau)\right)^2 \mid x_{h+1}=x'\right] \right]\\
	&\quad\overset{(ii)}{\le} \errlogbark\xa + \frac{e^2}{H} \Exp_{x' \sim \pst\xa}\left[\Exp^{\pigreedk}\left[\left( \sum_{\tau=h+1}^H \errlogbar\ktau(x_\tau,a_\tau)^2\right) \mid x_{h+1}=x'\right]\right]\\
	&\quad\overset{(iii)}{=} \errlogbark\xa + \frac{e^2}{H} \Exp^{\pigreedk}\left[\left( \sum_{\tau=h+1}^H \errlogbar\ktau(x_\tau,a_\tau)^2\right) \mid \xhah = \xa\right],
	\end{align*}
	where in inequality $(i)$ we use Jensen's inequality, in $(ii)$ we note that $(\sum_{i=1}^m x_i)^2 \le m\sum_{i=1}^m x_i^2$, and in $(iii)$ we use law of total expectation. This concludes the proof of the lemma.\end{proof}

\xhdr{Clipping the Bellman Error Bound.}
In Lemma~\ref{lem:clipping_friendly_bellman_bound}, we obtained a bound on the maximum of the Bellman errors $\max\{\advlucb\kh\xa,\adv\kh\xa\} $. To apply the clipped decomposition~\ref{lem:clipped_regret}, we need to convert this into a bound on $\max\{\advlucbfullclip\kh\xa,\advfullclip\kh\xa\}$: 
\vspace{.1in}
\begin{lemma}[Bound on the clipped Bellman errors via fictitious trajectories]\label{lem:Bellman_error_bound_clip_final} If the bonuses $\bonusk\xa$ are valid and Q-advisor $\tuplek\xa$ admissible, for all $x,a,h$, it holds:
\begin{align*}
\max\{\advlucbfullclip\kh\xa,\advfullclip\kh\xa\} &\le 2\clip{\frac{\gapcheck_h\xa}{4}}{\errlogbark\xa} \\
&+ \frac{4e^2}{H} \Exp^{\pigreedk}\left[\left( \sum_{\tau=h+1}^H \clip{\gapcliplow }{\errlogbar_{k;\tau}(x_\tau,a_\tau)^2}\right)\right].
\end{align*}
\end{lemma}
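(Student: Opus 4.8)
The plan is to combine the pointwise bound on the unclipped Bellman errors from Lemma~\ref{lem:clipping_friendly_bellman_bound} with two elementary facts about the clipping operator, applied termwise. Recall that Lemma~\ref{lem:clipping_friendly_bellman_bound} gives
\[
\advlucb\kh\xa \vee \adv\kh\xa  \le \errlogbark\xa + \frac{e^2}{H} \Exp^{\pigreedk}\brk*{\sum_{\tau=h+1}^H \errlogbar\ktau(x_\tau,a_\tau)^2 \mid (x_h,a_h)=(x,a)},
\]
so the goal is to push the clipping threshold $\gapcheck_h\xa$ through this sum-of-two-terms decomposition, at the cost of constant factors and a reduced threshold on each piece. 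First I would invoke the subadditivity-type property of clipping: for any $\epsilon > 0$ and nonnegative reals $u,v$, one has $\clip{\epsilon}{u+v} \le \clip{\epsilon/2}{u} + \clip{\epsilon/2}{v}$ (if $u+v \ge \epsilon$ then at least one of $u,v$ is $\ge \epsilon/2$, and the one that is, say $u$, satisfies $\clip{\epsilon/2}{u} = u$, while the bound $u + v \le \clip{\epsilon/2}{u}+\clip{\epsilon/2}{v}$ holds trivially once we add back $v \le \clip{\epsilon/2}{v}$ when $v\ge \epsilon/2$ or $v \le \epsilon/2 \le$ the already-accounted term). Since $\clip{\cdot}{\cdot}$ is monotone in its argument, applying $\clip{\gapcheck_h\xa}{\cdot}$ to the right-hand side above and then splitting yields
\[
\max\{\advlucbfullclip\kh\xa,\advfullclip\kh\xa\} \le \clip{\frac{\gapcheck_h\xa}{2}}{\errlogbark\xa} + \clip{\frac{\gapcheck_h\xa}{2}}{\frac{e^2}{H}\Exp^{\pigreedk}\brk*{\sum_{\tau=h+1}^H \errlogbar\ktau(x_\tau,a_\tau)^2 \mid (x_h,a_h)=(x,a)}}.
\]

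Next I would handle the second clipped term. I want to move the clipping inside the expectation and inside the sum over $\tau$, replacing the single threshold $\gapcheck_h\xa/2$ (divided by the $e^2/H$ factor) by the uniform, much smaller threshold $\gapcliplow = \gapmin/(64 SAH^3)$ applied to each $\errlogbar\ktau^2$. The mechanism here is: clipping is subadditive across the (at most $H$) terms in the sum, so $\clip{\gap}{\sum_{\tau} w_\tau} \le \sum_\tau \clip{\gap/H}{w_\tau}$ for nonnegative $w_\tau$; clipping commutes with expectations in the sense that $\clip{\gap}{\Exp[W]} \le \Exp[\clip{\gap'}{W}]$ whenever $\gap' \le \gap$ and $W \ge 0$ — actually one needs to be slightly careful, but a clean route is $\clip{\gap}{\Exp[W]} \le \Exp[W]\cdot\ind(\Exp[W]\ge\gap) \le \Exp[W\cdot\ind(\text{something})]$; the standard trick (used in \cite{SimchowitzJamieson19}) is that $\Exp[W] \le \Exp[\clip{\gap'}{W}] + \gap'$ and then absorb the additive $\gap'$ using the fact that $\gap' = \gapcliplow$ is below the relevant scale $\gapcheck_h\xa$. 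Concretely, one checks that $\frac{e^2}{H}\cdot(\text{number of terms})\cdot\gapcliplow$ is dominated by $\gapcheck_h\xa/2 \ge \gapmin/(16H^2)$, using $e^2/H \cdot H \cdot \gapmin/(64SAH^3) = e^2 \gapmin/(64 SA H^3) \le \gapmin/(16H^2)$ for $S,A\ge 1$ — so whenever the inner expectation clears the threshold $\gapcheck_h\xa/2$, the clipped-then-summed version over-accounts by at most a constant fraction, letting us write $\clip{\gapcheck_h\xa/2}{\frac{e^2}{H}\Exp[\sum_\tau w_\tau]} \le \frac{2e^2}{H}\Exp^{\pigreedk}[\sum_{\tau=h+1}^H \clip{\gapcliplow}{\errlogbar\ktau(x_\tau,a_\tau)^2}]$. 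Pairing this with the bound $\clip{\gapcheck_h\xa/2}{\errlogbark\xa}\le 2\clip{\gapcheck_h\xa/4}{\errlogbark\xa}$ (which holds because halving the threshold at most doubles the clipped value — indeed for $u \ge \epsilon/4$, $\clip{\epsilon/2}{u}\le u = \clip{\epsilon/4}{u}$, and for $u < \epsilon/4$ both sides are handled directly) gives exactly the claimed inequality with constants $2$ and $4e^2$.

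The main obstacle I anticipate is the bookkeeping in the step where clipping is interchanged with the expectation over the fictitious $\pigreedk$-rollout while simultaneously distributing it across the $\tau$-sum — this is where one must be precise about which threshold applies where, and in particular verify the numeric inequality that $\gapcliplow$ is small enough (relative to $\gapmin/H^2$, up to the $e^2$ and $SA$ factors) that the additive slack introduced by $\Exp[W]\le\Exp[\clip{\gap'}{W}]+\gap'$ gets swallowed into the leading clipped term rather than contributing a spurious constant. Everything else — monotonicity of $\clip{\cdot}{\cdot}$ in the argument, the subadditivity $\clip{\epsilon}{u+v}\le\clip{\epsilon/2}{u}+\clip{\epsilon/2}{v}$, and $\clip{\epsilon/2}{u}\le 2\clip{\epsilon/4}{u}$ — is routine and can be packaged as a short preliminary claim about the clipping operator before the proof proper. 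I would state and prove that claim first (three one-line verifications), then the proof of Lemma~\ref{lem:Bellman_error_bound_clip_final} is a four-line application of it to the output of Lemma~\ref{lem:clipping_friendly_bellman_bound}.
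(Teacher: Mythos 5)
The central preliminary claim in your proposal, that $\clip{\epsilon}{u+v}\le\clip{\epsilon/2}{u}+\clip{\epsilon/2}{v}$ for nonnegative $u,v$, is false, and the parenthetical justification breaks down exactly where it matters: if $u\ge\epsilon/2$ but $0<v<\epsilon/2$, then $\clip{\epsilon/2}{v}=0$, so there is nothing to ``add back,'' and the inequality $u+v\le\clip{\epsilon/2}{u}+\clip{\epsilon/2}{v}=u$ fails. A concrete counterexample is $\epsilon=1$, $u=0.9$, $v=0.3$: the left side is $\clip{1}{1.2}=1.2$ while the right side is $\clip{0.5}{0.9}+\clip{0.5}{0.3}=0.9+0=0.9$. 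The correct statement needs a factor of $2$ out front, and it is precisely what the paper already records as \Cref{claim:distribution_clipping} (Lemma B.5 of \cite{SimchowitzJamieson19}): $\clip{\epsilon}{\sum_{i=1}^m a_i}\le 2\sum_{i=1}^m\clip{\epsilon/(2m)}{a_i}$. With $m=2$ this yields the factor-of-$2$ and already lands the first term at $2\clip{\gapcheck_h\xa/4}{\errlogbark\xa}$, making your subsequent step ``$\clip{\epsilon/2}{u}\le 2\clip{\epsilon/4}{u}$'' both unnecessary and, as deployed, a symptom of the missing factor: note $\clip{\epsilon/2}{u}\le\clip{\epsilon/4}{u}$ holds trivially (lowering the threshold only increases the clip), so the extra $2$ you insert there has no legitimate source and appears to be silently compensating for the unavailable factor in the first step.

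Once the factor of $2$ is restored via \Cref{claim:distribution_clipping}, the remainder of your strategy is a recognizable alternative to the paper's route. The paper expresses $\Exp^{\pigreedk}[\sum_{\tau}\errlogbark(x_\tau,a_\tau)^2]$ as a finite weighted sum over at most $SAH$ atoms $(x',a',\tau)$ and applies \Cref{claim:distribution_clipping} once more with $m=SAH$; that is where the $SA$-dependence in $\gapcliplow$ comes from. Your route, trading the expectation for additive slack via $\Exp[W]\le\Exp[\clip{\gap'}{W}]+\gap'$ and distributing over only the $H$ summands, can be made to work because lowering the clipping threshold to $\gapcliplow$ only helps an upper bound, but you then need the numeric check that the accumulated slack $e^2\gapcliplow$ is dominated by $\gapcheck_h\xa/2\ge\gapmin/(16H^2)$, which requires $e^2\le 4SAH$. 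That inequality fails at $S=A=H=1$ (and is tight near $SAH=2$), so if you go this way you should either state the restriction or use the paper's $SAH$-atom decomposition, which avoids the issue. I would recommend simply replacing your false preliminary lemma by \Cref{claim:distribution_clipping} and then following its application as in Lemma B.6 of \cite{SimchowitzJamieson19}, which is what the paper's sketch does.
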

\begin{proof}[Proof sketch]
Since $\clip{\epsilon}{x}$ is non-decreasing in $x$,
\begin{align*}
&\max\{\advlucbfullclip\kh\xa,\advfullclip\kh\xa\}  \\&\quad= \clip{\gapcheck_h\xa}{\max\{\advlucb\kh\xa,\adv\kh\xa\} } \\
&\quad\le \clip{\gapcheck_h\xa}{\errlogbark\xa + \frac{e^2}{H} \Exp^{\pigreedk}\left[\left( \sum_{\tau=h+1}^H \errlogbar\ktau(x_\tau,a_\tau)^2\right)\right]}
\end{align*}
We then distribute the clipping operations across the summands  above by the next lemma:
\begin{claim}[Lemma B.5 in \cite{SimchowitzJamieson19}]\label{claim:distribution_clipping}\hspace{0.1em} Let $m \ge 2$, $a_1,\dots,a_m \ge 0$. Then 
\begin{align*}
\clip{\epsilon}{\sum_{i=1}^m a_i} \le 2 \sum_{i=1}^m  \clip{\frac{\epsilon}{2m}}{a_i}
\end{align*}
\end{claim}
The rest of the proof is identical to the proof of Lemma B.6 in \cite{SimchowitzJamieson19} and omitted in the interest of brevity. Here we note that the $\gapclipmin/8SAH$ term that arises in clipping $\errlogbark(x_\tau,a_\tau)^2$ comes from expressing the expectation as a weighted sum with at most $SAH$ terms. We use the crude lower bound $\gapmin/(8H^2)$ on $\gapcheck_h\xa$ because it involves an expectation over future state/action pairs other than $\xa$, and we wish to avoid the complication of, for example, clipping $\errlogbark(x',a')^2$ at $\gapcliph\xa$ for $(x',a') \ne \xa$. 
\end{proof}

\xhdr{Concluding the proof of Theorem~\ref{thm:clipped_cat_framework}.}
With Lemma~\ref{lem:Bellman_error_bound_clip_final} and Lemma~\ref{lem:clipped_regret}, we can upper bound the per-episode regret via terms $\errlogclip$ and $\errlogsqclip$.
\vspace{0.1in}
\begin{lemma}[Clipped localized regret bound in terms of fictitious trajectories]\label{lem:clipped_localized_fictious_regret} Recall the clipped error bounds from 
\eqref{eq:clipped_bounds} and \eqref{eq:sq_clipped_bounds}. Then, if the Q-adversors $\tuplek$ is confidence-admissible and the bonuses $\bonusk\xa$ are valid,
\begin{align*}
    \vst - \valf^{\boldpi_k} &\le 4\sum_{h=1}^H \Exp^{\boldpik}\left[\errlogclip\kh(x,a)\right] + 40 e\sum_{h=1}^H\Exp^{\boldpi_k\oplus_{h}\pigreed_k}\left[\sum_{\tau= h}^{H}\errlogclip\ktau\xtauatau\right]\\
    &\quad + 81 e^3\sum_{h=1}^H\Exp^{\boldpi_k\oplus_{h}\pigreed_k}\left[\sum_{\tau= h}^{H}\errlogsqclip\ktau\xtauatau\right].
    \end{align*}
\end{lemma}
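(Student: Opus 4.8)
The plan is to combine two facts already established in this subsection. The clipped regret decomposition \Cref{lem:clipped_regret} bounds $\vst - \valf^{\boldpi_k}$ by $2\sum_{h}\Exp^{\boldpi_k}[\advfullclip\kh(x_h,a_h)]$ plus $20e$ times a sum of fictitious-trajectory expectations of $\sum_{h\ge\hbar}\advlucbfullclip\kh$, while the pointwise bound \Cref{lem:Bellman_error_bound_clip_final} controls $\max\{\advlucbfullclip\kh\xa,\advfullclip\kh\xa\}$ by $2\,\errlogclip\kh\xa$ plus $\tfrac{4e^2}{H}$ times a $\pigreedk$-rollout expectation of $\sum_{\tau>h}\errlogsqclip\ktau$ conditioned on $(x_h,a_h)=\xa$ (here recognizing $\errlogclip\kh\xa=\clip{\gapcheckh\xa/4}{\errlogbark\xa}$ and $\errlogsqclipk\xa=\clip{\gapcliplow}{\errlogbark\xa^2}$). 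First I would substitute the second bound into every clipped Bellman error appearing in the first. Each of the two terms of \Cref{lem:clipped_regret} then splits into a ``local'' piece involving $\errlogclip$ and a ``second-order'' piece involving $\errlogsqclip$ along an extended fictitious rollout.

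For the local pieces the accounting is mechanical: the first term of \Cref{lem:clipped_regret} produces $2\cdot2=4$ times $\sum_h\Exp^{\boldpi_k}[\errlogclip\kh(x_h,a_h)]$, and the second produces $20e\cdot2=40e$ times $\sum_{\hbar}\Exp^{\boldpi_k\oplus_{\hbar}\pigreed_k}[\sum_{h\ge\hbar}\errlogclip\kh(x_h,a_h)]$, which after relabeling the indices is the claimed second term. For the second-order pieces I would invoke the law of total expectation together with the definition of the concatenated policy (\Cref{defn:concatenated_policy}): taking $\Exp^{\boldpi_k}$ of $\Exp^{\pigreedk}[\sum_{\tau>h}\errlogsqclip\ktau\mid(x_h,a_h)]$ converts the inner conditional expectation into an expectation under $\boldpi_k\oplus_{h+1}\pigreed_k$, and taking $\Exp^{\boldpi_k\oplus_{\hbar}\pigreed_k}$ of the same quantity (for $h\ge\hbar$) simply collapses the inner conditioning by the tower property, since $\boldpi_k\oplus_{\hbar}\pigreed_k$ already follows $\pigreed_k$ at every step $\ge\hbar$. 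After this step the second-order contribution is a single (resp.\ double) sum over step indices of fictitious-rollout expectations of $\errlogsqclip$. I would collapse the double sum using $\sum_{h\ge\hbar}\sum_{\tau>h}\errlogsqclip\ktau=\sum_{\tau>\hbar}(\tau-\hbar)\,\errlogsqclip\ktau\le H\sum_{\tau\ge\hbar}\errlogsqclip\ktau$, which cancels the $\tfrac1H$ prefactor, reindex $h\mapsto h+1$ in the single sum (losing only an empty summand and freely adding a nonnegative one), and add the numerical constants; absorbing the $e^2$-order contribution into the $e^3$-order one yields the coefficient $81e^3$.

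The substitution and the $\errlogclip$ accounting are routine; the care is all in the second-order bookkeeping. The two things to get right are (i) the identities converting $\Exp^{\cdot}[\Exp^{\pigreedk}[\,\cdot\mid(x_h,a_h)]]$ into an expectation under the appropriate concatenated policy, which hinges on the fact that, conditioned on $(x_h,a_h)$, the trajectory law over steps $\tau>h$ depends only on $(x_h,a_h)$ and the policy applied from step $h+1$ on, which for $\boldpi_k\oplus_{\hbar}\pigreed_k$ with $h\ge\hbar$ is precisely $\pigreed_k$; and (ii) the double-sum collapse, where one checks that the number of indices $h$ with $\hbar\le h<\tau$ is at most $H$ so that the $\tfrac1H$ factor is exactly absorbed. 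No probabilistic tool beyond the tower property is needed, so the only real obstacle is keeping the nested sums, the reindexings, and the accumulated constants straight.
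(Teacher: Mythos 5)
Your proposal follows essentially the same route as the paper's proof (\Cref{sssec:lem:clipped_localized_fictious_regret}): substitute the pointwise bound of \Cref{lem:Bellman_error_bound_clip_final} into \Cref{lem:clipped_regret}, convert the nested $\pigreedk$-conditional expectations into expectations under the appropriate concatenated policies via the tower property, collapse the double sum over fictitious stages to cancel the $\tfrac{1}{H}$ factor, and reindex/absorb constants. The bookkeeping you highlight as delicate (the concatenated-policy identities and the $\tau-\hbar\le H$ collapse) is exactly what the paper does.
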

Using the $\rho$-bounded visitation ratios, and the fact that the terms $\errlogclip\kh$ and $\errlogsqclipk$ are non-negative:
\begin{align*}
\sum_{h=1}^H\Exp^{\boldpi_k\oplus_{h}\pigreed_k}\left[{\sum_{\tau= h}^{H}\errlogclip\kh\xtauatau}\right] &\leq \rho \sum_{h=1}^H\Exp^{\boldpi_k}\left[\sum_{\tau= h}^{H}\errlogclip\ktau\xtauatau\right]\\ &\leq \rho H  \Exp^{\boldpi_k}\left[\sum_{h=1}^H  \errlogclip\kh(x_h,a_h) \right]; \\
\sum_{h=1}^H\Exp^{\boldpi_k\oplus_{h}\pigreed_k}\left[{\sum_{\tau= h}^{H}\errlogsqclip\ktau\xtauatau}\right] &\leq \rho \sum_{h=1}^H\Exp^{\boldpi_k}\left[\sum_{\tau= h}^{H}\errlogsqclip\ktau\xtauatau\right] \\&\leq  \rho H \Exp^{\boldpi_k}  \left[{\sum_{h= 1}^{H}\errlogsqclipk(x_h,a_h)}\right].
\end{align*}
Combining the above two inequalities with Lemma~\ref{lem:clipped_localized_fictious_regret}, and suming over episodes $k$, we conclude the proof of Theorem~\ref{thm:clipped_cat_framework} 

\newpage

\section{Omitted proofs for unified analytical framework}
\label{app:analysis_framework}
\subsection{Per-episode Bellman-error regret decomposition (Proposition~\ref{prop:lucb_regret})}
\label{app:bellman_error_decomposition_supervised}
Before providing the proof for the Bellman error regret decomposition for Q-supervised policies, we present the classical analysis for UCB policies which we extend.

\xhdr{Bellman-error regret decomposition for UCB policies.} The Bellman errors allow to bound the value of the UCB policy of a Q-supervisor; Lemma~\ref{lm:LUCB-Bellman} typically constitutes the first step in the analysis of UCB episodic RL algorithms (e.g., Lemma E.15 in \cite{dann2017unifying} for tabular MDPs).
\vspace{0.1in}
\begin{lemma}\label{lm:LUCB-Bellman}
\hspace{0.1em} For episode $k$, if Q-supervisor $\tuplek=(\qup_k,\qlow_k,\activeset_k,\pigreed_k)$ is confidence-admissible then:
\begin{align}\label{eq:lm:LUCB-Bellman}
\vst -  V^{\pigreed_k} \le \Exp^{\pigreed_k}\left[\sum_{h=1}^H\bellmanup\kh\xhah\right].
\end{align}
\end{lemma}
\begin{proof}
Rearranging the definition of Bellman error (Definition~\ref{defn:Bellman_error}), it holds that:
\begin{align}
\qup\kh\xa &=\left(r^{\star}\xa + \bellmanup\kh\xa\right)+ p^{\star}(x,a)^{\top} \vup\khpl
&\forall x\in\states, a\in\actions, \nonumber \\
\vup\kh\ofx &= \qup\kh(x, \pigreed_k(x))
&\forall x\in\states.
\end{align}
This defines a Bellman equation of a new MDP with the same  transition probabilities and under the same policy $\pigreed_k$ as the one used by $V^{\pigreed_k}$; the difference is that the reward at a state-action pair $\xa$ is $r^{\star}\xa+\bellmanup\kh\xa$. Hence, $\vup_k = \vup_{k;1}(x_1)= \Exp^{\pigreed_k}[\sum_{h=1}^H r^{\star}\xa + \bellmanup\kh(x_h,a_h)]$.
Since $\tuplek$ is confidence-admissible, it holds that $\vupk\geq \vst$. Since  $V^{\pigreed_k}= \Exp^{\pigreed_k}[\sum_{h=1}^H r^{\star}\xa]$, the difference $\vst-V^{\pi}$ is upper bounded by the lemma's right hand side.
\end{proof}

\xhdr{Bellman error-regret decomposition for Q-supervised policies.} We now extend the above decomposition to Q-supervised policies that may differ from the UCB policy. Proposition~\ref{prop:lucb_regret} differs from decompositions for UCB policies in the expectations $\Exp^{\boldpi_k\oplus_{h}\pigreed_k}$ over \emph{fictitious trajectories} arising from following $\boldpi_k$ until a stage $h$, and switching to $\pigreed_k$.

\begin{proof}[Proof of Proposition~\ref{prop:lucb_regret}]
If the policy $\boldpi_k$ coincides with $\pigreed_k$ (i.e. greedily optimizes w.r.t. $\qup$) then the proposition holds without the second term as in Lemma~\ref{lm:LUCB-Bellman}. To address the mismatch between $\boldpi_k$ and $\pigreed_k$, we relate the performance of policy $\boldpi_k$ to that of a fictitious policy that follows $\boldpi_k$ until some pre-defined step and then switches to $\pigreed_k$. The second term in the proposition essentially captures the expected confidence intervals of future cumulative rewards that such a policy would have had, balancing the aforementioned mismatch. The proof then  follows similar arguments as the ones of optimistic policies by working recursively across steps and is formalized below. 

Recall that $\Exp^{\boldpi} = \Exp^{\calM,\boldpi}$ denotes expectation over a random realization $(x_1,a_1),\dots,(x_H,a_H)$ under $\calM,\boldpi$. To take advantage of the per-stage values, we assume without loss of generality that $\boldpi$ is a deterministic Markovian policy denoted as $\pi$. Randomized Markovian policies $\boldpi$ can be handled by taking expectation of the deterministic atoms $\pi$ which constitute it. For simplicity, we abuse notation by letting $\pist_h(x)$ denote an arbitrary element in the set of optimal actions at $x$ and $h$.
    
Since the Q-supervisor $\tuple_k$ is confidence-admissible, it holds that:
\begin{align*}
\vst_1(x_1)= \qst\kone(x_1, \pist_1(x_1)) \leq  \qup_{k;1} (x_1, \pist_1(x_1)) \leq \max_{a\in\activeset_{k;1}(x_1)}\qup_{k;1}(x_1,a )= \vup_{k;1}(x_1).
    	\end{align*}
    	Taking expectation over $x_1$, the LHS is at most the following quantity (to be bounded recursively):
    	\begin{align*}
    \vst-\valf^{\pi_k}=\Exp^{\pik}\left[\vst_1(x_{1})- \valf_1^{\pi_k}(x_{1})\right]\leq \Exp^{\pik}\left[\vup_{k;1}(x_1)-\valf_1^{\pi_k}(x_{1})\right].
    	\end{align*}
As in Lemma~\ref{lm:LUCB-Bellman}, when $\pik$ coincides with the greedy policy  $\pi_k = \pigreed_k$, $\vup\kh$ is the value function of $\pik$ on an MDP with transition $p^{\star}\xa$ and reward $r_h\xa = r^{\star}\xa+ \adv\kh\xa$. Since $V^{\pik}$ is the value function of the same policy $\pi_k$ under an MDP with the same transition $p^{\star}$ but with the original reward $r^{\star}$, then $\vup_{k;1}(x_1) - V^{\pi_k}_1(x_1) = \Exp^{\pi_k} \left[\sum_{h=1}^H \adv\kh(x_h,a_h) \right]$ for any $x_1\in\states$. 

Extending this to Q-supervised policies $\pik$ which may deviate from $\pigreedk$, let $a_1 = \pi\kone(x_1)$. Since $\pi_k$ is Q-supervised by $\tuple_k$, it holds that $\qup_{k;1}(x_1, a_1)  \geq \max_{a'\in\activeset_{k;1}(x)} \qlow_{k;1}(x,a') = \vlow_{k;1}(x_1)$. Combining with the definition of $\bellmanup_{k;1}(x_1,a_1)$ and the fact that $\valf_1^{\pi_k}(x_1) = Q_1^{\pi_k}(x_1,a_1)=r^{\star}(x_1,a_1) + p^{\star}(x_1,a_1)^{\top}\valf^{\pi_k}_{2}$, we obtain:
\begin{align}
\Exp^{\pik}\left[\vup_{k;1} (x_1) - \valf^{\pi_k}_1(x_1)\right] &\leq \Exp^{\pik}\left[\qup_{k;1}(x_1, a_1) - \vlow_{k;1}(x_1)+ \vup_{k;1} (x_1)- \valf^{\pik}_1(x_1)\label{eq:decomp_reduction_step}\right]\\
    	&\leq \Exp^{\pik}\left[\bellmanup_{k;1}(x_1,a_1)+p^{\star}(x_1,a_1)^{\top}\left(\vup_{k;2}-\valf_2^{\pi_k}\right) +  \vup_{k;1} (x_1)- \vlow_{k;1}(x_1)\right]	\nonumber \\
    	&= \Exp^{\pi_k}\left[\bellmanup_{k;1}(x_1,a_1) + \vup_{k;2}(x_2) - \valf_2^{\pi_k}(x_2)  +  \vup_{k;1} (x_1)- \vlow_{k;1}(x_1)\right]\nonumber
    	\end{align}
We now focus on the last term of \eqref{eq:decomp_reduction_step}. For any $h\in[H]$, by the definitions of $\vup\kh$ and $\vlow\kh$ and the definition of Bellman error, letting $a_h=\pigreed_k(x_h)$, it holds that:   	
    	\begin{align*}
    	    &\vup\kh(x_h) - \vlow\kh(x_h) \leq \qup\kh(x_h, \pigreedk(x_h)) - \qlow\kh(x_h, \pigreedk(x_h)) \\
    	    &\quad =\rhat_k(x_h,a_h) + \phat_k(x_h,a_h)^{\top} \vup_{k;h+1} - \rhat_k(x_h,a_h) - \phat_{k}(x_h,a_h)^{\top} \vlow_{k;h+1} + 2\bonus\kh(x_h,a_h) \\
    	    &\quad = (\phat_k(x_h,a_h) - p^\star(x_h,a_h))^{\top} \left( \vup_{k;h+1} - \vlow_{k;h+1} \right) + 2\bonus\kh(x_h,a_h) + p^\star(x_h,a_h)\left(\vup_{k;h+1} - \vlow_{k;h+1}\right) \\
    	    &\quad = \bellmanup_{k;h}(x_{h},a_{h})-\bellmanlow_{k;h}(x_{h},a_{h}) + p^\star(x_h,a_h)\left(\vup_{k;h+1} - \vlow_{k;h+1}\right),
    	\end{align*} where the last equality follows from the definition of Bellman error via the derivations below:
    	\begin{align*}
    	    &\bellmanup_{k;h}(x_{h},a_{h})-\bellmanlow_{k;h}(x_{h},a_{h}) \\
    	    & = \qup\kh(x_h,a_h) - r^\star(x_h,a_h) -p^\star(x_h,a_h) \vup_{h+1} - \qlow\kh(x_h,a_h) - r^\star(x_h,a_h) + p^\star(x_h,a_h) \vlow_{h+1} \\
    	    & = 2\bonus\kh(x_h,a_h) + (\phat_k(x_h,a_h) - p^\star(x_h,a_h))^{\top}\left(\vup_{k;h+1} - \vlow_{k;h+1}\right).
    	\end{align*} Recursively applying the derivations to $p^\star(x_h,a_h)(\vup_{k;h+1} - \vlow_{k;h+1})$, the final term in \eqref{eq:decomp_reduction_step} is:
    	\begin{align*}
    	   \vup\kh(x_h) - \vlow\kh(x_h) \leq  \Exp^{\pigreed_k}\left[\sum_{\tau=h}^H \bellmanup_{k;\tau}(x_{\tau},a_{\tau})-\bellmanlow_{k;\tau}(x_{\tau},a_{\tau})\right],\forall h\in [H]
    	\end{align*}
    Recursively applying the above argument for $\Exp^{\pik}\left[\vup_{k;2}(x_{k;2})-\valf^{\pi_k}_{k;2}(x_{k;2})\right]$ bounds the second term in \eqref{eq:decomp_reduction_step} and completes the proposition.
	\end{proof}

\subsection{Confidence admissibility of $Q$-supervisors (Lemma~\ref{lem:valid_implies_admissibility})}\label{app:valid_implies_admissibility}

\begin{proof}[Proof of Lemma~\ref{lem:valid_implies_admissibility}]
We apply double induction, an outer on base learners $\ell=\lmax,\ldots,\lst$ and an inner on steps $h=H,\ldots,1$. Recall the two-estimate Value Iteration in \textsc{UCBVI-BASE} (Algorithm~\ref{alg:UCBVI_base}, lines 6-12).

The induction hypothesis for the outer induction is that $\activeset_{k,\ell;h}(x)$ contains all optimal actions for every state $x$ and step $h$; this is the first condition of confidence-admissibility for $\tuple_{k,\ell}$. The base of the induction is satisfied as $\activeset_{k,\lmax}(x)=\actions$ (Algorithm~\ref{alg:corruption_robust_rl})

The induction hypothesis for the inner induction is that $\vup_{k,\ell;h+1}(x)\geq \vst_{h+1}(x)\geq \vlow_{k,\ell;h+1}(x)$ for all states~$x$. The base of induction is satisfied as $\vup_{k,\ell;H+1}(x)=\vlow_{k,\ell;H+1}(x)=\vst_{H+1}(x)=0$. 

To prove the induction step for the inner induction, we start from the inductive hypothesis $\vup_{k,\ell;h+1}\geq\vst_{h+1}$ and show that 
$\qup\klh-\qst_h=\min\big{\{}H, \rhat_{k,gl}+\phat_{k,gl}^{\top}\vup\klhpl +  \bonus_{k,gl,\ell},  \rhat_{k,sb,\ell}+\phat_{k,sb,\ell}^{\top}\vup\klhpl + \bonus_{k,sb,\ell}\big{\}}-\qst_h$ is always nonnegative. $H-\qst_h\geq 0$ as all rewards are in $[0,1]$. Since $\mathfrak{m}_{k,gl,\ell}$ is valid with respect to $\tuple_{k,\ell}$, it holds that $\rhat_{k,gl}+\phat_{k,gl}^{\top}\vup\klhpl-\qup_k\geq 0$. If condition (b) of Definition~\ref{defn:valid} holds, the above follows directly by the definition. If condition (a) holds, if follows by the inductive hypothesis and the fact that $\phat_{k,gl}\geq 0$. Similarly, since $\mathfrak{m}_{k,sb,\ell}$ is valid with respect to $\tuple_{k,\ell}$, it also holds  that $\rhat_{k,sb}+\phat_{k,sb}^{\top}\vup\klhpl-\qup_k\geq 0$.
Hence, for all $x\in\states$, $\qup\klh(x,\pigreed_{k,\ell})\geq \qup\klh(x,\pigreed_{k,\ell})$ which proves the inductive step $\vup\klh(x)\geq \vst\klh(x)$. The proof of $\vlow\klh(x)\leq \vst\klh(x)$ is essentially the same using the policy maximizing $\qlow\klh$ instead of $\pigreedk$.

We next prove the induction step for the outer induction. By the induction hypothesis, $\activeset_{k,\ell;h}$ contains all optimal actions. We now show that this continues to hold for base learner $\ell-1$ as well. We have already shown that, for all states-actions-steps $(x,a,h)$ and base learners $\ell\geq \lst$, it holds that $\qlow\klh(x,a)\leq \qst_h(x,a)\leq \qup\klh(x,a)$. Since $\pist_h(x)\in\activeset\klh(x)$, for any $a'\in\activeset\klh\ofx$:
\begin{align*}
    \qlow\klh(x,a')\leq \qst_h(x,a')\leq \qst_h(x,\pist_h(x))\leq \qup\klh(x,\pist_h(x)),
\end{align*}
which implies that $\pist_h(x)\in\activeset_{k,\ell-1;h}$ the way $\activeset_{k,\ell-1;h}$ is created (Algorithm~\ref{alg:UCBVI_base}). As a result, both criteria of confidence-admissibility are satisfied for $\tuple_{k,\ell}$ which concludes the proof.
\end{proof}

\subsection{Robust base learners' subsampled model deviations{ (Lemma~\ref{lem:local_subsampled_corruptions}})}\label{app:proof_subsampling}

\begin{proof}[Proof of Lemma~\ref{lem:local_subsampled_corruptions}]
    Recall that $\ck$ is the indicator that episode $k\in[K]$ is corrupted. Let $Z_{k,\ell}$ be the indicator of whether episode $k$ is charged to base learner $\ell$. Then the number of corrupted datapoints is \begin{align}\label{eq:bound_total_corruption}|\datacorrupt\klsb|\leq H\cdot\sum_{k=1}^K Z_{k,\ell}\ck.
        \end{align}

        Next, let $\{\mathcal{F}_{k}\}_{k \ge 1}$ denote the filtration generated by the set of observed trajectories up to and including episode $k$, the adversary's decision to corrupt, $c_{k+1}$, and the learner's \emph{selection} of a randomized policy (i.e. a distribution over deterministic policies), but \emph{not} the draws of the learner's random coins. Since the adversary is oblivious within episodes -- that is, the adversary chooses whether to corrupt before seeing the learner's random coins -- this is indeed a filtration.  We therefore have that $c_{k}$ is $\calF_{k-1}$-measurable, but $Z_{k,\ell}$ is $\calF_{k}$-measurable, and 
        \begin{align*}
        \Exp[(Z_{k,\ell}\ck)^2 \mid \calF_{k-1}] = \ck\Exp[Z_{k,\ell}^2 \mid \calF_{k-1}] \leq \ck\Exp[Z_{k,\ell} \mid \calF_{k-1}] \le \ck 2^{-\ell}
        \end{align*}
        where the last step follows from the bound on the probability of charging the episode $k$ to base learner $\ell$ which is equal to the probability that the episode schedule ends at base learner $\ell$ (Claim~\ref{claim:sched_ub}). 
        
       Applying the Azuma-Bernstein  inequality (Lemma~\ref{lem:azuma_bernstein}) with  $X_{k,\ell} = Z_{k,\ell}\ck$, $b = 1$, and $\sigma^2 = \frac{1}{K}\sum_{k=1}^K \Exp[(Z_{k,\ell}\ck)^2 \mid \calF_{k-1}] \le \frac{1}{K}\sum_{k=1}^k \ck 2^{-\ell} = 2^{-\ell}C/K$, for any fixed $\ell$,  with probability $1-\delta$:
        \begin{align*}
        \left\lvert \frac{1}{K}\left( \sum_{k=1}^K X_{k,\ell} - \Exp[X_{k,\ell} | \calF_{k-1}] \right) \right\rvert \leq \sqrt{\frac{2 \cdot (2^{-\ell} C)\ln(2/\delta)}{K^2}} + \frac{2 \ln(2/\delta)}{3K}.
        \end{align*}
     Using the fact that $\Exp[X_{k,\ell} | \calF_{k-1}] \leq \didcorrupt_k 2^{-\ell}$ and thus $\sum_{k=1}^K \Exp[X_{k,\ell}|\calF_{k-1}] \leq \sum_{k=1}^K \didcorrupt_k 2^{-\ell} = C2^{-\ell}$, we can conclude that:
        \begin{align*}
         \sum_{k=1}^K X_{k,\ell} \leq \sqrt{2 \cdot (2^{-\ell} C)\ln(2/\delta)} + \frac{2 \ln(2/\delta)}{3} + 2^{-\ell}C.
        \end{align*}
        In particularly, union bounding over all learners $\ell$ by invoking the above inequality with $\delta \leftarrow \delta/8\ell^2$ for learner $\ell$, we have that with probability $1 - \delta/4$, the following holds uniformly for all $\ell \in \lmax$, 
        \begin{align*}
        \sum_{k=1}^K X_{k,\ell} \le \sqrt{2 \cdot (2^{-\ell} C)\ln(16\ell^2/\delta)} + \frac{2 \ln(16\ell^2/\delta)}{3} + 2^{-\ell}C.
        \end{align*}
        In particular for all $\ell \ge \lst$, $2^{-\ell} C \le 1$, and thus
        \begin{align}\label{eq:high_prob_bound_corruption}
        \sum_{k=1}^K Z_{k,\ell}\ck \leq (\sqrt{2\ln(16\ell^2/\delta)} + \frac{2}{3}\ln(16\ell^2/\delta) + 1 \le 2\ln(16\ell^2/\delta).
        \end{align}
    Combining Eq.~\eqref{eq:bound_total_corruption}~and~\eqref{eq:high_prob_bound_corruption} concludes the lemma.\end{proof}

\newpage

\section{Omitted proofs on tabular MDPs}
\label{app:aux_tabular}

\subsection{Concentration for idealized model estimates (Lemmas~\ref{lem:global_idealized_stochastic_tabular}~and~\ref{lem:local_concentration})} \label{app:idealized_stociastic_tabular}

\begin{proof}[Proof of Lemma~\ref{lem:global_idealized_stochastic_tabular}]
Let $\bolddel_{x}$ is be indicator vector on $x \in \states$. We begin with the first inequality for which it holds:
    \begin{align*}
    &\rhatstoch\kgl\xa - \rst \xa + (\phatstoch\kgl - \pst(x,a))^\top \Vsthpl \\
    &\quad=  \frac{1}{N\kgl}\sum_{(j,h) \in \calS_{k,gl}} (\rstoch_{j,h} - r(x\jh,a\jh)) + (\bolddel_{\xstoch_{j,h+1}} - \pst(x\jh,a\jh) )^\top\Vsthpl
    \end{align*}
    Note that $\rstoch_{j,h},r(x\jh,a\jh)) \in [0,1]$,  $\bolddel_{\xstoch_{j,h+1}} - \pst(x\jh,a\jh)$ lie on the simplex, and $\Vsthpl \in [0,H-1]^{\calX}$ for $h \ge 1$. 
As a result, each summand is at most $H$. 

Moreover, since  $\Exp[\rstoch_{j,h}  \mid \calF_{j;h}] = r(x\jh,a\jh)$ and $\xstoch_{j,h+1} \sim \pst(x\jh,a\jh) \mid \calF_{j;h}$, the summands form a martingale difference sequence with respect to an appropriate filtration (see e.g. Section B.4 in \cite{AzarOsMu17} for formal constructions of said filtration). By the anytime Azuma-Hoeffiding inequality (Lemma~\ref{lem:azuma_hoeffding_anytime}), with probability $1 - \delta$, for any fixed $\xa$ and for all $k \in [K]$:
    \begin{align*}
    |\rhatstoch\kgl\xa - \rst \xa + (\phatstoch\kgl - \pst(x,a))^\top \Vsthpl| &\le H\sqrt{\frac{2\ln(4 N\kgl\xa^2/\delta)}{N\kgl\xa}} \\ &\le H\sqrt{\frac{2 \ln (4 T^2/\delta)}{N\kgl\xa}}.
    \end{align*}
    Union bounding over all $\xa$, over $h \in [H]$ for each $\Vsthpl$, and shrinking $\delta$ by a factor of $1/16SA$ provides the guarantee of the present lemma, with failure probability at most $\delta/16$. 

    Let us now turn to the second inequality, where we will instead consider $V \in [-H,H]^{\states}$. While a more refined argument is developed in Section~\ref{app:stoch_uniform_conc_refined_proof}, we use a simple argument based on a uniform covering. To begin, the same argument applied in the previous part of the lemma shows that for any \emph{fixed} $V \in [-H,H]^{\states}$ and \emph{fixed} $\xa$ pair, we have that with probability $1-\delta$.
    \begin{align*}
    \forall k \in [K], \quad |(\phatstoch\kgl - \pst(x,a))^\top V| \le 2H\sqrt{\frac{2 \ln (4 T^2/\delta)}{N\kgl\xa}}.
    \end{align*}
    Next, let $\calN$ denote an $H/2$-net of $[-H,H]^{\states}$ in the $\ell_{\infty}$ norm; we may take $|\calN| \le 4^{S}$. Then, with probability $1-\delta$, we have
    \begin{align*}
    \max_{V' \in \calN} |(\phatstoch\kgl - \pst(x,a))^\top V' | \le 2H\sqrt{\frac{2 \ln (4 T^2/\delta) + 2S \ln(4)}{N\kgl\xa}} \le 4H\sqrt{\frac{S \ln (4 T^2/\delta)}{N\kgl\xa}} 
    \end{align*}
    Moreover, since for all $V \in [-H,H]^{\calX}$ and $V' \in \calN$, we have that $V - V' \in \frac{1}{2}[-H,H]^{\calX}$, we have 
    \begin{align*}
    \max_{V \in [-H,H]^{\calX}} |(\phatstoch\kgl - \pst(x,a))^\top V| \le 2\max_{V' \in \calN}|(\phatstoch\kgl - \pst(x,a))^\top V| \le 8H\sqrt{\frac{S \ln (4 T^2/\delta)}{N\kgl\xa}}.
    \end{align*}
    Union bounding over all $\xa$ and shrinking $\delta$ by a factor of $1/(16SA)$ provides the bound in the lemma with failure probability at most $\delta/16$. Together the failure probability is at most $1 -\delta/8$.
 \end{proof}

\begin{proof}[Proof of Lemma~\ref{lem:local_concentration}] The proof is identical to Lemma~\ref{lem:global_idealized_stochastic_tabular}, with the exception that we union bound over base learners $\ell \in [\lmax]$. Since $\lmax = \lceil{\log T}\rceil \le T$, we inflate the logarithm by at most a factor of $T$.
\end{proof}

\subsection{Auxiliary details on bounding confidence sums (Theorem~\ref{thm:tabular_initial})}\label{app:confidence_integration}
We now provide a general approach to bound the sum of the confidence terms that arise in Lemma~\ref{lem:confidence_term_tabular}. We reuse and slightly adjust this approach for the more refined bounds in Appendix~\ref{app:log_regret}. To begin, let us consider a generic sequence of policies $\boldpi_1,\dots,\boldpi_K$. The main ingredient is a notion which relates an abstract sampled count $\Nksp\xa$ to visitations under the measures induced by $(\boldpi_k)$. This allows us to establish an upper bound on the expected number of samples $\Nsp$ needed in order for concentration bounds to kick in and enable us to relate the two quantities. 
\vspace{0.1in}
\begin{definition}[Subsampled Count]\label{defn:subsasmple_count} \hspace{0.1em}
Let $\boldpi_1,\dots,\boldpi_K$ denote a sequence of policies. Define counts
\begin{align*}
\wk\xa = \Exp^{\boldpik}\left[\sum_{h=1}^H \I(\xhah = \xa) \right] \qquad \text{ and }\qquad
\nbark\xa = \sum_{j=1}^k \wkh\xa.
\end{align*}
Given $\psample > 0$, $\Nsp \ge 1$, we say that $\Nksp$ is a $(\psample,\Nsp)$-subsampled count with respect to $\boldpi_1,\dots,\boldpi_K$ if for all $\xa$ and all $k$ such that $\psample\nbark\xa \ge \Nsp \ge 1$, 
\begin{align*}
\Nksp\xa \ge \frac{1}{4} \psample\nbark\xa. 
\end{align*}
\end{definition}
The following lemma verifies that local and global counts $N\kgl\xa$ and $N\klsb\xa$ satisfy the above definition with $\Nsp$ linear in $C$ and logarithmic in the other quantities. Recall the mixture probabilities $q_{\ell} := \Pr[f(k,H) = \ell]$ and $q_{\le \lst} := \Pr[f(k,H) \le \lst]$ (Definition~\ref{defn:policy_decomp}).
\vspace{0.1in}
\begin{lemma}\label{lem:good_subsampling} \hspace{0.1em} With probability $1 - \delta/8$, for all $\ell,\lst \in [\lmax]$, it holds simultaneously that the counts $N\klsb\xa$ are a $(q_{\ell},4H \ln\frac{32 eHSA \lmax}{\delta}+ 2CH q_{\ell})$-subsampled count for the policies $(\pimaster\kl)$, and  $N\klsb\xa$ are a $(q_{\le \lst}, 4H\ln\frac{32 eHSA \lmax}{\delta}+ 2CH)$ for the policies $(\pimaster\klelst)$.  We denote this good event by $\eventsample$.
\end{lemma}
\begin{proof} 
		\newcommand{\weightil}{\widetilde{\weight}}
		\newcommand{\Ntil}{\widetilde{N}}
		Similar to the definition of subsampled counts (Definition~\ref{defn:subsasmple_count}), we define an analogue for the observed data that include corrupted data:		\begin{align*}
		\weightil\klelsth\xa := \Pr^{\pimaster_{k,\le \lst}}_{{\calM}_k}((x_h,a_h) = (x,a)), \quad \weightil\klh\xa := \Pr^{\pimaster\kl}_{{\calM}_k}((x_h,a_h) = (x,a)),
		\end{align*}
		 where ${\calM_k}$ is the MDP the learner actually experiences at episode $k$ (i.e., it is chosen by the adversary at episode $k$ if $\ck = 1$ and otherwise is the nominal MDP $\calM$). We also define the aggregates
		 \begin{align*}
		\weightil\klelst\xa &:= \sum_{h=1}^H\weightil\klelsth\xa, \quad \weightil\kl\xa := \sum_{h=1}^H\weightil\klh\xa, \text{ and}\\
		\Ntil\klelst\xa &:= \sum_{h=1}^H\weightil\klelsth\xa, \quad \Ntil\kl\xa := \sum_{h=1}^H\weightil\klh\xa,
		\end{align*}
		 Namely $\widetilde{\weight}\klelst\xa$ is the expected total number of times $(x,a)$ is visited during the whole episode if one executes $\pimaster\klelst$ under the MDP ${\calM}_k$; analogously of $\weightil\kl\xa$ is the corresponding expected number if one executes $\pimaster\kl$ under $\calM_k$. 
		 We define the weights under the nominal MDP:  
		 \begin{align*}
	\weight\klelsth\xa &:= \Pr^{\pimaster_{k,\le \lst}}_{{\calM}}((x_h,a_h) = (x,a)), \quad \weight\klh\xa := \Pr^{\pimaster\kl}_{{\calM}}((x_h,a_h) = (x,a))\\
		\weight\klelst\xa &:= \sum_{h=1}^H\weight\klelsth\xa,\quad
		\weight\kl\xa := \sum_{h=1}^H\weight\klh\xa\\
		\nbar\klelst\xa &:= \sum_{h=1}^H\weight\klelsth\xa,\quad
		\nbar\kl\xa := \sum_{h=1}^H\weight\klh\xa,
		\end{align*}
	Using the definition of $C$, one can bound the  difference between $\widetilde{N}_k\xa$ and $\nbar_k\xa$. For any $k \in [K]$ and $(x,a)\in\states\times\actions$, we have
\begin{align*}
&\lvert \Ntil\kl\xa - \nbar\kl\xa  \rvert  \leq \sum_{i=1}^k \lvert \weight_{i,\ell}\xa - \weightil_{i,\ell}\xa \rvert = \sum_{i=1}^k \I(\didcorrupt_i = 1)  \lvert \weightil_{i,\ell}\xa - \weight_{i,\ell}\xa \rvert \leq CH,
\end{align*} 
since $\max\{\weightil_{i,\ell}\xa, \weight_{i,\ell}\xa\} \leq H$. Similarly, 
\begin{align*}
&\lvert \Ntil\klelst\xa - \nbar\klelst\xa  \rvert  \leq  \sum_{i=1}^k \I(\didcorrupt_i = 1)  \lvert \weightil_{i,\le\lst}\xa - \weight_{i,\le\lst}\xa \rvert \leq CH,
\end{align*}
Focusing on $\weightil\kl\xa$ and $\widetilde{N}\kl\xa$, we can verify that $\Exp_k\left[ N_{k+1,\ell;sb}\xa \right] = q_{\ell}\weightil\kl\xa + N\klsb\xa$, where $\Exp_k$ is the conditional expectation conditioned on all events up to and including episode $k$, and where $q_{\ell} := \Pr[f(k,H) = \ell]$ for any given episode $k$. From Lemma E.1 in \cite{dann2017unifying}, we have that with probability at least $1-\delta/2$, we have:
\begin{align*}
\{\forall k, x,a: \nklsb\xa \geq \frac{q_{\ell}}{2} \Ntil_{k-1}\xa - H\ln\frac{2HSA}{\delta}\}.
\end{align*} This means that that the following event holds with probability at least $1-\delta/2$:
\begin{align*}
\{\forall (x,a,k), \nklsb\xa \geq \frac{q_{\ell}}{2} \nbar_{k-1,\ell} - H \ln\frac{2HSA}{\delta} - \frac{ q_{\ell} CH}{2}\}
\end{align*} 
Use the same argument from Lemma B.7 in \cite{SimchowitzJamieson19}, we note that $\nbar_{k,\ell;sb}\xa \leq \nbar_{k-1,\ell;sb}\xa + H$. The above event implies that with probability $1 - \delta$, we have
\begin{align*}
N_{k,\ell;sb}\xa \xa &\geq \frac{q_{\ell}}{2}\nbar\kl\xa - H \ln\frac{2HSA}{\delta} - \frac{q_{\ell} CH}{2} - \frac{q_{\ell}H}{2}  \\
&\geq \frac{q_{\ell}}{2}\nbar\kl\xa - H\ln\frac{2eHSA}{\delta} - \frac{q_{\ell} CH }{2},
\end{align*} 
where we absorb the $q_{\ell}H/2$ by placing an $e$ into the logarithim above. Hence when 
$$q_{\ell}\nbar\kl\xa \ge 4H\ln\frac{2eHSA}{\delta} + 2 CH q_{\ell},$$
it holds that
$N\klsb\xa \ge \frac{q_{\ell}\nbar\kl\xa}{4}$. 
Similarly, using that $N\kgl$ is greater than the cumulative counts from episodees $\{f(k,h) \le \lst\}$ we can show that with probability $1 - \delta$, for any fixed $\lst$ and $q_{\le \lst} := \Pr[f(k,H) \le \lst]$, it holds
\begin{align*}
q_{\le \lst}\nbar\kl\xa \ge 4H\ln\frac{2eHSA}{\delta} + 2 CH , 
\end{align*}
and therefore $N\kgl\xa  \ge \frac{\nbar\klelst\xa}{4}$.

Rescaling $\delta \leftarrow \delta/(16\lmax)$ and union bounding over $\ell$ (and possible values of $\lmax$), we find that with probability $1 - \delta/8$, $N\klsb\xa$ is an $(q_{\ell},4H \ln\frac{32 eHSA \lmax}{\delta} + 2CH q_{\ell})$-subsampled count for the policies $(\pimaster\kl)$ and $N\klsb\xa$ is an $(q_{\le \lst},\Nsample, 4H \ln\frac{32 eHSA \lmax}{\delta}+ 2CH)$-subsampled count for the policies $(\pimaster\klelst)$. This
concludes the bound.
\end{proof}
Finally, we bound confidence sums with subsampled counts.

\begin{lemma}[Integration]\label{lem:simple-integration} \hspace{0.1em} Consider a sequence of policies $\boldpi_1,\dots,\boldpi_K$, and let $\Nksp$ be $(\psample,\Nsp)$-subsampled counts with respect to this sequence. Using $\lesssim$ to hide constants in the bounds, it holds:
\begin{align*}
&\psample\sum_{k=1}^K\Exp^{\boldpik}\left[\sum_{h=1}^H \min\left\{3H, \sqrt{\frac{\alpha_{1 }}{\Nksp\xhah}} + \frac{\alpha_{2}}{\Nksp\xhah} \right\} \right] \\
&\quad\lesssim  SA (H\Nsp+H^2) + \left(\sqrt{SA  \,\psample \alpha_1 \,T} + \alpha_2 \ln T\right),
\end{align*}
\end{lemma}
\begin{proof}
We define the episode when statistics of state-action pair $\xa$ have sufficiently converged:
\begin{align*}
k_0\xa :=  \min\{K+1 , \inf\{k: \psample\nbark\xa \ge \Nsp\}\}
\end{align*}
Since $\Nksp$ is $(\psample,\Nsp)$-subsampled (Definition~\ref{defn:subsasmple_count}), after this episode, it holds that:
\begin{align*}
\Nksp\xa \ge \frac{\psample}{4} \cdot \nbark\xa, \quad \forall k \ge k_0\xa
\end{align*}

We now bound the main term of the LHS in the lemma:
\begin{align*}
&\sum_{k=1}^K\Exp^{\boldpik}\left[\sum_{h=1}^H \min\left\{3H, \sqrt{\frac{\alpha_{1 }}{\Nksp\xhah}} + \frac{\alpha_{2}}{\Nksp\xhah} \right\} \right]\\
&=\sum_{x,a}\sum_{k=1}^K \wk\xa \min\left\{3H, \sqrt{\frac{\alpha_{1 }}{\Nksp\xa}} + \frac{\alpha_{2}}{\Nksp\xa} \right\} \\
&\le \sum_{x,a}\sum_{k=1}^{k_0-1}\wk\xa \cdot 3H + \sum_{x,a}\sum_{k = k_0\xa}^K \wk\xa \min\left\{3H, \sqrt{\frac{\alpha_{1 }}{\Nksp\xa}} + \frac{\alpha_{2}}{\Nksp\xa} \right\} \\
\end{align*}
Using this, we can bound the first term (for the lemma's LHS including $\psample$) as:
\begin{align*}
\psample\sum_{x,a}\sum_{k=1}^{k_0-1}\wk\xa\cdot 3H = 3\psample H \sum_{x,a}\nbar_{k_0}\xa \le 3SA H\Nsp
\end{align*}
Regaarding the second term (for the lemma's LHS including $\psample$), we  let:
\begin{align*}
f(u) = \min\left\{3H, \sqrt{\frac{\alpha_{1 }}{u}} + \frac{\alpha_{2}}{u} \right\}.
\end{align*}
Then, for $k \ge k_0$, since $\Nksp\xa \ge \nbark\xa/4$, we have that
\begin{align*}
\min\left\{3H, \sqrt{\frac{\alpha_{1 }}{\Nksp\xa}} + \frac{\alpha_{2}}{\Nksp\xa} \right\} \le f(\psample \nbark\xa /4)
\end{align*}
From a modification of Lemma C.1 in \cite{SimchowitzJamieson19} and the fact that $\nbar_{k_0} \xa \ge \Nsp \ge 1$, we have (assuming $\nbarK\xa \ge 1$, for otherwise the following term is zero):
\begin{align*}
\sum_{k = k_0\xa}^K \wk\xa  f(\psample \nbark\xa/4) &\le 3H f(\psample/4) + \int_{1}^{\nbarK\xa} f(\psample u/4)\rmd u \\
&\le 12H^2  + \int_{1}^{\nbarK\xa} \sqrt{\frac{4\alpha_1}{\psample u}} + \frac{4\alpha_2}{\psample u}\rmd u   \\
&= 12H^2 + \frac{1}{\psample}\left(4\sqrt{\alpha_1 \psample \nbarK\xa} + 4\alpha_2\ln (\nbarK\xa) \right)\\
&\le 12H^2 + \frac{1}{\psample}\left(4\sqrt{\alpha_1 \psample \nbarK\xa} + 4\alpha_2 \ln T\right),
\end{align*}
where in the last line we use $\nbarK\xa \le KH = T$. Thus, by Cauchy-Schwartz,
\begin{align*}
\sum_{x,a}\sum_{k = k_0\xa}^K \wk\xa  f(\Nksp\xa) &\le 12SA H^2 + \frac{4}{\psample}\left(\sqrt{SA \psample \alpha_1 \sum_{x,a} \nbarK\xa} + \alpha_2 \ln T\right)\\
&= 12SA H^2 + \frac{4}{\psample}\left(\sqrt{SA  \psample \alpha_1 T} + \alpha_2 \ln T\right),
\end{align*}
where we use that $\sum_{x,a}\nbarK\xa = KH = T$. Hence, putting things together,  we have
\begin{align*}
&\psample \sum_{k=1}^K\Exp^{\boldpik}\left[\sum_{h=1}^H \min\left\{3H, \sqrt{\frac{\alpha_{1 }}{\Nksp\xhah}} + \frac{\alpha_{2}}{\Nksp\xhah} \right\} \right] \\
&\quad\le SA (3H\Nsp+12\psample H^2) + 4\left(\sqrt{SA  \,\psample \alpha_1 \,T} + \alpha_2 \ln T\right).
\end{align*}
\end{proof}

 \subsection{Logarithmic regret for $\supervisedtab$ (Theorem~\ref{thm:logT_catvi}\label{ssec:prop:logT_catvi})} 
\begin{proof}[Proof of Theorem~\ref{thm:logT_catvi}]
As in the proof of Lemma~\ref{lem:simple-integration}, we define 
\begin{align*}
k_0\xa :=  K+1 \wedge \inf\{k: \nbark\xa \ge \psample\Nsp\}
\end{align*}
which again implies:
\begin{align*}
\Nksp\xa \ge \frac{\psample}{4} \cdot \nbark\xa, \quad \forall k \ge k_0\xa
\end{align*}
Starting Theorem~\ref{thm:clipped_cat_framework}, we have
\begin{align}
 \sum_{k=1}^K \vst - V^{\boldpik} &\lesssim H \rho \sum_{k=1}^K\Exp^{\boldpi\supi_k}\left[\sum_{h=1}^H  \errlogclip\kh(x_h,a_h)  + \errlogsqclip\kh(x_h,a_h)\right],\nonumber\\
 &\lesssim H \rho \sum_{k=1}^K \sum_{x,a}  \wk\xa \left(\errlogclip\kh(\xa)  + \errlogsqclip\kh(\xa)\right)\nonumber\\
 &= H \rho \sum_{x,a} \sum_{k=1}^{k_0\xa-1} \wk\xa + H \rho   \sum_{x,a} \sum_{k = k_0\xa}^K \wk\xa \left(\errlogclip\kh(\xa)  + \errlogsqclip\kh(\xa)\right)\nonumber\\
 &\lesssim \rho H \left(\frac{SA H^2  \Nsp}{\psample} + H \rho   \sum_{k = k_0\xa}^K \wk\xa \left(\errlogclip\kh(\xa)  + \errlogsqclip\kh(\xa)\right)\right),\label{eq:log_clip_reg_bound}
\end{align}
where in the last line, we use that $\left(\errlogclip\kh(\xa)  + \errlogsqclip\kh(\xa)\right) \le H^2$ by definition, and the identity $\sum_{k > k_0\xa} \wk\xa = \nbar_{k_0}\xa \le \Nsp/\psample$. Let us now bound the sum corresponding to $k \ge k_0\xa$.

\paragraph{Bounding the terms $k \ge k_0\xa$} Recall the definition
\begin{align*}
\gapcheck_h\xa = \max\left\{\frac{\gapmin}{32H^2},\,\frac{\gaph\xa}{32H}\right\}
\end{align*}
define $\gapcheck\xa := \min_h \gapcheck_h\xa/4$, and introduce the functions
\begin{align*}
f_{1}(u) &:= \clip{\gapcheck\xa}{\sqrt{\frac{4\constc_{1}} {u \,\psample}}}\\
f_{2}(u) &:= \frac{ \constc_{2} }{u \,\psample}\\
f_{3}(u) &:= \frac{\constc_{1}}{u \,\psample}\\
f_{4}(u) &:= \min\left\{H^2, \frac{\constc_{2}^2 }{(u \,\psample)^2} \right\}\\
\end{align*}
Recall that we assume $\errlog\kh\supi\xa$ is upper bounded as:
\begin{align*}
\errlog\kh\xa \le \sqrt{\frac{\constc_{1}}{\Nksp\xa}} + \frac{\constc_{2}}{\Nksp\xa}.
\end{align*}

Using Claim~\ref{claim:distribution_clipping}, we have:
\begin{align*}
&\errlogclip\kh\xa =  \clip{\gapcheckh\xa}{\min\{H, \errlog\kh\xa\}} \\&\le \clip{\gapcheckh\xa}{\errlog\kh\xa}\\
& \lesssim  \clip{\gapcheckh\xa/4}{\sqrt{ \frac{\alpha_{1}}{\Nksp\xa}} }  + \frac{\alpha_{2}}{\Nksp\xa}\\
& \le  \clip{\gapcheck\xa}{\sqrt{ \frac{\alpha_{1}}{\Nksp\xa}} }  + \frac{\alpha_{2}}{\Nksp\xa}\\
& \lesssim \clip{\gapcheck\xa}{ \sqrt{ \frac{4\alpha_{1}}{\psample\nbark\xa}} }  + \frac{4\alpha_{2}}{\psample\nbark\xa}\\
&\lesssim f_{1}(\nbark\xa)\} + f_{2}(\nbark\xa)
\end{align*}
Similarly, for $\errlogsqclip\kh\xa$, using Claim~\ref{claim:distribution_clipping} and the inequality that $(a+b)^2 \lesssim a^2 + b^2$, we have:
\begin{align*}
&\errlogsqclip\kh\xa \\
&= \clip{\gapcliplow}{\left( \min\{H, \errlog\kh\xa\} \right)^2}\\
&\le \left( \min\{H, \errlog\kh\xa\} \right)^2\\
&= \min\left\{ H^2, (\errlog\kh\xa)^2 \right\} \\
& \le \min\left\{  H^2,  2\frac{\alpha_{1}}{\Nksp\xa} + 2\left(\frac{\alpha_{2}}{\Nksp\xa}\right)^2 \right\} \\
& \le \min\left\{  H^2,  2\frac{\alpha_{1}}{\psample \cdot \nbark\xa} + 2\left(\frac{4\alpha_{2}}{\psample \cdot \nbark\xa}\right)^2 \right\} \\
&\lesssim f_{3}(\nbark\xa)\} + \min\{H^2,f_{4}(\nbark\xa)\}
\end{align*}
Moreover, using the fact that $\errlogclip\kh\xa \le H$, and $\errlogsqclip\kh\xa \le H^2$, we can bound
\begin{align}
&\sum_{x,a}\sum_{k = k_0\xa}^K \wk\xa \left(\errlogclip\kh(\xa)  + \errlogsqclip\kh(\xa)\right)\le \sum_{x,a}\sum_{i=1}^4\sum_{k = k_0\xa}^K \wk\xa  \min\{H^2,f_{i}(\nbark\xa)\}. \label{eq:errlog_to_f}
\end{align}
Using the fact $\nbar_{k_0\xa} \xa \ge \Nsp\xa \ge 1$,   a modification of Lemma C.1 in \cite{SimchowitzJamieson19} yields
\begin{align*}
\sum_{k = k_0\xa}^K \wk\xa  \max\{H^2,f_{i}(\nbark\xa) \le H^3 + \int_{1}^{\nbarK\xa} f_i(u)\rmd u.
\end{align*}
Let us evaluate each of these integrals in succession. We have 
\begin{align*}
\int_{1}^{\nbarK\xa} f_1(u)\rmd u &= \int_{1}^{\nbark\xa}  \min\left\{H,\clip{\gapcheck\xa}{\sqrt{\frac{4\constc_{1}} {u \,\psample}}}\right\} \rmd u \\
&= \frac{1}{\psample} \int_{\psample}^{\psample \nbark\xa}  \min\left\{H,\clip{\gapcheck\xa}{\sqrt{\frac{4\constc_{1}}{s}}} \right\}\rmd s\\
&\lesssim \frac{1}{\psample} \min\left\{\sqrt{\constc_{1}\psample \nbarK\xa}, \frac{\constc_1}{\gapcheck\xa} \right\},
\end{align*}
where the last step uses Lemma~\ref{lem:clipped_integral}. Moreover, 
\begin{align*}
\int_{1}^{\nbark\xa} (f_2(u)+f_3(u))\rmd u &= \int_{1}^{\nbarK\xa}  \frac{\constc_1 + \constc_2}{\psample \cdot u} du \le \ln(\nbark\xa) \le \frac{1}{\psample} \cdot \left(\constc_1 + \constc_2\right)\ln(T),
\end{align*}
where $\nbark\xa \le KH = T$. Next, with $a\vee b = \max\{a,b\}$,
 \begin{align*}
\int_{1}^{\nbark\xa} f_4(u) \rmd u &= \int_{1}^{\nbarK\xa}\min\left\{H^2, \frac{\constc_{2}^2 }{(u \,\psample)^2}\right\} \rmd u \\
&\le \frac{1}{\psample}  \int_{\psample}^{\psample\nbarK\xa}\min\left\{H^2, \frac{\constc_{2}^2 }{s^2}\right\}\rmd s\\
&\le \frac{1}{\psample} \left(H^2 + \int_{1}^{\constc_2 \vee 1} H^2 ds + \int_{\constc_2^2 \vee 1}^{\psample\nbarK\xa \vee 1 } \frac{\constc_2^2 }{s^2}\right) \rmd s \\
&\le \frac{1}{\psample} \left(H^2 + H^2 \constc_2 + \int_{\constc^2 \vee 1}^{\psample\nbarK\xa \vee 1 } \frac{\constc_2 }{s} \right)\rmd s \\
&\le \frac{1}{\psample} \left(H^2 + H^2 \constc_2 + \int_{\constc^2 \vee 1}^{\psample\nbarK\xa \vee 1 } \frac{\constc_{2} }{s} \right) \rmd s\\
&\le \frac{1}{\psample} \left(H^2 + \constc_2 (H^2+\ln T) \right) \rmd s\\
\end{align*}
Hence,
\begin{align*}
\sum_{x,a}\sum_{i=1}^4\sum_{k = k_0\xa}^K \wk\xa  f_{i}(\nbark\xa) &\lesssim \frac{SA}{\psample} \left(\constc_1 + H^2 + \constc_2( H^2 + \ln(T)) \right) \\
&\qquad+ \frac{1}{\psample} \sum_{x,a} \min\left\{\sqrt{\constc_{1}\psample \nbarK\xa}, \frac{\constc_1}{\gapcheck\xa} \right\},
\end{align*}
Thus, combining with \eqref{eq:log_clip_reg_bound} to \eqref{eq:errlog_to_f}, we have
\begin{align}
\psample\left(\sum_{k=1}^K \vst - V^{\boldpik}\right) &\lesssim SA H \rho  \cdot (H^2 \psample  \Nsp + \constc_1 + \constc_2  + H^2 + \constc_2 (H^2+\ln T) )\nonumber\\
&\quad+ \rho \sum_{x,a} \min\left\{\sqrt{\constc_{1}\psample \nbarK\xa},\, \frac{\constc_1}{\gapcheck\xa} \right\}. \label{eq:log_integral_second_to_last}
\end{align}
Using $\constc_2 \ge 1$,  and $\Nsp \ge 1$, we can simplify
\begin{align}
&SA H\rho \ln(T) \cdot (H^2 \Nsp + \constc_1 + \constc_2 + H^2 + \constc_2 (H^2 + \ln T)) \nonumber\\
&\quad\lesssim SA \rho  \cdot (H^3\Nsp + H\constc_1 \ln (T) + \constc_2  (H^3 + H\ln T) ). \label{eq:simplification_Log}
\end{align}
Finally, we develop two bounds on the term on the final line. First, a gap-free bound of 
\begin{align*}
 \sum_{x,a} \sqrt{\constc_{1}\psample \nbark\xa} \le  \sqrt{\constc_1 SA \psample  \sum_{x,a}\constc_{1}\psample \nbarK\xa} = \sqrt{\constc_1 \psample\, SA T   }.
\end{align*}
Second, the gap dependent bound. Recall $\gapcheckh \xa := \max\left\{\frac{\gapmin}{32H^2},\,\frac{\gaph\xa}{32H}\right\}$ and $\gapcheckh = \min_{h}\gapcheck\xa/4$. Moreover, recall $\optacts := \{\xa: \gaph\xa = 0 \text{ for some } h\}$, and $\subacts = \states \times \actions - \optacts$, and $\gap\xa = \min_h \gaph\xa$. Then, we have for $(x,a) \in \optacts$, we have
\begin{align*}
\gapcheck\xa \gtrsim \begin{cases} \gap\xa/H & \xa \in \subacts\\
\gapmin\xa/H^2 & \xa \in \optacts 
\end{cases}
\end{align*}
Hence, recalling 
\begin{align*}
\gapcomplexity := \sum_{\xa \in \subacts} \frac{H}{\gap\xa} + \frac{H^2|\optacts|}{\gapmin},
\end{align*} 
we have
\begin{align*}
 \sum_{x,a} \frac{\constc_1}{\gapcheck\xa}  \le \constc_1 \gapcomplexity. 
\end{align*}
Taking the better of the above two bounds,
\begin{align*}
 \rho \sum_{x,a} \min\left\{\sqrt{\constc_{1}\psample \nbarK\xa},\, \frac{\constc_1}{\gapcheck\xa} \right\} \le \rho\min\left\{ \sqrt{\constc_1 \psample\, SA T   }, \constc_1 \gapcomplexity\right\}.
\end{align*}
Combining with \eqref{eq:log_integral_second_to_last} and \eqref{eq:simplification_Log}, we conclude
\begin{align*}
\psample\left(\sum_{k=1}^K \vst - V^{\boldpik}\right) &\lesssim SA \rho  \cdot (H^3\Nsp + H\constc_1 \ln (T) + \constc_2  (H^3 + H\ln T) ) \\
&\quad H\rho \min\left\{ \sqrt{\constc_1 \psample\, SA T   }, \constc_1 \gapcomplexity\right\},
\end{align*}
as needed.\end{proof}

\subsection{Refined concentration bounds (Lemma~\ref{lem:uniform_conc_refined})}\label{app:stoch_uniform_conc_refined_proof}

To prove the lemma, we use an auxiliary lemma stated below.
\vspace{0.1in}
 \begin{lemma}\label{lem:uncorrupted_multinomial_concentration} \hspace{0.1em} Let $\{x_i\}_{i =1}^{\infty} $ be a sequence of states in $\states$ adapted to a martingale $\calF_i$, such that $x_i \mid \calF_{i-1}$ is drawn from a multinomial distribution with parameter $p\in\Delta(S)$ for all $i \ge 1$. Denoting by $\delta_x$ the indicator vector of state $x$, let $\phat_n := \frac{1}{n}\sum_{i=1}^n \bolddel_{x_i}$. Then, with probability $1-\delta$, the following holds for all vectors $V \in \R^{\states}$, all $u > 0$, and all $n\in\mathbb{N}^+$  simultaneously:
    \begin{align*}
    \left|\left(\phat_n - p\right)^\top V\right| \le \frac{\|V\|_{2,p}^2}{2u} + \frac{S(3u+2\|V\|_{\infty}) \ln(4Sn^2/\delta)}{3n},
    \end{align*}
    where $\|V\|_{2,p}^2 = \sum_{x}p_xV(x)^2$, and $\|V\|_{\infty} = \max_x|V(x)|$.
    \end{lemma}
    \begin{proof}
        Consider a fixed $n\in\mathbb{N}^+$.  By decomposing the LHS, we obtain:
    \begin{align*}
    n\left|\left(\phat_n - p\right)^\top V\right|= \left|\sum_{i=1}^n \left(\bolddel_{x_i} - p\right)^\top V\right| &=\|\sum_{i=1}^n\sum_x V(x) \left({\I_{x_i=x}-p_x}\right)\|\le \sum_{x \in \states} V(x) \left|\sum_{i=1}^n \I_{x_i = x} - p_{x}\right|.
    \end{align*}
    Since the variance of a random variable $\I_{x_i=x}-p_x$ is upper bounded by $p_x(1-p_x)\leq p_x$, for all $x\in \states$, the sequence $\{\I_{x_i=x}-p_x\}_{i\geq 1}$ satisfies the conditions of Azuma-Bernstein
    (Lemma~\ref{lem:azuma_bernstein}) with $(b,\sigma^2) = (1,p_x)$. Thus, with probability $1-\delta_n$ with $\delta_n:=\frac{\delta}{2n^2}$, for any \emph{fixed} state $x\in\states$, it holds that
    \begin{align*}
    \left|\sum_{i=1}^n \I_{x_i = x} - p_{x}\right| \le \sqrt{\frac{2 p_x\ln(2/\delta_n)}{n}} + \frac{2 \ln(2/\delta_n)}{3n}.
    \end{align*}
    Union bounding over all states $x$, with probability $1-\delta_n$, the following holds for all $V \in \R^\states$:
    \begin{align*}
     n\left|\left(\phat_n - p\right)^\top V\right| &\le \sum_{x \in \states} V(x) \left(\sqrt{\frac{2 p_x\ln(2S/\delta_n)}{n}} + \frac{2 \ln(2S/\delta_n)}{3N}\right)\\
    &\le \left(\sum_{x \in \states} V(x) \sqrt{\frac{2 p_x\ln(2S/\delta_n)}{n}}\right) +  \frac{2 S\|V\|_{\infty} \ln(2S/\delta_n)}{3n}\\
    &\le  \sqrt{\frac{2 S (\sum_x p_x V(x)^2)\ln(2S/\delta_n)}{n}} +  \frac{2 S\|V\|_{\infty} \ln(2S/\delta_n)}{3n}\\
    &:=  \sqrt{\frac{2 S \|V\|_{2,p}^2 \ln(2S/\delta_n)}{n}} +  \frac{2 S\|V\|_{\infty} \ln(2S/\delta_n)}{3n}.
    \end{align*}
    Finally, using the inequality $ab \le \frac{a^2}{2u} + \frac{u}{2}b^2$, we have that for all $V \in \R^\states$ and all $u > 0$ simultaneously,
    \begin{align*}
    &\sqrt{\frac{2 S \|V\|_{2,p}^2 \ln(2S/\delta_n)}{n}} +  \frac{2 S\|V\|_{\infty} \ln(2S/\delta_n)}{3n} \\
    &\le \frac{\|V\|_{2,p}^2}{2u} + \frac{Su\ln(2S/\delta_n) }{n} + \frac{2 S\|V\|_{\infty} \ln(2S/\delta_n)}{3n}\\
    &= \frac{\|V\|_{2,p}^2}{2u} + \frac{S(3u+2\|V\|_{\infty}) \ln(2S/\delta_n)}{3n}, 
    \end{align*}
    The final result follows by union bounding over $n \ge 1$ and the fact that $\sum_{n\ge 1} \delta/(2n^2)\leq 2$. \end{proof}
    \begin{proof}[Proof of Lemma~\ref{lem:uniform_conc_refined}]
    In particular, by representing $(\phatstoch\kgl\xa - \pst\xa)^\top V$ as a multi-nomial martingale sequence as in the above lemma, setting $u = H^2/2$ we obtain that for any fixed $\xa$ and $\delta$, and $V:\|V\|_{\infty} \le H$,  we obtain that with probability $1-\delta$, we have for all rounds $k$ that
    \begin{align*}
    \left|\left(\phatstoch\kgl - p\right)^\top V\right| &\le \frac{\|V\|_{2,p}^2}{H^2} + \frac{S(3H^2/2+2H) \ln(4SN\kgl^2/\delta)}{3N\kgl\xa},\\
    &\le \frac{\|V\|_{2,p}^2}{H^2} + \frac{2SH^2 \ln(4ST^2/\delta)}{N\kgl\xa}.
    \end{align*}
    Thus, with probability at least $\delta / 16$, we have for all $x,a,k$ simultaenously that for all $V:\|V\|_{\infty} \le H$,
    \begin{align*}
    \left|\left(\phatstoch\kgl - p\right)^\top V\right| 
    &\le \frac{\|V\|_{2,p}^2}{H^2} + \frac{2SH^2 \ln(64S^2 A T^2/\delta)}{N\kgl\xa}.
    \end{align*}
    Similarly, with probability at least $\delta/16$, we have for all $x,a,k$ simultaenously that for all $V:\|V\|_{\infty} \le H$,
     \begin{align*}
    \left|\left(\phatstoch\klsb - p\right)^\top V\right| 
    &\le \frac{\|V\|_{2,p}^2}{H^2} + \frac{2SH^2 \ln(64S^2 A T^3/\delta)}{N\klsb\xa},
    \end{align*}
    where we pick up an extra factor of $T$ inside the log from a union bound over $\ell \in [\lmax] \subset [T]$. Altogether, the total failure probability is at most $\delta/8$.
    
Recall Lemma~\ref{lem:global_corruption} and \ref{lem:local_subsampled_corruptions} state the upper bounds on the number of corruptions for base learners. 
    To bound $\left\lvert (\phat\kgl - p^\star)^{\top} V \right\rvert$ and $\left\lvert (\phat\klsb - p^\star)^{\top} V \right\rvert$, we use the same derivation that we had in the proof of Lemma~\ref{lem:global_concentration} and Lemma~\ref{lem:local_concentration_tabular}.    \end{proof}

\newcommand{\IAh}{\I_{\Ah}}
\newcommand{\IAi}{\I_{\calA_i}}

\subsection{Clipped Bellman error decomposition (Lemma~\ref{lem:surplus_log})}
\label{ssec:proof_of_reg_decompose_clip}
We consider a \emph{deterministic} policy $\pi$ which is $\tuplek$-supervised. Again, the result extends to randomized policies $\boldpi$ by taking an expectation over its deterministic atoms.  

In order to lighten notation, we introduce the following shorthand for the conditioning operator. Given functions $f_1,\dots,f_H:= \states \times \actions \to \R$, a stage $h \in [H]$, and a pair $(x_h,a_h) \in \states \times \actions$, we let
\begin{align*}
\Exp^{\pi}\left[ \sum_{h'=h}^H f_{h'}(x_{h'},a_{h'}) \mid x_h,a_h \right]
\end{align*}
To denote expectation over a sequence $(\widetilde{x}_{h'},\widetilde{a}_{h'})$ drawn from the law induced by $\pi$ and the nominal MDP $\calM$, starting at $\widetilde{x}_h,\widetilde{a}_h = (x_h,a_h)$ in stage $h$. Similarly, 
\begin{align*}
\Exp^{\pi}\left[ \sum_{h'=h}^H f_{h'}(x_{h'},a_{h'}) \mid x_h \right]
\end{align*}
denotes the analogous expectation, beginning at $\widetilde{x}_{h} = x_h$, and $\widetilde{a}_h = \pi(x_h)$.

\paragraph{Half Clipping}
Following~\cite{SimchowitzJamieson19}, we begin an argument called ``half-clipping'',  which allows us to clip even actions $a \in \pisth(x)$ corresponding which are optimal at $(x,h)$:
\vspace{0.1in}
\begin{lemma}[Half-Clipping]\label{lemma:half_clipping} \hspace{0.1em}
Let $\pi$ denote any  deterministic policy which is Q-supervised by the Q-supervisor $\tuple_k = \{\qup_k, \qlow_k, \activeset_k, \pigreed_k\}$. Then, 
\begin{align*}
\vst - \valf^{\pi} \le 2\sum_{h=1}^H \Exp^{\pi}\left[\advhalfclip\kh(x_{h},a_{h})\right] + 2\sum_{\hbar=1}^H \Exp^{\pi\oplus_{\hbar}\pigreed_k}\left[\sum_{h = \hbar}^{H}\advlucbhalfclip\kh(x_h,a_h)\right],
\end{align*}
where we define $\advlucbhalfclip\kh(x,a) := \clip{\frac{\gapmin}{8H^2}}{\advlucb\kh(x,a)}$ and $\advhalfclip\kh(x,a) := \clip{\frac{\gapmin}{8H}}{\adv\kh(x,a)}$.
\end{lemma}
Here, the single $\dot{\matE}$ denotes a ``half-clipping'', where we clip Bellman errors only by the minimal non-zero gap $\gapmin$. The ``full-clipping'' allows us to clip with $\gap_h\xa$ at suboptimal $a \notin \pisth$, which may, in general, be much larger. The argument that follows is perhaps less intuitive than the ``full-clipping'' argument. We therefore suggest that the reader may want to skim this section on a first read, and focus on understanding the arguments in the subsequent section instead.

\newcommand{\Ah}{\calA_h}

Before proving the above lemma, we first present the following claim. Fix an episode $k$, and define the events measurable with respect to a trajectory $(x_1,a_1),\dots,(x_H,a_H)$: 
\begin{align*}
\calE_h := \{a_h \notin \pist(x_h)\},\quad \calE_h^{c}:= \{a_h \in \pist(x_h)\},\quad \text{and}\quad\Ah := \calE_h \cap \bigcap_{h'=1}^{h-1}\calE^{c}_{h'}\,.
\end{align*}
 In words, $\Ah$ is the event, on a given trajectory, the algorithm has played optimally at stages $h' = 1,\dots,h-1$, but suboptimally at stage $h$. We begin with the following claim:
\begin{claim} \hspace{0.1em} For any policy $\pi$, it holds that 
\begin{align*}
\vst - \valf^{\pi} = \Exp^{\pi}\left[\sum_{h=1}^H\IAh\left(\gap_h(x_h,a_h) + \qst_h(x_h,a_h) - \qf^{\pi}(x_h,a_h) \right)\right]\,.
\end{align*}
\end{claim}
\begin{proof} 
Let us start from the first time step below. For notation simplicity, we denote $\{x_h,a_h\}_h$ as a realization of a trajectory from $\pi$. 
\begin{align*}
&\vst - \valf^{\pi} = \Exp^{\pi}\left[ \I\left(\calE_1\right) (\qst_1(x_1, \pist(x_1)) -\qf^{\pi}_1(x_1,a_1))  + \I\left( \calE^c_1 \right)(\qst_1(x_1, \pist(x_1)) -\qf^{\pi}_1(x_1,a_1))  \right]\\
& = \Exp^{\pi}\left[ \I\left(\calE_1\right) (\gap_1(x_1,a_1) + \qst_1(x_1,a_1) -\qf^{\pi}_1(x_1,a_1))  + \I\left( \calE^c_1 \right)(p(x_1,a_1)^{\top} (\vst_1  - \valf^{\pi}_1)  \right].
\end{align*} We can repeat the above process for $\Exp^{\pi}\left[ \vst_1(x_1) - \valf^{\pi}_1(x_1) \right]$ to get:
\begin{align*}
&\Exp^{\pi}\left[ \vst_1(x_1) - \valf^{\pi}_1(x_1) \right] \\
& = \Exp^{\pi}\left[ \I\left( \calE_2 \right) (\gap_2(x_2,a_2)  + \qst_2(x_2,a_2) - \qf^{\pi}_2(x_2,a_2) ) + \I\left(  \calE^c_2 \right) (\vst_3(x_3) - \valf^{\pi}_3(x_3)) \right] .
\end{align*}
Combining the above and repeating the same procedure till $H$, the proof follows.
\end{proof}

Now we prove Lemma~\ref{lemma:half_clipping} below.
\begin{proof}[Proof of Lemma~\ref{lemma:half_clipping}]
Note that on $\Ah$, we have that $a_h \notin \pist(x_h)$. Hence for $$\gapmin := \min_{x,a,h}\{\gaph\xa > 0\},$$ we have $\gaph{(x_h,a_h)} \le 2(\gaph{(x_h,a_h)} - \frac{\gapmin}{2})$. Since $\qst_h(x_h,a_h) - \qf^{\pi}(x_h,a_h) \geq 0$ by optimality of $\qst$, we can bound
\begin{align*}
\vst - \valf^{\pi} &\le 2\Exp^{\pi}\left[\sum_{h=1}^H\IAh\left(\gap_h(x_h,a_h) + \qst_h(x_h,a_h) - \qf_h^{\pi}(x_h,a_h) - \frac{\gapmin}{2} \right)\right]
\end{align*}
Moreover, we have  $\qst_h(x_h,a_h) + \gaph(x_h,a_h) = \vst_h(x_h) \le  \vup\kh(x_h)$, which leads to 
\begin{align*}
\vst - \valf^{\pi} &\le 2\Exp^{\pi}\left[\sum_{h=1}^H\IAh\left(\vup\kh(x_h) - \qf^{\pi}_h(x_h,a_h) - \frac{\gapmin}{2} \right)\right]\\
&= 2\Exp^{\pi}\left[\sum_{h=1}^H\IAh\left((\vup\kh(x_h) - \bar{\qf}^{\pi}_h(x_h,a_h)) + (\bar{\qf}^{\pi}_h(x_h,a_h) - \qf^{\pi}_h(x_h,a_h)) - \frac{\gapmin}{2} \right)\right],
\end{align*} where we define $\qup^{\pi}_h\xa$ as the state-action value function of policy $\pi$ under the MDP with transitions $p(x,a)$ and \emph{non-stationary} rewards $r_h(x,a) + \adv\kh(x,a)$.  Note that $\vup\kh$ can be understood as the value function of $\pigreed_k$ on the same MDP. Hence, by the now-standard Performance Difference Lemma (see Lemma 5.2.1 in \cite{kakade2003sample}):
\begin{align*}
\vup\kh(x_h) - \bar{\valf}^{\pi}_h(x_h) = \sum_{h'\ge h} \Exp^{\pi}\left[  \vup\khpr(x_{h'}) - \qup\khpr(x_{h'},a_{h'})  \vert x_h\right],
\end{align*} where $a_h'$ is from $\pi$.

By definition of $\qup^{\pi}_h$ and $\qf_h^{\pi}$, we have:
\begin{align*}
\qup^{\pi}_h(x_h,a_h) - \qf^{\pi}_h(x_h,a_h) = \sum_{h'\ge h} \Exp^{\pi}\left[ \adv\kh(x_{h'},a_{h'}) \vert x_h,a_h \right]
\end{align*}

Combine the above two results together, we have:
\begin{align*}
&(\vup\kh(x_h) - \qup^{\pi}_h(x_h,a_h)) + (\qup^{\pi}_h(x_h,a_h) - \qf_h^{\pi}(x_h,a_h)) \\
&\quad= \sum_{h' \ge h}\Exp^{\pi}\left[(\vup\khpr(x_{h'}) - \qup\khpr(x_h',a_{h'}) + \adv\kh(x_{h'},a_{h'}) \mid x_h,a_h\right],
\end{align*} where $a_h$ is from $\pi(x_h)$.

Hence,
\begin{align*}
\vst - \valf^{\pi} &\le 2\Exp^{\pi}\left[\sum_{h=1}^H\IAh \left(\vup\kh(x_h) - \qf^{\pi}(x_h,a_h) - \frac{\gapmin}{2} \right)\right]\\
&= 2\Exp^{\pi}\left[\sum_{h=1}^H \IAh\left( \sum_{h'\ge h}\vup\khpr(x_{h'}) - \qup\khpr(x_{h'},a_{h'}) + \adv\kh(x_{h'},a_{h'}) - \frac{\gapmin}{2} \right)\right]\\
& \leq 2\Exp^{\pi}\left[\sum_{h=1}^H \IAh\left( \sum_{h'\ge h}\left(\vup\khpr(x_{h'}) - \qup\khpr(x_{h'},a_{h'}) + \adv\kh(x_{h'},a_{h'}) - \frac{\gapmin}{2H} \right)\right)\right]\\
&\le 2\Exp^{\pi}\left[\sum_{h'=1}^H\left(\sum_{h \le h'}\IAh\right)\left( \vup\khpr(x_{h'}) - \qup\khpr(x_{h'},a_{h'}) + \adv\kh(x_{h'},a_{h'}) - \frac{\gapmin}{2H} \right)\right].
\end{align*}

Since $\pi$ is a Q-supervised policy (i.e., $\qup\kh(x_h,a_h) \geq \vlow\kh(x_h)$), we can further bound the above by (relabeling $h' \leftarrow h$ and $h \leftarrow i$)
\begin{align*}
&\vst - \valf^{\pi} \\
&\le 2\Exp^{\pi}\left[\sum_{h=1}^H\left(\sum_{i \le h}\IAi\right)\left( \vup\kh(x_{h}) - \vlow\kh(x_{h}) + \adv\kh(x_{h},a_{h}) - \frac{\gapmin}{2H} \right)\right]\\
&\le 2\Exp^{\pi}\left[\sum_{h=1}^H\left(\sum_{i \le h}\IAi\right) \left(\Exp^{\pigreed_k}\left[\sum_{h' = h}^H \advlucb\khpr(x'_{h'},a'_{h'}) \mid x'_{h} = x_h\right]  + \adv\kh(x_{h},a_{h}) - \frac{\gapmin}{2H}\right) \right],
\end{align*}
where the inner sequence is taken from an expectation under $\pigreed_k$, and where $\advlucb\kh = \adv\kh - \advlow\kh$. Note that $\{\Ah\}_h$ are mutually exclusive, hence $\sum_{i\leq h} \IAi \leq 1$.
Using the fact that $\clip{\epsilon}{x} \ge x - \epsilon$, with some algebra we can bound 
\begin{align*}
&\Exp^{\pigreed_k}\left[\sum_{h' = h}^H \advlucb\khpr(x'_{h'},a'_{h'}) \mid x'_{h} = x_h\right]  + \adv\kh(x_{h},a_{h}) - \frac{\gapmin}{2H}  \\
&\le \Exp^{\pigreed}\left[\sum_{h' = h}^H \clip{\frac{\gapmin}{4H^2}}{\advlucb\khpr(x'_{h'},a'_{h'})} \mid x'_{h} = x_h\right]  + \clip{\frac{\gapmin}{4H}}{\adv\kh(x_{h},a_{h})}\\
&\le \Exp^{\pigreed}\left[\sum_{h' = h}^H \advlucbhalfclip\khpr(x'_{h'},a'_{h'})\mid x'_{h} = x_h\right]  + \advhalfclip\kh(x_{h},a_{h}),
\end{align*}
where we recall $\advlucbhalfclip\khpr(x,a) := \clip{\frac{\gapmin}{8H^2}}{\advlucb\khpr(x,a)}$ and $\advhalfclip\khpr(x,a) := \clip{\frac{\gapmin}{8H}}{\adv\kh(x,a)}$,

Now we can bound
\begin{align*}
\vst - \valf^{\pi} &\le2\sum_{h=1}^H\Exp^{\pi}\left[\Exp^{\pigreed}\left[\sum_{h' = h}^H \advlucbhalfclip\khpr(x'_{h'},a'_{h'})\mid x'_{h} = x_h\right]  + \advhalfclip\kh(x_{h},a_{h})\right] \\
&= 2\sum_{h=1}^H \Exp^{\pi}\left[\advhalfclip\kh(x_{h},a_{h})\right] + 2\sum_{\hbar=1}^H \sum_{h = \hbar}^{H}\Exp^{\pi\oplus_{\hbar}\pigreed}[\advlucbhalfclip\kh(x_h,a_h)].
\end{align*}
\end{proof}

\xhdr{Full Clipping} Now we proceed to obtain the result for fully clipped Bellman errors. Recall that the definitions of clipped Bellman errors from Eq.~\eqref{eq:def_clipped_bellman}:
\begin{align*}
\advfullclip\kh\xa :=\clip{ \gapcheck_h\xa  }{ \adv\kh\xa }, \quad \advlucbfullclip\kh\xa :=\clip{ \gapcheck_h\xa  }{ \advlucb\kh\xa },
\end{align*} 
 where we recall again $\gapcheck_h\xa = \max\left\{\frac{\gapmin}{8H^2},\,\frac{\gaph\xa}{8H}\right\}$.
We begin by observing the following fact for optimal actions
\begin{align}
\advlucbhalfclip\kh\xa\le \advlucbfullclip\kh\xa, \quad \advhalfclip\kh\xa\le \advfullclip\kh\xa, \quad \forall x,h, a \in \pisth(x),  \label{eq:optimal_action_clipping_bound}
\end{align} as $\gap_h\xa = 0$ for $a\in \pisth\ofx$ by definition.

Hence, to replace the half-clipped terms with the full clipped terms, our arguments will focus on suboptimal actions $a \notin \pisth(x)$, that is, $a$ for which $\gaph\xa > 0$. We shall rely heavily on the following claim, which states that if a Q-supervised policy $\pi$ selects an action $a$ with $\gaph\xa > 0$, then either an associated Bellman errors $\adv\kh\xa$ or $\advlucb\kh\xa$ is large, or other related quantities in terms of $\vup$ and $\vlow$ are large:
\vspace{.1in} 
\begin{claim}[Fundamental Gap Inequalities]\label{claim:fundamental_gap_inequality}
\hspace{0.1em} Consider a confidence-admissible Q-supervisor $\tuple_k$ and a triple $(x,a,h)$ where $\qup\kh\xa \geq \vlow_k\ofx$. We have:
\begin{align*}
\gaph\xa &\leq  \adv\kh\xa + p\xa^{\top}( \vup\khpl - \vlow\khpl) + \vup\kh\ofx - \vlow\kh\ofx.
\end{align*}
Moreover, if $a = \pigreed\kh\xa$, then
\begin{align*}
\gap_h(x,a)  \le \advlucb\kh(x,a) + p\xa^{\top}\left( \vup\khpl - \vlow\khpl \right).
\end{align*}
\end{claim}
\begin{proof}
Let's begin with the first inequality. By definition of $\gap_h\xa$, 
\begin{align*}
\gap_h\xa & = \vst_h\ofx - \qst_h\xa = \vlow\kh\ofx - \qst_h\xa + \vst_h\ofx - \vlow\kh\ofx \\
& \leq \qup\kh\xa - \qst_h\xa + \vst_h\ofx - \vlow\kh\ofx\\
& = \adv\kh\xa + p\xa^{\top}( \vup\khpl - \vst_{h+1}) + \vst_h\ofx - \vlow\kh\ofx\\
&\leq \adv\kh\xa + p\xa^{\top}( \vup\khpl - \vlow\khpl) + \vup\kh\ofx - \vlow\kh\ofx,
\end{align*} where in the last inequality, we use the fact that
$\vlow\kh(x) \leq \vst_{h}(x) \leq \vup\kh(x)$.

For the second inequality, suppose $a= \pigreed\kh(x)$. Then, 
\begin{align*}
\gap_h(x,a) & = \vst_h\ofx - \qst_h(x,a) \leq \vup\kh\ofx - \qlow\kh\xa = \qup\kh(x,a) - \qlow\kh(x,a) \\
&= \advlucb\kh(x,a) + p\xa^{\top}\left( \vup\khpl - \vlow\khpl \right).
\end{align*}
\end{proof}
Now we further clip $\advhalfclip\kh$, at the expense of incurring additional terms for $\advlucbhalfclip\kh$:
\begin{lemma}[Clipping under Q-supervised $\pi$]\label{lemma:clip_under_lucb_compatible} \hspace{0.1em}
Given a confidence-admissible Q-supervisor $\tuple_k := (\qup_k,\qlow_k,\activeset_k,\pigreed_k)$ and any deterministic Q-supervised  policy $\pi$ with respect to $\tuple_k$, we have:
\begin{align*}
\sum_{h=1}^H \Exp^{\pi}\left[\advhalfclip\kh(x_{h},a_{h})\right] &\le \sum_{h=1}^H \Exp^{\pi}\left[\advfullclip\kh(x_{h},a_{h})\right] + 4\sum_{\hbar=1}^H \Exp^{\pi\oplus_{\hbar}\pigreed_k}\left[\sum_{h = \hbar}^{H}\advlucbhalfclip\kh(x_h,a_h) \right].
\end{align*}
\end{lemma}
\begin{proof}
It suffices to establish the bound that, for any triple $(x_h,a_h,h)$, we have
\begin{align*}
&\advhalfclip\kh(x_h,a_h)  \leq \advfullclip\kh(x_h,a_h) + 4\Exp^{\pigreed_k}\left[ \sum_{h'=h}^{H} \advlucbhalfclip\khpr(x_{h'},a_{h'}) \mid  \xhah  \right].
\end{align*}
since summing this bound from $h = 1,\dots,H$ and taking an expectation under $\Exp^{\pi}$ concludes the proof. Moreover, it suffices to consider the case where $a_h\notin \pist_h(x_h)$, since for $a_h \in \pist_h(x_h)$, we in fact have that $\advhalfclip\kh(x_{h},a_{h}) \le \advfullclip\kh(x_{h},a_{h})$ (see Eq~\eqref{eq:optimal_action_clipping_bound}), and the additional terms arising from $\advlucbhalfclip\kh(x_h,a_h)$ are nonnegative.

In this case, by definition, we have $\gap_h(x_h,a_h) \geq \gap_{\min}$.  From the first inequality in Claim~\ref{claim:fundamental_gap_inequality}, we have:
\begin{align*}
&\frac{\gap_h(x_h,a_h)}{2} \\
&\quad\leq \adv\kh(x_h,a_h) + p(x_h,a_h)^{\top}(\vup\khpl - \vlow\khpl) + (\vup\kh(x_h) - \vlow\kh(x_h)) - \frac{\gap_{\min}}{2} \\
&\quad \leq \adv\kh(x_h,a_h) + \Exp^{\pigreed_k}\left[\sum_{h'=h+1}^H \advlucb\khpr(x_{h'},a_{h'}) \mid \xhah\right] + \Exp^{\pigreed_k}\left[ \sum_{h'=h}^{H} \advlucb\khpr(x_{h'},a_{h'})  \mid \xhah\right] \\
&\qquad- \frac{\gap_{\min}}{2} \\
& \quad\leq  \advhalfclip\kh(x_h,a_h) + \Exp^{\pigreed_k}\left[\sum_{h'=h+1}^H \advlucbhalfclip\khpr(x_{h'},a_{h'}) \mid \xhah\right] +  \Exp^{\pigreed_k}\left[ \sum_{h'=h}^{H} \advlucbhalfclip\khpr(x_{h'},a_{h'}) \mid  \xhah  \right]\\
& \quad\leq  \advhalfclip\kh(x_h,a_h) + 2\Exp^{\pigreed_k}\left[ \sum_{h'=h}^{H} \advlucbhalfclip\khpr(x_{h'},a_{h'}) \mid  \xhah  \right],
\end{align*} 
where in the second-to-last inequality, we absorb $\gapmin$ into each half-clipped Bellman errors, as in the proof of Lemma~\ref{lemma:half_clipping}, and in the final inequality, appeal to non-negativity of the clipped terms. 

We now consider two cases. First, if 
 \begin{align*}
 \advhalfclip\kh(x_h,a_h) \geq \frac{1}{4}\gap_h(x_h,a_h),
 \end{align*}
then we have 
\begin{align*}
\advhalfclip\kh(x_h,a_h) = \clip{\frac{1}{4}\gap_h(x_h,a_h)}{  \advhalfclip\kh(x_h,a_h) } \leq \advfullclip\kh(x_h,a_h),
\end{align*} where we recall 
$\advfullclip\kh\xa :=\clip{ \gapcheck_h\xa  }{ \adv\kh\xa }$ with $\gapcheck_h\xa = \max\left\{\frac{\gapmin}{8H^2},\,\frac{\gaph\xa}{8H}\right\}$, since $\gap_h\xa / 4 \ge \gap_h\xa / (8H)$ for any $H\ge 1$.

 On other other hand, if 
 \begin{align*}
 \advhalfclip\kh(x_h,a_h) < \frac{1}{4}\gap_h(x_h,a_h),
 \end{align*} then we must have:
\begin{align*}
\advhalfclip\kh(x_h,a_h) &< \frac{1}{4}\gap_h(x_h,a_h) ~\le 4\Exp^{\pigreed_k}\left[ \sum_{h'=h}^{H} \advlucbhalfclip\khpr(x_{h'},a_{h'}) \mid  \xhah  \right].
\end{align*}
This implies that:
\begin{align*}
&\advhalfclip\kh(x_h,a_h) < \max\left\{\advfullclip\kh\xa,  ~4\Exp^{\pigreed_k}\left[ \sum_{h'=h}^{H} \advlucbhalfclip\khpr(x_{h'},a_{h'}) \mid  \xhah  \right].\right\} \\
& \leq \advfullclip\kh(x_h,a_h) +  4\Exp^{\pigreed_k}\left[ \sum_{h'=h}^{H} \advlucbhalfclip\khpr(x_{h'},a_{h'}) \mid  \xhah  \right].
\end{align*}
 \end{proof}

We now apply ``full clipping'' to the surpluses $\advlucbhalfclip\kh$. Note that these surpluses only arise under rollouts where $a_h$ is selected according to $\pigreed\kh\ofx$. For comparison, we needed to consider clipping terms $\advhalfclip\kh\xa$ where $a$ could be selected according to \emph{any} deterministic Q-supervised  policy. 
Since we follow the policy $\pigreed$, we can employ the proof strategy adopted for optimistic policies in \cite{SimchowitzJamieson19}.
\vspace{.1in}
\begin{lemma}[Clipping under $\pigreed_k$]\label{lemma:clip_under_pigreed}
Consider a confidence-admissible Q-supervisor $\tuple_k$. Consider $(x,a,h)$ where $a = \pigreed\kh(x)$. We have:
\begin{align*}
\advlucbhalfclip\kh\xa \leq \advlucbfullclip\kh\xa  + \frac{e}{H} \Exp^{\pigreed_k}\left[ \sum_{h'=h+1}^H \advlucbfullclip\khpr(x_{h'},a_{h'}) \right].
\end{align*}
\end{lemma}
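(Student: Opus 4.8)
The plan is to mimic the ``half-clipping to full-clipping'' step already carried out for $\advhalfclip\kh$ in Lemma~\ref{lemma:clip_under_lucb_compatible}, but now specialized to the situation $a=\pigreed\kh(x)$, where the sharper second inequality of Claim~\ref{claim:fundamental_gap_inequality} is available. First I would dispose of the trivial case $a\in\pist_h(x)$: here $\gap_h\xa=0$, so $\advlucbhalfclip\kh\xa\le\advlucbfullclip\kh\xa$ by \eqref{eq:optimal_action_clipping_bound}, and the extra (nonnegative) rollout term on the right-hand side only helps. So it remains to treat $a=\pigreed\kh(x)$ with $\gap_h\xa>0$, hence $\gap_h\xa\ge\gapmin$.

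\textbf{Key steps.} In the nontrivial case, apply the second inequality of Claim~\ref{claim:fundamental_gap_inequality}:
\begin{align*}
\gap_h\xa \le \advlucb\kh\xa + p\xa^{\top}(\vup\khpl-\vlow\khpl).
\end{align*}
Then expand $p\xa^{\top}(\vup\khpl-\vlow\khpl) = \Exp^{\pigreed_k}[\vup\khpl(x_{h+1})-\vlow\khpl(x_{h+1})\mid\xhah]$ and use the $\vup-\vlow$ gap bound (Lemma~\ref{lem:refined_vup_vlow_gap_bellman} or the recursive Bellman-error bound underlying it) to write it as an expectation over a $\pigreed_k$-rollout of $\advlucb\khpr$ terms for $h'\ge h+1$; absorbing $\gapmin$ into each of the at most $H$ such half-clipped terms as in the proof of Lemma~\ref{lemma:half_clipping} yields
\begin{align*}
\tfrac12\gap_h\xa \le \advlucbhalfclip\kh\xa + \Exp^{\pigreed_k}\left[\sum_{h'=h+1}^{H}\advlucbhalfclip\khpr(x_{h'},a_{h'})\mid\xhah\right].
\end{align*}
Now split into two cases exactly as in Lemma~\ref{lemma:clip_under_lucb_compatible}: if $\advlucbhalfclip\kh\xa\ge\tfrac14\gap_h\xa$, then since $\gap_h\xa/4\ge\gapcheck_h\xa$ (recall $\gapcheck_h\xa=\max\{\gapmin/8H^2,\gap_h\xa/8H\}$ and $\gap_h\xa\ge\gapmin$), clipping at level $\gapcheck_h\xa$ does not change the value, so $\advlucbhalfclip\kh\xa\le\advlucbfullclip\kh\xa$; if instead $\advlucbhalfclip\kh\xa<\tfrac14\gap_h\xa$, then from the displayed inequality $\advlucbhalfclip\kh\xa\le\tfrac14\gap_h\xa\le 2\,\Exp^{\pigreed_k}[\sum_{h'=h+1}^{H}\advlucbhalfclip\khpr(x_{h'},a_{h'})\mid\xhah]$. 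Combining the two cases gives $\advlucbhalfclip\kh\xa\le\advlucbfullclip\kh\xa + 2\,\Exp^{\pigreed_k}[\sum_{h'=h+1}^{H}\advlucbhalfclip\khpr(x_{h'},a_{h'})\mid\xhah]$.

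\textbf{From half-clipped to full-clipped in the tail, and the constant.} The remaining task is to upgrade the half-clipped $\advlucbhalfclip\khpr$ inside the rollout expectation to full-clipped $\advlucbfullclip\khpr$ while collapsing the nested expectations. Because the inner terms are themselves along a $\pigreed_k$-rollout, one iterates the one-step inequality just derived: this produces a geometric sum whose ratio is $\le 1/H$ per level (after distributing the clip over the $\le H$ summands at each stage via Claim~\ref{claim:distribution_clipping}-type reasoning, which costs only constants), and $(1+c/H)^H\le e^{c}$ keeps the accumulated constant bounded by a universal $e$ (after absorbing the factor $2$). This gives the claimed
\begin{align*}
\advlucbhalfclip\kh\xa \le \advlucbfullclip\kh\xa + \frac{e}{H}\Exp^{\pigreed_k}\left[\sum_{h'=h+1}^{H}\advlucbfullclip\khpr(x_{h'},a_{h'})\right],
\end{align*}
which follows by the same telescoping/recursion argument as Lemma~B.6 and the discussion around Lemma~\ref{lem:Bellman_error_bound_clip_final} in \cite{SimchowitzJamieson19}, so I would simply cite that recursion rather than reproduce it. The main obstacle is bookkeeping the constant: tracking exactly how the factors of $2$ from the case split, the factor from distributing clips over $\le H$ summands, and the $(1+1/H)^H\le e$ from the recursion combine to the single $e/H$ coefficient; the structural argument is otherwise a direct transcription of the $\advhalfclip$ case with Claim~\ref{claim:fundamental_gap_inequality}'s second inequality in place of its first.
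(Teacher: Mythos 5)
Your high-level plan matches the paper's: handle the optimal case $a\in\pist_h(x)$ via \eqref{eq:optimal_action_clipping_bound}, then invoke the second inequality of Claim~\ref{claim:fundamental_gap_inequality} (the sharper one available when $a=\pigreed\kh(x)$), absorb $\gapmin$ across the half-clipped terms to reach $\tfrac12\gap_h\xa \le \advlucbhalfclip\kh\xa + \Exp^{\pigreed_k}\!\big[\sum_{h'>h}\advlucbhalfclip\khpr\big]$, do a case split, and then unroll recursively. However, there is a genuine gap in the case split.

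You split at the threshold $\tfrac14\gap_h\xa$, carried over from Lemma~\ref{lemma:clip_under_lucb_compatible}. That yields the one-step inequality
\[
\advlucbhalfclip\kh\xa \le \advlucbfullclip\kh\xa + c\cdot\Exp^{\pigreed_k}\!\Big[\sum_{h'>h}\advlucbhalfclip\khpr(x_{h'},a_{h'})\Big]
\]
with a \emph{constant} $c$ ($c=1$ or $2$ by your own arithmetic). But to obtain the lemma's $e/H$ coefficient, the one-step coefficient must itself be $1/H$; with a constant coefficient the unrolling does not decay geometrically at rate $1/H$, and after $m$ expansions the coefficients balloon combinatorially rather than staying bounded by $(1+1/H)^H\le e$. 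Your assertion that "distributing the clip over the $\le H$ summands" yields the $1/H$ ratio is incorrect: Claim~\ref{claim:distribution_clipping} only changes the clipping \emph{threshold} (by a factor $1/2m$) at the cost of a constant factor $2$; it does not introduce any $1/H$ damping on the coefficients. The $1/H$ has to come from elsewhere.

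The paper instead chooses the case threshold $\frac{1}{2(H+1)}\gap_h\xa$. This threshold is small enough that in Case~2 ($\advlucbhalfclip\kh\xa < \frac{1}{2(H+1)}\gap_h\xa$) the inequality $\tfrac12\gap_h\xa \le \advlucbhalfclip\kh\xa + \Exp^{\pigreed_k}[\dots]$ forces $\Exp^{\pigreed_k}[\dots] \ge \big(1-\tfrac{1}{H+1}\big)\tfrac{\gap_h\xa}{2} \ge H\,\advlucbhalfclip\kh\xa$, giving exactly the desired coefficient of $1/H$; and it is still large enough that $\frac{1}{2(H+1)}\gap_h\xa \ge \gapcheck_h\xa$ (using $2(H+1)\le 8H$), so Case~1 still collapses $\advlucbhalfclip$ to $\advlucbfullclip$. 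With the corrected threshold, the recursion you outlined indeed telescopes with ratio $1/H$ to the $e/H$ constant. So the fix is local — replace $\tfrac14\gap_h\xa$ by $\frac{1}{2(H+1)}\gap_h\xa$ in the case split and drop the appeal to Claim~\ref{claim:distribution_clipping} in the unrolling — but as written the argument does not produce the stated bound.
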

\begin{proof}
Throughout, consider a pair $\xa$ where $a = \pigreed\kh\ofx$. Again, in light of ~\eqref{eq:optimal_action_clipping_bound}, we may assume that $a \notin \pisth\ofx$ is suboptimal. From the second inequality in Claim~\ref{claim:fundamental_gap_inequality}, we have
\begin{align*}
\gaph\xa \le \advlucb\kh\xa + p\xa^{\top}\left( \vup\khpl - \vlow\khpl \right).
\end{align*}
Assuming $a\notin \pist_h(x)$, $\gaph\xa > \gapmin$, which implies that:
\begin{align*}
&\frac{1}{2}\gap_h\xa \leq \advlucb\kh\xa + p\xa^{\top}\left( \vup\khpl - \vlow\khpl \right) - \frac{1}{2}\gapmin\\
& \le \advlucb\kh\xa + \Exp^{\pigreed_k}\left[ \sum_{h'=h+1}^H \advlucb\khpr(x_{h'},a_{h'})  \right] - \frac{1}{2}\gapmin \\
& \leq \advlucbhalfclip\kh\xa  + \Exp^{\pigreed_k}\left[ \sum_{h'=h+1}^H \advlucbhalfclip\khpr(x_{h'},a_{h'})  \right],
\end{align*} 
where again we decompose $\gap_{\min}$ into $H$ $\gap_{\min}/H$ and absorb them into the half-clipped Bellman errors, as in the proof of Lemma~\ref{lemma:half_clipping}.

As in the proof of Lemma~\ref{lemma:clip_under_lucb_compatible}, we consider two cases. First, if $\advlucbhalfclip\kh\xa \geq \frac{1}{2(H+1)}\gap_h\xa $, then, we have:
\begin{align*}
\advlucbhalfclip\kh\xa = \clip{  \frac{1}{2(H+1)}\gap_h\xa  }{ \advlucbhalfclip\kh\xa } \leq \advlucbfullclip\kh\xa,
\end{align*}  where recall $\advlucbfullclip\kh\xa :=\clip{ \gapcheck_h\xa  }{ \advlucb\kh\xa } =  \clip{ \max\{\frac{\gap_{\min}}{8H^2}, \frac{\gap_h\xa}{8H} \}  }{ \advlucb\kh\xa }$ and the fact that $2(H+1) \leq 8H$ for any $H\geq 1$.

On the other hand, if $\advlucbhalfclip\kh\xa < \frac{1}{2(H+1)}\,\gap_h\xa $, 
then, we must have that
\begin{align*}
\Exp^{\pigreed_k}\left[ \sum_{h'=h+1}^H \advlucbhalfclip\khpr(x_{h'},a_{h'})  \right] &\ge \left(1 - \frac{1}{H+1}\right)\frac{\gap_h\xa}{2} \\
&\ge \left(1 - \frac{1}{H+1}\right)\cdot (H+1)\advlucbhalfclip\kh\xa = H\advlucbhalfclip\kh\xa.
\end{align*}
Rearranging, 
\begin{align*}
\advlucbhalfclip\kh\xa  &\le \frac{1}{H}  \Exp^{\pigreed_k}\left[ \sum_{h'=h+1}^H \advlucbhalfclip\khpr(x_{h'},a_{h'})  \right]
\end{align*}  
Thus 
since $\advlucbfullclip$ is non-negative and $\advlucbhalfclip$ is also non-negative,  combine the above two cases, we will have:
\begin{align*}
\advlucbhalfclip\kh\xa &\leq \max\left\{\advlucbfullclip\kh\xa ,\frac{1}{H} \Exp^{\pigreed_k}\left[ \sum_{h'=h+1}^H \advlucbhalfclip\khpr(x_{h'},a_{h'})  \right]\right\}\\
&\leq \advlucbfullclip\kh\xa  + \frac{1}{H}  \Exp^{\pigreed_k}\left[ \sum_{h'=h+1}^H \advlucbhalfclip\khpr(x_{h'},a_{h'})  \right]
\end{align*} regardless of the value $\advlucbhalfclip\kh$,
where we use the fact that the two terms on the right hand side are nonnegative.

Recursively expanding the above inequality and using that $(1+1/H)^H \leq e$, we obtain
\begin{align*}
\advlucbhalfclip\kh\xa \leq \advlucbfullclip\kh\xa  + \frac{e}{H} \Exp^{\pigreed_k}\left[ \sum_{h'=h+1}^H \advlucbfullclip\khpr(x_{h'},a_{h'}) \right],
\end{align*} which concludes the proof. 
\end{proof}

Now replacing half-clipped Bellman errors in Lemma~\ref{lemma:half_clipping} via fully clipped ones via Lemmas~\ref{lemma:clip_under_lucb_compatible} and~\ref{lemma:clip_under_pigreed}, we obtain the special case of Lemma~\ref{lem:clipped_regret} for any deterministic $\tuplek$-supervised policy:
\vspace{.1in}
\begin{lemma}[Full-clipping]
Given a confidence-admissible Q-supervisor $\tuple_k$ and any deterministic,policy $\pi$ which is $\tuplek$-supervised, we have:
\label{lemma:full_clipping}
\begin{align*}
\vst - V^{\pi} \leq 2\sum_{h=1}^H \Exp^{\pi}\left[\advfullclip\kh(x_{h},a_{h})\right]  + 20e  \sum_{\tau=1}^H \Exp^{\pi\oplus_{\tau} \pigreed_k}\left[ \sum_{h=\tau}^H \advlucbfullclip\kh(x_h,a_h)  \right].
\end{align*}
\end{lemma}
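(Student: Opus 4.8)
The plan is to combine the three clipping results proved above—Lemma~\ref{lemma:half_clipping}, Lemma~\ref{lemma:clip_under_lucb_compatible}, and Lemma~\ref{lemma:clip_under_pigreed}—in a two-stage substitution. First I would start from Lemma~\ref{lemma:half_clipping}, which already gives
\[
\vst - \valf^{\pi} \le 2\sum_{h=1}^H \Exp^{\pi}\left[\advhalfclip\kh(x_{h},a_{h})\right] + 2\sum_{\hbar=1}^H \Exp^{\pi\oplus_{\hbar}\pigreed_k}\left[\sum_{h = \hbar}^{H}\advlucbhalfclip\kh(x_h,a_h)\right],
\]
valid for any deterministic $\tuplek$-supervised policy $\pi$ (the randomized case then follows by averaging over deterministic atoms). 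The remaining work is to upgrade each half-clipped term to a fully-clipped one, paying for this in extra $\advlucbfullclip$ terms along fictitious trajectories.

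For the first sum, I would apply Lemma~\ref{lemma:clip_under_lucb_compatible} directly to $\sum_h \Exp^\pi[\advhalfclip\kh(x_h,a_h)]$, which replaces it by $\sum_h \Exp^\pi[\advfullclip\kh(x_h,a_h)] + 4\sum_{\hbar}\Exp^{\pi\oplus_\hbar\pigreed_k}[\sum_{h=\hbar}^H \advlucbhalfclip\kh(x_h,a_h)]$. For the second sum, the surpluses $\advlucbhalfclip\kh$ only arise under rollouts following $\pigreed_k$ from stage $\hbar$ onward, so I would apply Lemma~\ref{lemma:clip_under_pigreed} termwise (under the expectation $\Exp^{\pi\oplus_\hbar\pigreed_k}$, the inner stages $h\ge\hbar$ use $a_h=\pigreed\kh(x_h)$, exactly the hypothesis of that lemma). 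This converts each $\advlucbhalfclip\kh$ into $\advlucbfullclip\kh$ plus a further $\frac{e}{H}\Exp^{\pigreed_k}[\sum_{h'=h+1}^H \advlucbfullclip\khpr]$ term nested inside. The bookkeeping step is then to collapse these doubly-nested expectations: since $\pi\oplus_\hbar\pigreed_k$ already follows $\pigreed_k$ from stage $\hbar$, conditioning further on $\pigreed_k$ from a later stage changes nothing, and the nested sum $\sum_{h=\hbar}^H\sum_{h'=h+1}^H$ of nonnegative terms is bounded by $H\sum_{h'=\hbar}^H$; the factor $\frac{e}{H}\cdot H = e$ absorbs cleanly. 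Tracking constants: the $4$ from Lemma~\ref{lemma:clip_under_lucb_compatible}, times roughly $(1+e)$ from the two contributions in Lemma~\ref{lemma:clip_under_pigreed}, together with the leading $2$ already present, gives a bound of the form $2\sum_h\Exp^\pi[\advfullclip\kh] + c\sum_\tau\Exp^{\pi\oplus_\tau\pigreed_k}[\sum_{h=\tau}^H\advlucbfullclip\kh]$ with $c\le 20e$ after crude rounding.

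I expect the main obstacle to be purely the index/constant bookkeeping rather than any conceptual difficulty: one must be careful that when Lemma~\ref{lemma:clip_under_pigreed} is invoked inside $\Exp^{\pi\oplus_\hbar\pigreed_k}$, the fictitious "switch to $\pigreed_k$" at a still-later stage is already subsumed, so no new family of trajectories is introduced (otherwise the number of fictitious-trajectory terms would blow up beyond the $H$ values of $\tau$). Checking that the $\gap_{\min}$-absorption bookkeeping in the cited lemmas is consistent (i.e.\ that each half-clipped Bellman error has enough slack $\gap_{\min}/(2H)$ to absorb its share) is the one place where a sign or factor slip could occur, but this is exactly what the proofs of Lemmas~\ref{lemma:half_clipping}, \ref{lemma:clip_under_lucb_compatible}, and~\ref{lemma:clip_under_pigreed} already establish, so the present lemma is essentially their concatenation with the constants tallied generously.
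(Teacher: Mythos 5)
Your proof is correct and follows essentially the same route as the paper: start from Lemma~\ref{lemma:half_clipping}, apply Lemma~\ref{lemma:clip_under_lucb_compatible} to upgrade the on-policy half-clipped term, then apply Lemma~\ref{lemma:clip_under_pigreed} termwise inside the $\Exp^{\pi\oplus_\hbar\pigreed_k}$ expectations and collapse the nested $\Exp^{\pigreed_k}$ since $\pi\oplus_\hbar\pigreed_k$ already rolls out $\pigreed_k$ from stage $\hbar$. The constant tally ($2\sum\advfullclip + 10(1+e)\sum\advlucbfullclip \le 2\sum\advfullclip + 20e\sum\advlucbfullclip$) matches the paper's.
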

\begin{proof}
From Lemma~\ref{lemma:half_clipping}, we have:
\begin{align*}
\vst - \valf^{\pi} &\le 2\sum_{h=1}^H \Exp^{\pi}\left[\advhalfclip\kh(x_{h},a_{h})\right] + 2\sum_{\hbar=1}^H \Exp^{\pi\oplus_{\hbar}\pigreed_k}\left[\sum_{h = \hbar}^{H}\advlucbhalfclip\kh(x_h,a_h)\right]\\
& \leq  2\sum_{h=1}^H \Exp^{\pi}\left[\advfullclip\kh(x_{h},a_{h})\right] + 10\sum_{\tau=1}^H \Exp^{\pi\oplus_\tau\pigreed_k}\left[ \sum_{h=\tau}^H \advlucbhalfclip\kh\xa  \right]\\
& \leq  2\sum_{h=1}^H \Exp^{\pi}\left[\advfullclip\kh(x_{h},a_{h})\right]  \\
&\qquad+ 10 \sum_{\tau=1}^{H} \Exp^{\pi\oplus_\tau \pigreed_k} \left[ \sum_{h=\tau}^H \left( \advlucbfullclip\kh(x_h,a_h)  + \frac{e}{H}\Exp^{\pigreed_k}\left[\sum_{h'=h+1}^{H} \advlucbfullclip\khpr(x_{h'},a_{h'})  \right) \right]\right] \\
& \leq 2\sum_{h=1}^H \Exp^{\pi}\left[\advfullclip\kh(x_{h},a_{h})\right]  + 10(e+1) \sum_{\tau=1}^H \Exp^{\pi\oplus_{\tau} \pigreed_k}\left[ \sum_{h=\tau}^H \advlucbfullclip\kh(x_h,a_h)  \right],
\end{align*} which concludes the proof. 
\end{proof}

As described above, the fully general Lemma~\ref{lem:clipped_regret} follows by representing any randomized Markov $\boldpi$ as a distribution of deterministic policies $\{\pi^{(i)}\}$ which constitutes its atoms. Applying the above lemma to each $\{\pi^{(i)}\}$, and taking an expectation over the randomness in $\boldpi$ concludes the proof of the lemma. 

\subsection{Clipped regret via fictitious trajectories (Lemma~\ref{lem:clipped_localized_fictious_regret}\label{sssec:lem:clipped_localized_fictious_regret})}
Recall the shorthand
\begin{align*}
\errlogclip\kh\xa &:= \clip{\frac{\gapcheck\xa}{4}}{\errlogbark\xa} \\
\errlogsqclipk\xa&:=  \clip{\gapcliplow}{\errlogbark\xa^2}.
\end{align*}
By Lemma~\ref{lem:Bellman_error_bound_clip_final}, we have
\begin{align}\label{eq:shorthand_clipped_bellman}
\max\{\advlucbfullclip\kh\xa,\advfullclip\kh\xa\} &\le 2\errlogclip\kh\xa  + \frac{4e^2}{H} \Exp^{\pigreedk}\left[{\left( \sum_{\tau=h+1}^H \errlogsqclipk\xtauatau\right)}\right].
\end{align}
Let us substitute the above into Lemma~\ref{lem:clipped_regret}, which we recall states
\begin{align*}
\vst - \valf^{\boldpi_k} &\le 2\sum_{h=1}^H \Exp^{\boldpi_k}\left[\advfullclip\kh(x_{h},a_{h})\right] + 20 e{\sum_{\hbar=1}^H \Exp^{\boldpi_k\oplus_{\hbar}\pigreed_k}\left[\sum_{h = \hbar}^{H}\advlucbfullclip\kh(x_h,a_h) \right]}.
\end{align*}
For the first term, using the bound from Lemma~\ref{lem:Bellman_error_bound_clip_final}
    \begin{align*}
    &\sum_{h=1}^H \Exp^{\boldpi_k}\left[\advfullclip\kh(x_h,a_h)\right] \\
    &\le \sum_{h=1}^H \Exp^{\boldpik}\left[2\errlogclip\kh(x,a) +\frac{ 4e^2}{H}\Exp^{\pigreedk}\left[{\sum_{\tau=h+1}^H \errlogsqclipk\xtauatau \mid  x_h=x,a_h=a}\right]\right]\\
    &= 2\sum_{h=1}^H \Exp^{\boldpik}\left[\errlogclip\kh(x,a)\right] + \frac{4e^2}{H}\sum_{h=1}^{H}\Exp^{\boldpi_k \oplus \pigreedk}\left[{\sum_{\tau=h+1}^H \errlogsqclipk\xtauatau \mid  x_h=x,a_h=a}\right],\\
    &\le 2\sum_{h=1}^H \Exp^{\boldpik}\left[\errlogclip\kh(x,a)\right] + \frac{4 e^2}{H}\sum_{h=1}^{H}\Exp^{\boldpi_k \oplus \pigreedk}\left[{\sum_{\tau=h}^H \errlogsqclip_{k;\tau}(x_\tau,a_\tau)\mid  x_h=x,a_h=a}\right].
    \end{align*}
    In the first equality, we note that the fictitious rollouts under $\pigreedk$ that arise in \eqref{eq:shorthand_clipped_bellman} can be viewed as rollouts under $\boldpi_k \oplus \pigreedk$. In the second inequality, we use that $\errlogbark\xa \ge 0$. Similarly, 
    \begin{align*}
    &\Exp^{\boldpi_k\oplus_{h}\pigreed_k}\left[{\sum_{\tau= h}^{H}\advlucbfullclip\ktau(x_{\tau},a_{\tau})}\right] \\
    &\quad\le \Exp^{\boldpi_k\oplus_{h}\pigreed_k}\left[{\sum_{\tau= h}^{H}2\errlogclip\kh\xa + \frac{4e^2}{H}\Exp^{\pigreed_k}\left[\sum_{\tau' = \tau+1}^{H} \errlogsqclipk(x_{\tau'},a_{\tau'}) \mid x_{\tau}=x,a_{\tau}=a\right]}\right]\\
    &\quad= \Exp^{\boldpi_k\oplus_{h}\pigreed_k}\left[{\sum_{\tau= h}^{H} \left(2\errlogclip\kh\xa + \frac{4e^2}{H} \sum_{\tau'=\tau+1}^{H}  \errlogsqclipk(x_{\tau'},a_{\tau'}\right) }\right]\\
    &\quad\le \Exp^{\boldpi_k\oplus_{h}\pigreed_k}\left[{\sum_{\tau= h}^{H}2\errlogclip\kh\xtauatau + 4e^2\errlogsqclipk\xtauatau  }\right].
    \end{align*}
    Therefore,
    \begin{align*}
    \vst - \valf^{\boldpi_k} &\le 2\sum_{h=1}^H \Exp^{\boldpik}\left[2\errlogclip\kh(x,a)\right] + (20 e \cdot 2)\sum_{h=1}^H\Exp^{\boldpi_k\oplus_{h}\pigreed_k}\left[{\sum_{\tau= h}^{H}\errlogclip\kh\xtauatau}\right]\\
    &\quad + (20 e \cdot 4e^2 + \tfrac{2\cdot 4e^2}{H})\sum_{h=1}^H\Exp^{\boldpi_k\oplus_{h}\pigreed_k}\left[{\sum_{\tau= h}^{H}\errlogsqclipk\xtauatau}\right].
    \end{align*}
    We can lastly bound $20 e \cdot 4e^2 + \tfrac{2\cdot 4e^2}{H} \le 80e^3 + 8e^2/8 \le 81 e^3$, concluding the proof. 

\newpage

\section{Omitted proofs on linear MDPs}
\label{app:aux_linear}

\newcommand{\calU}{\mathcal{U}}

\newcommand{\Ahat}{\widehat{A}}
\newcommand{\Ridge}{\mathrm{Ridge}}
\newcommand{\scrN}{\mathscr{N}}
\newcommand{\Nmax}{N_{\max}}
\newcommand{\calQ}{\mathcal{Q}}
\newcommand{\kglh}{_{k;gl;h}}
\newcommand{\Nepsinf}{\calN_{\epsilon,\infty}}
\newcommand{\Pigrd}{\Pi_{\mathrm{grd}}}

\newcommand{\scrF}{\mathscr{F}}

In this section, we first focus on proving that the bonuses we designed for linear MDPs induces a valid model-estimate (recall Definition~\ref{defn:valid}). The proofs regarding concentration results in this section is built on the general results of multi-variate Ridge linear regression with adversarial corruptions that we establish in Appendix~\ref{app:proof_linear_rl_concentration_corrupt}. We also provide upper bounds on the Bellman error and prove that summing over the confidence bounds leads to a sub-linear regret.

We recall the bonuses first:
\begin{align*}
& b\kglell\xa  = \beta (d+\sqrt{A})H \|\phi\xa \|_{\Lambda\kgl^{-1}} + 4\overline{C}_{\ell;gl} H^2 \|\Lambda_{k;gl}^{-1 }\phi\xa\|_2 , \quad \text{with } \overline{C}_{\ell;gl} = 2^\ell,\\
& b\ksbell\xa = \beta (a+\sqrt{A})H \|\phi\xa\|_{\Lambda^{-1}\ksbell} + 4\overline{C}_{\ell;sb} H^2 \| \Lambda\ksbell^{-1}\phi\xa \|_2 , \quad \text{with }\overline{C}_{\ell;sb} = 2\ln(16 \ell^2/\delta).
\end{align*}

\subsection{Computing the set of value functions (Lemma~\ref{lem:value_function_set})\label{ssec:proof:lem:value_function_set}}

\newcommand{\quptil}{\tilde{\qup}}
\newcommand{\qlowtil}{\tilde{\qlow}}
\newcommand{\vuptil}{\tilde{\vup}}
\newcommand{\vlowtil}{\tilde{\vlow}}

Define the Q-functions
\begin{align*}
\qup_{k,\ell;gl,h}\xa &=  \bonus\kglell\xa + \rhat\kgl\xa +  \phat\kgl\xa^{\top} \vup\klhpl\\
\qup_{k,\ell;sb,h}\xa &=  \bonus\klsb\xa + \rhat\klsb\xa +  \phat\klsb\xa^{\top} \vup\klhpl\\
\qlow_{k,\ell;gl,h}\xa &=  -\bonus\kglell\xa + \rhat\kgl\xa +  \phat\kgl\xa^{\top} \vlow\klhpl\\
\qlow_{k,\ell;sb,h}\xa &=  -\bonus\klsb\xa + \rhat\klsb\xa +  \phat\klsb\xa^{\top} \vlow\klhpl.
\end{align*}
Define as well
\begin{align*}
&\qup\klh\xa = \min\{H,\qup_{k,\ell;gl,h}\xa,\qup_{k,\ell;sb,h}\xa\},  \qlow\kl\xa = \min\{H,\qlow_{k,\ell;gl,h}\xa,\qlow_{k,\ell;sb,h}\xa\}.
\end{align*}
Lastly, recall then that
\begin{align}
\vup\klh\ofx = \max_{a \in \activeset\khl} \qup\klh\xa, \quad \vlow\klh\ofx = \max_{a \in \activeset\khl} \qlow\klh\xa \label{eq:linear_value_funcs}
\end{align}

\xhdr{A parametric form for the Q-functions.} In this section, we show that all Q-functions which arise take the following form:\vspace{0.1in}
\begin{definition}[Q-function Sets]\label{defn:Q_functions} Given $\signsig \in \{-,+\}$, define the set of Q-functions
\begin{align}
\calQ_{\signsig} := \left\{Q\xa =  w^\top \phi\xa+  \signsig\left(\alpha_1 \|\phi\xa\|_{\Lambda^{-1}} + \alpha_2 \|\phi\xa\|_{\Lambda^{-2}} \right)\right\},\label{eq:Q_a_pls}.
\end{align}
where the following hold
\begin{itemize}
\item $\|w\|_2 \le L:= 2H\frac{T}{\lambda}$ and $\Lambda \succeq \lambda I $, $\signsig \in \{-1,1\}$
\item $0 \le \alpha_1 \le B_1 := H (\sqrt{A} + d) \beta  $ and $0 \le \alpha_2 \le H^2\max_{\ell} \{\overline{C}_{\ell;gl}, \overline{C}_{\ell;sb}\} \le H^2\max_{\ell} 2^{\ell} \le B_2 := 2 TH$.
\end{itemize}
\end{definition}

The following lemma establishes that the Q-functions considered admit the form above:\vspace{0.1in}
.\begin{lemma}\label{lem:Q_set} \hspace{0.1em}
For $h \in [H+1]$, $\qup_{k,\ell;gl,h},\qup_{k,\ell;sb,h} \in \calQ_{+}$, and $\qlow_{k,\ell;gl,h},\qlow_{k,\ell;sb,h} \in \calQ_{-}$
\end{lemma}
\begin{proof}
Let us prove the inclusion for $\qup_{k,\ell;gl,h} \in \calQ^+$; the other inclusions follow similarly. We elucidate the forms of the Q-functions, starting schematically with $\qup_{k,\ell;gl,h}$:
\begin{align*}
\qup_{k,\ell;gl,h}\xa &=  b\kglell\xa + \rhat_k\xa +  \phat_k\xa^{\top} \vup_{k,\ell;h+1}\xa \\
&=  b\kglell\xa + \left( \thetahat\kgl +  \int_{x'} \muhat\kgl(x') \vup_{k,\ell;h+1}(x') \right)^{\top} \phi\xa \\
&=  w\kgl(\vup_{k,\ell;h+1})^{\top} \phi\xa + \beta(d+\sqrt{A})H \|\phi\xa \|_{\Lambda\kgl^{-1}} + \overline{C}_{\ell;gl} H^2 \|\Lambda\kgl^{-1 }\phi\xa\|_2,
\end{align*}
where we have defined $w\kgl(V):= \thetahat\kgl + \int_{x'} \muhat\kgl(x') V(x')$. The proof then follows from the definition of the bonuses in Eq. \eqref{eq:linear_local_bonus} and \eqref{eq:linear_global_bonus},  combined with Claim~\ref{claim:bound_on_w_linear} stated below. \end{proof}
\begin{claim}\label{claim:bound_on_w_linear} \hspace{0.1em} For any $V \in [0,H]$, we have that $\|w\kgl(V)\|_2 \leq \frac{2 TH}{\lambda}$ for all $k \in[K]$.
\end{claim}
\begin{proof}
Recall the closed-form solution of $\widehat\theta_k$ and $\widehat\mu_k$ from ridge linear regression (see Appendix~\ref{app:proof_linear_rl_concentration_stochastic} for general ridge linear regression):
\begin{align*}
\muhat\kgl(x,a) := \sum_{i=1}^{k - 1 }\sum_{h=1}^H \delta_{x_{i;h+1}} \phi(x\ih,a\ih)^{\top} \Lambda\kgl^{-1}, \quad \thetahat\kgl :=  \sum_{i=1}^{k-1} \sum_{h=1}^H \Lambda\kgl^{-1} \phi(x\ih,a\ih) \Rih,
\end{align*} with $\delta_{x}$ being the one-hot encoding vector. Using the bound that can bound $\Lambda\kgl \succeq \lambda I$ and $\Rih \le 1$, we bound $w\kgl$ as follows
\begin{align*}
\|w\kgl(V) \|_2  \leq  \|\thetahat\kgl \|_2 + \| \sum_{x'} \widehat\mu_k(x') V(x') \|_2 &\leq \frac{kH}{\lambda} + \left\| \sum_{i=1}^k \sum_{h=1}^H V(x_{i;h+1})\phi^{\top}_{i;h}(x\ih,a\ih) \Lambda_k^{-1}   \right\|_2.
\end{align*} 
Moreover, using the fact that $\|\phi^{\top}_{i;h}(x\ih,a\ih)\| \le 1$, $\Lambda_k \succeq \lambda I $, and $V(\cdot) \in[0,H]$, the second term in the last inequality is at most $\frac{kH\cdot H}{\lambda}$. Thus, the above is at most $\frac{2k H^2}{\lambda} \le \frac{2 TH}{\lambda}$.
\end{proof}

\xhdr{Reduction to Parameter Cover.} As a direct consequence of Lemma~\ref{lem:Q_set},  have the following:
\newcommand{\calQbar}{\overline{\calQ}}
\begin{lemma}\label{lem:Vset} \hspace{0.1em} Let $\calQbar_{+} := \{Q\xa = \min\{H,Q_1\xa,Q_2\xa\}: Q_1,Q_2 \in \calQ_+\}$, and $\calQbar_{-} := \{Q\xa = \max\{0,Q_1\xa,Q_2\xa\}: Q_1,Q_2 \in \calQ_-\}$. Then $\qup\klh \in \calQbar_+$ and $\qlow\klh \in \calQbar_-$. Thus, for all $k,\ell,h$, we have that $\vsthpl,\vlow\klhpl,\vup\klhpl$ lies in the set
\begin{align*}
\calVst := \{\vsthpl\}_{h \in [H]} \cup \bigcup_{\signsig \in \{+,-\}}\bigcup_{\activeset \subset \actions}\{V:\,V(x) = \max_{a \in \activeset} Q\xa, \, Q \in \calQbar_{\signsig}\}
\end{align*}
\end{lemma}
We can now directly bound
\begin{claim}[Covering Number of $\calVst$]\label{claim:covering_number} The covering number of $\calVst$ can be bounded as
\begin{align*}
\ln \Nepsinf(\calVst) \le \ln(H) + A + 2\max_{\signsig \in \{+,-\}}\max_{\activeset\subseteq\actions}\ln\Nepsinf(\{V: V(x) = \max_{a \in \activeset} Q\xa, \, Q \in \calQbar_{\signsig}\})
\end{align*}
\end{claim}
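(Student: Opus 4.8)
\textbf{Proof proposal for Claim~\ref{claim:covering_number}.}

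The plan is to decompose the covering of $\calVst$ into pieces corresponding to its defining union and then bound each piece using a parametric cover of the relevant Q-function family. First I would peel off the deterministic part: the singleton set $\{\vsthpl\}_{h \in [H]}$ contributes at most $H$ functions, hence $\ln H$ to the log covering number (or can be absorbed more crudely). The remaining part is the double union $\bigcup_{\signsig \in \{+,-\}}\bigcup_{\activeset\subseteq\actions}\{V: V(x)=\max_{a\in\activeset}Q\xa,\ Q\in\calQbar_{\signsig}\}$. Since a union of $N$ sets has covering number at most the sum of the individual covering numbers, and $\ln$ of a sum is at most $\ln(\text{number of terms}) + \max \ln(\text{each term})$, I would bound $\ln\Nepsinf$ of the double union by $\ln(2\cdot 2^A) + \max_{\signsig}\max_{\activeset}\ln\Nepsinf(\{V: V(x)=\max_{a\in\activeset}Q\xa,\ Q\in\calQbar_{\signsig}\})$. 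The factor $2$ is for the sign $\signsig$ and the factor $2^A$ for the choice of $\activeset\subseteq\actions$; thus $\ln(2\cdot 2^A) = \ln 2 + A \le 1 + A$, which I would fold into the stated $A$ term (or note that $\ln H$ already dominates the additive constant). Finally, combining the two pieces gives $\ln\Nepsinf(\calVst) \le \ln H + A + \max_{\signsig}\max_{\activeset}\ln\Nepsinf(\cdots)$; the factor $2$ in front of the max in the claim statement is a safe over-estimate (it can be used to also absorb the $\{\vsthpl\}$ term and the $\ln 2$, so I would simply keep it).

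The one subtlety worth spelling out is \emph{why} the max over $\activeset$ and $\signsig$ is the right quantity, i.e. that an $\epsilon$-cover of each $\{V: V(x)=\max_{a\in\activeset}Q\xa,\ Q\in\calQbar_{\signsig}\}$ in the $\ell_\infty$ metric exists and has the claimed size; but that is precisely what the next step (the parametric covering of $\calQbar_{\signsig}$, which by \Cref{lem:Vset} contains $\qup\klh$ and $\qlow\klh$) will supply, and here I would only invoke its existence. The key structural input already in hand is \Cref{lem:Vset}: it guarantees $\vsthpl,\vlow\klhpl,\vup\klhpl \in \calVst$, so covering $\calVst$ suffices to cover all value functions arising in the algorithm. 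I expect no real obstacle in this particular claim — it is a routine ``union bound over covering numbers'' argument — the genuine work is deferred to bounding $\ln\Nepsinf(\{V: V(x)=\max_{a\in\activeset}Q\xa,\ Q\in\calQbar_{\signsig}\})$ via a parametric cover over the parameters $(w,\alpha_1,\alpha_2,\Lambda,\signsig,a)$ with the magnitude bounds $\|w\|_2\le L$, $0\le\alpha_1\le B_1$, $0\le\alpha_2\le B_2$, $\Lambda\succeq\lambda I$ recorded in \Cref{defn:Q_functions}, following the discretization strategy of Jin et al.~\cite{jin2019provably} adapted to the composite form $\min\{H,Q_1,Q_2\}$ / $\max\{0,Q_1,Q_2\}$ and the extra $\|\phi\xa\|_{\Lambda^{-2}}$ term; that discretization is what produces the $12 d^2$ factor in the final bound of \Cref{lem:value_function_set}.
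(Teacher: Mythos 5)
Your proposal matches the paper's proof exactly: bound the covering number of the union $\calVst$ by the number of pieces ($H$ for the optimal value functions, $2\cdot 2^A$ for the $(\signsig,\activeset)$-indexed sets) times the maximal piece-wise covering number, take logarithms, and use $\ln(2\cdot 2^A)\le A$. Your observation that the factor of $2$ in front of the max is a safe over-estimate is also consistent with the paper, whose proof in fact derives the tighter $\ln H + A + \max_{\signsig,\activeset}\ln\Nepsinf(\cdots)$ before concluding the looser displayed inequality.
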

\begin{proof} 
Since there are at  $2^A$ subsets $\activeset \subset \actions$, we have 
\begin{align*}
\ln \Nepsinf(\calVst) \le \ln (H) + \ln(2\cdot 2^A) + \max_{\signsig \in \{+,-\}}\max_{\activeset\subseteq\actions}\ln\Nepsinf(\{V: V(x) = \max_{a \in \activeset} Q\xa, \, Q \in \calQbar_{\signsig}\})
\end{align*}
Using $\ln(2^{A+1}) \le A$ for $A \ge 1$ concludes.
\end{proof}

It remains to compute a covering number of the set $\calQbar_{\signsig}$. To do so, we introduce the notion of a parameter cover:
\begin{definition}[Parameter Covering]\label{defn:par_cover} \hspace{0.1em} Given $\Lambda \succ 0, \alpha_1 \in R^d,\alpha_2 \in \R,w \in \R^d$,  define the function $F_{w,\Lambda,\alpha_1,\alpha_2,\signsig}: \R^d \to \R$ via
\begin{align*}
F_{w,A_1,A_2,\signsig}(v) := w^\top v + \signsig\left( \|v\|_{A_1} + \|v\|_{A_2}\right),
\end{align*}
where for positive definite matrices, we define $\|v\|_{A} := \sqrt{|v^\top A v|}$.
Moreover, define $B_1 := H (\sqrt{A} + d)$, $B_2 := 2 TH, L = 2TH/\lambda$. We say that $\calU = \{(A_{1,i},,A_{2,i},w_i)\}$ is an $\epsilon$-parameter cover if, for all $\alpha_1 \in [0,B_1]$ and $\alpha_2 \in [0,B_2]$,  all $\|w\| \le L$ and all $\Lambda \succeq \lambda I$, and $\signsig \in \{+1,1\}$,
\begin{align*}
\inf_{i} \sup_{v \in\R^d:\|v\| \le 1} \left|\{F_{w, \alpha_1\Lambda^{-1},\alpha_2\Lambda^{-2},\signsig}(v) - F_{w_i,A_{1,i},A_{2,i},\signsig}(v) \right| \le \epsilon \}
\end{align*}
\end{definition}

\newcommand{\Npareps}{\calN_{\mathrm{par},\epsilon}}

\begin{claim}\label{claim:Npareps_cover} \hspace{0.1em} Let $\Npareps$ denote the cardinality of an $\epsilon$-parameter cover $\calU$. Moreover, let $\Nepsinf(\calQbar_{\signsig})$ denote the covering number of $\calQbar_{\signsig}$ in the distance $\dist(Q,Q') = \max_{x,a}|Q\xa - Q'\xa|$. Then, for $\signsig \in \{-1,1\}$, 
\begin{align*}
\max_{\signsig \in \{+,-\}}\max_{\activeset\subseteq\actions}\ln\Nepsinf(\{V: V(x) = \max_{a \in \activeset} Q\xa, \, Q \in \calQbar_{\signsig}\})) \le 2 \ln \Npareps.
\end{align*}
Hence,
\begin{align*}
\ln \Nepsinf(\calVst) \le \ln(H) + A + 2\ln \Npareps
\end{align*}
\end{claim}
\begin{proof}
Let $\calU = \{(\alpha_{1,i},\alpha_{2,i},\Lambda_i,w_i)\}$ denote a minimal $\epsilon$-parameter cover. One can cover $\calQbar_+$ with functions of the form $\qup_{(i,j)} \xa = \min\{H,F_{w_i,A_{1,i},A_{2,i},+},F_{w_j,A_{1,j},A_{2,j},+}\}$ since taking minima is a contraction; this requires at most $|\calU|^2 = \Npareps^2$ functions. Similarly, one can cover $\calQbar_-$ with functions of the form $\qlow_{(i,j)} \xa = \max\{0,F_{w_i,A_{1,i},A_{2,i},-},F_{w_j,A_{1,j},A_{2,j},-}\}$. By the same argument, we can verify that the functions $\max_{a \in \activeset} \qup_{(i,j)}(\cdot,x)$ and $\max_{a \in \activeset} \qlow_{(i,j)}(\cdot,x)$ cover the set $\{V: V(x) = \max_{a \in \activeset} Q\xa, \, Q \in \calQbar_{\signsig}\}$. The bound follows.
\end{proof}

\xhdr{Bounding the size of the parameter cover.}
In light of Claim~\ref{claim:covering_number} and Lemma~\ref{lem:Vset}, it remains to bound the covering number of the sets of the form $\calV_{\signsig,\activeset}$. 
\begin{lemma}[Covering number of quadratic value functions] \label{lemma:epsilon_cover_lemma}  For $L, B_1,B_2$ defined in Def~\ref{defn:par_cover}, the cardinality $\Npareps$ of the minimal $\epsilon$-parameter covering is bounded by  
\begin{align*}
\ln|\Npareps| \leq d\ln(1 + 4L/\epsilon) + d^2 \ln\left( 1+ 32 d^{1/2} B_1^2 / (\lambda \epsilon^2)  \right) + d^2 \ln\left( 1+ 32 d^{1/2} B_2^2 / (\lambda^2 \epsilon^2)  \right).
\end{align*}
\end{lemma}
\begin{proof}
The proof of this lemma mainly follows from the proof of Lemma D.6 in \cite{jin2019provably}. The only difference is that we need to consider an extra term $\sqrt{\phi^{\top}\Lambda^{-2} \phi}$.

In what follows, let us assume $\signsig = 1$ and suppress dependence on this paramater. The $\signsig = -1$ case is similar. Next, consider two sets of parameters $(w,A_1,A_2)$ and $(w',A_1',A_2')$. Let $F = F_{w,A_1,A_2}$ and $F' = F_{w',A_1',2_2'}$.  We can then we can 
\begin{align*}
\max_{v :\|v\| \le 1} \left\lvert F(v) - F'(v) \right\rvert \leq \| w - w' \|_2 + \sqrt{ \| A_1 - A'_1 \|_F } + \sqrt{ \|A_2 - A'_2 \|_F} .
\end{align*}
In order to cover functions of the form $F_{w, \alpha_1^2\Lambda^{-1},\alpha_2^2\Lambda^{-2}}$,  we select our cover to be the product of sets $\{w_i\} \times \{\Lambda_{1,i}\} \times \{\Lambda_{2,i}\}$ where
\begin{enumerate}
	\item $\{w_i\}$ is an $\epsilon/2$ cover $\{w:\|w\| \le L\}$ in $\ell_2$
	\item $\{A_{1,i}\}$ is an is an $\epsilon^2/16$ cover of $\{A\in\mathbb{R}^{d\times d} : \|A \|_F \leq \sqrt{d} B_1^2 /\lambda \}$
	\item $\{A_{2,i}\}$ is an is an $\epsilon^2/16$ cover of $\{A\in\mathbb{R}^{d\times d} : \|A \|_F \leq \sqrt{d} B_2^2 /\lambda^2 \}$
\end{enumerate}
To bound the size of such a covering, we use the following classical covering result (Lemma~\ref{lem:covering_number_linear}, stated below). From Lemma~\ref{lemma:cover_ball}, we know that $|\mathcal{C}_{w} |\leq (1 + 4L/\epsilon)^d$, $|\mathcal{C}_{A_1}| \leq (1 + 32 \sqrt{d}B_1^2 / (\lambda\epsilon))^{d^2}$, and $|\mathcal{C}_{A_2}| \leq (1 + 32 \sqrt{d}B_2^2/(\lambda^2\epsilon))^{d^2}$. Then the size of $\epsilon$-net of the quadratic function class is $|\mathcal{C}_w| |\mathcal{C}_{A_1}| |\mathcal{C}_{A_2}|$. Taking logarithms on both sides concludes the proof.
\end{proof}
\begin{lemma}[Covering number]\label{lem:covering_number_linear} For any $\epsilon > 0$, the $\epsilon$-covering number of the ball $\{ \theta\in\mathbb{R}^d: \|\theta\|_2\leq R \}$ is upper bounded by $(1 + 2R/\epsilon)^d$.
\label{lemma:cover_ball}
\end{lemma}

\xhdr{Concluding the proof of Lemma~\ref{lem:value_function_set}.}
\begin{proof}[Proof of Lemma~\ref{lem:value_function_set}] Let us substitute the following quantities in Lemma~\ref{lemma:epsilon_cover_lemma}:
\begin{enumerate}
	\item $L = 2T h/\lambda$, 
	\item For $\beta \ge 1$, $\max\{B_1^2,B_2^2\} \le \max\{4T^2 H^2, 4d^2 H^2\beta^2 A\} \le 4 d^2 T^2 H^2 \beta^2 A$. 
\end{enumerate}
Together with $\epsilon \le 1$ and $\lambda \le 1$, we have
\begin{align*}
\ln(H) + \ln \Npareps &\le \ln(H) + d\ln(1 + 8TdH/\lambda\epsilon) + 2 d^2 \ln\left( 1+ 128 d^{3} T^2 H^2 \beta^2 A / \lambda \epsilon^2)  \right)\\
&\le 3 d^2 \ln\left( 1+ 128 d^{3} T^2 H^2 \beta^2  A/ \lambda^2 \epsilon^2)  \right)\\
&\le 6 d^2 \ln\left( 1+ \frac{16 d^2 T H A  \beta}{\lambda \epsilon}  \right).
\end{align*}
Hence, from Lemma~\ref{lemma:epsilon_cover_lemma},
\begin{align*}
\ln\calN_{\epsilon,\infty}(\calVst) &\le A + \ln(H) + 2\max_{\signsig,\activeset}\ln \calN_{\epsilon,\infty}(\calV_{\signsig,\activeset})\\
&\le A + 12 d^2 \ln\left( 1+ \frac{16 d^2 T H A  \beta}{\lambda \epsilon}  \right).
\end{align*}
as needed.\end{proof}

\subsection{Concentration for idealized model estimates (Lemma~\ref{lemma:linear_global_stoch} and Lemma~\ref{lemma:linear_local_stoch})}
\label{app:proof_linear_rl_concentration_stochastic}

	For the stochastic data sequence, our analysis follows from a generalized analysis of ``well-specified'' ridge regression. Formally:
	\begin{definition}[Ridge Operator]\label{defn:ridge_operate} \hspace{0.1em} Consider dataset $\{x_i, y_i\}_{i=1}^N$ with $x_i\in\mathbb{R}^{d_1}$ and $y_i\in\mathbb{R}^{d_2}$. We define
	\begin{align*}
	\Ridge_{\lambda}(\{x_i,y_i\}_{i=1}^N ) := \arg\min_{A\in\mathbb{R}^{d_2\times d_1}} \sum_{i=1}^N \| A x_i - y_i \|_2^2 + \lambda \|A \|^2_{F}.
	\end{align*}
	Analytically, we have 
	\begin{align*}
	\Ridge_{\lambda}(\{x_i,y_i\}_{i=1}^N) =  \sum_{i=1}^N y_i x_i^{\top} V_N^{-1}, \quad \text{where } V_N = \sum_{i=1}^N x_ix_i^{\top} + \lambda I.
	\end{align*}
	\end{definition}
	Our stochastic data satisfies the following ``well-specified'' assumption, formalized as follows:
\vspace{0.1in}	\begin{definition}[Well Specified Regression]\label{defn:well_specified_ridge} \hspace{0.1em} We say that a data-set $\{x_i, y_i\}_{i=1}^{\infty}$ is $\sigma$-well specified if there exists a filtration $(\calF_i)$ such that $\Exp[y_i \mid \calF_i] = \Ast x_i$, for some $\Ast$, and $\|y_i - \Ast x_i\|_{\ell_1} \le \sigma$ almost surely. We call $\Ast$ the \emph{Bayes optimal} solution.
	\end{definition}
	For such well-specified data, we stipulate the following guarantee, which is proven below.
	\vspace{0.1in}
	\begin{lemma}[Sample complexity of ridge linear regression]
	\label{lemma:linear_regression} \hspace{0.1em}
	Let $\{x_i, y_i\}_{i=1}^{\infty}$ denote a date set of $\sigma$-well specified, with Bayes optimal matrix $\Ast$. Further, assume that $\|\Ast_i\|_{\ell_1} \leq 1$ for all $i$ where $\Ast_i$ is the $i$-th column of $A$, and that $\|x_i\|_2\leq 1$ for all $i$. 

	Consider a class of functions $\scrF := \{ f \in [0,H]^{d_2} \} \subset \mathbb{R}^{d_2}$, and assume that its covering number in the $\infty$ norm $\dist(f_1,f_2) :=\|f_1- f_2\|_{\infty}$ is at most $\calN_{\epsilon,\infty}$. Then the following holds with probability $1 - \delta$ for all $x \in \R^{d_2}$, $f \in \scrF$, and $N \ge 1$:
	\begin{align*}
	\left\lvert (\Ahat_N) \cdot f - (\Ast x) \cdot f \right\rvert  \leq  \|x\|_{V_N^{-1}} \cdot \left( \sigma H \sqrt{2} \ln^{1/2}\left( \calN_{\epsilon,\infty}\left(1 + \tfrac{N}{\lambda}\right)^{\frac{d}{2}} /\delta \right) +  \frac{N \sigma \epsilon}{\sqrt{\lambda}} + \sqrt{\lambda d} H \right),
	\end{align*}
	where  $\Ahat_N := \Ridge_{\lambda}(\{x_i,y_i\}_{i=1}^N)$ and $V_N :=  \sum_{i=1}^N x_ix_i^\top + \lambda I $. 

	In particular, fix an $\Nmax$. Then, if $\lambda = 1$, $\ln \calN_{\epsilon,\infty} \ge d  \ln( 1 + H \Nmax)$, and  $\epsilon \le \frac{ 1}{\Nmax\sigma}$, then we have the following simplified expression for all $N \le \Nmax$:
	\begin{align*}
	\left\lvert \Ahat_N \cdot f - (\Ast x) \cdot f \right\rvert  \leq  4\sigma H\|x\|_{V_N^{-1}} \cdot \sqrt{\ln\left(\calN_{\epsilon,\infty} \cdot \frac{1}{\delta} \right)}.
	\end{align*}
	\end{lemma}

	We set $\lambda = 1$ specifically as this is what is used in the algorithm.  We now prove Lemma~\ref{lemma:linear_global_stoch}; the proof of Lemma~\ref{lemma:linear_local_stoch} is similar but accounts for a union bound over $\ell \in [\lmax]$.
\begin{proof}[Proof of Lemma~\ref{lemma:linear_global_stoch}] 
	For the stochastic sequece, we satisfy the ``well-specified'' asssumption described above.  Applying Lemma~\ref{lemma:linear_regression} with the function class value function $\scrF = \calVst$, $x_i = \phi(x_{k,h},a_{k,h})$ for an appropriate transformation of indices, $\sigma = 1$ (recall that the noise corresponds to transition probabilities), and $\epsilon = 1/T$, $\lambda = 1$, and $\Nmax = T$ and using that $\ln\calN_{\epsilon,\infty}(\calVst) \le \ln \Nbeta$ satisfies the conditions for the simplified theorem, we have the following bound for all $V \in \calVst$ and all $k \in [K]$ with probability $1- \delta/16$:
	\begin{align*}
	\left\lvert (\widehat\mu^{\stoch}_{k} \phi\xa - \mu^\star\phi\xa)\cdot  V \right\rvert \leq \|\phi\xa\|_{\Lambda^{-1}\kgl} \le 4H\|\phi\xa\|_{\Lambda\kgl^{-1}} \cdot \sqrt{ \ln\left(\Nbeta\cdot \frac{16}{\delta} \right)}.
	\end{align*} 
	Above, we recall that Lemma~\ref{lemma:linear_regression} is in fact an anytime bound. 

	For the reward function, Lemma~\ref{lemma:linear_regression} with $d_2 = 1$, $\scrF = \{1\}$, $\sigma = 1$, $\lambda = 1$ and $\sigma, H = 1$ and  $\epsilon = 0$ (since we cover the singleton function class exactly), we find that the following holds simultaenously for all $a \in \actions$ with probability $1 - \delta/16$ and $k \in [K]$
	\begin{align*}
	\left\lvert (\widehat\theta_k^{\stoch} - \theta^\star)\cdot \phi\xa \right\rvert \leq \|\phi(\xa)\|_{\Lambda\kgl^{-1}} \cdot \left( \sqrt{2} \ln^{1/2}\left( \left(1 + \tfrac{N}{\lambda}\right)^{\frac{d}{2}} 16 /\delta \right) +  \sqrt{ d}  \right),
	\end{align*}
	Crudely aborbing into the error on transition estimates, we have the following bound holds for all $a \in \actions$, $k \in [K]$ and $V \in \calVst$:
	\begin{align*}
	\left\lvert (\widehat\mu^{\stoch}_{k} \phi\xa - \mu^\star\phi\xa)\cdot  V \right\rvert + \left\lvert (\widehat\theta_k^{\stoch} - \theta^\star)\cdot \phi\xa \right\rvert  \le 7H\|\phi\xa\|_{\Lambda\kgl^{-1}} \cdot  \sqrt{ \ln\left(\Nbeta\cdot \frac{16}{\delta} \right)} .
	\end{align*} 
	Recalling that $\phatstoch\xa = \widehat\mu^{\stoch}_{k} \phi\xa,\pst\xa = \mu^\star\phi\xa$, and similarly for the rewards, the bound follows.	\end{proof}

\begin{proof}[Proof of Lemma~\ref{lemma:linear_regression}]
	Let us explore $y_i = \varepsilon_i +  \Exp[y | x_i] = \varepsilon_i + \Ast x_i$. Recalling $V_N := \sum_{i=1}^N x_ix_i^\top + \lambda I$, and writing $\sum_{i=1}^N x_ix_i^\top = XX^\top$ in matrix form, the  closed-form solution for  $\Ahat_N$ (Def~\ref{defn:ridge_operate}) gives
	\begin{align*}
	&\Ahat_N - \Ast = \sum_{i=1}^N y_i x_i^{\top}V_N^{-1} - \Ast \\
	& = \sum_{i=1}^N ( \varepsilon_i + \Ast x_i) x_i^{\top} V_N^{-1} -  \Ast \\
	&= \sum_{i = 1}^{N} \varepsilon_i x_i^{\top} V_N^{-1} + \Ast XX^{\top}(XX^{\top}+\lambda I)^{-1} - A \\
	&=   \sum_{i = 1}^{N} \varepsilon_i x_i^{\top}V_n^{-1} - \lambda \Ast (XX^{\top}+\lambda \mathbf{I})^{-1}\\
	&=   \sum_{i = 1}^{N} \varepsilon_i x_i^{\top}V_N^{-1} - \lambda \Ast V_N^{-1}.
	\end{align*} 
	Hence, for any $x$ and $f$,
	\begin{align*}
	&(\Ahat_N - \Ast)x) \cdot f = \underbrace{\sum_{i=1}^N   x_i^{\top} V_N^{-1} x_i \varepsilon_i^{\top} f}_{\text{(Unbiased Term)}[f,x]} - \underbrace{\lambda x^{\top} V_N^{-1} (\Ast)^{\top} f}_{\text{(Bias Term)}[f,x]}
	\end{align*}
	Let us bound the first term on the RHS of the above equality. Using Cauchy-Schwarz, we have:
	\begin{align*}
	\text{(Unbiased Term)}[f,x] \le \left\lvert \sum_{i=1}^N   x^{\top} V_N^{-1} x_i \varepsilon_i^{\top} f \right\rvert \leq \| x \|_{V_N^{-1}}   \left\| \sum_{i=1}^N x_i (\varepsilon_i \cdot f) \right \|_{V_N^{-1}}.
	\end{align*} 
	Using a standard technique for bounding self-normalized martingale sums (Lemma~\ref{lemma:self_normalized}) and the fact that $ \left\lvert\varepsilon_i \cdot f \right\vert \leq  \|\varepsilon_1\|_1 \| f \|_{\infty} \leq \sigma H$ (Holder's inequality), we have that with probability at least $1-\delta$,  the following holds for all $N$ simultaneously:
	\begin{align*}
	 \left\| \sum_{i=1}^N x_i (\varepsilon_i \cdot f) \right \|_{V_N^{-1}}^2 &\leq 2\sigma^2H^2 \ln\left( \det(V_N)^{1/2} \det(\lambda I)^{-1/2} /\delta \right)\\
	&\leq 2\sigma^2H^2 \ln\left( \left(1 + \frac{N}{\lambda}\right)^{d/2} /\delta \right),
	\end{align*}
	where we use the fact that $\|x_i\| \le 1$ to bound $\det(V_N) \le \|V_N\|^d \le (\lambda+ N)^d$. Let $\scrN$ denote a covering of $\scrF$ of cardinality $\calN_{\epsilon,\infty}$. By a union, the following then holds simultaneously for all $f \in \scrN$ and $N \ge 1$: 
	\begin{align*}
	 \left\| \sum_{i=1}^N x_i (\varepsilon_i \cdot f) \right \|_{V_N^{-1}}^2 &\leq 2\sigma^2H^2 \ln\left( \det(V_N)^{1/2} \det(\lambda\mathbf{I})^{-1/2} |\mathcal{N}_{\epsilon}| /\delta \right) \\
	&\le 2\sigma^2H^2 \ln\left( \calN_{\epsilon,\infty}\left(1 + \tfrac{N}{\lambda}\right)^{\frac{d}{2}} /\delta \right),
	\end{align*}

	Now, for $f\in\mathcal{F}$, by the definition of $\epsilon$-net, there must exist an $f'\in\mathcal{N}_{\epsilon}$ such that $\| f - f' \|_{\infty} \leq \epsilon$. Hence, we have with probability $1 - \delta$ that for all $N$,
	\begin{align*}
	 \left\| \sum_{i=1}^N x_i (\varepsilon_i \cdot f) \right \|_{V_N^{-1}} &\leq \left\| \sum_{i=1}^N x_i (\varepsilon_i \cdot f) \right \|_{V_N^{-1}} + \left\| \sum_{i=1}^N x_i (\varepsilon_i \cdot (f - f')) \right \|_{V_N^{-1}} \\
	& \leq \sqrt{2 }\sigma H \ln^{1/2}\left( \calN_{\epsilon,\infty}\left(1 + \tfrac{N}{\lambda}\right)^{\frac{d}{2}} /\delta \right) + \sqrt{ \frac{1}{\lambda}  N \sum_{i=1}^N \| x_i (\varepsilon_i \cdot (f - f')) \|_2^2} \\
	&\le \sqrt{2 }\sigma H \ln^{1/2}\left( \calN_{\epsilon,\infty}\left(1 + \tfrac{N}{\lambda}\right)^{\frac{d}{2}} /\delta \right) +  \sqrt{ \frac{N^2 \sigma^2 \epsilon^2}{\lambda}},
	\end{align*} where the second inequality uses the fact that the maximum eigenvalue of $V_N^{-1}$ is at most $1/\lambda$, and the last inequality uses that fact that $\|x_i (\varepsilon_i \cdot (f- f'))\|_2^2 \leq \lvert \varepsilon_i \cdot (f- f')\rvert^2 \| x_i\|_2^2 \leq \|\varepsilon_i \|^2_1\| f -f'\|^2_{\infty} \leq \sigma^2 \epsilon^2$. Simplifying inside the square root yields that the following holds with probability at least $1 - \delta$ for all $f \in \scrF$ and all $N \ge 1$ simultaneously:
	\begin{align*}
	\text{(Unbiased Term)}[f,x] \le \|x\|_{V_N^{-1}} \cdot \left( \sqrt{2 }\sigma H \ln^{1/2}\left( \calN_{\epsilon,\infty}\left(1 + \tfrac{N}{\lambda}\right)^{\frac{d}{2}} /\delta \right) +  \frac{N \sigma \epsilon}{\sqrt{\lambda}}\right)
	\end{align*}

	For the term $\text{(Bias Term)}[f,x] = \lambda x^{\top} V_N^{-1} (\Ast)^{\top} f$, we have that:
	\begin{align*}
	\left(\lambda x^{\top} V_N^{-1} (\Ast)^{\top} f\right)^2 \leq \lambda^2 \| x \|_{V_N^{-1}}^2 \| (\Ast)^{\top} f \|_{V_N^{-1}}^2 \leq \lambda^2 \|x\|_{V_N^{-1}} \frac{1}{\lambda} d H^2 = \lambda dH^2 \|x\|^2_{V^{-1}},
	\end{align*} where we again use the fact that $V_N^{-1}$ has the largest eigen-value being upper bounded by $1/\lambda$, and $\| A^{\top} f\|^2_2 \leq   \sum_{i=1}^{d_1} (A_i\cdot f)^2 \leq d_1 \|A_i\|^2_1 \|f\|^2_{\infty} \leq  dH^2$. Note that the above inequality holds for any $f$ from $\mathcal{F}$ as this is a deterministic inequality. 

	Combine the above two results, we reach the conclusion that for any $f'\in\mathcal{F}$, we have:
	\begin{align*}
	& \left\lvert (\hat{A}_Nx) \cdot f' - (A x) \cdot f' \right\rvert \leq \| x \|_{V_N^{-1}}   \left\| \sum_{i=1}^N x_i (\varepsilon_i \cdot f') \right \|_{V_N^{-1}} + \left\lvert \lambda x^{\top} V_N^{-1} A^{\top} f'  \right\rvert \\
	& \leq  \|x\|_{V_N^{-1}} \cdot \left( \sqrt{2 }\sigma H \ln^{1/2}\left( \calN_{\epsilon,\infty}\left(1 + \tfrac{N}{\lambda}\right)^{\frac{d}{2}} /\delta \right) +  \frac{N \sigma \epsilon}{\sqrt{\lambda}} + \sqrt{\lambda d} H \right)
	\end{align*} 
	which concludes the proof of the first statement.

	In particular, fix an $\Nmax$. Then, if $\calN_{\epsilon,\infty} \ge \left(1 + \tfrac{N}{\lambda}\right)^{\frac{d}{2}}$ and $\epsilon \le H\lambda \sqrt{d}/\sigma \Nmax$, the  then the above is at most the following for all $N \le \Nmax$:
	\begin{align*}
	2\sigma \|x\|_{V_N^{-1}} \cdot \left( H\sqrt{ \ln\left(\calN_{\epsilon,\infty} \cdot \frac{1}{\delta} \right)} + \sqrt{\lambda d} H \right).
	\end{align*}
	In particular, if we set $\lambda = 1$ and enforce $\sigma \ge 1$, then we require $\epsilon \le 1/\sigma \Nmax \le \sqrt{d}/\sigma \Nmax$, and see that the last term on the RHS of the first display is dominated by the first. This gives that the above is at most
	\begin{align*}
	4\sigma H \|x\|_{V_N^{-1}} \sqrt{ \ln\left(\calN_{\epsilon,\infty} \cdot \frac{1}{\delta} \right)}\,.
	\end{align*}\end{proof}

\subsection{Sensitivity to corruption (Lemma~\ref{lemma:linear_global_corrupt} and Lemma~\ref{lemma:linear_sub_corrupt}) \label{app:proof_linear_rl_concentration_corrupt} }

First, we develop a generic bound on the sensitivity of ridge regression to corrupted data.
\newcommand{\ystoch}{y^{\mathrm{stoch}}}
\vspace{0.1in}
\begin{definition}[$C$-corrupted linear regression]  We say that a data-set $\{x_i, y_i\}_{i=1}^{\infty}$ is a $C$-corruption  of a data set $\{x_i, \ystoch_i\}_{i=1}^{\infty}$ if $|\{i: y_i \ne \ystoch_i\}| \le C$.
\end{definition} 
\newcommand{\Ahatstoch}{\widehat{A}^{\mathrm{stoch}}}
\vspace{0.1in}
\begin{lemma}[Linear regression with corrupted data]\label{lemma:linear_regression_corrupt} Let $\{x_i, y_i\}_{i=1}^{\infty}$ be a $C$-currption of a data set $\{x_i, \ystoch_i\}_{i=1}^{\infty}$. Let $\scrF$ denote a family of functions $f: \R^{d_1} \to [0,H]$, and let $\Ahat_N := \Ridge_{\lambda}(\{x_i, y_i\}_{i=1}^{\infty})$ and  $\Ahatstoch_N := \Ridge_{\lambda}(\{x_i, \ystoch_i\}_{i=1}^{\infty})$ denote the ridge regression estimates on the respective datasets of size $N$. Further assume $\|x_i\| \le 1$, $\| y_i \|_1 \leq \alpha$ and $\|\ystoch_i \|_1\leq \alpha$ for all $i$ and $\sigma\in\mathbb{R}^+$.  For any $x$ and any $f\in\scrF$, we have:
\begin{align*}
\left\lvert \left( \Ahat_N x - \Ahatstoch_N x\right)\cdot f \right\rvert \leq 2CH\alpha \| V_N^{-1} x\|_{2}.
\end{align*}
\end{lemma}
\begin{proof}
Let $\calI_{N} := \{i \le N: \ystoch_i \ne y_i\}$, and observe $|\calI_{N}| \le C$ due to the bound on corruptions. Using the closed-form solutions of $\Ahat_N$ and $\Ahatstoch_N$, we have:
\begin{align*}
\Ahat_N x - \Ahatstoch_N x  = \sum_{i=1}^N (y_i - \ystoch_i)x_i^{\top} V_N^{-1} = \sum_{i \in \calI_{N}}\left( y_i - \ystoch_i \right) x_i^{\top} V_N^{-1}.
\end{align*}
Hence, we have:
\begin{align*}
\left\lvert \left( \Ahat_N x - \Ahatstoch_N x\right) \cdot f\right\rvert  &=\left\lvert \sum_{i \in \calI_N} \left( {y}_i - \ystoch_i x_i^{\top} V_N^{-1} x \right)\cdot f \right\rvert  \leq \sum_{i=1}^C \left\lvert x^{\top} V_N^{-1} x_i ({y}_i - \ystoch_i)^{\top} f \right\rvert \\
&\leq C \| V_N^{-1} x \|_2\, \max_{i \in \calI_C} \|x_i ({y}_i - \ystoch_i)^{\top} f\|_2,
\end{align*}
where we have used Cauchy-Schwartz in the last inequality. Again, by Cauchy Schwartz  and Holders inequality $\|x_i ({y}_i - \ystoch_i)^{\top} f\|_2 \le \|x_i\| \|y_i - \ystoch_i\|_1 \|f\|_{\infty} \le 2 \alpha H$.  This  concludes the proof. 
\end{proof}

We then prove Lemma~\ref{lemma:linear_global_corrupt} and Lemma~\ref{lemma:linear_sub_corrupt} which explicitly consider corruptions.

\begin{proof}[Proof of Lemma~\ref{lemma:linear_global_corrupt}]
From Lemma~\ref{lemma:linear_regression_corrupt}, and the fact that there are at most $CH$ corruptions introduced to the data-set, the following holds for all $V$ with $\|V\|_{\infty}\leq H$:
\begin{align*}
& \left\lvert(\phat\kgl\xa - \phatstoch\kgl\xa)^\top V \right\rvert  = \left\lvert (\muhat_{k;gl} - \muhat^{\stoch}_{k;gl}) \phi\xa \cdot V \right\rvert \leq 2CH^2 \|\Lambda_{k;gl}^{-1} \phi\xa \|_2, \\
&\left\lvert \rhat\xa - \rhatstoch\xa  \right\rvert =  \left\lvert (\thetahat_{k;gl} - \thetahat_{k;gl}^{\stoch})\cdot \phi\xa \right\rvert \leq 2CH \| \Lambda_{k;gl}^{-1} \phi\xa \|_{2},
\end{align*} where we use the fact that $r\in [0,1]$, $\|V\|_{\infty} \in [0,H]$ and $\|\phi(x,a)\| \le 1$. The bound now follows from the triangle inequality and the definition of the stochastic estimation event $\eventglestlinear$.
\end{proof}

\begin{proof}[Proof of Lemma~\ref{lemma:linear_sub_corrupt}]
The proof is similar to the proof of Lemma~\ref{lemma:linear_global_stoch} except that here we need to focus on the local dataset associated with base learner $\ell$.  On the event $\eventsubsmp$ in Lemma~\ref{lem:local_subsampled_corruptions} holds, the total number of corruptions in the local dataset associated with base learner $\ell$ is at most $\overline{C}\lsb$. Together this concludes the proof.\end{proof}

We now provide the proof of Lemma~\ref{lemma:integration_policy} which was used above. In order to do so, we first introduce an auxiliary lemma  upper bounding the sum of potential functions $\|\phi\xa\|_{\Lambda_k^{-1}}$ and $\|\Lambda_k^{-1} \phi\xa \|_2$, whose proof we provide in the end of this section.

\begin{lemma}\label{lemma:potential_function}
\hspace{0.1em}
Denote $K$ many arbitrary trajectories as $\{ x\kh, a\kh \}_{h\in [H], k\in [K]}$. Denote \\$\Lambda_k = \sum_{t=1}^{k-1}\sum_{h=1}^H \phi(x_{i;h},a_{i;h})\phi^{\top}(x_{i;h},a_{i;h}) + \lambda I$ with $\lambda \geq 1$. Then we have:
\begin{align*}
& \sum_{k=1}^K \sum_{h=1}^H \sqrt{\phi(x\kh,a\kh)^{\top} \Lambda_k^{-2} \phi(x\kh,a\kh)} \leq \sum_{k=1}^K\sum_{h=1}^H \sqrt{\phi(x\kh,a\kh)^{\top} \Lambda_k^{-1} \phi(x\kh,a\kh)} \\
& \quad \leq \sqrt{ 2H^2 K d\log(1+ KH/\lambda)}.
\end{align*}
\end{lemma}

\begin{proof}[Proof of Lemma~\ref{lemma:integration_policy}]
Denote the random variable $v_k = \mathbf{1}\{c_k = 1\} \left( \sum_{h=1}^H \sqrt{\phi\kh^{\top}\Lambda_k^{-1}\phi\kh} \right)$. Note that $v_k \in [0,H]$ since $\Lambda_k$ has eigenvalues no smaller than $1$. We also have that
\begin{align*}
\Exp_k\left[ v_k \right] = q \Exp_{\pi_k, M_k} \left[ \sqrt{\phi\kh^{\top}  \Lambda_k^{-1}\phi\kh} \right].
\end{align*}  Denote $q \Exp_{\pi_k, M_k} \left[  \sqrt{\phi\kh^{\top}\Lambda_k^{-1}\phi\kh} \right] := \bar{v}_k$.
Hence $\{ \bar{v}_k - v_k\}$ forms a sequence of martinagle differences. Using Azuma-Hoeffding (Lemma~\ref{lem:azuma_hoeffding}), we get that with probability at least $1-\delta/16$, 
\begin{align*}
\left\lvert \sum_{k=1}^K \bar{v}_k - \sum_{k=1}^K v_k \right\rvert \leq {H} \sqrt{ 2\ln(32 /\delta) K }.
\end{align*}
On the other hand, for $\sum_{\tau \in \{k\in [K]: c_k = 1\}} \sum_{h=1}^H \sqrt{\phi\kh^{\top}\Lambda_k^{-1}\phi\kh}$, using Lemma~\ref{lemma:potential_function}, we have:
\begin{align*}
\sum_{\tau \in \{k\in [K]: c_k = 1\}} \sum_{h=1}^H \sqrt{\phi\kh^{\top}\Lambda_k^{-1}\phi\kh} \leq \sqrt{ 2H^2 K d\log(1 + KH)},
\end{align*} where we use the fact that $| \{k\in [K]: c_k = 1\} | \leq K$. This implies that:
\begin{align*}
q\sum_{k=1}^K \Exp_{\pi_k,M_k}\left[ \alpha_1 \sqrt{\phi\kh^{\top}\Lambda_{k}^{-1}\phi\kh }\right] \leq \alpha_1 H \left( \sqrt{2\ln(32/\delta)K} + \sqrt{2Kd\log(1+KH)} \right).
\end{align*}

We can perform the similar analysis for $q\sum_{k}\sum_h \|\Lambda_k^{-1} \phi\kh\|_{2}$, which gives us that with probability at least $1-\delta/16$, 
\begin{align*}
p\sum_{k=1}^K \Exp^{\boldpi_k,\mathcal{M}_k} \left[\alpha_2 \|\Lambda_k^{-1}\phi\kh\|_2\right] \leq \alpha_2 H\left(  \sqrt{2\ln(32/\delta)K} + \sqrt{2Kd\log(1+KH)} \right).
\end{align*}
Combine the above inequalities, we can conclude that with probability at least $1-\delta/8$, 
\begin{align*}
&p\sum_{k=1}^K \Exp^{\boldpi_k,\mathcal{M}_k} \left[ \alpha_1 \|\phi\kh \|_{\Lambda_k^{-1}}  + \alpha_2 \|\Lambda_k^{-1} \phi\kh\|_2\right] \leq H(\alpha_1 + \alpha_2) \left(  \sqrt{2\ln(32/\delta)K} + \sqrt{2Kd\log(1+KH)} \right)\\
& \qquad \leq 2 H(\alpha_1 + \alpha_2) \sqrt{ K \left( \ln(32/\delta) + d\log(1+T) \right)}.
\end{align*}
Now we need to link the  LHS of the above inequality to the nominal model $\mathcal{M}$. Since there are at most $C$ many corrupted episodes, we must have:
\begin{align*}
\sum_{k=1}^K \Exp^{\boldpi_k}  \left[ \alpha_1 \|\phi\kh \|_{\Lambda_k^{-1}}  + \alpha_2 \|\Lambda_k^{-1} \phi\kh\|_2\right]  - \sum_{k=1}^K \Exp^{\boldpi_k,\mathcal{M}_k}  \left[ \alpha_1 \|\phi\kh \|_{\Lambda_k^{-1}}  + \alpha_2 \|\Lambda_k^{-1} \phi\kh\|_2\right]  \leq C (\alpha_1 + \alpha_2).
\end{align*} where we use the fact that $\Lambda^{-1}_k$ has eigenvalues upper bounded by $1$. This concludes the proof. 
\end{proof}

\begin{proof}[Proof of Lemma~\ref{lemma:potential_function}]
For notation simplicity in the proof, we use $\phi\kh$ to represent $\phi(x\kh,a\kh)$.
Since $\lambda \geq 1$, we have that $\Lambda_k$'s minimum eigenvalue is no smaller than 1. Since $\|\phi\xa\|_2 \leq 1$ for any $\xa$ by assumption, we have:
\begin{align*}
\phi\xa^{\top} \Lambda_k^{-1}\phi\xa \leq 1.
\end{align*} From episode $k-1$ to $k$, we have $\Lambda_{k+1} = \Lambda_{k} + \sum_{h=1}^H \phi\kh\phi\kh^{\top}$. By matrix inverse lemma, we have:
\begin{align*}
\det( \Lambda_{k+1} ) = \det(\Lambda_{k} + \sum_{h=1}^H \phi\kh\phi\kh^{\top} ) = \det(\Lambda_{k}) \det\left( \mathbf{I} + \Lambda_{k}^{-1/2} \left( \sum_{h=1}^H \phi\kh\phi\kh^{\top}\right) \Lambda_{k}^{-1/2} \right).
\end{align*} Note that we have:
\begin{align*}
 \det\left( \mathbf{I} + \Lambda_{k}^{-1/2} \left( \sum_{h=1}^H \phi\kh\phi\kh^{\top}\right) \Lambda_{k}^{-1/2} \right) = 1 + \sum_{h=1}^H \phi\kh \Lambda_{k}^{-1} \phi\kh.
\end{align*} 
Also note that for any $x\in [0,1]$, we have that $\log(1+x) \leq x \leq 2\log(1+x)$, which implies that:
\begin{align*}
\log\left(1 + \sum_{h=1}^H \phi\kh\Lambda_{k}^{-1} \phi\kh / H \right) \leq \sum_{h=1}^H \phi\kh \Lambda_{k}^{-1} \phi\kh / H \leq 2\log\left(1 + \sum_{h=1}^H \phi\kh\Lambda_{k}^{-1} \phi\kh / H \right).
\end{align*} So, we have:
\begin{align*}
\sum_{k=1}^K \sum_{h=1}^H \phi\kh^{\top}\Lambda_{k}^{-1} \phi\kh / H \leq 2 \sum_{k=1}^K \log\left(1 + \sum_{h=1}^H \phi\kh\Lambda_{k}^{-1} \phi\kh / H \right) = \log\det(  \Lambda_{k+1}) - \log\det(\Lambda_0).
\end{align*} Now use the fact that the maximum eigenvalue of $\Lambda_k$ is at most $\lambda + kH$, we have:
\begin{align*}
\sum_{k=1}^K \sum_{h=1}^H \phi\kh^{\top}\Lambda_{k}^{-1}\phi\kh \leq 2H\left( \log\det(\Lambda_{K}) - \log\det(\lambda\mathbf{I})  \right) \leq 2H d\log\left( 1 + KH/\lambda \right).
\end{align*}Now apply Cauchy-Schwartz, we have:
\begin{align*}
\sum_{k=1}^K \sum_{h=1}^H \sqrt{ \phi\kh\Lambda_{k}^{-1}\phi\kh} \leq \sqrt{KH} \sqrt{\sum_{k=1}^K \sum_{h=1}^H \phi\kh\Lambda_{k}^{-1}\phi\kh}  \leq \sqrt{ 2H^2 K d\log(1+ KH/\lambda)}.
\end{align*} 

On the other hand, we have:
\begin{align*}
\sqrt{\phi\kh^{\top}\Lambda_k^{-2} \phi\kh} \leq \sqrt{\phi\kh^{\top} \Lambda_k^{-1}\phi\kh},
\end{align*} since due to regularization $\lambda \geq 1$, the eigenvalues of $\Lambda_k$ is no smaller than $1$ which implies that the eigenvalues of $\Lambda_k^{-1}$ is no larger than $1$.  This concludes the above proof.\end{proof}

\subsection{Validity of bonuses  (Lemma~\ref{lem:correct_beta}) \label{app:validbonuses}}
\begin{proof}[Proof of Lemma~\ref{lem:correct_beta}]
We have that for $\beta \ge 1$, $\Nbeta =  A + 12 d^2 \ln( 1 + 16 \beta (A d T H)^2 ) \le A + 24 d^2\ln(1 + 4Ad T H ) + 12 d^2 \ln(\beta)$. With some maniuplations, this gives
\begin{align}
\frac{7}{d+\sqrt{A}}\sqrt{  \ln \tfrac{\Nbeta \lmax 16}{\delta} }  &\le   7\sqrt{ \underbrace{30 \ln \tfrac{  A dT^2 H }{\delta}}_{:= \beta_0^2} + 12 \ln(\beta)}. \label{eq:beta_not}
 \end{align}
 We therefore want to show that $7\sqrt{\beta_0^2 + 12 \ln(\beta)} \le \beta$, or that $\sqrt{\beta^2 - 7^2\beta_0^2} \le 7\sqrt{6 \ln(\beta)}$. By suppossing $\beta^2 \ge 2^2\cdot 7^2 \cdot \beta_0^2$, we need to ensure $\beta \sqrt{3/4} \ge \sqrt{12\ln(\beta)}$, or simply $\beta \ge 7\sqrt{16\ln(\beta)}$. 

 By computing derivatives, we verify that $\beta/\sqrt{\ln(\beta)}$ is non-decreasing. Hence, it suffices to take $\beta = \max\{14\beta_0,\beta_1\}$, where $\beta_1$ is any number satisfying $\beta_1 \ge 7\sqrt{16\ln(\beta_1)}$. We can verify that numerically that this holds for $\beta_1 = 14\beta_0$. Thus, $\beta = 14\beta_0$ suffices.\end{proof}

\subsection{Auxiliary details on bounding confidence sums (Theorem~\ref{thm:main_linear})}
\label{app:proof_linear_rl_confidence_sum}
In order to finalize the proof of Theorem~\ref{thm:final_reg_guarantee}, we need to bound the following terms:
\begin{align*}
 &\sum_k \Exp^{\boldpi_{k,\leq \lst}^{\textsc{master}}}\left[{\sum_{h=1}^H \min\left\{3H, 6\bonus_{k,gl,\lst}\xhah \right\}}\right]  \\ &\sum_k\Exp^{\boldpi_{k,\ell}^{\textsc{master}}}\left[{\sum_{h=1}^H \min\left\{3H, 6 \bonus_{k,sb,\ell}\xhah\right\}}\right], \forall \ell>\lst.
\end{align*} 

The upper bounds of the above two terms can be upper bounded by leveraging the specific forms of the bonuses. This is achieved with the following ``integration'' lemma:
\begin{lemma}[Integration Bound for Linear Setting]
\label{lemma:integration_policy} \hspace{0.1em} Consider an arbitrary sequence of policies $\{\boldpi_k\}_{k=1}^K$. For each $k$, sample $c_k$ from a Bernoulli distribution with parameter $q$. Then we sample a trajectory $\{x\kh,a\kh,r\kh, x\khpl\}_{h\in [H]}$ from a MDP $\mathcal{M}_k$ with $\boldpi_k$.  Denote $\Lambda_k = \sum_{\tau=1}^k \mathbf{1}\{c_\tau = 1\}\sum_{h=1}^H \phi_{\tau,h}\phi_{\tau,h}^{\top} +  I$, i.e. with $\lambda = 1$. Then, with probability at least $1-\delta/8$, we have:
\begin{align*}
& q\sum_{k=1}^K \Exp^{\boldpi_k}\left[ \min\left\{ \alpha H, \alpha_1 \sqrt{\phi\kh^{\top}\Lambda_{k}^{-1}\phi\kh } + \alpha_2 \| \Lambda_k^{-1} \phi\kh \|_2 \right\}  \right] \\
& \qquad \leq \alpha CH q + (H(\alpha_1 + \alpha_2)) 2\sqrt{K \left(\ln\frac{32}{\delta} + d\ln(1+T)\right)},
\end{align*} for any $\alpha \geq 0$, $\alpha_1 \geq 0$, and $\alpha_2 \geq 0$.
\end{lemma}
The proof of the above lemma is provided below. One key ingredient that we use to prove the above lemma is the following inequality which is  used for upper bounding regret in linear bandits as well \cite{abbasi2011improved}:
\begin{align*}
\sum_{k=1}^K \sum_{h=1}^H \sqrt{\phi\kh^{\top} \Lambda_{k,gl}^{-1} \phi\kh} \leq \sqrt{2H^2 K d\ln(1+KH/\lambda)}.
\end{align*} We formally prove the above inequality in Lemma~\ref{lemma:potential_function}.
\vspace{0.1em}

\begin{proof}[Proof of Theorem~\ref{thm:main_linear}]
We focus on policies $\pimaster\klelst$ and $\pimaster\kl$ for $\ell \geq \lst$. For policy $\pimaster\klelst$, every episode $k$ we have probability $\qlelst$ to sample it to generate the corresponding trajectory and for $\pimaster\kl$, every episode $k$ we have probability $\ql$ to sample it to generate the corresponding trajectory.  

Denote $\Lambda_{k, \lelst} = \sum_{\tau=1}^{k} \mathbf{1}\{  f(k,H)\leq \lst \} \sum_{h=1}^H \phi\kh\phi\kh + I$.  Note that $ \Lambda_{k} -\Lambda_{k, \lelst} $ is positive definite, i.e., for any $v\in\mathbb{R}^d$, we have $v^{\top} ( \Lambda_{k} - \Lambda_{k, \lelst} ) v  > 0$ for $v\neq 0$. By the matrix inverse lemma, this implies that $v^{\top} ( \Lambda^{-1}_{k} - \Lambda^{-1}_{k, \lelst} ) v  < 0$ for $v \neq 0$. Hence, for $\bonus_{k, \lst;gl}$, we have:
\begin{align*}
\bonus_{k,\lst; gl} \leq (d+\sqrt{A})H\beta \|\phi\xa \|_{\Lambda^{-1}_{k,\lelst}} + 4CH^2 \| \Lambda_{k,\lelst}^{-1} \phi\xa \|_2.
\end{align*} 

By Lemma~\ref{lemma:integration_policy}, with probability at least $1 - \frac{\delta}{8\lmax}$ (and using $\lmax \le \log K$), it holds:
\begin{align*}
&q_{\le \ell} \sum_k \Exp^{\boldpi_{k,\leq \ell}^{\textsc{master}}}\left[{\sum_{h=1}^H \min\left\{ H, 6\bonus_{k,gl,\ell}\xhah \right\}}\right] \nonumber \\
& \lesssim  q_{\le \ell} \sum_k \Exp^{\boldpi_{k,\leq \ell}^{\textsc{master}}}\left[{\sum_{h=1}^H  \min\left\{ H, (d+\sqrt{A})H\beta \|\phi\xa \|_{\Lambda^{-1}_{k,\leq\ell}} + 4CH^2 \| \Lambda_{k,\leq\ell}^{-1} \phi\xa \|_2  \right\}  }\right] \nonumber\\
& \lesssim CH q_{\le \ell} + H( (d+\sqrt{A})H\beta + CH^2)
2\sqrt{K \left(\ln\frac{8 \lmax }{\delta} + d\ln(1+T)\right)}\nonumber\\
& \lesssim CH q_{\le \ell} + H( (d+\sqrt{A})H\beta + CH^2)
\sqrt{K \left(\ln\frac{1}{\delta} + d\ln(1+T)\right)}\nonumber,
\end{align*} 
where we absorb $\ln(8\lmax)$ into the larger $d\ln(1+T)$ term. In particular, with probability $1 - \delta/8$, the following holds for $\ell = \lst$ (which is random, and therefore requires the union bound argument):
\begin{align}
&q_{\le \lst} \sum_k \Exp^{\pimaster_{k,\leq \lst}}\left[{\sum_{h=1}^H \min\left\{ H, 6\bonus_{k,gl,\lst}\xhah \right\}}\right]  \nonumber\\
&\quad\lesssim CH q_{\le \lst} + H( (d+\sqrt{A})H\beta + CH^2)
\sqrt{K \left(\ln\frac{1}{\delta} + d\ln(1+T)\right)}\nonumber\\
&\quad\lesssim  H( (d+\sqrt{A})H\beta + CH^2)
\sqrt{K \left(\ln\frac{1}{\delta} + d\ln(1+T)\right)}
 \label{eq:integration_lst},
\end{align}
where we absorb the $CH q_{\le \lst}$ into the larger terms.
Similarly, for a fixed $\ell \in [\lmax]$, and using $\overline{C}_{\ell;sb} \lesssim \ln(\lmax/\delta)$ for $\delta \in (0,1/2)$,  the following holds with probability $1 - \delta/8\lmax$:
\begin{align*}
&\ql \sum_k\Exp^{\boldpi_{k,\ell}^{\textsc{master}}}\left[{\sum_{h=1}^H  \min\left\{H, 6 \bonus_{k,sb,\ell}\xhah \right\}}\right] \\
& \lesssim \ql \sum_{k} \Exp^{\boldpi_{k,\ell}^{\textsc{master}}}\left[ \min\left\{ H,  (d+\sqrt{A})H\beta  \|\phi\xa\|_{\Lambda^{-1}\ksbell} + 4\overline{C}_{\ell;sb} H^2 \| \Lambda\ksbell^{-1}\phi\xa \|_2 \right\} \right]  \\
& \lesssim CH \ql + H( (d+\sqrt{A})H\beta + H^2 \log \frac{\lmax}{\delta} )
\sqrt{K \left(\ln\frac{8\lmax}{\delta} + d\ln(1+T)\right)}\\
& \lesssim CH \ql + H( (d+\sqrt{A})H\beta + H^2 \log \frac{\lmax}{\delta} )
\sqrt{K \left(\ln\frac{\lmax}{\delta} + d\ln(1+T)\right)},
\end{align*} 
again absorbing $\ln\frac{8\lmax}{\delta}$ in the last inequality.
By a union bound, the above holds for the all $\ell  \in [\lmax]$ with probability $1 - \delta/8$. Moreover, for $\ell \ge \lst$, $CH \ql \le 1$, so we can simplify the above to
\begin{align}
\ql \sum_k\Exp^{\boldpi_{k,\ell}^{\textsc{master}}}\left[{\sum_{h=1}^H  \min\left\{H, 6 \bonus_{k,sb,\ell}\xhah \right\}}\right] \lesssim H( (d+\sqrt{A})H\beta + H^2 \log \frac{\lmax}{\delta} )
\sqrt{K \left(\ln\frac{1}{\delta} + d\ln(1+T)\right)} \label{eq:integration_ell}
\end{align}

\newcommand{\Eintegratelin}{\mathcal{E}^{\mathrm{integ,lin}}}
Now, let $\Eintegratelin$ denote the event that Eq.~\eqref{eq:integration_ell} and \eqref{eq:integration_lst}; thus $\Eintegratelin$ holds with probability $1 - \delta/4$. Combining with the events $\eventsubestlinear,\eventglestlinear,\eventsub$, we have that with total failure probability $1 - \delta$, an appeal to Theorem~\ref{thm:final_reg_guarantee} yields the following regret quarantee
\begin{align*}
& \Regret_T -CH \lesssim  H^2 \sum_{\ell > \lst} \ell \ql \sum_{k=1}^K \Exp^{\boldpi_{k,\ell}^{\textsc{master}}}\left[{\sum_{h=1}^H \min\left\{H, \bonus_{k,sb,\ell}\xhah\right\} } \right] \\
& \qquad \qquad + CH^2 \log(2C) \qlelst    \sum_{k=1}^K \Exp^{\boldpi_{k,\lelst}^{\textsc{master}}}\left[{\sum_{h=1}^H \min\left\{H, \bonus_{k,gl,\lst}\xhah \right\}}\right] \\
& \lesssim  H^3 \lmax^2( (d+\sqrt{A})H\beta + H^2 \log \frac{\lmax}{\delta} )
\sqrt{K \left(\ln\frac{1}{\delta} + d\ln(1+T)\right)}\\
&\qquad+CH^3 \log(2C)( (d+\sqrt{A})H\beta + CH^2)\sqrt{K \left(\ln\frac{1}{\delta} + d\ln(1+T)\right)}\\
&\lesssim \underbrace{((d+\sqrt{A})H^4\beta (\ln^2 T + C\log(2C))\sqrt{Kd \ln(\frac{T}{\delta})}}_{R_1} \\
&+ \underbrace{H^5 (C^2 \log(2C) + \ln \frac{\lmax}{\delta}) ) \sqrt{K d \ln \frac{T}{\delta}}}_{R_2}.
\end{align*} 

Finally, recalling the definition of $\beta$, we can bound
\begin{align*}
\beta &\lesssim \sqrt{\ln(\tfrac{T Ad }{\delta})}
\end{align*}
With some further manipulations, including bounding $\log 2C \lesssim \ln (T/\delta)$, we find
\begin{align*}
R_1 &\lesssim \sqrt{(d^3 + dA) K} \cdot CH^4 \cdot \ln^{3}\tfrac{TAd}{\delta}.
\end{align*}
similarly, we can bound $R_2 \lesssim  C^2\sqrt{dK} \cdot H^5 \ln^{2}\frac{T}{\delta} $. Together, these bounds yield the desired regret of
\begin{align*}
 \lesssim \sqrt{(d^3 + dA) K} \cdot CH^4 \cdot \ln^{3}\tfrac{TAd}{\delta} + \sqrt{dK} \cdot C^2H^5 \cdot \ln^{2}\frac{T}{\delta}
\end{align*}
\end{proof}

\newpage

\section{Standard concentration inequalities}
\label{app:concentration}
We first state two standard concentration inequalities for martingale difference sequences.
    \begin{definition}[$b$-bounded martingale difference sequnce]
    \vspace{0.1in}
    Consider an adapted sequence $\left\{X_i,\calF_i\right\}$ where $\left\{\calF_i\right\}_{i\geq 0}$ denotes a filtration. $X_i$ is a $b$-bounded martingale difference sequence with respect to $\calF_i$ if $|X_i|\leq b$, $\Exp\left[|X_i|\right]<\infty$, and $\Exp\left[X_i\mid \calF_i\right]=0$ for all $i\geq 0$.
    \end{definition}

    \begin{lemma}[Azuma-Hoeffding]\label{lem:azuma_hoeffding}
    \vspace{.1in}
    Let $\{X_i\}_{i=1}^N$ be a $b$-bounded martingale difference sequence with respect to $\calF_i$. Then, with probability at least $1-\delta$, it holds that:
        \begin{align*}
            \left\lvert \frac{1}{N}\sum_{i=1}^N X_i \right\rvert \leq b\sqrt{\frac{2\ln(2/\delta)}{N}}.
        \end{align*}
        \end{lemma}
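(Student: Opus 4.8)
The plan is to prove this classical bound via the exponential-moment (Chernoff) method, exploiting the martingale structure through iterated conditioning. First I would fix $\lambda > 0$ and apply Markov's inequality to the nonnegative random variable $\exp\!\big(\lambda \sum_{i=1}^N X_i\big)$, obtaining for every $t > 0$ that $\Pr\big[\sum_{i=1}^N X_i \ge t\big] \le e^{-\lambda t}\,\Exp\!\big[\exp(\lambda \sum_{i=1}^N X_i)\big]$.

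The key step is controlling the moment generating function. By the tower property, $\Exp\!\big[\exp(\lambda \sum_{i=1}^N X_i)\big] = \Exp\!\big[\exp(\lambda \sum_{i=1}^{N-1} X_i)\cdot\Exp[\exp(\lambda X_N)\mid \calF_N]\big]$, where the factorization is legitimate because adaptedness makes $\sum_{i=1}^{N-1} X_i$ measurable with respect to $\calF_N$. I would then invoke Hoeffding's lemma in conditional form: since $X_N$ is $\calF_N$-conditionally mean zero and supported on $[-b,b]$, a convexity argument yields $\Exp[\exp(\lambda X_N)\mid \calF_N] \le \exp(\lambda^2 b^2/2)$. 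Iterating this bound for $i = N, N-1, \ldots, 1$ gives $\Exp\!\big[\exp(\lambda \sum_{i=1}^N X_i)\big] \le \exp(N\lambda^2 b^2/2)$. Substituting back and optimizing over $\lambda$ (the minimizer of $-\lambda t + N\lambda^2 b^2/2$ is $\lambda = t/(Nb^2)$) produces the sub-Gaussian tail $\Pr\big[\sum_{i=1}^N X_i \ge t\big] \le \exp(-t^2/(2Nb^2))$. Choosing $t = b\sqrt{2N\ln(2/\delta)}$ makes the right-hand side exactly $\delta/2$. Finally I would run the identical argument on $\{-X_i\}_{i=1}^N$, which is again a $b$-bounded martingale difference sequence, to bound the lower tail by $\delta/2$, take a union bound, and divide through by $N$, which yields the stated two-sided inequality with probability at least $1-\delta$.

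I do not expect a genuine obstacle here, since the statement is textbook material; the only non-bookkeeping ingredient is the conditional version of Hoeffding's lemma. The one point meriting a little care is keeping the conditioning consistent — ensuring that the partial sum $\sum_{i=1}^{j-1} X_i$ factors out of the inner conditional expectation at each stage of the iteration, which is exactly what the filtration property and adaptedness of $\{X_i\}$ guarantee.
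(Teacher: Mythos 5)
The paper does not supply a proof of this lemma: it is stated as a standard concentration inequality (alongside the Bernstein version) and used as a black box, with only the anytime variant (Lemma~\ref{lem:azuma_hoeffding_anytime}) receiving a short derivation \emph{from} it. Your argument is the classical textbook proof — Chernoff bound plus the conditional Hoeffding lemma iterated through the filtration — and it is correct; the optimization $\lambda = t/(Nb^2)$ and the choice $t = b\sqrt{2N\ln(2/\delta)}$ reproduce the stated constant exactly after the two-sided union bound. One small remark on bookkeeping: the paper's definition of a $b$-bounded martingale difference sequence writes $\Exp[X_i \mid \calF_i] = 0$, which under the usual ``adapted'' convention (where $X_i$ is $\calF_i$-measurable) would force $X_i \equiv 0$; the intended reading is evidently that $X_i$ becomes known by $\calF_{i+1}$ and $\calF_i$ is the pre-$X_i$ history. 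Under that reading your step of conditioning the $N$th factor on $\calF_N$ and pulling out the $\calF_N$-measurable partial sum $\sum_{i=1}^{N-1} X_i$ is consistent with the paper's indexing; under the more common convention one would condition on $\calF_{N-1}$ instead. Either way the proof goes through unchanged.
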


    \begin{lemma}[Anytime version of Azuma-Hoeffding]\label{lem:azuma_hoeffding_anytime}
    \vspace{.1in}
    Let $\{X_i\}_{i=1}^{\infty}$ be a $b$-bounded martingale difference sequence with respect to $\calF_i$. Then, with probability at least $1-\delta$, for any $N\in \mathbb{N}^+$, it holds that:
        \begin{align*}
            \left\lvert \frac{1}{N}\sum_{i=1}^N X_i \right\rvert \leq b\sqrt{\frac{2\ln(4 N^2/\delta)}{N}}.
        \end{align*}
    \end{lemma}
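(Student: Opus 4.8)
\textbf{Proof proposal for Lemma~\ref{lem:azuma_hoeffding_anytime}.}

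The plan is a straightforward union bound reducing the anytime statement to the fixed-time Azuma--Hoeffding inequality (Lemma~\ref{lem:azuma_hoeffding}). First, fix $N \in \mathbb{N}^+$ and set $\delta_N := \delta/(2N^2)$. Since $\{X_i\}_{i=1}^N$ is a $b$-bounded martingale difference sequence with respect to $\calF_i$, Lemma~\ref{lem:azuma_hoeffding} applied with confidence parameter $\delta_N$ gives that, with probability at least $1 - \delta_N$,
\begin{align*}
\left\lvert \frac{1}{N}\sum_{i=1}^N X_i \right\rvert \le b\sqrt{\frac{2\ln(2/\delta_N)}{N}} = b\sqrt{\frac{2\ln(4N^2/\delta)}{N}},
\end{align*}
where the last equality just substitutes $\delta_N = \delta/(2N^2)$.

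Next, I would take a union bound over all positive integers $N$. Letting $\calA_N$ denote the (bad) event that the displayed inequality fails for that particular $N$, we have $\Pr[\calA_N] \le \delta_N = \delta/(2N^2)$, so
\begin{align*}
\Pr\left[\bigcup_{N\ge 1}\calA_N\right] \le \sum_{N=1}^{\infty} \frac{\delta}{2N^2} = \frac{\delta}{2}\cdot\frac{\pi^2}{6} = \frac{\pi^2}{12}\,\delta \le \delta,
\end{align*}
using $\sum_{N\ge 1} N^{-2} = \pi^2/6 < 2$. Hence, with probability at least $1-\delta$, none of the events $\calA_N$ occurs, i.e.\ the claimed bound holds simultaneously for all $N \in \mathbb{N}^+$.

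There is essentially no obstacle here; this is a routine ``stitching'' argument. The only point worth flagging is the choice of the summable sequence of per-$N$ confidence levels: using $\delta_N \propto \delta/N^2$ both keeps the total failure probability at most $\delta$ (thanks to the convergent series $\sum N^{-2}$) and inflates the logarithmic term only by an additive $\ln N^2$, which matches the $\ln(4N^2/\delta)$ appearing in the statement. One could equally use $\delta_N \propto \delta/(N(N+1))$ to get the cleaner constant $\sum_{N\ge 1} \frac{1}{N(N+1)} = 1$, but the stated form with $4N^2/\delta$ is already implied by the computation above.
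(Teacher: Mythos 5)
Your proof is correct and follows essentially the same route as the paper: apply the fixed-$N$ Azuma--Hoeffding bound with confidence parameter $\delta/(2N^2)$ and union bound over $N$, using the convergence of $\sum N^{-2}$. In fact you are slightly more careful than the paper's terse proof sketch (which states the per-$N$ failure probability as $\delta/N^2$ but then sums $\delta/(2N^2)$); your choice $\delta_N = \delta/(2N^2)$ is the one that actually reproduces the $\ln(4N^2/\delta)$ in the statement.
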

    \begin{proof}
    We first fix $N\in\mathbb{N}^+$ and apply standard Azuma-Hoeffding (Lemma~\ref{lem:azuma_hoeffding}) with a failure probability $\delta/N^2$. Then we apply a union bound over  $\mathbb{N}^+$ and use the fact that $\sum_{N > 0} \frac{\delta}{2N^2} \leq \delta$ to conclude the lemma. 
    \end{proof}
        
        \begin{lemma}[Azuma-Bernstein]
        \label{lem:azuma_bernstein}
        \vspace{.1in} Let $X_i$ be a $b$-bounded martingale difference sequence with respect to $\calF_i$. Further let $\sigma^2 = \frac{1}{N}\sum_{i=1}^N \Exp[X_i^2 \mid \calF_{i-1}]$. Then, with probability at least $1-\delta$, it holds that:
        \begin{align*}
            \left\lvert \frac{1}{N}\sum_{i=1}^N X_i \right\rvert \leq \sqrt{\frac{2 \sigma^2\ln(2/\delta)}{N}} + \frac{2b \ln(2/\delta)}{3N}.
        \end{align*}
        \end{lemma}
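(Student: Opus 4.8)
The plan is to prove this via the standard exponential-supermartingale (Freedman/Bernstein) argument, which upgrades Azuma--Hoeffding by replacing the worst-case range $b$ in the dominant term with the conditional variance proxy $\sigma^2$. The one point requiring care is that $\sum_{i=1}^N \Exp[X_i^2\mid \calF_{i-1}] = N\sigma^2$ is in general a random variable; I would either take the statement to mean that this quantity is deterministically bounded by $N\sigma^2$ (as is the case in all of our applications, where the per-step conditional variance is controlled pointwise), or, if full generality is wanted, perform a dyadic peeling over the cumulative conditional variance at the cost of an extra $\log\log$ factor. I will present the clean version.

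First I would record the scalar inequality: for a mean-zero random variable $X$ with $|X|\le b$, and $0<\lambda<3/b$,
\[
\Exp\left[e^{\lambda X}\right] \le \exp\left(\frac{\lambda^2 \Exp[X^2]/2}{1-\lambda b/3}\right),
\]
obtained from $e^{x}\le 1+x+\tfrac{x^2/2}{1-x/3}$ for $x<3$ together with $1+u\le e^u$. Applying this conditionally on $\calF_{i-1}$ gives $\Exp[e^{\lambda X_i}\mid\calF_{i-1}]\le \exp\big(g(\lambda)\Exp[X_i^2\mid\calF_{i-1}]\big)$ with $g(\lambda)=\tfrac{\lambda^2/2}{1-\lambda b/3}$. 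Hence $M_n:=\exp\big(\lambda\sum_{i\le n}X_i - g(\lambda)\sum_{i\le n}\Exp[X_i^2\mid\calF_{i-1}]\big)$ is a nonnegative supermartingale with $\Exp[M_0]=1$.

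Next I would apply Markov's inequality to $M_N$: on the event $\{\sum_{i=1}^N\Exp[X_i^2\mid\calF_{i-1}]\le v\}$ with $v=N\sigma^2$, for every admissible $\lambda$,
\[
\Pr\left[\sum_{i=1}^N X_i \ge t\right] \le \exp\left(-\lambda t + g(\lambda) v\right),
\]
and then optimize over $\lambda\in(0,3/b)$ (the choice $\lambda = t/(v+bt/3)$ is the standard one) to obtain the Bernstein tail $\exp\!\big(-\tfrac{t^2/2}{v+bt/3}\big)$. Setting this equal to $\delta/2$ and solving the resulting quadratic in $t$ yields $t \le \sqrt{2v\ln(2/\delta)} + \tfrac{2b}{3}\ln(2/\delta)$, where I use $\sqrt{a^2+c}\le a+\sqrt{c}$ to split the square root. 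Repeating the argument with $-X_i$ in place of $X_i$ and taking a union bound (which turns $\ln(2/\delta)$ into the stated constant) gives the two-sided statement; dividing through by $N$ and substituting $v=N\sigma^2$ produces exactly the claimed bound.

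The only genuine obstacle is the randomness of $\sigma^2$ flagged above; everything else is a routine verification of the supermartingale property and the optimization of the exponent. Since the paper only invokes this lemma in settings where the conditional second moments admit a deterministic bound (e.g.\ the multinomial/Bernoulli sequences of \Cref{ssec:background_log}), treating $N\sigma^2$ as a fixed upper bound on $\sum_i\Exp[X_i^2\mid\calF_{i-1}]$ is harmless and keeps the proof short.
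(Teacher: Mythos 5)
The paper cites this Azuma--Bernstein bound as a standard concentration inequality in \Cref{lem:azuma_bernstein} and gives no proof, so there is no in-paper argument to compare against. Your derivation via the exponential supermartingale (Freedman's inequality) is the canonical and correct route: the conditional MGF bound, the supermartingale $M_n$, Markov's inequality, and the optimization of $\lambda$ all go through as you describe, and the split $\sqrt{a^2+c}\le a+\sqrt{c}$ yields precisely the stated two-term tail after dividing by $N$. Your flag that $\sigma^2 := \tfrac{1}{N}\sum_i \Exp[X_i^2\mid\calF_{i-1}]$ is in general random is a genuine imprecision in the lemma as stated; as you note, in every application in the paper (e.g.\ the multinomial/Bernoulli martingale sequences feeding \Cref{lem:uncorrupted_multinomial_concentration} and \Cref{lem:local_subsampled_corruptions}) the conditional second moments are bounded deterministically per step, so treating $N\sigma^2$ as a deterministic upper bound on the cumulative conditional variance is exactly the right reading and the lemma is applied correctly throughout.
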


	\begin{lemma}[Self-Normalized Bound for Vector-Valued Martingales \cite{abbasi2011improved}] 
	Let $\{\varepsilon_i\}_{i=1}^{\infty}$ be a real-valued stochastic process with corresponding filtration $\{\calF_{i}\}_{i=1}^{\infty}$ such that $\varepsilon_i$ is $\calF_i$ measurable, and $\Exp[ \varepsilon_i | \calF_{i-1} ] = 0$ and $\max_i | \varepsilon_i  | \leq \sigma\in\mathbb{R}^+$. Let $\{X_i\}_{i=1}^{\infty}$ be a stochastic process with $X_i\in\mathbb{R}^d$ and $X_i$ being $\calF_t$ measurable. Assume that $V\in\mathbb{R}^{d\times d}$ is positive definite. For any $t$, define $V_t = V + \sum_{i=1}^{t} X_iX_i^{\top}$. Then for any $t\geq 1$, with probability at least $1-\delta$, we have:
	\begin{align*}
	\left\|  \sum_{i=1}^{t} X_i \varepsilon_i \right\|_{V_t^{-1}} \leq 2\sigma^2 \ln\left(\frac{ \det( V_t)^{1/2} \det(V)^{-1/2} }{ \delta  }\right).
	\end{align*}
	\label{lemma:self_normalized}
	\end{lemma}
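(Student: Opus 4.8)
The plan is to reproduce the standard ``method of mixtures'' argument of Abbasi-Yadkori, Pál, and Szepesvári. First I would observe that since $|\varepsilon_i| \le \sigma$ and $\Exp[\varepsilon_i \mid \calF_{i-1}] = 0$, the increments are $\sigma$-subgaussian conditionally, so for any fixed $\lambda \in \R^d$ the scalar process
\[
M_t^{\lambda} := \exp\!\left( \frac{1}{\sigma^2}\sum_{i=1}^t \left( \lambda^\top X_i \,\varepsilon_i - \tfrac12 (\lambda^\top X_i)^2 \right) \right)
\]
is a nonnegative supermartingale with $\Exp[M_0^{\lambda}] = 1$. The key step is then the \emph{mixture over $\lambda$}: one integrates $M_t^{\lambda}$ against a Gaussian prior $\lambda \sim \calN(0, \sigma^2 V^{-1})$ to obtain a new nonnegative supermartingale $\overline{M}_t := \int M_t^{\lambda}\,\mathrm{d}\calN(0,\sigma^2 V^{-1})(\lambda)$ which, after carrying out the Gaussian integral explicitly (completing the square in the exponent), evaluates in closed form to
\[
\overline{M}_t = \left(\frac{\det V}{\det V_t}\right)^{1/2}\exp\!\left(\frac{1}{2\sigma^2}\left\| \sum_{i=1}^t X_i\varepsilon_i\right\|_{V_t^{-1}}^2\right),
\]
using $V_t = V + \sum_{i \le t} X_i X_i^\top$.

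Next I would apply Ville's maximal inequality for nonnegative supermartingales: $\Pr[\exists t : \overline{M}_t \ge 1/\delta] \le \delta\,\Exp[\overline{M}_0] = \delta$. On the complementary event, $\overline{M}_t < 1/\delta$ for all $t$, which upon taking logarithms and rearranging yields
\[
\left\| \sum_{i=1}^t X_i\varepsilon_i\right\|_{V_t^{-1}}^2 \le 2\sigma^2 \ln\!\left( \frac{\det(V_t)^{1/2}\det(V)^{-1/2}}{\delta}\right)
\]
simultaneously for all $t \ge 1$, which is the claimed bound (the statement as written omits the outer square root, matching the convention in the excerpt). A measure-theoretic subtlety worth flagging in the write-up is the justification that $\overline{M}_t$ is genuinely a supermartingale — this follows from Fubini/Tonelli applied to the nonnegative integrand, interchanging the conditional expectation with the integral over $\lambda$.

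The main obstacle is really just bookkeeping rather than conceptual: carrying out the Gaussian mixture integral cleanly and verifying the closed form for $\overline{M}_t$ without sign or normalization errors, and being careful that the subgaussianity constant is the bounded-difference constant $\sigma$ (so that $\varepsilon_i$ is $\sigma$-subgaussian, giving the $\tfrac12(\lambda^\top X_i)^2$ term with the right scaling). Since this is a verbatim restatement of a known result, I would simply cite \cite{abbasi2011improved} for the detailed computation and present only the skeleton above, noting that no adaptation is needed because the hypotheses (bounded martingale-difference noise, adapted regressors $X_i$, fixed positive-definite $V$) exactly match theirs with $\sigma$ playing the role of the subgaussian parameter.
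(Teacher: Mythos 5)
Your proposal is correct and reproduces the standard method-of-mixtures argument of Abbasi-Yadkori, Pál, and Szepesvári; the paper itself gives no proof and simply cites \cite{abbasi2011improved}, which matches your conclusion that citation suffices. You also correctly flag that the stated inequality has a typographical slip — the left-hand side should be $\left\|\sum_{i=1}^t X_i \varepsilon_i\right\|_{V_t^{-1}}^2$ (squared), which is confirmed by how the lemma is invoked later in the proof of Lemma~\ref{lemma:linear_regression}.
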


\end{document}